\documentclass{article}

\PassOptionsToPackage{square,numbers,sort&compress}{natbib}



\usepackage[final]{neurips_2024}


\usepackage{subfigure}
\usepackage[utf8]{inputenc} 
\usepackage[T1]{fontenc}    
\usepackage[hidelinks, hypertexnames=false]{hyperref}       

\usepackage{url}            
\usepackage{booktabs}       
\usepackage{amsfonts}       
\usepackage{nicefrac}       
\usepackage{microtype}      
\usepackage{xcolor}         
\usepackage{graphicx}
\usepackage{amsmath}
\usepackage{amsthm}
\usepackage{algorithm, algorithmic}
\usepackage{natbib}
\usepackage{placeins}
\usepackage{makecell}
\usepackage{soul}
\usepackage[normalem]{ulem}

\PassOptionsToPackage{numbers, compress}{natbib}

\DeclareMathOperator*{\argmax}{arg\,max}

\newtheorem{theorem}{Theorem}

\newtheorem{lemma}{Lemma}

\definecolor{darkgreen}{rgb}{0.0, 0.8, 0.0}

\usepackage{xargs}
\usepackage[colorinlistoftodos,prependcaption,textsize=tiny]{todonotes}
\newcommandx{\change}[2][1=]{\todo[linecolor=blue,backgroundcolor=blue!25,bordercolor=blue,#1]{#2}}

\title{Conformal Classification with Equalized Coverage for Adaptively Selected Groups}

%

\author{%
  Yanfei Zhou \\
  Department of Data Sciences and Operations\\
  University of Southern California \\
  Los Angeles, California, USA \\
  \texttt{yanfei.zhou@marshall.usc.edu}
   \And
  Matteo Sesia \\
  Department of Data Sciences and Operations\\
  University of Southern California \\
  Los Angeles, California, USA \\
   \texttt{sesia@marshall.usc.edu} \\
}

\begin{document}

\maketitle

\begin{abstract}
This paper introduces a conformal inference method to evaluate uncertainty in classification by generating prediction sets with valid coverage conditional on adaptively chosen features. These features are carefully selected to reflect potential model limitations or biases. This can be useful to find a practical compromise between efficiency---by providing informative predictions---and algorithmic fairness---by ensuring equalized coverage for the most sensitive groups. We demonstrate the validity and effectiveness of this method on simulated and real data sets.
\end{abstract}

\section{Introduction}

\subsection{Uncertainty, Fairness, and Efficiency in Machine Learning}

Increasingly sophisticated machine learning (ML) models, like deep neural networks, are revolutionizing decision-making in many high-stakes domains, including medical diagnostics \citep{jones2020deep}, job screening \citep{brown2021using}, and recidivism prediction \citep{zeng2017interpretable,kuchibhotla2023nested}. However, serious concerns related to {\em uncertainty quantification} \citep{nguyen2015deep, guo2017calibration} and {\em algorithmic fairness} \citep{Cynthia2012fairness, zemel2013learning, Moritz2016fairness, berk2021improving} underscore the need for novel methods that can provide reliable and unbiased measures of confidence, applicable to any model.

Uncertainty quantification is crucial because ML models, although effective on average, can make errors while displaying overconfidence \citep{ovadia2019can}.
Consequently, in some situations users may lack sufficient warning about the potential unreliability of a prediction, raising trust and safety concerns.
A promising solution is {\em conformal inference} \citep{papadopoulos2002inductive,vovk2005algorithmic,lei2016RegressionPS}, which enables converting the output of any model into prediction sets with precise coverage guarantees. These sets reflect the model's confidence on a case-by-case basis, with smaller sets indicating higher confidence in a specific prediction.

Algorithmic fairness focuses on the challenges of prediction inaccuracies that disproportionately impact specific groups, often identified by sensitive attributes like race, sex, and age.
Among the many sources of algorithmic bias are training data that do not adequately represent the population's heterogeneity and a focus on maximizing average performance.
However, fairness is partly subjective and lacks a universally accepted definition \citep{hellman2020measuring}, leading to sometimes conflicting interpretations \citep{chouldechova2017fair}.

This complexity makes conformal inference with {\em equalized coverage} \cite{romano2019malice} an appealing approach. Equalized coverage aims to ensure that the prediction sets attain their coverage not only on average for the whole population (e.g., above 90\%) but also at the same level within each group of interest.
While this does not necessarily imply that the prediction sets will have equal size on average across different groups---since it is possible the predictive model may be more or less accurate for different groups---it objectively communicates the possible limitations of a model. This transparency helps decision-makers recognize when predictions may be less reliable for specific subgroups, allowing them to either avoid unnecessary actions or adopt more cautious strategies in cases of higher uncertainty, thereby minimizing the potential harm from inaccurate predictions.

A limitation of the method developed in \citep{romano2019malice} for conformal inference with equalized coverage is that it does not scale well to situations involving diverse populations with multiple sensitive attributes. In such cases, it necessitates splitting the data into exponentially many subsets, significantly reducing the effective sample size and leading to less informative predictions.
Balancing this trade-off \citep{fan2023neyman} between {\em efficiency}---aiming for highly informative predictions with small set sizes---and {\em fairness}---ensuring unbiased treatment---is challenging and requires novel approaches. 
This paper introduces a method to address this by providing equalized coverage conditional on carefully chosen features, informed by the model and data. While it cannot guarantee equalized coverage for {\em all} sensitive groups, it seeks a {\em reasonable compromise} with finite data sets, mitigating significant biases while retaining predictive power.

\subsection{Background on Conformal Inference for Classification}

Consider a data set comprising $n$ exchangeable (e.g., i.i.d.)~observations $Z_i$ for $i \in \mathcal{D} := [n] := \{1,\ldots,n\}$, sampled from an arbitrary and unknown distribution $P_{Z}$.
In classification, one can write $Z_i = (X_i, Y_i)$, where $Y_i \in [L] := \{1,\ldots,L\}$ is a categorical label and $X_i \in \mathcal{X}$ represents the individual's features, taking values in some space $\mathcal{X}$.
As explained below, we will assume these features include some sensitive attributes.
Further, we consider a test point $Z_{n+1}= (X_{n+1}, Y_{n+1})$, also sampled exchangeably from $P_Z$, and whose label $Y_{n+1} \in [L]$ has not yet been observed.

A standard goal for split conformal prediction methods is to quantify the predictive uncertainty of a given ``black-box'' ML model (e.g., pre-trained on an independent data set) by constructing a prediction set $\hat{C}(X_{n+1})$ for $Y_{n+1}$, guaranteeing {\em marginal coverage} at some desired level $\alpha \in (0,1)$:
\begin{equation}\label{eq:marginal_coverage}
    \mathbb{P}[ Y_{n+1} \in \hat{C}(X_{n+1}) ] \geq 1-\alpha.
\end{equation}
This probability is taken over the randomness in $Y_{n+1}$ and $X_{n+1}$, as well as in the data indexed by $\mathcal{D}$.
Intuitively, marginal coverage means the prediction sets are expected to cover the correct outcomes for a fraction $1-\alpha$ of the population. However, this is not always satisfactory, especially if the miscoverage errors may disproportionately affect individuals characterized by well-defined features.

To address these concerns, one might consider {\em feature-conditional coverage},
$\mathbb{P}[ Y_{n+1} \in \hat{C}(X_{n+1})\mid X_{n+1} = x ] \geq 1-\alpha$  for all $x \in \mathcal{X}$.
This would ensure consistent coverage for all possible test features $X_{n+1}$.
However, it is impossible to achieve without additional assumptions, such as modeling the distribution $P_Z$ \citep{barber2019limits} or significantly restricting the feature space $\mathcal{X}$ \citep{lee2021distribution}.
Given that such assumptions may be unrealistic in real-world settings, exact feature-conditional coverage is typically unachievable.

Equalized coverage \citep{romano2019malice} seeks a practical middle ground between the two extremes of marginal and feature-conditional coverage, focusing on accounting for specific {\em discrete} attributes encapsulated by $X_{n+1}$. To facilitate the subsequent exposition of our method, it is useful to recall the definition of equalized coverage with the following notation.

Let $K$ denote the number of sensitive attributes, and for each $k \in [K]$ let $M_k \in \mathbb{N}$ count the possible values of the $k$-th attribute.
Consider a function $\phi:\mathcal{X} \times \{0,1\}^K \to \mathbb{N}^d$ for any subset $A\subseteq [K]$ with $|A|=d$ elements, so that $\phi(x, A)$ is a vector of length $|A|$ representing the values of all attributes indexed by $A$ for an individual with features $x$.
In the special case where $A$ is an empty set, $\phi$ returns a constant. If $A$ is a singleton, e.g., $A = \{k\}$ for some $k \in [K]$, then $\phi(x, \{k\}) \in [M_k]$ denotes the value of the $k$-th attribute; e.g., someone's academic degree.
More generally, $\phi(x, \{k,l\}) \in [M_k] \times [M_l]$, for any distinct $k,l \in [K]$, denotes the joint values of two attributes, characterizing a smaller group, such as ``males with a bachelor's degree.''.

When multiple sensitive attributes are involved, i.e., $K > 1$, the concept of equalized coverage introduced by \cite{romano2019malice} can be naturally extended to {\em exhaustive equalized coverage}, defined as:
\begin{equation}\label{eq:full_equalized_coverage}
    \mathbb{P}[ Y_{n+1} \in \hat{C}(X_{n+1})\mid \phi(X_{n+1}, [K]) ] \geq 1-\alpha.
\end{equation}
In words, this says $\hat{C}(X_{n+1})$ has valid coverage conditional on all $K$ sensitive attributes.
Prediction sets satisfying~\eqref{eq:full_equalized_coverage} can be obtained by applying the standard conformal calibration method separately within each of the $M = \prod_{k=1}^{K} M_k$ groups characterized by a specific combination of the protected attributes represented by $\phi(X_i, [K])$; see Appendix~\ref{app:standard-methods} for details.
However, a downside of this approach is that the calibration subsets may be too small if $M$ is large, leading to uninformative predictions  for even moderate values of $K$. This limitation forms the starting point of our work.

\subsection{Preview of Our Contributions: Adaptive Equalized Coverage}

In practice, for a given model and data set, different groups may not exhibit the same need for rigorous equalized coverage guarantees~\eqref{eq:full_equalized_coverage}, as conformal predictions may be able to approximately achieve the desired coverage even without explicit constraints. 
Algorithmic bias typically affects only a minority of the population, so standard prediction sets with marginal coverage \eqref{eq:marginal_coverage} may approximately satisfy \eqref{eq:full_equalized_coverage} for most groups.
Therefore, we propose Adaptively Fair Conformal Prediction (AFCP), a new method that efficiently identifies and addresses groups suffering from algorithmic bias in a data-driven way, adjusting their prediction sets to equalize coverage without sacrificing informativeness.

AFCP involves two main steps. First, as sketched in Figure~\ref{fig:adaptive_coverage}, it carefully selects a sensitive attribute $\hat{A}(X_{n+1}) \in \{\emptyset, \{1\},\ldots,\{K\}\}$, based on $X_{n+1}$ and the data in $\mathcal{D}$. Although AFCP can be extended to select multiple attributes, we begin by focusing on this simpler version for clarity. Intuitively, AFCP searches for the attribute corresponding to the group most negatively affected by algorithmic bias. It may also opt to select no attribute ($\hat{A}(X_{n+1})=\emptyset$) in the absence of significant biases.

\begin{figure}[tbh]
    \centering
    \includegraphics[width=0.8\linewidth]{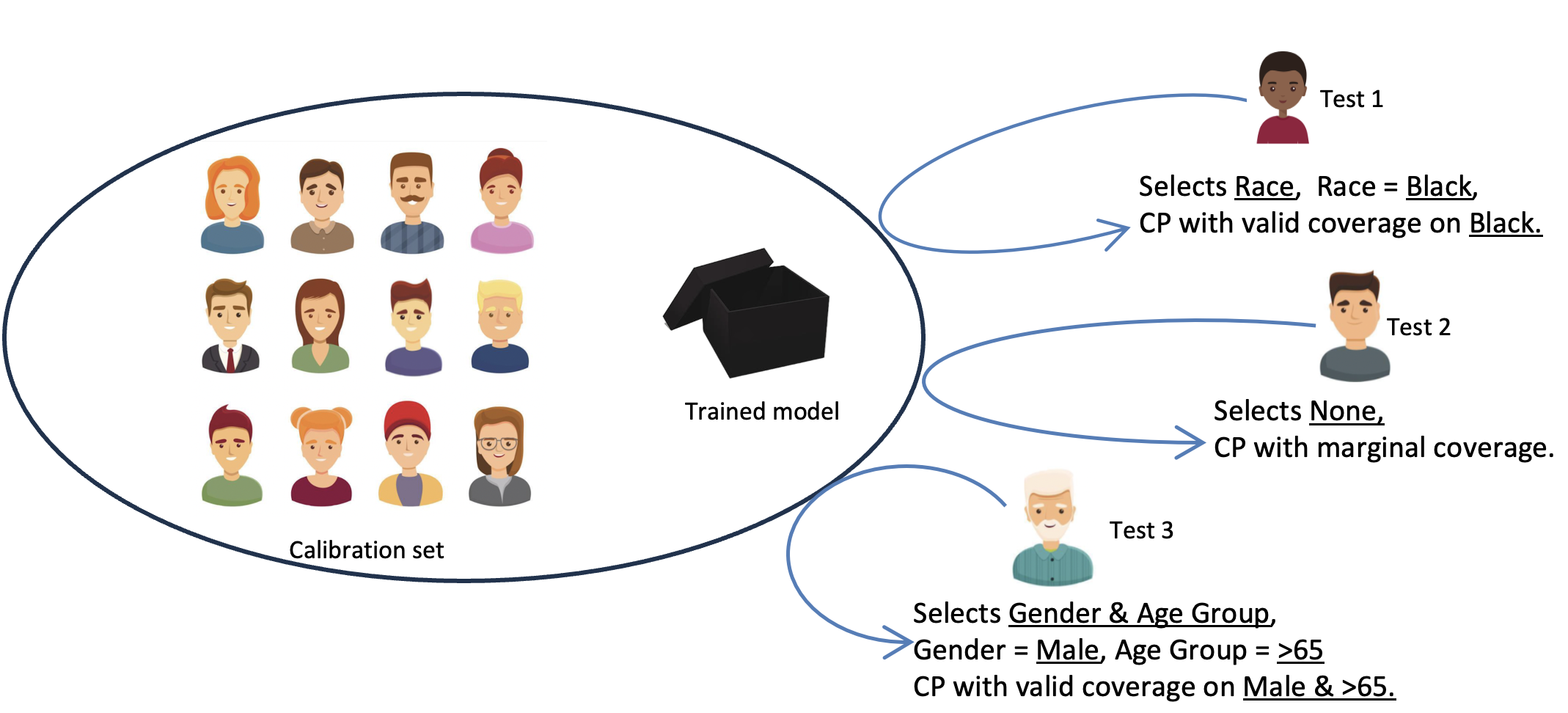}
    \caption{Schematic visualization of the automatic sensitive attribute selection carried out by our Adaptively Fair Conformal Prediction (AFCP) method. This method is designed to find the attribute corresponding to the group most negatively affected by algorithmic bias, on a case-by-case basis.}
    \label{fig:adaptive_coverage}%
\end{figure}

Next, AFCP constructs a prediction set $\hat{C}(X_{n+1})$ for $Y_{n+1}$ that guarantees the following notion of \textit{adaptive equalized coverage} at the desired level $\alpha \in (0,1)$:
\begin{equation}\label{eq:adaptive_equalized_coverage}
\mathbb{P}[ Y_{n+1} \in \hat{C}(X_{n+1}) \mid \phi(X_{n+1}, \hat{A}(X_{n+1})) ] \geq 1-\alpha.
\end{equation}
In words, this tells us $\hat{C}(X_{n+1})$ is well-calibrated for the groups defined by the selected attribute $\hat{A}(X_{n+1})$.
It is worth highlighting the key distinctions between \eqref{eq:adaptive_equalized_coverage} and the existing notions of coverage reviewed above.
On the one hand, if AFCP identifies no significant bias, selecting $\hat{A}(X_{n+1})=\emptyset$, then~\eqref{eq:adaptive_equalized_coverage} reduces to marginal coverage \eqref{eq:marginal_coverage}, following the convention that $\phi(X_{n+1}, \emptyset)$ is a constant.
On the other hand, exhaustive equalized coverage \eqref{eq:full_equalized_coverage} would correspond to simultaneously selecting all possible sensitive attributes instead of only that identified by $\hat{A}(X_{n+1})$.
To clarify the terminology, in this paper we will say that an attribute is {\em sensitive} if it may identify a group affected by algorithmic bias. By contrast, a {\em protected} attribute is one for which equalized coverage is explicitly sought.

Figure~\ref{fig:individual_example} 
illustrates this intuition through a simulated example.
In this scenario, we generate synthetic medical diagnosis data, considering six possible diagnosis labels, and designate race, sex, and age group as potentially sensitive attributes alongside other demographic factors. Notably, the female group, identified by sex, is characterized by fewer samples and higher algorithmic bias, resulting in marginal prediction sets with low group-conditional coverage. By contrast, the model leads to no significant disparities across races and age groups in this dataset.

\begin{figure*}[!htb]
  \centering
  \includegraphics[width=0.8\linewidth]{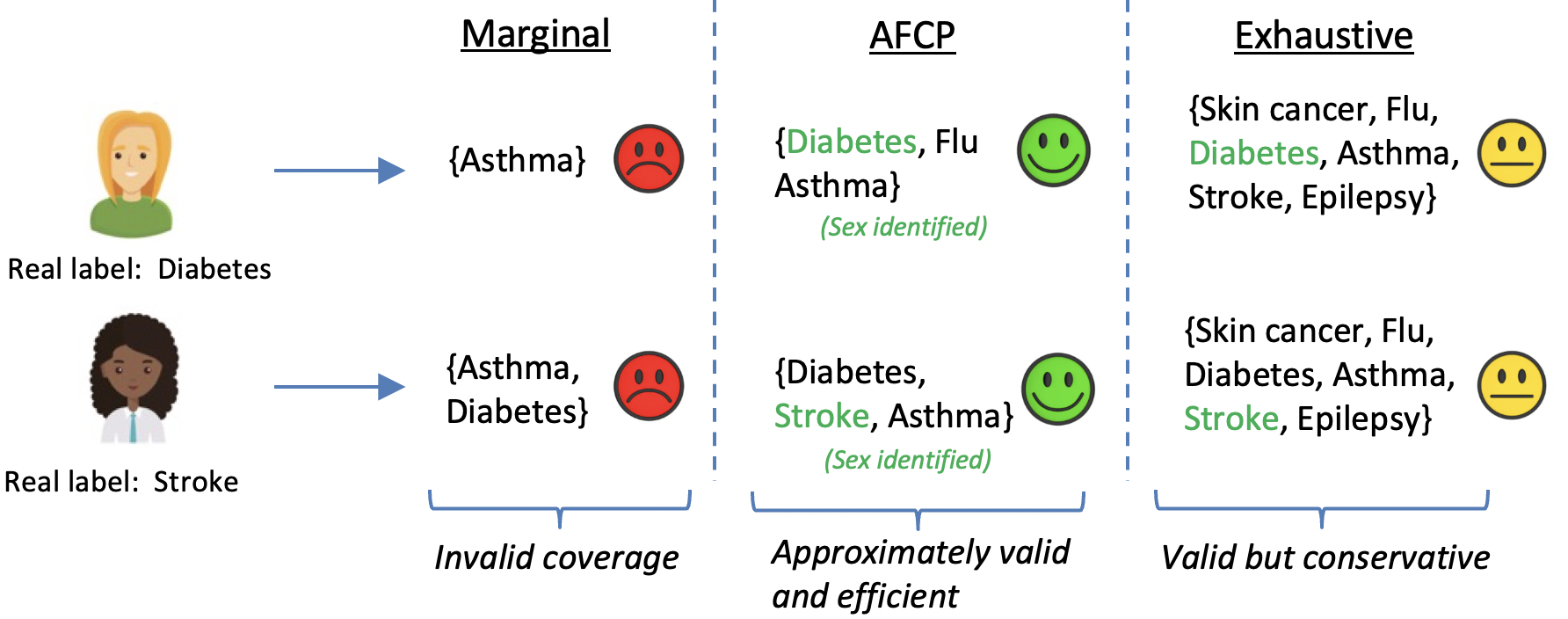}
  \caption{Prediction sets constructed with different methods for patients in groups negatively affected by algorithm bias. Our method (AFCP) is designed to provide informative prediction sets that are well-calibrated conditional on the automatically identified critical sensitive attribute.}
  \label{fig:individual_example}
\end{figure*}

For two example patients from the critical group, the standard marginal prediction sets fail to cover the true label. Conversely, sets calibrated for exhaustive equalized coverage are too conservative to be informative. By contrast, AFCP generates prediction sets that are both efficient and fair.

Without additional sample splitting, which would be inefficient, constructing informative prediction sets that satisfy~\eqref{eq:adaptive_equalized_coverage} is challenging due to potential selection bias from using the same data for attribute selection and conformal calibration. This paper presents a novel solution to address this challenge.

\subsection{Table of Contents}

Section~\ref{sec:methodology} presents our AFCP method, focusing on the special case in which at most one sensitive attribute may be selected.
Section~\ref{sec:experiments} demonstrates the empirical performance of AFCP on synthetic and real data.
Section~\ref{sec:discussion} discusses some limitations and suggests ideas for future work.

Additional content is presented in the Appendices.
Appendix~\ref{app:standard-methods} reviews relevant details of existing approaches.
Appendices~\ref{app:AFCP_multi} and~\ref{app:lc} present two extensions of our method, respectively enabling the selection of more than one sensitive attribute and providing valid coverage also conditional on the true test label; both extensions involve distinct technical challenges.
Additionally, a variation of AFCP designed for outlier detection tasks is detailed in Appendix~\ref{app:AFCP_od}.
Appendix~\ref{app:proofs} contains all mathematical proofs.
Appendix~\ref{app:computational_complexity} explains how to implement our method efficiently and studies its computational cost.
Appendix~\ref{app:more_experiments} describes the results of numerous additional experiments.

\subsection{Related Works} \label{sec:related}

Conformal inference is a very active research area, with numerous methods addressing diverse tasks, including outlier detection \citep{bates2021testing, marandon2022machine, liang2022integrative}, classification \citep{lei2013distribution, sadinle2019least, podkopaev2021distribution, romano2020classification, angelopoulos2020uncertainty}, and regression \citep{lei2014distribution, romano2019conformalized, sesia2021conformal}. Overcoming the limitations of the standard marginal coverage guarantees~\eqref{eq:marginal_coverage} is a main interest in this field.

Some works have proposed {\em conformity scores} designed to seek high feature-conditional coverage while calibrating prediction sets for marginal coverage \citep{romano2019conformalized, romano2020classification}. Others attempt to mitigate overconfidence while training the ML model \citep{einbinder2022training, liang2023ces}, and several have developed calibration methods for non-exchangeable data, accounting for possible distribution shifts \cite{barber2022conformal, tibshirani2019conformal, gibbs2022conformal, yang2022doubly, liang2024structured}. These works are complementary to ours, as we focus on guaranteeing a new adaptive notion of equalized coverage.

In addition to \cite{romano2019malice}, several other works have considered constructing prediction sets adhering to various notions of equalized coverage and have empirically investigated the performance of conformal predictors in this regard \citep{Lu2022medical}. In the context of regression, \cite{wang2023equal} and \cite{liu2022conformalized} proposed strategies to enhance conditional coverage given several protected attributes, but they targeted a different notion of equalized coverage designed for continuous outcomes. In classification, a classical approach to move beyond marginal coverage is label-conditional coverage, where the ``protected'' groups are defined not based on the features $X_{n+1}$ but by the label itself, $Y_{n+1}$ \citep{vovk2003mondrian,ding2023classconditional,tuve2015labelcond}. As explained in Appendix~\ref{app:lc}, the method proposed in this paper can also be extended to provide label-conditional coverage.

More closely related to the notion of equalized coverage \citep{romano2019malice} are the works of \cite{jung2023batch, gibbs2023conformal}, which differ from ours as they do not consider the automatic selection of the sensitive groups.
To tackle a related challenge due to unknown biased attributes, \cite{cherian2023statistical} studied how to identify unfairly treated groups by establishing a simultaneously valid confidence bound on group-wise disparities. In principle, their approach can be integrated within the selection component of our method.
Very recently, \cite{jin2024confidence} proposed an elegant method to obtain valid conformal prediction sets for adaptively selected subsets of test cases. While their perspective aligns more closely with ours, their approach and focus differ as they study different selection rules not specifically aimed at mitigating algorithmic bias.

\section{Method}\label{sec:methodology}

\subsection{Automatic Attribute Selection}\label{sec:attribute_selection}

Given a pre-trained classifier, an independent calibration data set $\mathcal{D}$, and a test point $Z_{n+1} = (X_{n+1}, Y_{n+1})$ with an unknown label $Y_{n+1}$, we will select (at most) one sensitive attribute, $\hat{A}(X_{n+1}) \in \{\emptyset,\{1\},\ldots,\{K\} \}$, according to the following {\em leave-one-out} procedure.

For each $y \in [L]$, imagine $Y_{n+1}$ is equal to $y$, and define an {\em augmented} calibration set $\mathcal{D}'_{ y} := \mathcal{D} \cup \{(X_{n+1}, y)\}$.
For each $i \in [n+1]$, define also the leave-one-out set $\mathcal{D}'_{ y, i} := \mathcal{D}'_{ y} \setminus \{(X_i, Y_i)\}$, with $y$ acting as a placeholder for $Y_{n+1}$.
Then, for each $i \in [n+1]$, we construct a conformal prediction set $\hat{C}^{\text{loo}}_{y} (X_i)$ for $Y_i$ given $X_{i}$ by calibrating the classifier using the data in $\mathcal{D}'_{ y, i}$.
Any method can be applied for this purpose, although it may be helpful for concreteness to focus on employing the standard approach seeking marginal coverage~\eqref{eq:marginal_coverage} using the adaptive conformity scores proposed by \citep{romano2020classification}.
Let $E_{y,i}$ denote the binary indicator of whether $\hat{C}^{\text{loo}}_{y} (X_i)$ fails to cover $Y_i$:
\begin{equation}\label{eq:miscoverage_indicator}
    E_{y,i} := \mathbf{1}\{Y_i \notin \hat{C}^{\text{loo}}_{y}(X_i)\}.
\end{equation}

After evaluating $E_{y,i}$ for all $i \in [n+1]$, we will assess the leave-one-out miscoverage rate for the worst-off group identified by each sensitive attribute $k \in [K]$.
That is, we evaluate
\begin{equation*}\label{worst_miscoverage_rate}
    \delta_{y,k} := \max_{m \in [M_k]} \frac{ \sum_{i=1}^{n+1} E_{y,i} \cdot \mathbf{1}\{ \phi(X_i, \{k\}) = m\}  }{ \sum_{i=1}^{n+1} \mathbf{1}\{\phi(X_i, \{k\}) = m \}}.
\end{equation*}
Intuitively, $\delta_{y,k}$ denotes the maximum miscoverage rate across all groups identified by the $k$-th attribute.
Large values of $\delta_{y,k}$ suggest that the $k$-th attribute may be a sensitive attribute corresponding to at least one group suffering from algorithmic bias.

To assess whether there is evidence of significant algorithmic bias, we can perform a statistical test for the null hypothesis that no algorithmic bias exists. Note that this test can be heuristic since it does not need to be exact for our method to rigorously guarantee~\eqref{eq:adaptive_equalized_coverage}. Therefore, we do not need to carefully consider the assumptions underlying this test. As a useful heuristic, we define:
\begin{equation}\label{eq:worst_miscoverage_rate}
    \hat{q}_{y} := \max_{k \in [K]} \delta_{y,k},
\end{equation}
and carry out a one-sided t-test for the null hypothesis $H_0: \hat{q}_y \leq \alpha$ against $H_1:  \hat{q}_y > \alpha$.

If $H_0$ is rejected (at any desired level, like 5\%), we conclude there exists a group suffering from significant algorithmic bias, and we identify the corresponding attribute through
\begin{equation}\label{eq:select_attribute}
\hat{A}(X_{n+1}, y) = \{ \argmax_{k \in [K]} \delta_{y,k} \}.
\end{equation}
Otherwise, we set $\hat{A}(X_{n+1}, y) = \emptyset$, which corresponds to selecting no attribute.
See Algorithm~\ref{alg:attribute_selection} for an outline of this procedure, as a function of the placeholder label $y$.

After repeating this procedure for each $y \in [L]$, the final selected attribute $\hat{A}(X_{n+1})$ is:
\begin{equation}\label{eq:select_attribute_final}
\hat{A}(X_{n+1}) = \cap_{y \in [L]} \hat{A}(X_{n+1}, y).
\end{equation}
Therefore, an attribute is selected if and only if it is consistently flagged by our leave-one-out procedure for all values of the placeholder label $y \in [L]$. This approach minimizes the potential arbitrariness due to the use of a placeholder label and is necessary to guarantee that our method constructs prediction sets achieving~\eqref{eq:adaptive_equalized_coverage}, as discussed in the next section.

\begin{algorithm}[!htb]
    \caption{Automatic attribute selection using a placeholder test label.}
    \label{alg:attribute_selection}
    \begin{algorithmic} [1]
        \STATE \textbf{Input}:
        calibration data $\mathcal{D}$; test point with features $X_{n+1}$; list of $K$ sensitive attributes; \\
        \STATE \textcolor{white}{\textbf{Input}:} pre-trained classifier $\hat{f}$;  fixed rule for computing nonconformity scores; level $\alpha \in (0,1)$; \\
        \STATE \textcolor{white}{\textbf{Input}:} placeholder label $y \in [L]$.

        \STATE Assume $Y_{n+1} = y$ and define the augmented data set $\mathcal{D}'_{ y} := \mathcal{D} \cup \{(X_{n+1}, y)\}$.
        \FOR{$i \in [n+1]$}
            \STATE Pretend that $(X_i,Y_i)$ is the test point and $\mathcal{D}'_{ y} \setminus \{(X_i, Y_i)\}$ is the calibration set.
            \STATE Construct a conformal prediction set $\hat{C}^{\text{loo}}_{y} (X_i)$ for $Y_i$.
            \STATE Evaluate the miscoverage indicator $E_{y,i}$ using~\eqref{eq:miscoverage_indicator}.
        \ENDFOR
        \STATE Perform a one-sided test for $H_0: \hat{q}_y \leq \alpha$ vs.~$H_1:  \hat{q}_y > \alpha$, with $\hat{q}_{y}$ defined as in~\eqref{eq:worst_miscoverage_rate}.
        \STATE Select the attribute $\hat{A}(X_{n+1},y)$ using~\eqref{eq:select_attribute} if $H_0$ is rejected, else set $\hat{A}(X_{n+1},y) = \emptyset$.
        \STATE \textbf{Output}: $\hat{A}(X_{n+1},y)$, either a selected sensitive attribute or an empty set.
\end{algorithmic}
\end{algorithm}

Before explaining how our method utilizes the selected sensitive attribute obtained in~\eqref{eq:select_attribute_final} to construct prediction sets satisfying~\eqref{eq:adaptive_equalized_coverage}, we pause to make two remarks.
First, as long as $n$ is large enough, $\hat{A}(X_{n+1}, y)$ is quite stable with respect to both $X_{n+1}$ and $y$, as each of these variables plays a relatively small role in determining the leave-one-out miscoverage rates. Therefore, the selected attribute $\hat{A}(X_{n+1})$ given by~\eqref{eq:select_attribute_final} is also quite stable for different values of $X_{n+1}$. This stability will be demonstrated empirically in Section~\ref{sec:experiments}.
Second, despite its iterative nature, our method can be implemented efficiently; see Appendix~\ref{app:computational_complexity}.
Further, if $n$ is very large, our method could be streamlined using cross-validation instead of a leave-one-out approach.

\subsection{Constructing the Adaptive Prediction Sets} \label{sec:prediction_set_construction}

After evaluating $\hat{A}(X_{n+1},y)$ by applying Algorithm~\ref{alg:attribute_selection} with placeholder label $y$ for $Y_{n+1}$ for all $y \in [L]$, and selecting either an empty set or a single attribute $\hat{A}(X_{n+1})$ using~\eqref{eq:select_attribute_final}, AFCP constructs an adaptive prediction set for $Y_{n+1}$ that satisfies~\eqref{eq:adaptive_equalized_coverage} as follows.

First, it constructs a {\em marginal} conformal prediction set $\hat{C}^{\text{m}}(X_{n+1})$ targeting~\eqref{eq:marginal_coverage}, by applying the standard approach reviewed in Appendix~\ref{app:standard-methods}.
Then, for each $y \in [L]$, it constructs a conformal prediction set $\hat{C}(X_{n+1}, \hat{A}(X_{n+1},y))$ with equalized coverage for the group identified by attribute $\hat{A}(X_{n+1},y)$, as if it had been fixed.
This is achieved by applying the standard marginal method based on a restricted calibration sample indexed by $\{i \in [n] : \phi(X_i, \hat{A}(X_{n+1}, y)) = \phi(X_{n+1}, \hat{A}(X_{n+1}, y)) \}$; see Algorithm~\ref{alg:prediction_standard} in Appendix~\ref{app:standard-methods} for further details.
Therefore, note that $\phi(X_i, \hat{A}(X_{n+1}, y))$ becomes equivalent to $\hat{C}^{\text{m}}(X_{n+1})$ if $\hat{A}(X_{n+1},y) = \emptyset$.
Finally, the AFCP prediction set for $Y_{n+1}$ is given by:
\begin{equation}\label{eq:prediction_set}
\hat{C}(X_{n+1}) = \hat{C}^{\text{m}}(X_{n+1}) \cup \left\{  \cup_{y=1}^L \hat{C}(X_{n+1}, \hat{A}(X_{n+1},y))  \right\}.
\end{equation}
See Algorithm~\ref{alg:AFCP} for an outline of this procedure.

Note that the AFCP set $\hat{C}(X_{n+1})$ given by~\eqref{eq:prediction_set} always contains the marginal set $\hat{C}^{\text{m}}(X_{n+1})$; this is essential to prove the validity of our approach.
Second, in practice the selection $\hat{A}(X_{n+1},y)$ tends to be very consistent for different values of the placeholder label $y$, as long as the sample size $n$ is large enough; therefore, the union in~\eqref{eq:prediction_set} will typically not lead to a very large prediction set.

\begin{algorithm}[!htb]
    \caption{Adaptively Fair Conformal Prediction (AFCP).}
    \label{alg:AFCP}
    \begin{algorithmic} [1]
    \STATE \textbf{Input}:
        calibration data $\mathcal{D}$; test point with features $X_{n+1}$; list of $K$ sensitive attributes; \\
        \STATE \textcolor{white}{\textbf{Input}:} pre-trained classifier $\hat{f}$;  fixed rule for computing nonconformity scores; level $\alpha \in (0,1)$.
        \FOR{$y \in [L]$}
            \STATE Select an attribute $\hat{A}(X_{n+1}, y)$ by applying Algorithm~\ref{alg:attribute_selection} with placeholder label $y$.
            \STATE Construct $\hat{C}(X_{n+1}, A)$ by applying Algorithm~\ref{alg:prediction_standard} with the attribute $A = \hat{A}(X_{n+1}, y)$.
        \ENDFOR
        \STATE Construct $\hat{C}^{\text{m}}(X_{n+1})$ by applying Algorithm~\ref{alg:prediction_standard} without protected attributes.
        \STATE \color{black}\textbf{Output}: selected attribute $\hat{A}(X_{n+1})$ given by~\eqref{eq:select_attribute_final} and prediction set $\hat{C}(X_{n+1})$ given by~\eqref{eq:prediction_set}.
\end{algorithmic}
\end{algorithm}

The following result, proved in Appendix~\ref{app:proofs}, establishes that the prediction sets $\hat{C}(X_{n+1})$ output by AFCP guarantee adaptive equalized coverage~\eqref{eq:adaptive_equalized_coverage} with respect to the adaptively selected attribute $\hat{A}(X_{n+1})$.
It is worth emphasizing this result is not straightforward and involves an innovative proof technique to address the lack of exchangeability introduced by the adaptive selection step.

\begin{theorem}\label{thm:AFCP}
If $\{ (X_i, Y_i)\}_{i=1}^{n+1}$ are exchangeable, the prediction set $\hat{C}(X_{n+1})$ and the selected attribute $\hat{A}(X_{n+1})$ output by Algorithm~\ref{alg:AFCP} satisfy the {\em adaptive equalized coverage} defined in~\eqref{eq:adaptive_equalized_coverage}.
\end{theorem}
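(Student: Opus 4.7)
My plan is to leverage two structural properties of Algorithm~\ref{alg:AFCP}. First, the output set $\hat{C}(X_{n+1})$ always contains both the marginal set $\hat{C}^{\text{m}}(X_{n+1})$ and the group-conditional set $\hat{C}(X_{n+1}, \hat{A}(X_{n+1}, y))$ for every placeholder label $y \in [L]$. Second, because $\hat{A}(X_{n+1}) = \bigcap_{y \in [L]} \hat{A}(X_{n+1}, y)$ is an intersection of sets that are each either empty or a singleton, whenever the reported attribute satisfies $\hat{A}(X_{n+1}) = A \neq \emptyset$, it must be that $\hat{A}(X_{n+1}, Y_{n+1}) = A$ as well. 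These two facts suggest analyzing the ``oracle'' attribute $\hat{A}(X_{n+1}, Y_{n+1})$ obtained by imagining we had plugged the true label into the leave-one-out procedure, and then transferring a coverage guarantee for it to the actually reported $\hat{A}(X_{n+1})$.

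The first step is a group-conditional coverage guarantee for $\hat{C}(X_{n+1}, \hat{A}(X_{n+1}, Y_{n+1}))$. The key observation is that Algorithm~\ref{alg:attribute_selection} aggregates the leave-one-out miscoverage indicators $E_{Y_{n+1}, i}$ symmetrically over all $i \in [n+1]$, so $\hat{A}(X_{n+1}, Y_{n+1})$ is a symmetric function of the exchangeable augmented multiset $\{(X_i, Y_i)\}_{i=1}^{n+1}$. A Mondrian-style conformal argument applied to the data-dependent but symmetric group $\bigl(\hat{A}(X_{n+1}, Y_{n+1}), \phi(X_{n+1}, \hat{A}(X_{n+1}, Y_{n+1}))\bigr)$ should then yield
\[
\mathbb{P}[Y_{n+1} \in \hat{C}(X_{n+1}, \hat{A}(X_{n+1}, Y_{n+1})) \mid \hat{A}(X_{n+1}, Y_{n+1}) = A,\, \phi(X_{n+1}, A) = v] \geq 1-\alpha
\]
for every $A$ and $v$, with the $A = \emptyset$ case reducing, by convention, to marginal coverage of $\hat{C}^{\text{m}}(X_{n+1})$.

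The hard part will be translating this oracle-level statement into the conditional coverage~\eqref{eq:adaptive_equalized_coverage} for the actually reported $\hat{A}(X_{n+1})$. The event $\{\hat{A}(X_{n+1}) = A\}$ is a strict sub-event of $\{\hat{A}(X_{n+1}, Y_{n+1}) = A\}$, since it demands consistency across \emph{all} $L$ placeholders, so a naive restriction would inflate the miscoverage bound by an uncontrolled density-ratio factor. My plan is to couple the analysis across the $L$ placeholders simultaneously: on $\{\hat{A}(X_{n+1}) = A\}$ with $A \neq \emptyset$, the group-conditional set $\hat{C}(X_{n+1}, \hat{A}(X_{n+1}, y))$ equals the common deterministic set $\hat{C}(X_{n+1}, A)$ for every $y$, and this set, together with the selection event itself, depends only on $X_{n+1}$ and the calibration data, not on $Y_{n+1}$. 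This $Y_{n+1}$-freeness should allow the miscoverage event to be re-expressed as a symmetric functional of the unordered multiset, restoring the exchangeability that the adaptive selection appeared to destroy. A parallel symmetrization, combined with the inclusion $\hat{C}^{\text{m}}(X_{n+1}) \subseteq \hat{C}(X_{n+1})$, would handle the case $\hat{A}(X_{n+1}) = \emptyset$---in which $\phi(X_{n+1}, \emptyset)$ is a constant---by recovering a marginal-style coverage bound that remains valid on the selection event $\{\hat{A}(X_{n+1}) = \emptyset\}$. I expect this step to be precisely where the paper's ``innovative proof technique'' enters.
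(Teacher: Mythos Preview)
Your oracle step is exactly what the paper does: it introduces $\hat{A}^{\text{o}}(X_{n+1}, Y_{n+1})$ (the attribute Algorithm~\ref{alg:attribute_selection} would return with the true label plugged in), observes that this is a symmetric function of $\{(X_i,Y_i)\}_{i=1}^{n+1}$, treats it as fixed, and invokes the equalized-coverage result of \cite{romano2019malice} to obtain
\[
\mathbb{P}\bigl[Y_{n+1}\in\hat{C}^{\text{o}}(X_{n+1},\hat{A}^{\text{o}})\,\big|\,\phi(X_{n+1},\hat{A}^{\text{o}})\bigr]\ge 1-\alpha.
\]
You also correctly isolate the key structural fact that whenever $\hat{A}(X_{n+1})\neq\emptyset$ it must coincide with $\hat{A}^{\text{o}}$.

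Where you diverge is the translation step, and you make it considerably harder than the paper does. You anticipate having to condition on $\{\hat{A}(X_{n+1})=A\}$, which is indeed a strict sub-event of $\{\hat{A}^{\text{o}}=A\}$, and you plan an elaborate ``$Y_{n+1}$-freeness'' symmetrization to neutralize the resulting density ratio. The paper does none of this. Its entire translation is the single line
\[
\mathbb{P}\bigl[Y_{n+1}\in\hat{C}(X_{n+1})\,\big|\,\phi(X_{n+1},\hat{A}(X_{n+1}))\bigr]
\ \ge\
\min\Bigl\{\mathbb{P}[Y_{n+1}\in\hat{C}^{\text{m}}(X_{n+1})],\
\mathbb{P}\bigl[Y_{n+1}\in\hat{C}^{\text{o}}(X_{n+1},\hat{A}^{\text{o}})\,\big|\,\phi(X_{n+1},\hat{A}^{\text{o}})\bigr]\Bigr\},
\]
which it justifies directly from the almost-sure dichotomy $\hat{A}(X_{n+1})\in\{\emptyset,\hat{A}^{\text{o}}\}$ together with the containments $\hat{C}^{\text{m}}\subseteq\hat{C}$ and $\hat{C}^{\text{o}}\subseteq\hat{C}$; each minimand is then bounded by $1-\alpha$ separately. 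The paper never conditions on the event $\{\hat{A}(X_{n+1})=A\}$ and never confronts a density ratio. Your $Y_{n+1}$-freeness route might be developable into an alternative argument---and could in principle yield a sharper statement that conditions explicitly on the reported $\hat{A}(X_{n+1})$ rather than only on $\phi(X_{n+1},\hat{A}(X_{n+1}))$---but it is substantially more work than what the paper actually writes, and it is not the ``innovative technique'' being referenced; that is simply the min-over-two-cases reduction above.
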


\section{Numerical Experiments}\label{sec:experiments}

\subsection{Setup and Benchmarks}

This section demonstrates the empirical performance of AFCP, focusing on the implementation described in Section~\ref{sec:methodology}, which selects at most one sensitive attribute. Our method is compared with three existing approaches, which utilize the same data, ML model, and conformity scores but produce prediction sets with different guarantees.
The first is the {\em marginal} benchmark, which constructs prediction sets guaranteeing~\eqref{eq:marginal_coverage} by applying Algorithm~\ref{alg:prediction_standard} without protected attributes.
The second is the {\em exhaustive} equalized benchmark, which constructs prediction sets guaranteeing~\eqref{eq:full_equalized_coverage} by applying Algorithm~\ref{alg:prediction_standard} with all $K$ sensitive attributes simultaneously protected.
The third is a {\em partial} equalized benchmark that separately applies Algorithm~\ref{alg:prediction_standard} with each possible protected attribute $k \in [K]$, and then takes the union of all such prediction sets.
This is an intuitive approach that can be easily verified to provide a coverage guarantee intermediate between~\eqref{eq:full_equalized_coverage} and~\eqref{eq:adaptive_equalized_coverage}, namely:
\begin{equation}\label{eq:partial_equalized_coverage}
    \mathbb{P}[ Y_{n+1} \in \hat{C}(X_{n+1})\mid \phi(X_{n+1}, \{k\}) ] \geq 1-\alpha, \quad \forall k \in [K].
\end{equation}
However, we will see that these prediction sets are often still too conservative in practice.

In addition, we apply a variation of AFCP that always selects one sensitive attribute, regardless of the outcome of the significance test. This method is denoted as AFCP1 in our experiments.

For all methods considered, the classifier is based on a five-layer neural network with linear layers interconnected via a ReLU activation function. The output layer uses a softmax function to estimate the conditional label probabilities. The Adam optimizer and cross-entropy loss function are used in the training process, with a learning rate set at 0.0001. The loss values demonstrate convergence after 100 epochs of training. For all methods, the miscoverage target level is set at $\alpha = 0.1$.

\subsection{Synthetic Data} \label{sec:experiments-synthetic}

We generate synthetic classification data to mimic a medical diagnosis task with six possible labels: Skin cancer, Diabetes, Asthma, Stroke, Flu, and Epilepsy.
The available features include three sensitive attributes---Age Group, Region, and Color---and six additional non-sensitive covariates.
Color is categorized as Blue or Grey, with 10\% and 90\% marginal frequencies, respectively.
The Age Group is cyclically repeated as ${<18, 18-24, 25-40, 41-65, >65}$, and Region is sampled from an i.i.d.~multinomial distribution across $\{\text{West, East, North, South}\}$ with equal probabilities.
The six non-sensitive features are i.i.d.~random samples from a uniform distribution on $[0,1]$.
For simplicity, Color is denoted as $X_0$ and the first non-sensitive feature as $X_1$.
Conditional on $X$, the label $Y$ is generated based on a decision tree model that depends only on $X_0$ and $X_1$, as detailed in Appendix~\ref{app:more_experiments}.
This model is designed so that the diagnosis label for individuals with Color equal to Blue is intrinsically harder to predict, mimicking the presence of algorithmic bias.

Figure~\ref{fig:main_exp_mc_sim_ndata} shows the performance of all methods as a function of the total sample size, ranging from 200 to 2000. In each case, $50\%$ of the samples are used for training and the remaining $50\%$ for calibration. Results are averaged over 500 test points and 100 independent experiments.

While the marginal benchmark produces the smallest prediction sets on average, it leads to significant empirical undercoverage within the Blue group. In contrast, the exhaustive benchmark, which achieves the highest coverage overall, tends to lead to overly conservative and thus uninformative prediction sets, especially for the Blue group.
The partial benchmark, though less conservative than the exhaustive method, still generates prediction sets that are too large when the sample size is small.

Our AFCP method and its simpler variation, AFCP1, not only achieve valid coverage for the Blue group but do so with prediction sets that, on average, are not much larger than the marginal ones.
AFCP1 is slightly more robust than AFCP when the sample size is very small, as it never fails to select a sensitive attribute.
This is advantageous in scenarios where we know there is a sensitive attribute worth equalizing coverage for, though this may not always be the case in practice.
See Figure~\ref{fig:main_exp_mc_sim_ndata_color} and Table~\ref{tab:main_exp_mc_sim_ndata} for detailed results with standard errors.

\begin{figure*}[!htb]
    \centering
    \includegraphics[width=\linewidth]{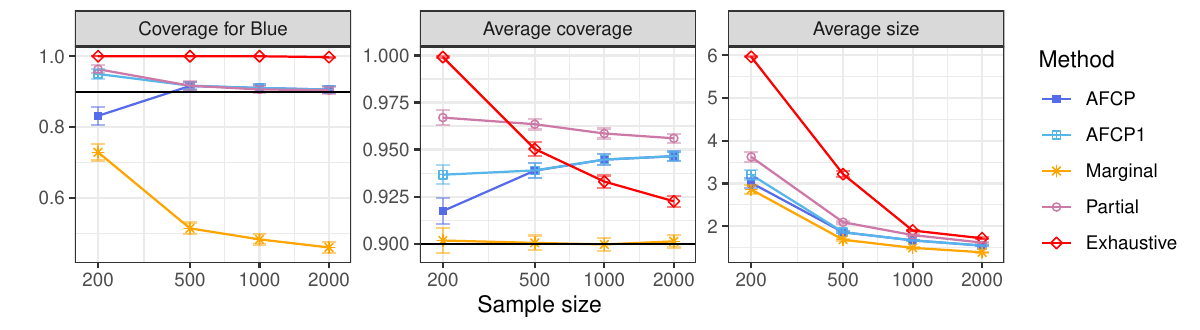}\vspace{-0.5cm}
    \caption{Performance of conformal prediction sets constructed by different methods on synthetic medical diagnosis data, as a function of the total number of training and calibration data points. Our method (AFCP) leads to more informative prediction sets (smaller average size) with more effective mitigation of algorithmic bias (higher conditional coverage). The error bars indicate 2 standard errors.}
    \label{fig:main_exp_mc_sim_ndata}
\end{figure*}

Figure~\ref{fig:main_exp_mc_sim_sel_fre} provides additional insight into our method's performance by plotting the selection frequencies of each sensitive attribute as a function of sample size, within the same experiments described in Figure~\ref{fig:main_exp_mc_sim_ndata}.
These results show that our method behaves as anticipated.
When the sample size and algorithmic bias are both small, AFCP shows more variability in selecting the sensitive attribute, often selecting no attributes.
However, as the sample size grows and the undercoverage affecting the Blue group becomes more pronounced, AFCP consistently selects Color as the most sensitive attribute, correctly identifying the main manifestation of algorithmic bias in these data.

\begin{figure*}[!htb]
    \centering
    \includegraphics[width=\linewidth]{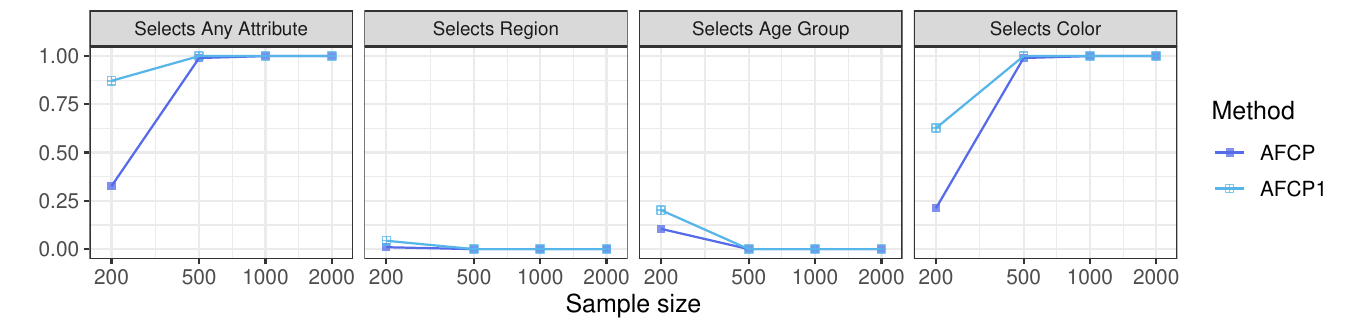}\vspace{-0.5cm}
    \caption{Selection frequency of different attributes using our AFCP method and its variation, AFCP1, in the experiments of Figure~\ref{fig:main_exp_mc_sim_ndata}.
    As the sample size increases, AFCP becomes more consistent in selecting the most relevant attribute, Color. }
    \label{fig:main_exp_mc_sim_sel_fre}
\end{figure*}

Additional results are presented in Appendix~\ref{app:more_experiments_mc-synthetic}.
Figures~\ref{fig:main_exp_mc_sim_ndata_color}--\ref{fig:main_exp_mc_sim_ndata_region} and Tables~\ref{tab:main_exp_mc_sim_ndata_color}--\ref{tab:main_exp_mc_sim_ndata_region} summarize the average coverage and prediction set size conditional on each protected attribute. Figures~\ref{fig:main_exp_mc_sim_ndata_lc}--\ref{fig:main_exp_mc_sim_ndata_region_lc} and Tables~\ref{tab:main_exp_mc_sim_ndata_lc}--\ref{tab:main_exp_mc_sim_ndata_region_lc} study the performance of an extension of our method that also provides valid coverage conditional on the true label of the test point.

\subsection{Nursery data}

We apply AFCP and its benchmarks to the open-domain {\em Nursery} data set \citep{misc_nursery_76}, which was derived from a hierarchical decision model originally developed to rank applications for nursery schools for social science studies. The data encompass 12,960 instances with eight categorical features: Parents' occupation (3 levels), Child's nursery (5 levels), Family form (4 levels), Number of children (1, 2, 3, or more), Housing conditions (3 levels), Financial standing (2 levels), Social conditions (3 levels), and Health status (3 levels). These features are used to predict application ranks across five categories.
The variables ``Parents' occupation", ``Number of children", ``Financial standing", ``Social conditions", and ``Health status" are marked as possible sensitive attributes.

To prepare for model training, we executed several pre-processing steps. First, two instances labeled ``recommend" were removed due to the minimal occurrence of this outcome label. Subsequently, we utilized the LabelEncoder function in the sklearn Python package to numerically encode all features and labels. To make the problem more interesting and allow control over the strength of algorithmic bias, we added independent, uniformly distributed noise to the labels of samples with Parents' occupation in the first category, rounding these perturbed labels to the nearest integer.
This makes the group corresponding to the first category of Parents' occupation intrinsically more unpredictable and hence more prone to algorithmic bias. To enhance the challenge of making accurate predictions for this group, it was further down-sampled to 10\% of its original size.

Figure~\ref{fig:main_exp_mc_real_ndata} summarizes the performance of all approaches as a function of the total number of training and calibration data points, which varies between 200 and 5000. The results are averaged over 500 randomly chosen test points and 100 repeated experiments. In each experiment, 50\% of the samples are randomly assigned for training and the remaining 50\% for calibration.
The marginal benchmark is heavily biased, leading to prediction sets with very low coverage for samples with Parents' occupation in the first category.
The exhaustive benchmark is too conservative and results in very large prediction sets unless the sample size is very large.
The partial benchmark performs better than the other two benchmarks, but our AFCP method still outperforms it, producing smaller prediction sets with valid coverage even for the hardest-to-predict group. See Table~\ref{tab:main_exp_mc_real_ndata} for detailed results.

\begin{figure*}[!t]
    \centering
    \includegraphics[width=\linewidth]{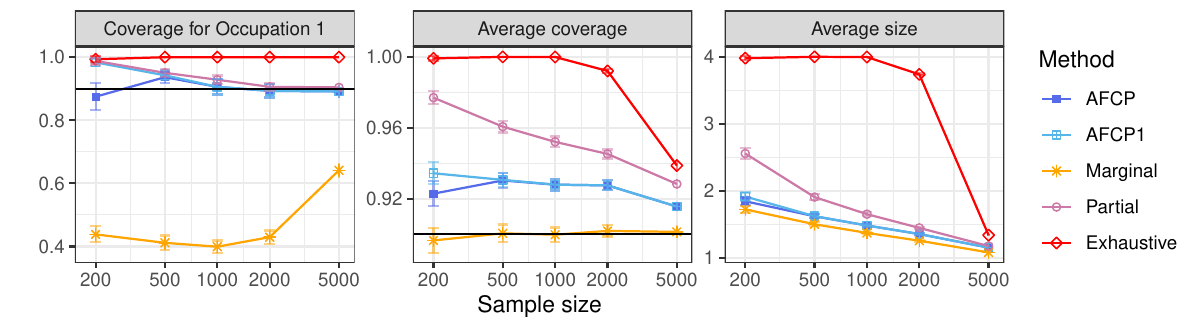}\vspace{-0.5cm}
    \caption{Performance of prediction sets constructed by different methods on the Nursery data, as a function of the sample size.
    AFCP leads to more informative predictions with higher coverage conditional on the sensitive attribute, Parents' occupation (shown explicitly for level one).}
    \label{fig:main_exp_mc_real_ndata}
\end{figure*}

The results of additional experiments are presented in Appendix~\ref{app:more_experiments_mc-nursery}.
 Figures~\ref{fig:main_exp_mc_real_ndata_parent}--\ref{fig:main_exp_mc_real_ndata_health} and Tables~\ref{tab:main_exp_mc_real_ndata_parent}--\ref{tab:main_exp_mc_real_ndata_health} detail the average coverage and prediction set size for each sensitive attribute. Additionally, Figures~\ref{fig:main_exp_mc_real_ndata_lc}--\ref{fig:main_exp_mc_real_ndata_health_lc} and Tables~\ref{tab:main_exp_mc_real_ndata_lc}--\ref{tab:main_exp_mc_real_ndata_health_lc} report on the performance of an extended version of AFCP which also ensured valid coverage conditional on the true test label.

\subsection{Additional Numerical Experiments}
Figures~\ref{fig:main_exp_real_compas_ndata}--\ref{fig:main_exp_real_compas_selfreq} and Table~\ref{tab:main_exp_real_compas_ndata} in Appendix~\ref{app:more_experiments_mc-compas} summarize additional experimental results using the open-source COMPAS dataset \citep{compas}. Moreover, Appendix~\ref{app:more_experiments_od} demonstrates the empirical performance of our AFCP extension for outlier detection; see~Figures~\ref{fig:main_exp_od_sim_ndata}--\ref{fig:main_exp_od_real_ndata_country} and Tables~\ref{tab:main_exp_od_sim_ndata}--\ref{tab:main_exp_od_real_ndata_country}.
These demonstrations involve both synthetic data and the open-domain Adult Income dataset \citep{misc_adult_2}. The experiments with the real-world Adult Income data also include an AFCP extension that allows for the selection of multiple sensitive attributes at the same time.

\subsection{Performance of AFCP with Different Sample Sizes}
Constructing informative prediction sets that achieve high conditional coverage is inherently more challenging when dealing with smaller sample sizes. By experimenting with different sample sizes, we demonstrate that our AFCP method consistently performs well across different scenarios. For instance, in the experiments depicted in Figures~\ref{fig:main_exp_mc_sim_ndata}--\ref{fig:main_exp_mc_sim_sel_fre}, when the sample size is as small as $200$, it is difficult to fit an accurate predictive model, assess conditional coverage, and reliably identify the sensitive attribute associated with the lowest coverage. This difficulty is reflected in the relatively large sizes of the prediction sets produced by all methods and the substantial discrepancies between the nominal and empirical conditional coverage. Despite these challenges, our method often succeeds in selecting the correct sensitive attribute, achieving significantly higher conditional coverage compared to the Marginal benchmark, with only a slight increase in average prediction set size.

Moreover, as the sample size increases, our method becomes highly effective at identifying the attribute associated with the lowest conditional coverage, as illustrated in Figure~\ref{fig:main_exp_mc_sim_sel_fre}. Consequently, our method is able to achieve high conditional coverage with relatively small prediction sets. Overall, these experiments demonstrate that our method offers distinct advantages over existing approaches in both large-sample and small-sample settings.

\subsection{Comparison between AFCP and AFCP1}
Both AFCP and AFCP1 outperform the benchmark approaches when applied to datasets with small sample sizes, each excelling in different scenarios. AFCP is better suited to situations where there is uncertainty regarding the presence of significant algorithmic bias, while AFCP1 is more effective when prior knowledge suggests that at least one attribute may be biased. For example, in Figure~\ref{fig:main_exp_mc_sim_ndata}, which illustrates a case where one group (Color-Blue) is consistently biased, AFCP1 achieves slightly higher conditional coverage than AFCP. While AFCP exhibits slight undercoverage for the blue group with small sample sizes, it still outperforms the Marginal approach.
The occasional inability of AFCP to select a sensitive attribute in small samples reflects the inherent challenges posed by limited datasets. When the method does not select an attribute, it often signifies a lack of sufficient evidence of algorithmic bias, making it reasonable to calibrate the prediction sets solely for marginal coverage.

\section{Discussion} \label{sec:discussion}

This paper presents a practical and statistically principled method to construct informative conformal prediction sets with valid coverage conditional on adaptively selected features.
This approach balances efficiency and equalized coverage, which may be particularly useful in applications involving multiple sensitive attributes.
While we believe it offers substantial benefits, a potential limitation of this method is that it does not always identify the most relevant sensitive attribute, particularly when working with limited sample sizes.
Nevertheless, our empirical results are quite encouraging, demonstrating that AFCP effectively mitigates significant instances of algorithmic bias when the sample size is adequate.
Moreover, our method is flexible, allowing for the integration of prior knowledge about which sensitive attributes might require protection against algorithmic bias.

This paper creates several opportunities for further work.
Future research could focus on theoretically studying the conditions under which our method can be guaranteed to select the correct sensitive attribute with high probability. Additionally, future extensions might explore implementing different attribute selection procedures, such as those inspired by \cite{cherian2023statistical} - within our flexible AFCP framework to delve into the subtle trade-offs associated with different selection algorithms. 
Moreover, adapting our approach to accommodate different fairness criteria by adaptively adjusting the coverage rate target for each subgroup is another promising area of study. Extending our method to more efficiently handle scenarios with an extremely high number of possible classes is also worthwhile, potentially drawing inspiration from \citep{ding2023classconditional}. Furthermore, investigating extensions for classification tasks where the target variable is ordered could be both intriguing and practically useful. In such cases, a naive modification of our method would involve utilizing the discrete convex hull of all components instead of unions of subintervals on the right-hand side of Equation~\eqref{eq:prediction_set}. However, developing a more refined approach would be a valuable contribution for future work. Future extensions of our work could focus on adapting to distributional shifts or enhancing the robustness and efficiency of our method under adversarial attacks or contaminated data, potentially drawing connections with \citep{einbinder2022conformal, yan2024provably, gendler2022adversarially, feldman2023labelnoise, sesia2024labelnoise}.
Finally, extending our method to accommodate regression tasks with continuous outcomes presents additional computational challenges, but potential solutions could be inspired by \cite{liang2023ces}.

The numerical experiments described in this paper were carried out on a computing cluster. Individual experiments, involving 1000 calibration samples and 500 test samples, required less than 25 minutes and 5GB of memory on a single CPU. The entire project took approximately 100 hours of computing time, and did not involve preliminary or failed experiments.

Software implementing the algorithms and data experiments are available online at \url{https://github.com/FionaZ3696/Adaptively-Fair-Conformal-Prediction}.

\subsection*{Acknowledgements}

The authors thank anonymous reviewers for helpful comments, and the Center for Advanced Research Computing at the University of Southern California for providing computing resources.
M.~S.~and Y.~Z.~were partly supported by NSF grant DMS 2210637.
M.~S.~was also partly supported by an Amazon Research Award.



\bibliographystyle{unsrtnat}
\bibliography{ref}

\clearpage
\appendix

\renewcommand{\thesection}{A\arabic{section}}
\renewcommand{\theequation}{A\arabic{equation}}
\renewcommand{\thetheorem}{A\arabic{theorem}}
\renewcommand{\thecorollary}{A\arabic{corollary}}
\renewcommand{\theproposition}{A\arabic{proposition}}
\renewcommand{\thelemma}{A\arabic{lemma}}
\renewcommand{\thetable}{A\arabic{table}}
\renewcommand{\thefigure}{A\arabic{figure}}
\renewcommand{\thealgorithm}{A\arabic{algorithm}}
\setcounter{figure}{0}
\setcounter{table}{0}
\setcounter{proposition}{0}
\setcounter{theorem}{0}
\setcounter{lemma}{0}
\setcounter{algorithm}{0}

\section{Review of Existing Conformal Classification Methods} \label{app:standard-methods}

Algorithm~\ref{alg:prediction_standard} outlines the standard approach for constructing conformal prediction sets with equalized coverage with respect to a fixed list of protected attributes \citep{romano2019malice}.
In the special case where the list of protected attributes is empty, this method reduces to the standard approach for constructing prediction sets with marginal coverage.

\begin{algorithm}[!htb]
    \caption{Conformal classification with equalized coverage for fixed protected attributes.}
    \label{alg:prediction_standard}
    \begin{algorithmic} [1]
    \STATE \textbf{Input}:
        calibration data $\mathcal{D}$; test point with features $X_{n+1}$; list of protected attributes $A$; \\
        \STATE \textcolor{white}{\textbf{Input}:} pre-trained classifier $\hat{f}$;  pre-defined rule for computing nonconformity scores; \\
        \STATE \textcolor{white}{\textbf{Input}:} nominal level $\alpha \in (0,1)$.
        \STATE Define the calibration subset 
        $$\mathcal{I}(X_{n+1}, A) = \left\{ i \in [n] : \phi(X_i, A) = \phi(X_{n+1}, A) \right\}.$$
        \FOR{$y \in [L]$}
        \STATE Compute the nonconformity scores $\hat{S}_i^y$ for $i\in \mathcal{I}(X_{n+1}, A) \cup \{(X_{n+1}, y)\}$ using $\hat{f}$.
        \STATE Compute the conformal p-value:
        $$
        \hat{u}^y(X_{n+1}) = \frac{1+|i\in \mathcal{I}(X_{n+1}, A): \hat{S}_i^y \leq \hat{S}^y_{n+1}|}{1+|\mathcal{I}(X_{n+1}, A)|}.
        $$
        \ENDFOR
        \STATE Construct a prediction set using $\hat{C}(X_{n+1}) = \{ y \in [L]: \hat{u}^y(X_{n+1}) \geq \alpha \}$.
        \STATE \textbf{Output}: a prediction set $\hat{C}(X_{n+1})$.
\end{algorithmic}
\end{algorithm}

In the context of outlier detection, Algorithm~\ref{alg:conformal_pval_standard} reviews the standard approach for computing conformal p-values achieving valid false positive rate (FPR) control conditional a fixed list of protected attributes.

\begin{algorithm}[!htb]
    \caption{Conformal p-value with equalized FPR for fixed protected attributes.}
    \label{alg:conformal_pval_standard}
    \begin{algorithmic} [1]
    \STATE \textbf{Input}:
        calibration data $\mathcal{D}$; test point $Z_{n+1}$; list of protected attributes $A$; \\
        \STATE \textcolor{white}{\textbf{Input}:} pre-trained one-class classifier $\hat{f}$;  pre-defined rule for computing nonconformity scores; \\
        \STATE \textcolor{white}{\textbf{Input}:} nominal level $\alpha \in (0,1)$.
        \STATE Define the calibration subset 
        $$\mathcal{I}(Z_{n+1}, A) = \left\{ i \in [n] : \phi(Z_i, A) = \phi(Z_{n+1}, A) \right\}.$$
        \STATE Compute the nonconformity scores $\hat{S}_i$ for $i\in \mathcal{I}(Z_{n+1}, A) \cup \{Z_{n+1}\}$ using $\hat{f}$.
        \STATE Compute the conformal p-value:
        $$
        \hat{u}(Z_{n+1}) = \frac{1+|i\in \mathcal{I}(Z_{n+1}, A): \hat{S}_i \leq \hat{S}_{n+1}|}{1+|\mathcal{I}(Z_{n+1}, A)|}.
        $$
        \STATE \textbf{Output}: a conformal p-value $\hat{u}(Z_{n+1})$.
\end{algorithmic}
\end{algorithm}


\FloatBarrier

\section{Methodology Extension: AFCP with Multiple Selected Attributes}\label{app:AFCP_multi}

This section introduces an extension of  AFCP that enables the selection of more than one sensitive attribute. 
For simplicity, we focus on the selection of up to two attributes. The methodology for selecting more than two attributes can be extended in a similar manner, as explained later.

\subsection{Automatic Multiple Attribute Selections}

Given a pre-trained classification model, an independent calibration data set $\mathcal{D}$ with size $n$, and a test point $Z_{n+1} = (X_{n+1}, Y_{n+1})$ with an unknown label $Y_{n+1}$, Algorithm~\ref{alg:attribute_selection} in Section~\ref{sec:attribute_selection} introduces the AFCP component to select one sensitive attribute according to the {\em leave-one-out} procedure. Intuitively, selecting two sensitive attributes requires executing Algorithm~\ref{alg:attribute_selection} twice.  However, during the second iteration, the sensitive attribute list is restricted to exclude the most critical protected attribute selected in the first round.

Specifically, for each placeholder label $y \in [L]$, assuming $Y_{n+1} = y$, Algorithm~\ref{alg:attribute_selection} is run with all $K$ sensitive attributes for the first iteration to obtain the first selected attribute $\hat{A}^1(X_{n+1}, y) \in \{\emptyset, \{1\}, \dots, \{K\}\}$. If $\hat{A}^1(X_{n+1}, y) \neq \emptyset$, Algorithm~\ref{alg:attribute_selection} is run again using the sensitive attributes $[K] \setminus \hat{A}^1(X_{n+1}, y)$ and one can get the second selected attribute $\hat{A}^2(X_{n+1}, y) \in \{\emptyset, \{1\}, \dots, \{K\}\} \setminus \hat{A}^1(X_{n+1}, y)$. Therefore, the identified attributes for test feature $X_{n+1}$ with placeholder $y$ for the test label is the union of the two $\hat{A}(X_{n+1}, y) = \hat{A}^1(X_{n+1},y) \cup \hat{A}^2(X_{n+1},y)$. This procedure is outlined in Algorithm~\ref{alg:attribute_selection_multi}. 

After repeating this procedure for each $y\in [L]$, the final selected attribute $\hat{A}(X_{n+1})$ is:
\begin{equation}\label{eq:select_attribute_final_multi}
\hat{A}(X_{n+1}) = \cap_{y \in [L]} \hat{A}(X_{n+1}, y). 
\end{equation}

\begin{algorithm}[!htb]
    \caption{Two attributes selection using a placeholder test label.}
    \label{alg:attribute_selection_multi}
    \begin{algorithmic} [1]
        \STATE \textbf{Input}:
        calibration data $\mathcal{D}$; test point with features $X_{n+1}$; list of $K$ sensitive attributes; \\
        \STATE \textcolor{white}{\textbf{Input}:} pre-trained classifier $\hat{f}$;  pre-defined rule for computing nonconformity scores; \\
        \STATE \textcolor{white}{\textbf{Input}:} nominal level $\alpha \in (0,1)$, placeholder label $y \in [L]$.
        
        \STATE Select the first attribute $\hat{A}^1(X_{n+1},y)$ by applying Algorithm~\ref{alg:attribute_selection} with placeholder label $y$ and sensitive attributes $[K]$. 
        \IF{$\hat{A}^1(X_{n+1},y) \neq \emptyset$}
        \STATE Select the second attribute $\hat{A}^2(X_{n+1},y)$ by applying Algorithm~\ref{alg:attribute_selection} with placeholder label $y$ and sensitive attributes $[K]\setminus \hat{A}^1(X_{n+1},y)$.
        \ENDIF
        \STATE \textbf{Output}: $\hat{A}(X_{n+1}, y) = \hat{A}^1(X_{n+1},y) \cup \hat{A}^2(X_{n+1},y)$, which is a set of an empty set, or a set including one or two selected sensitive attribute(s).
\end{algorithmic}
\end{algorithm}

\subsection{Constructing the Adaptive Prediction Sets}

After selecting a subset of attributes $\hat{A}(X_{n+1})$ using~\eqref{eq:select_attribute_final_multi}, which may be empty, or include one or two attributes, AFCP constructs an adaptive prediction set for $Y_{n+1}$ that satisfies~\eqref{eq:adaptive_equalized_coverage} as follows.

First, it constructs a marginal conformal prediction set $\hat{C}^{\text{m}}(X_{n+1})$ targeting~\eqref{eq:marginal_coverage}, by applying Algorithm~\ref{alg:prediction_standard} without protected attributes. 
Then, for each $y \in [L]$, it constructs a conformal prediction set $\hat{C}(X_{n+1}, \hat{A}(X_{n+1},y))$ with equalized coverage for the group {\em jointly} identified by attributes $\hat{A}(X_{n+1},y)$.
This can be achieved by applying Algorithm~\ref{alg:prediction_standard} with protected attributes $\hat{A}(X_{n+1},y)$. 
Lastly, it constructs a conformal prediction set $\hat{C}^{\text{eq}}(X_{n+1},\ell)$ with equalized coverage separately for each protected attribute $\ell \in \cup_{y\in[L]} \hat{A}(X_{n+1},y)$, by applying the standard approach in Algorithm~\ref{alg:prediction_standard} on the subsets indexed by $\{i \in [n] : \phi(X_i, \{\ell\}) = \phi(X_{n+1}, \{\ell\}) \}$.

Finally, the AFCP prediction set constructed using up to two selected attributes is given as:
\begin{equation}\label{eq:prediction_set_multi}
\hat{C}(X_{n+1}) = \hat{C}^{\text{m}}(X_{n+1}) \cup \left\{  \cup_{y=1}^L \hat{C}(X_{n+1}, \hat{A}(X_{n+1},y)) \right\} \cup \left\{ \cup_{\ell \in \cup_y \hat{A}(X_{n+1}, y)} \hat{C}^{\text{eq}}(X_{n+1}, \ell) \right\}.
\end{equation}

See Algorithm~\ref{alg:AFCP_multi} for an outline of this AFCP extension.

\begin{algorithm}[!htb]
    \caption{AFCP with two selected attributes.}
    \label{alg:AFCP_multi}
    \begin{algorithmic} [1]
    \STATE \textbf{Input}:
        calibration data $\mathcal{D}$; test point with features $X_{n+1}$; list of $K$ sensitive attributes; \\
        \STATE \textcolor{white}{\textbf{Input}:} pre-trained classifier $\hat{f}$;  pre-defined rule for computing nonconformity scores; \\
        \STATE \textcolor{white}{\textbf{Input}:} nominal level $\alpha \in (0,1)$.
        \FOR{$y \in [L]$}
            \STATE Select attribute(s) $\hat{A}(X_{n+1}, y)$ by applying Algorithm~\ref{alg:attribute_selection_multi} with placeholder label $y$.
            \STATE Construct $\hat{C}(X_{n+1}, A)$ by applying Algorithm~\ref{alg:prediction_standard} with protected attribute(s) $A = \hat{A}(X_{n+1}, y)$.
        \ENDFOR
        \FOR{$\ell \in \cup_{y\in[L]}\hat{A}(X_{n+1}, y)$}
            \STATE Construct $\hat{C}^{\text{eq}}(X_{n+1}, \ell)$ by applying Algorithm~\ref{alg:prediction_standard} with protected attribute $\{\ell\}$.
        \ENDFOR 
        \STATE Construct $\hat{C}^{\text{m}}(X_{n+1})$ by applying Algorithm~\ref{alg:prediction_standard} without protected attributes.
        \STATE Define the final selected attribute(s) $\hat{A}(X_{n+1})$ using Equation~\eqref{eq:select_attribute_final_multi}.
        \STATE Define the final prediction set $\hat{C}(X_{n+1})$ using Equation~\eqref{eq:prediction_set_multi}.
        \STATE \color{black}\textbf{Output}: $\hat{A}(X_{n+1})$ and $\hat{C}(X_{n+1})$.
\end{algorithmic}
\end{algorithm}

\begin{theorem}\label{thm:AFCP_multi}
If $\{ (X_i, Y_i)\}_{i=1}^{n+1}$ are exchangeable random samples, then the conformal prediction set $\hat{C}(X_{n+1})$ and the selected attributes $\hat{A}(X_{n+1})$ output by Algorithm~\ref{alg:AFCP_multi} satisfy the adaptive equalized coverage defined in~\eqref{eq:adaptive_equalized_coverage}. 
\end{theorem}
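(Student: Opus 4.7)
The plan is to mirror the proof of Theorem~\ref{thm:AFCP}, partitioning on the value of $\hat{A}(X_{n+1})$ and, in each case, exhibiting a subset of $\hat{C}(X_{n+1})$ that already satisfies the required coverage. Since $\hat{A}(X_{n+1}) = \cap_{y \in [L]} \hat{A}(X_{n+1}, y)$ by~\eqref{eq:select_attribute_final_multi} and each $\hat{A}(X_{n+1}, y)$ has cardinality at most two, the final selection has cardinality $0$, $1$, or $2$, giving three natural cases. Because $\hat{C}(X_{n+1})$ is defined as a union in~\eqref{eq:prediction_set_multi}, coverage of any single summand suffices for coverage of the whole.

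When $\hat{A}(X_{n+1}) = \emptyset$, adaptive equalized coverage collapses to marginal coverage and is delivered by the summand $\hat{C}^{\text{m}}(X_{n+1})$ via the standard validity of Algorithm~\ref{alg:prediction_standard}. When $\hat{A}(X_{n+1}) = \{\ell\}$, $\ell$ lies in every $\hat{A}(X_{n+1}, y)$ and hence in $\cup_y \hat{A}(X_{n+1}, y)$, so the summand $\hat{C}^{\text{eq}}(X_{n+1}, \ell)$ appears in~\eqref{eq:prediction_set_multi}; it then suffices to verify that this set covers $Y_{n+1}$ with probability at least $1-\alpha$ conditional on $\phi(X_{n+1}, \{\ell\})$ and on the selection event. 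When $\hat{A}(X_{n+1}) = \{k_1, k_2\}$, the inclusion $\{k_1, k_2\} \subseteq \hat{A}(X_{n+1}, y)$ combined with $|\hat{A}(X_{n+1}, y)| \leq 2$ forces $\hat{A}(X_{n+1}, y) = \{k_1, k_2\}$ for every $y \in [L]$, in particular for $y = Y_{n+1}$; hence $\hat{C}(X_{n+1}, \{k_1, k_2\})$ appears in~\eqref{eq:prediction_set_multi}, and it suffices to check its conditional coverage given $\phi(X_{n+1}, \{k_1, k_2\})$ and the selection event.

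In both non-trivial cases the technical core is to show that the adaptive selection preserves within-subgroup exchangeability, and this would be dispatched exactly as in Theorem~\ref{thm:AFCP}: the leave-one-out procedure in Algorithm~\ref{alg:attribute_selection} depends on the samples only through within-group miscoverage counts and is therefore a symmetric function of $\{(X_i, Y_i)\}_{i=1}^{n+1}$ on any subgroup fixed by attribute values, while intersecting over placeholder labels $y \in [L]$ removes dependence on the unobserved $Y_{n+1}$. Conditioning on $\phi(X_{n+1}, A)$ and on $\{\hat{A}(X_{n+1}) = A\}$ therefore leaves the subsample indexed by $\{i \in [n+1] : \phi(X_i, A) = \phi(X_{n+1}, A)\}$ exchangeable, and the standard split-conformal guarantee of Algorithm~\ref{alg:prediction_standard} on that subsample yields the $(1-\alpha)$ lower bound.

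I expect the main obstacle, relative to the single-attribute result, to be verifying that the chained two-step selection in Algorithm~\ref{alg:attribute_selection_multi} does not silently break this symmetry: the second invocation of Algorithm~\ref{alg:attribute_selection} is run on the reduced attribute list $[K] \setminus \hat{A}^1(X_{n+1}, y)$ determined by the first invocation's output, so in principle the composite selection could depend non-symmetrically on individual samples. I would resolve this by observing that each invocation is symmetric in the samples within any subgroup fixed by an attribute value, hence the composition is symmetric within the subgroup jointly fixed by both selected attributes. Once this bookkeeping is in place, the remainder of the argument is a direct adaptation of the Theorem~\ref{thm:AFCP} proof, with only the obvious notational substitution of two-attribute subgroups for single-attribute subgroups in the third case.
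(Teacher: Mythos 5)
Your case analysis (empty, singleton, pair) roughly parallels the paper's, and your observation in the pair case that $\hat{A}(X_{n+1},y)=\{k_1,k_2\}$ for every $y$, in particular for $y=Y_{n+1}$, is exactly the mechanism the paper exploits. The gap is in your central exchangeability claim. You propose to establish coverage conditional on $\phi(X_{n+1},A)$ \emph{and} on the selection event $\{\hat{A}(X_{n+1})=A\}$, arguing that the selection is a symmetric function of $\{(X_i,Y_i)\}_{i=1}^{n+1}$ so that this conditioning leaves the matching subsample exchangeable. But $\hat{A}(X_{n+1})=\cap_{y}\hat{A}(X_{n+1},y)$ is deliberately computed \emph{without} $Y_{n+1}$: each $\hat{A}(X_{n+1},y)$ is symmetric only in the modified multiset $\{(X_1,Y_1),\dots,(X_n,Y_n),(X_{n+1},y)\}$ with a placeholder label, and intersecting over $y$ yields a function of $(X_{1:n},Y_{1:n},X_{n+1})$ that is \emph{not} invariant under permutations swapping the test pair with a calibration pair. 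Hence $\{\hat{A}(X_{n+1})=A\}$ is not a permutation-symmetric event, and conditioning on it can bias the rank of the test score within the selected subgroup --- this is precisely the selection bias the method is built to circumvent. You cannot dispatch this ``exactly as in Theorem~\ref{thm:AFCP},'' because that proof never conditions on the selection event either; your composite-selection remark for Algorithm~\ref{alg:attribute_selection_multi} inherits the same flaw.

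What the paper actually does is introduce an oracle selection $\hat{A}^{\text{o}}(X_{n+1},Y_{n+1})$, obtained by running the selection with the \emph{true} test label. This quantity is invariant under permutations of the $n+1$ exchangeable pairs, so it can be treated as fixed, and the associated sets $\hat{C}^{\text{o}}(X_{n+1},\hat{A}^{\text{o}}(X_{n+1},Y_{n+1}))$ and $\hat{C}^{\text{eq}}(X_{n+1},\ell)$ for $\ell\in\hat{A}^{\text{o}}(X_{n+1},Y_{n+1})$ enjoy valid group-conditional coverage by the fixed-attribute result. The actual selection, when non-empty, equals or is strictly contained in the oracle's, and the union~\eqref{eq:prediction_set_multi} always contains $\hat{C}^{\text{m}}(X_{n+1})$, the oracle's set $\hat{C}(X_{n+1},\hat{A}(X_{n+1},Y_{n+1}))$ (because the union runs over all placeholder labels, including the true one), and $\cup_{\ell\in\hat{A}^{\text{o}}}\hat{C}^{\text{eq}}(X_{n+1},\ell)$; the coverage in~\eqref{eq:adaptive_equalized_coverage} is then lower-bounded by conditioning only on $\phi(X_{n+1},\cdot)$, never on which selection occurred. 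Note also that in your singleton case you rely solely on $\hat{C}^{\text{eq}}(X_{n+1},\ell)$, whereas the paper distinguishes whether the oracle selected one attribute (use $\hat{C}^{\text{o}}$) or two (use the union of the $\hat{C}^{\text{eq}}$ sets); either way, the covering set must be tied to the permutation-invariant oracle selection, which your argument never introduces.
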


\section{Methodology Extension: AFCP with Label Conditional Coverage}\label{app:lc} 

This section extends the AFCP method with adaptive equalized coverage that is also conditional on the true test label. We focus on the main implementation of AFCP, where it can select up to one sensitive attribute.

First, the label conditional counterparts of the marginal coverage, the exhaustive equalized coverage, and the adaptive equalized coverage are defined. The label-conditional counterpart of marginal coverage is defined as:
\begin{equation}\label{eq:marginal_coverage_lc}
    \mathbb{P}[Y_{n+1} \in \hat{C}(X_{n+1})\mid Y_{n+1} = y] \geq 1-\alpha, \quad \forall y \in [L]. 
\end{equation}
Intuitively, this coverage ensures that the prediction sets constructed are valid for each group with the test label $y$ for all possible values of $y \in [L]$.
This coverage offers a stronger assurance than marginal coverage in classification contexts. However, it overlooks scenarios where groups, identified by feature attributes, may suffer adverse effects from prediction biases. To address these concerns, one can aim for label-conditional exhaustive equalized coverage \citep{romano2019malice}, defined as:
\begin{equation}\label{eq:full_equalized_coverage_lc}
    \mathbb{P}[Y_{n+1} \in \hat{C}(X_{n+1}) \mid Y_{n+1} = y, \phi(X_{n+1}, [K])] \geq 1-\alpha, \quad \forall y \in [L]. 
\end{equation}
Achieving this label conditional exhaustive equalized coverage involves applying the standard conformal classification method (outlined in Algorithm~\ref{alg:prediction_standard}) separately within each of the groups characterized by every possible combination of sensitive attributes and test labels. Hence, this approach can become overly conservative when a large number of sensitive attributes or response labels are presented. 

Our AFCP method strikes a balance between the two approaches to achieve label-conditional adaptive equalized coverage:
\begin{equation}\label{eq:adaptive_equalized_coverage_lc}
    \mathbb{P}[Y_{n+1} \in \hat{C}(X_{n+1}) \mid Y_{n+1}=y, \phi(X_{n+1}, \hat{A}(X_{n+1}))] \geq 1-\alpha, \quad \forall y \in [L],
\end{equation}
which guarantees that the true test label is contained within the prediction sets with high probability for the groups defined by the selected attribute $\hat{A}(X_{n+1})$ and the test label $y$ for every $y \in [L]$.

\subsection{Automatic Attribute Selection}
Given a pre-trained classification model and an independent calibration data set $\mathcal{D}$ with size $n$, for each placeholder label $y \in [L]$ for $Y_{n+1}$, the AFCP method with label-conditional adaptive equalized coverage~\eqref{eq:adaptive_equalized_coverage_lc} selects the sensitive attribute $\hat{A}(X_{n+1},y)$ by simply applying Algorithm~\ref{alg:attribute_selection} using the {\em label-restricted calibration data} $\mathcal{D}_y = \{ i\in [n]: Y_i = y\}$. After repeating this process for each $y\in [L]$, the final selected attribute $\hat{A}(X_{n+1})$ is again given by 
\begin{equation}\label{eq:select_attribute_final_lc}
    \hat{A}(X_{n+1}) = \cap_{y\in[L]} \hat{A}(X_{n+1}, y). 
\end{equation}

\subsection{Constructing the Adaptive Prediction Sets}

After selecting $\hat{A}(X_{n+1}, y)$ by applying Algorithm~\ref{alg:attribute_selection} with placeholder label $y$ based on the label-restricted calibration data $\mathcal{D}_y = \{i\in[n]:Y_i = y\}$, and selecting a sensitive attribute $\hat{A}(X_{n+1})$, AFCP constructs an adaptive prediction set for $Y_{n+1}$ that satisfies ~\eqref{eq:adaptive_equalized_coverage_lc} as follows.

For each $y \in [L]$, it firstly construct a conformal prediction set $\hat{C}^{\text{lc}}(X_{n+1},y)$ by applying Algorithm~\ref{alg:prediction_standard} using the label-restricted calibration set $\mathcal{D}_{y}$ without considering protected attributes. Then, it constructs another conformal prediction set $\hat{C}(X_{n+1}, \hat{A}(X_{n+1},y))$ with equalized coverage for the group identified by both the selected attribute $\hat{A}(X_{n+1},y)$ {\em and label $y$.}
This can be achieved by applying Algorithm~\ref{alg:prediction_standard} based on a subset of the calibration samples indexed by $\mathcal{I}(X_{n+1},y) = \{ i\in \mathcal{D}_y: \phi(X_{i}, \hat{A}(X_{n+1},y)) = \phi(X_{n+1}, \hat{A}(X_{n+1}, y))\}$.
Lemma~\ref{lemma:lc} shows that, for any given placeholder label, the prediction set constructed in this step satisfies the label conditional adaptive equalized coverage as long as the selected variable using that placeholder label is fixed.

\begin{lemma}\label{lemma:lc}
If $\{(X_i, Y_i)\}_{i=1}^{n+1}$ are exchangeable and the selected attribute $\hat{A}(X_{n+1}, y)$ is fixed for some placeholder label $y$, then, the prediction set $\hat{C}(X_{n+1}, \hat{A}(X_{n+1},y))$ constructed by calibrating on $\mathcal{I}(X_{n+1},y)$ satisfies
$$
\mathbb{P}[Y_{n+1} \in \hat{C}(X_{n+1}, \hat{A}(X_{n+1},y)) \mid Y_{n+1} = \tilde{y}, \phi(X_{n+1}, \hat{A}(X_{n+1}, y))]\geq 1-\alpha,
$$
for any $\tilde{y} \in [L]$. 
\end{lemma}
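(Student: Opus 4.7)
The plan is to reduce the claim to the standard split-conformal coverage guarantee on a labeled, attribute-matched subgroup. Since $\hat{A}(X_{n+1},y)$ is assumed fixed by the hypothesis of the lemma, I will treat $a := \hat{A}(X_{n+1},y)$ as a deterministic constant throughout, and write $v := \phi(X_{n+1},a)$. In the label-conditional construction of $\hat{C}(X_{n+1},a)$, each candidate label $\tilde{y}$ is included iff a conformal p-value $\hat{u}^{\tilde{y}}(X_{n+1})$---computed from the nonconformity scores of the hypothetical test pair $(X_{n+1},\tilde{y})$ and of the calibration samples in $\mathcal{I}(X_{n+1},\tilde{y}) := \{i \in [n] : Y_i = \tilde{y},\ \phi(X_i,a) = v\}$---is at least $\alpha$.

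Fix any $\tilde{y} \in [L]$ and let $\mathcal{E} := \{Y_{n+1} = \tilde{y},\ \phi(X_{n+1},a) = v\}$. On this event, the inclusion $\{Y_{n+1} \in \hat{C}(X_{n+1},a)\}$ is equivalent to $\{\tilde{y} \in \hat{C}(X_{n+1},a)\} = \{\hat{u}^{\tilde{y}}(X_{n+1}) \geq \alpha\}$, so the lemma reduces to showing $\mathbb{P}[\hat{u}^{\tilde{y}}(X_{n+1}) \geq \alpha \mid \mathcal{E}] \geq 1-\alpha$. The main step is to establish that, conditionally on $\mathcal{E}$, the test point and the samples in $\mathcal{I}(X_{n+1},\tilde{y})$ form an exchangeable collection. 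I plan to deduce this from the exchangeability of $\{(X_i,Y_i)\}_{i=1}^{n+1}$ together with the fact that membership in the subgroup ``$Y = \tilde{y}$ and $\phi(X,a) = v$'' is a deterministic, symmetric per-point condition (since $a$ and $v$ are non-random); exchangeability of the original sample is then preserved within each cell of the partition induced by these subgroup indicators.

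Given exchangeability on the subgroup, the standard split-conformal bound immediately yields the desired inequality: the rank of the test score among the $|\mathcal{I}(X_{n+1},\tilde{y})| + 1$ nonconformity scores is (super-)uniformly distributed, so the conformal p-value exceeds $\alpha$ with probability at least $1-\alpha$ conditional on $\mathcal{E}$. Since $\tilde{y}$ was arbitrary, the bound holds for every $\tilde{y} \in [L]$.

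The only delicate point---more a bookkeeping subtlety than a true obstacle---is the random size of $\mathcal{I}(X_{n+1},\tilde{y})$; this can be handled cleanly by additionally conditioning on the vector of subgroup-membership indicators of the $n+1$ samples, which pins down the cardinality and reduces the remaining argument to classical split conformal on a fixed subgroup. The hypothesis that $\hat{A}(X_{n+1},y)$ is fixed is crucial throughout, as it precludes the selection bias that would arise from using the same data both to choose the attribute and to calibrate the set; removing this hypothesis is precisely what makes the full label-conditional AFCP analysis more involved than this lemma.
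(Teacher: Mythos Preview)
Your proposal is correct and follows essentially the same route as the paper's proof: fix $\tilde{y}$ and the attribute value, observe that exchangeability of $\{(X_i,Y_i)\}_{i=1}^{n+1}$ is inherited by the subgroup $\{i: Y_i=\tilde{y},\ \phi(X_i,a)=v\}\cup\{n+1\}$ (since $a$ is fixed), and then invoke the standard split-conformal rank/quantile argument. The paper's proof is slightly terser---it cites the quantile lemma and Theorem~1 of \cite{romano2019malice} directly rather than spelling out the conditioning on subgroup-membership indicators---but your extra bookkeeping about the random calibration-set size is a harmless (and arguably cleaner) elaboration of the same idea.
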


Lastly, the final AFCP prediction set is obtained by: 
\begin{equation}\label{eq:prediction_set_lc}
\hat{C}(X_{n+1}) = \left\{\cup_{y=1}^L \hat{C}^{\text{lc}}(X_{n+1}, y) \right\} \cup \left\{  \cup_{y=1}^L\hat{C}(X_{n+1}, \hat{A}(X_{n+1},y))  \right\}. 
\end{equation}

The procedures to form AFCP prediction sets with the label-conditional adaptive equalized coverage~\eqref{eq:adaptive_equalized_coverage_lc} is summarized in Algorithm~\ref{alg:AFCP_lc}.

\begin{algorithm}[!htb]
    \caption{AFCP with label-conditional adaptive equalized coverage~\eqref{eq:adaptive_equalized_coverage_lc}.}
    \label{alg:AFCP_lc}
    \begin{algorithmic} [1]
    \STATE \textbf{Input}:
        calibration data $\mathcal{D}$; test point with features $X_{n+1}$; list of $K$ sensitive attributes; \\
        \STATE \textcolor{white}{\textbf{Input}:} pre-trained classifier $\hat{f}$;  pre-defined rule for computing nonconformity scores; \\
        \STATE \textcolor{white}{\textbf{Input}:} nominal level $\alpha \in (0,1)$.
        \FOR{$y \in [L]$}
            \STATE Define the label-restricted calibration set $\mathcal{D}_y = \{i \in [n]:Y_i = y\}$. 
            \STATE Select an attribute $\hat{A}(X_{n+1}, y)$ by applying Algorithm~\ref{alg:attribute_selection} with placeholder label $y$ on $\mathcal{D}_y$.
            \STATE Construct $\hat{C}(X_{n+1}, \hat{A}(X_{n+1}, y))$ by applying Algorithm~\ref{alg:prediction_standard} with protected attributes $\hat{A}(X_{n+1}, y)$ on $\mathcal{D}_y$.
            \STATE Construct $\hat{C}^{\text{lc}}(X_{n+1}, y)$ by applying Algorithm~\ref{alg:prediction_standard} on $\mathcal{D}_y$.  
        \ENDFOR
        \STATE Define the final selected attribute $\hat{A}(X_{n+1})$ using Equation~\eqref{eq:select_attribute_final_lc}.
        \STATE Define the final prediction set $\hat{C}(X_{n+1})$ using Equation~\eqref{eq:prediction_set_lc}.
        \STATE \color{black}\textbf{Output}: $\hat{A}(X_{n+1})$ and $\hat{C}(X_{n+1})$.
\end{algorithmic}
\end{algorithm}

\begin{theorem}\label{thm:AFCP_lc}
If $\{ (X_i, Y_i)\}_{i=1}^{n+1}$ are exchangeable random samples, then the conformal prediction set $\hat{C}(X_{n+1})$ and the selected attribute $\hat{A}(X_{n+1})$ output by Algorithm~\ref{alg:AFCP_lc} satisfy the label-conditional adaptive equalized coverage defined in~\eqref{eq:adaptive_equalized_coverage_lc}.
\end{theorem}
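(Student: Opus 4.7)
The plan is to adapt the proof technique of Theorem~\ref{thm:AFCP} to the label-conditional setting, with Lemma~\ref{lemma:lc} supplying the core conformal guarantee within each label class. Fix an arbitrary $y \in [L]$ and condition throughout on $\{Y_{n+1}=y\}$. Under this conditioning, the assumed exchangeability of $\{(X_i,Y_i)\}_{i=1}^{n+1}$ descends to the label-restricted samples $\mathcal{D}_y \cup \{(X_{n+1},y)\}$, so standard conformal reasoning can be carried out within this sub-population.

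First I would invoke the set-theoretic containment $\hat{C}(X_{n+1}) \supseteq \hat{C}(X_{n+1}, \hat{A}(X_{n+1}, y))$, which is immediate from~\eqref{eq:prediction_set_lc} since the right-hand side is one of the terms in the union. It therefore suffices to bound the miscoverage of $\hat{C}(X_{n+1}, \hat{A}(X_{n+1}, y))$ when $y$ is the true test label. Because Algorithm~\ref{alg:AFCP_lc} constructs this component by invoking Algorithm~\ref{alg:prediction_standard} on $\mathcal{D}_y$ with protected attribute $\hat{A}(X_{n+1},y)$---precisely the setup of Lemma~\ref{lemma:lc}---I would apply that lemma with its placeholder label set to the true label $y$, obtaining
\begin{equation*}
\mathbb{P}[Y_{n+1} \in \hat{C}(X_{n+1}, \hat{A}(X_{n+1}, y)) \mid Y_{n+1}=y,\phi(X_{n+1}, \hat{A}(X_{n+1}, y))] \geq 1-\alpha.
\end{equation*}

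The remaining step is to reconcile the $\phi(X_{n+1},\hat{A}(X_{n+1},y))$ conditioning supplied by Lemma~\ref{lemma:lc} with the $\phi(X_{n+1},\hat{A}(X_{n+1}))$ conditioning required by~\eqref{eq:adaptive_equalized_coverage_lc}. I would partition the event space by the value of $\hat{A}(X_{n+1})=\cap_{y' \in [L]} \hat{A}(X_{n+1}, y')$. Since each per-label selection is either empty or a singleton, the event $\{\hat{A}(X_{n+1})=\{k\}\}$ forces $\hat{A}(X_{n+1},y')=\{k\}$ for every $y' \in [L]$; in particular $\hat{A}(X_{n+1},y)=\hat{A}(X_{n+1})$ and the two conditioning attribute vectors agree. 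On the complementary event $\{\hat{A}(X_{n+1})=\emptyset\}$, the conditioning on $\phi(X_{n+1},\hat{A}(X_{n+1}))$ is trivial, and I would instead use the containment $\hat{C}(X_{n+1}) \supseteq \hat{C}^{\text{lc}}(X_{n+1}, y)$ together with the classical label-conditional conformal coverage of $\hat{C}^{\text{lc}}(X_{n+1}, y)$, which is constructed from $\mathcal{D}_y$ without protected attributes.

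The hard part will be handling the extra information implicitly injected by the event $\{\hat{A}(X_{n+1})=A\}$, which also fixes the cross-label selections $\{\hat{A}(X_{n+1},y')\}_{y'\neq y}$. Each of these is a symmetric function of $\mathcal{D}_{y'} \cup \{(X_{n+1},y')\}$, drawing on a sub-population disjoint from $\mathcal{D}_y$ but sharing the common test feature $X_{n+1}$. I expect this is where the conditioning argument from the proof of Theorem~\ref{thm:AFCP} must be extended: the task is to verify that, given $Y_{n+1}=y$ and $X_{n+1}$, the ranks of the nonconformity scores within the restricted calibration subset $\mathcal{I}(X_{n+1}, y)$ remain uniformly distributed after additional conditioning on the cross-label selections. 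Once this rank-uniformity is secured, the $1-\alpha$ bound provided by Lemma~\ref{lemma:lc} transfers to the coarser conditioning on $\phi(X_{n+1},\hat{A}(X_{n+1}))$, completing the proof for every $y \in [L]$.
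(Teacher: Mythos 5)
Your proposal is correct and follows essentially the same route as the paper: the paper's ``oracle'' is exactly your selection $\hat{A}(X_{n+1},y)$ evaluated at the true label, and the proof proceeds via the containments $\hat{C}^{\text{lc}}(X_{n+1},y)\subseteq \hat{C}(X_{n+1})$ and $\hat{C}(X_{n+1},\hat{A}(X_{n+1},y))\subseteq \hat{C}(X_{n+1})$ together with Lemma~\ref{lemma:lc} for the singleton case and classical label-conditional coverage for the empty case. The ``hard part'' you flag is not actually needed in the paper's argument: as in Theorem~\ref{thm:AFCP}, the conditional coverage given $\phi(X_{n+1},\hat{A}(X_{n+1}))$ is lower-bounded by the minimum over the two possible conditioning events (empty selection or the group of the true-label selection), and since that selection is a permutation-invariant function of $\mathcal{D}_y\cup\{(X_{n+1},y)\}$, exchangeability within the group already justifies treating it as fixed, with no further conditioning on the cross-label selections.
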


\section{Methodology Extension: AFCP for Outlier Detection}\label{app:AFCP_od}

Consider a dataset $\mathcal{D}=\{Z_i\}_{i=1}^n$ containing $n$ sample points drawn exchangeably from an unknown distribution $P_Z$. Consider an additional test point $Z_{n+1}$. In the outlier detection problems, our AFCP method aims to study whether $Z_{n+1} \sim P_Z$ by constructing a valid conformal p-value $\hat{u}(Z_{n+1})$ conditional on the group identified by the selected attribute $\hat{A}(Z_{n+1})$, that is:
\begin{equation}\label{eq:adaptive_equalized_coverage_od}
\mathbb{P}[\hat{u}(Z_{n+1}) \leq \alpha \mid \phi(Z_{n+1}, \hat{A}(Z_{n+1}))] \leq \alpha,
\end{equation}
for any $\alpha \in (0,1)$. Intuitively, this guarantees that, on groups defined by the selected attribute $\hat{A}(Z_{n+1})$, the conformal p-value is super-uniform, therefore controlling the FPR (the probability of rejecting the null hypothesis that $Z_{n+1}$ is an inlier when it is true) below $\alpha$. 

\subsection{Automatic Attribute Selection}
Given a pre-trained one-class classifier $\hat{f}$, an independent calibration dataset $\mathcal{D}$, and a test point $Z_{n+1}$, our method selects a sensitive attribute $\hat{A}(Z_{n+1}) \in \{ \emptyset, \{1\},\dots, \{K\}\}$ according to the following {\em leave-one-out} procedure. Sometimes, no attribute may be selected, as denoted by $\hat{A}(Z_{n+1}) =\emptyset$.

Define an {\em augmented} calibration set $\mathcal{D}' := \mathcal{D} \cup \{Z_{n+1}\}$.
For each $i \in [n+1]$, define also the leave-one-out set $\mathcal{D}'_{i} := \mathcal{D}'\setminus \{Z_i\}$.
Then, for each $i \in [n+1]$, we compute a conformal p-value $\hat{u}^{\text{loo}} (Z_i)$ for $Z_i$ using the data in $\mathcal{D}'_{i}$ to test if $Z_i$ is an outlier. This can be accomplished by running Algorithm~\ref{alg:conformal_pval_standard}, using $\mathcal{D}'_{i}$ as the calibration data and $Z_i$ as the test point, and with the convention that smaller nonconformity scores suggest $Z_i$ is more likely to be an outlier. Any nonconformity scores can be utilized here. For concreteness, we focus on using the adaptive conformity scores proposed by \citep{romano2020classification}. 
Small $\hat{u}^{\text{loo}}(Z_i)$ provides stronger evidence to reject the null hypothesis $H_{0,i}: Z_i$ is an inlier. Let $E_i$ denote the binary indicator of whether $H_{0,i}$ is rejected:
\begin{equation}\label{eq:reject_indicator_od}
    E_{i} := \mathbf{1}\{\hat{u}^{\text{loo}}(Z_i) \leq \alpha \}.
\end{equation}

After evaluating $E_{i}$ for all $i \in [n+1]$, we will assess the leave-one-out FPR for the worst-off group identified by each sensitive attribute $k \in [K]$.
That is, we evaluate
\begin{equation}\label{worst_FPR_k}
    \delta_{k} := \max_{m \in [M_k]} \frac{ \sum_{i=1}^{n+1} E_{i} \cdot \mathbf{1}\{ \phi(Z_i, \{k\}) = m\}  }{ \sum_{i=1}^{n+1} \mathbf{1}\{\phi(Z_i, \{k\}) = m \}}.
\end{equation}
Intuitively, $\delta_{k}$ denotes the maximum FPR across all groups identified by the $k$-th attribute, as estimated by the leave-one-out simulation carried out under the assumption that $Z_{n+1}$ is an inlier.
Large values of $\delta_{k}$ suggest that the $k$-th attribute may be a sensitive attribute corresponding to at least one group suffering from algorithmic bias.

To assess whether there is evidence of significant algorithmic bias, we can perform a statistical test for the null hypothesis that no algorithmic bias exists. We define:
\begin{equation}\label{eq:worst_FPR}
    \hat{q} := \max_{k \in [K]} \delta_{k},
\end{equation}
and carry out a one-sided t-test for the null hypothesis $H_0: \hat{q} \leq \alpha$ against $H_1:  \hat{q} > \alpha$.

If $H_0$ is rejected (at any desired level, such as 5\%), we conclude there exists a group suffering from significant algorithmic bias, and we identify the corresponding attribute through
\begin{equation}\label{eq:select_attribute_od}
\hat{A}(Z_{n+1}) = \{ \argmax_{k \in [K]} \delta_{k} \}.
\end{equation}
Otherwise, we set $\hat{A}(Z_{n+1}) = \emptyset$, which corresponds to selecting no attribute.
See Algorithm~\ref{alg:attribute_selection_od} for an outline of this procedure.

\begin{algorithm}[!htb]
    \caption{Automatic attribute selection for outlier detection.}
    \label{alg:attribute_selection_od}
    \begin{algorithmic} [1]
        \STATE \textbf{Input}:
        calibration data $\mathcal{D}$; test point $Z_{n+1}$; list of $K$ sensitive attributes; \\
        \STATE \textcolor{white}{\textbf{Input}:} pre-trained one-class classifier $\hat{f}$;  pre-defined rule for computing nonconformity scores; \\
        \STATE \textcolor{white}{\textbf{Input}:} nominal level $\alpha \in (0,1)$.
        \STATE Define the augmented data set $\mathcal{D}' = \mathcal{D}\cup \{Z_{n+1}\}$.
        \FOR{$i \in [n+1]$}
            \STATE Pretend that $Z_i$ is the test point and $\mathcal{D}' \setminus \{Z_i\}$ is the calibration set.
            \STATE Compute a conformal p-value $\hat{u}^{\text{loo}} (Z_i)$.
            \STATE Evaluate the false positive indicator $E_{i}$ using~\eqref{eq:reject_indicator_od}.
        \ENDFOR
        \STATE Compute $\hat{q}$ using~\eqref{eq:worst_FPR}.
        \STATE Perform a one-sided test for $H_0: \hat{q} \leq \alpha$ vs.~$H_1:  \hat{q} > \alpha$.
        \STATE Select the attribute $\hat{A}(Z_{n+1})$ using~\eqref{eq:select_attribute_od} if $H_0$ is rejected, else set $\hat{A}(Z_{n+1}) = \emptyset$.
        \STATE \textbf{Output}: $\hat{A}(Z_{n+1})$, either a selected sensitive attribute or an empty set.
\end{algorithmic}
\end{algorithm}

\subsection{Evaluating the Adaptive Conformal P-Value}
After selecting either a single attribute or an empty set $\hat{A}(X_{n+1})$ that corresponds to at least one group suffering from algorithmic bias, the next step of our AFCP method is to compute an adaptive conformal p-value for testing whether $Z_{n+1}$ is an outlier that satisfies~\eqref{eq:adaptive_equalized_coverage_od}. This can be simply achieved by applying the standard conformal method outlined in Algorithm~\ref{alg:conformal_pval_standard} based on a restricted calibration sample indexed by 
$
\mathcal{I}(\hat{A}(Z_{n+1})) = \{i \in [n]: \phi(Z_{i}, \hat{A}(Z_{n+1})) = \phi(Z_{n+1}, \hat{A}(Z_{n+1}))\},
$
See Algorithm~\ref{alg:AFCP_od} for a summary of AFCP for outlier detection tasks.

\begin{algorithm}[!htb]
    \caption{AFCP for outlier detection.}
    \label{alg:AFCP_od}
    \begin{algorithmic} [1]
    \STATE \textbf{Input}:
        calibration data $\mathcal{D}$; test point $Z_{n+1}$; list of $K$ sensitive attributes; \\
        \STATE \textcolor{white}{\textbf{Input}:} pre-trained one-class classifier $\hat{f}$;  pre-defined rule for computing nonconformity scores; \\
        \STATE \textcolor{white}{\textbf{Input}:} nominal level $\alpha \in (0,1)$.
        \STATE Select a sensitive attribute $\hat{A}(Z_{n+1})$ by applying Algorithm~\ref{alg:attribute_selection_od}.
        \STATE Evaluate $\hat{u}(Z_{n+1})$ by applying Algorithm~\ref{alg:conformal_pval_standard} with protected attribute $\hat{A}(Z_{n+1})$.
        \STATE \color{black}\textbf{Output}: $\hat{u}(Z_{n+1})$.
\end{algorithmic}
\end{algorithm}

\begin{theorem}\label{thm:AFCP_od}
If $\{Z_i\}_{i=1}^{n+1}$ are exchangeable random samples, the conformal p-value $\hat{u}(Z_{n+1})$ and the selected attribute $\hat{A}(Z_{n+1})$ output by Algorithm~\ref{alg:AFCP_od} satisfy~\eqref{eq:adaptive_equalized_coverage_od}.
\end{theorem}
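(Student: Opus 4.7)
The plan is to exploit the following key structural property of the attribute selection procedure: Algorithm~\ref{alg:attribute_selection_od} processes the augmented set $\mathcal{D}' = \mathcal{D} \cup \{Z_{n+1}\}$ through a leave-one-out loop that treats all $n+1$ points symmetrically, so the selected attribute $\hat{A}(Z_{n+1})$ can be written as a function of the unordered multiset $\{Z_1, \ldots, Z_{n+1}\}$ alone, with no reference to which index has been labeled as test. This observation will be the crux: even though the selection depends on $Z_{n+1}$, the dependence routes entirely through a symmetric summary of the data, so exchangeability can be preserved after we condition on the selection outcome.

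First, I would fix an arbitrary candidate attribute $A \in \{\emptyset,\{1\},\ldots,\{K\}\}$ and group value $m$, and bound the conditional probability $\mathbb{P}[\hat{u}(Z_{n+1}) \leq \alpha \mid \hat{A}(Z_{n+1})=A,\, \phi(Z_{n+1},A)=m]$. Define the subgroup index set $\mathcal{I}_m = \{i \in [n+1] : \phi(Z_i, A) = m\}$. By exchangeability of $\{Z_i\}_{i=1}^{n+1}$ combined with the multiset-symmetry of $\hat{A}$, conditioning on the event $\{\hat{A}=A\}$ together with the unordered collection $\{Z_i : i \in \mathcal{I}_m\}$ leaves the identity of which index in $\mathcal{I}_m$ corresponds to the test point uniformly distributed over $\mathcal{I}_m$. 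Consequently, the nonconformity scores $\{\hat{S}_i : i \in \mathcal{I}_m\}$ are exchangeable under this conditioning.

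Second, I would invoke the standard calibration argument for conformal p-values computed by Algorithm~\ref{alg:conformal_pval_standard}. When $\phi(Z_{n+1},A)=m$, the set $\mathcal{I}(Z_{n+1},A) \cup \{n+1\}$ is exactly $\mathcal{I}_m$, so $\hat{u}(Z_{n+1})$ is a function of the rank of $\hat{S}_{n+1}$ within the exchangeable collection of scores on $\mathcal{I}_m$. By the usual rank-uniformity argument for exchangeable scores, $\mathbb{P}[\hat{u}(Z_{n+1}) \leq \alpha \mid \hat{A}=A,\, \phi(Z_{n+1},A)=m] \leq \alpha$, and marginalizing over the finite set of possible values of $A$ then yields~\eqref{eq:adaptive_equalized_coverage_od}.

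The main obstacle, and the step that requires the most care, is tracking precisely what information the conditioning $\sigma$-algebra encodes and confirming that within-subgroup exchangeability survives it. Specifically, one must verify that conditioning on the joint event comprising $\hat{A}(Z_{n+1})=A$, $\phi(Z_{n+1},A)=m$, and the unordered multiset of the points indexed by $\mathcal{I}_m$ does not privilege any particular index within $\mathcal{I}_m$ as the test point; this relies entirely on the leave-one-out symmetry of Algorithm~\ref{alg:attribute_selection_od} and would fail for any selection rule that singled out $Z_{n+1}$ asymmetrically. This is the outlier-detection analogue of the non-exchangeability challenge flagged for Theorem~\ref{thm:AFCP}.
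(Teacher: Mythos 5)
Your proposal is correct and follows essentially the same route as the paper's proof: both arguments hinge on the fact that the leave-one-out selection in Algorithm~\ref{alg:attribute_selection_od} is a symmetric function of the $n+1$ data points (the paper phrases this as invariance of $\hat{A}(Z_{n+1})$ under permutations of $\mathcal{D}\cup\{Z_{n+1}\}$, you phrase it as dependence only on the unordered multiset), which lets one treat the selected attribute as fixed, conclude exchangeability of the nonconformity scores within the selected subgroup, and apply the standard rank-uniformity argument to get super-uniformity of $\hat{u}(Z_{n+1})$ conditional on $\phi(Z_{n+1},\hat{A}(Z_{n+1}))$. Your explicit conditioning on $\{\hat{A}=A,\ \phi(Z_{n+1},A)=m\}$ together with the subgroup multiset is just a more explicitly spelled-out version of the paper's ``parallel world'' permutation argument, so no substantive difference.
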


\subsection{AFCP for Outlier Detection with Multiple Selected Attributes}
AFCP for outlier detection problems can be readily extended to select $J$ sensitive attributes where $J>1$. 
This is achieved by repeatedly applying the single attribute selection procedure described in Algorithm~\ref{alg:attribute_selection_od} $J$ times. Each time, the algorithm selects a (possibly empty) subset of attributes $\hat{A}(Z_{n+1})^{j}$ from the list of sensitive attributes excluding the previously selected attribute $\hat{A}(Z_{n+1})^{j-1}$. The final set of selected attributes is given by $\hat{A}(Z_{n+1}) = \cup_{j=1}^J \hat{A}(Z_{n+1})^j$. See Algorithm~\ref{alg:attribute_selection_od_multi} for an outline of this procedure.  

\begin{algorithm}[!htb]
    \caption{Multiple attributes selection for outlier detection.}
    \label{alg:attribute_selection_od_multi}
    \begin{algorithmic} [1]
        \STATE \textbf{Input}:
        calibration data $\mathcal{D}$; test point $Z_{n+1}$; list of $K$ sensitive attributes; \\
        \STATE \textcolor{white}{\textbf{Input}:} pre-trained one-class classifier $\hat{f}$;  pre-defined rule for computing nonconformity scores; \\
        \STATE \textcolor{white}{\textbf{Input}:} nominal level $\alpha \in (0,1)$; number of selected attributes $J$.
        \STATE Denote $K^0 = [K]$. 
        \FOR{$j \in \{1,\dots, J\}$}
            \STATE Select an attribute $\hat{A}(Z_{n+1})^j$ by applying Algorithm~\ref{alg:attribute_selection_od} with sensitive attributes $K^{j-1}$. 
            \STATE Update the list of sensitive attributes $K^j = K^{j-1} \setminus \hat{A}(Z_{n+1})^{j}$.
        \ENDFOR
        \STATE \textbf{Output}: $\hat{A}(Z_{n+1}) = \cup_{j=1}^J \hat{A}(Z_{n+1})^j$, an empty set or a set of selected attributes. 
\end{algorithmic}
\end{algorithm}

After selecting a set of attributes $\hat{A}(Z_{n+1})$, which might be empty or include one or more sensitive attributes, AFCP constructs an adaptive conformal p-value satisfying~\eqref{eq:adaptive_equalized_coverage_od}. This can be easily achieved by applying Algorithm~\ref{alg:conformal_pval_standard} with protected attributes $\hat{A}(Z_{n+1})$. Algorithm~\ref{alg:AFCP_od_multi} summarizes the AFCP implementation for outlier detection that allows selecting multiple protected attributes. 

\begin{algorithm}[!htb]
    \caption{AFCP for outlier detection with multiple selected attributes.}
    \label{alg:AFCP_od_multi}
    \begin{algorithmic} [1]
    \STATE \textbf{Input}:
        calibration data $\mathcal{D}$; test point $Z_{n+1}$; list of $K$ sensitive attributes; \\
        \STATE \textcolor{white}{\textbf{Input}:} pre-trained one-class classifier $\hat{f}$;  pre-defined rule for computing nonconformity scores; \\
        \STATE \textcolor{white}{\textbf{Input}:} nominal level $\alpha \in (0,1)$; number of selected attributes $J$. 
        \STATE Select up to $J$ sensitive attributes $\hat{A}(Z_{n+1})$ by applying Algorithm~\ref{alg:attribute_selection_od_multi}.
        \STATE Evaluate $\hat{u}(Z_{n+1})$ by applying Algorithm~\ref{alg:conformal_pval_standard} with protected attribute $\hat{A}(Z_{n+1})$.
        \STATE \color{black}\textbf{Output}: $\hat{u}(Z_{n+1})$.
\end{algorithmic}
\end{algorithm}
\begin{theorem}\label{thm:AFCP_od_multi}
If $\{Z_i\}_{i=1}^{n+1}$ are exchangeable random samples, the conformal p-value $\hat{u}(Z_{n+1})$ and selected attributes $\hat{A}(Z_{n+1})$ output by Algorithm~\ref{alg:AFCP_od_multi} satisfy~\eqref{eq:adaptive_equalized_coverage_od}.
\end{theorem}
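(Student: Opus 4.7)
The plan is to reduce the claim to the standard conformal p-value validity argument by showing that the adaptive selection $\hat{A}(Z_{n+1})$ is a \emph{symmetric function of the unordered augmented data set} $\mathcal{D}' = \mathcal{D} \cup \{Z_{n+1}\}$. Once this symmetry is established, conditioning on the multiset $[\mathcal{D}']$ freezes the selected attributes, and the conformal p-value constructed in Algorithm~\ref{alg:conformal_pval_standard} becomes valid by exchangeability.

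First, I would verify the invariance. Each leave-one-out p-value $\hat{u}^{\text{loo}}(Z_i)$ computed in Algorithm~\ref{alg:attribute_selection_od} uses $\mathcal{D}' \setminus \{Z_i\}$ as calibration, so the collection of indicators $\{E_i\}$, the group-wise rates $\delta_k$, the aggregate statistic $\hat{q}$, and the outcome of the heuristic t-test are all functions of the multiset $[\mathcal{D}']$ only. Consequently, the first selected attribute is a symmetric function of $[\mathcal{D}']$. Algorithm~\ref{alg:attribute_selection_od_multi} then reapplies this selection after restricting the allowed attribute list; since each iteration's restricted list depends only on previously selected attributes, and those are themselves symmetric functions of $[\mathcal{D}']$, the final output can be written as $\hat{A}(Z_{n+1}) = g([\mathcal{D}'])$ for some deterministic symmetric map $g$.

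Second, I would condition on $[\mathcal{D}']$ to obtain the desired bound. With the multiset fixed, $A^\star := g([\mathcal{D}'])$ and the attribute values of every data point (as a function of position in the multiset) are deterministic. By exchangeability of $\{Z_i\}_{i=1}^{n+1}$, conditional on $[\mathcal{D}']$ the test index is uniform over $[n+1]$; further conditioning on $\phi(Z_{n+1}, A^\star) = v$ restricts this uniform distribution to the subset $\mathcal{J}(v) := \{i \in [n+1] : \phi(Z_i, A^\star) = v\}$. The p-value returned by Algorithm~\ref{alg:AFCP_od_multi} coincides with the rank-based conformal p-value computed using the other members of $\mathcal{J}(v)$ as calibration, so the classical conformal argument gives $\mathbb{P}[\hat{u}(Z_{n+1}) \leq \alpha \mid [\mathcal{D}'], \phi(Z_{n+1}, A^\star) = v] \leq \alpha$ for every $v$. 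Marginalizing over $[\mathcal{D}']$ (still conditional on $\phi(Z_{n+1}, \hat{A}(Z_{n+1})) = v$) then yields~\eqref{eq:adaptive_equalized_coverage_od}.

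The main obstacle is the symmetry verification for the iterative multi-attribute selection: one must carefully argue that chaining symmetric selection maps, each operating on a progressively restricted attribute list determined by prior symmetric outputs, produces a symmetric overall map. Once this invariance is rigorously established, no new technical ingredients beyond the classical conformal exchangeability computation are needed, and the result follows essentially as a corollary of Theorem~\ref{thm:AFCP_od} applied with the enlarged (possibly multi-dimensional) selected attribute set.
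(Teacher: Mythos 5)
Your proposal is correct and follows essentially the same route as the paper: the key step in both is that the (iteratively) selected attributes $\hat{A}(Z_{n+1})$ are a permutation-invariant (symmetric) function of $\mathcal{D} \cup \{Z_{n+1}\}$, after which the standard conformal exchangeability argument, as in Theorem~\ref{thm:AFCP_od}, yields~\eqref{eq:adaptive_equalized_coverage_od}. Your multiset-conditioning formulation is just a slightly more explicit phrasing of the paper's "parallel world" permutation argument.
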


\FloatBarrier

\section{Mathematical Proofs}\label{app:proofs}

\begin{proof}[Proof of Theorem~\ref{thm:AFCP}]
Consider an imaginary oracle that has access to the true value of $Y_{n+1}$. Denote $\hat{A}^{\text{o}}(X_{n+1}, Y_{n+1})$ as the sensitive attribute selected by this oracle by applying Algorithm~\ref{alg:attribute_selection} with the true $Y_{n+1}$ instead of a placeholder label. Let $\hat{C}^{\text{o}}(X_{n+1}, \hat{A}^{\text{o}}(X_{n+1}, Y_{n+1}))$ represent the corresponding output prediction set by applying Algorithm~\ref{alg:prediction_standard} with the protected attribute $\hat{A}^{\text{o}}(X_{n+1}, Y_{n+1})$. 
Consider also $\hat{C}^{\text{m}}(X_{n+1})$, the standard prediction set with marginal coverage~\eqref{eq:marginal_coverage}.

The main idea of our proof is to connect the output prediction set $\hat{C}(X_{n+1})$ and selected attribute $\hat{A}(X_{n+1})$ from Algorithm~\ref{alg:AFCP} to those of the imaginary oracle described above. Throughout this proof, we adopt the convention that $\phi(X_{n+1}, \emptyset)=0$. 

To establish this connection, note that the attribute $\hat{A}(X_{n+1})$ selected by Algorithm~\ref{alg:AFCP} is either empty,  $\hat{A}(X_{n+1}) = \emptyset$, or a singleton, $\hat{A}(X_{n+1}) = \{k\}$ for some $k \in [K]$.
In the latter case, $\hat{A}(X_{n+1}) = \hat{A}(X_{n+1},\tilde{y})$, $\forall \tilde{y} \in [L]$, and thus $\hat{A}(X_{n+1}) = \hat{A}^{\text{o}}(X_{n+1}, Y_{n+1})$ almost-surely.
Therefore, 
\begin{align}\label{eq:min_mc_marg}
    \begin{split}
        & \mathbb{P}[Y_{n+1} \in \hat{C}(X_{n+1})\mid \phi(X_{n+1}, \hat{A}(X_{n+1}))]  \\
        & \qquad \geq \min \Biggr\{  \mathbb{P}[Y_{n+1} \in \hat{C}(X_{n+1})\mid \phi(X_{n+1}, \emptyset)],  \\
        & \qquad \qquad \qquad \mathbb{P}[Y_{n+1} \in \hat{C}(X_{n+1}) \mid \phi(X_{n+1}, \hat{A}^{\text{o}}(X_{n+1},Y_{n+1}))] \Biggr\}  \\
        & \qquad = \min \Biggr\{  \mathbb{P}[Y_{n+1} \in \hat{C}(X_{n+1})] , \\
        & \qquad \qquad \qquad \mathbb{P}[Y_{n+1} \in \hat{C}(X_{n+1}) \mid  \phi(X_{n+1}, \hat{A}^{\text{o}}(X_{n+1},Y_{n+1}))] \Biggr\} \\
        & \qquad \geq \min \Biggr\{  \mathbb{P}[Y_{n+1} \in \hat{C}^{\text{m}}(X_{n+1})] , \\
        & \qquad \qquad \qquad \mathbb{P}[Y_{n+1} \in \hat{C}^{\text{o}}(X_{n+1}, \hat{A}^{\text{o}}(X_{n+1}, Y_{n+1})) \mid  \phi(X_{n+1}, \hat{A}^{\text{o}}(X_{n+1},Y_{n+1}))] \Biggr\}, 
    \end{split}
\end{align}
where the last inequality follows from the facts that $\hat{C}^{\text{m}}(X_{n+1}) \subseteq \hat{C}(X_{n+1})$ and $\hat{C}^{\text{o}}(X_{n+1}, \hat{A}^{\text{o}}(X_{n+1}, Y_{n+1})) \subseteq \hat{C}(X_{n+1})$ almost-surely.
Next, we only need to separately lower-bound by $1-\alpha$ the two terms on the right-hand-side of~\eqref{eq:min_mc_marg}.

The first part of the remaining task is trivial. It is already well-known that $\mathbb{P}(Y_{n+1} \in \hat{C}^{\text{m}}(X_{n+1})) \geq 1-\alpha$; see~\cite{vovk2005algorithmic, Vovk2008conformal}.

To complete the second part of the remaining task, note that the oracle-selected attribute $\hat{A}^{\text{o}}(X_{n+1}, Y_{n+1})$ is invariant to any permutations of the exchangeable data indexed by $[n+1]$. 
Therefore, the data points are also exchangeable conditional on the groups defined by $\hat{A}^{\text{o}}(X_{n+1},Y_{n+1})$.
This means we can imagine the protected attribute $\hat{A}^{\text{o}}(X_{n+1},Y_{n+1})$ is fixed, and the oracle prediction set $\hat{C}^{\text{o}}(X_{n+1},\hat{A}^{\text{o}}(X_{n+1},Y_{n+1}))$ is simply obtained by applying Algorithm~\ref{alg:prediction_standard} to exchangeable data using a fixed protected attribute. This procedure is the same as the main algorithm in~\cite{romano2019malice} and has guaranteed coverage above $1-\alpha$; see Theorem 1 in~\cite{romano2019malice}.
\end{proof}

\begin{proof}[Proof of Theorem~\ref{thm:AFCP_multi}]
Similar to the proof of Theorem~\ref{thm:AFCP}, consider an imaginary oracle that has access to the true value of $Y_{n+1}$. Denote $\hat{A}^{\text{o}}(X_{n+1}, Y_{n+1})$ as the (possibly empty, one, or two) sensitive attribute(s) selected by this oracle by applying Algorithm~\ref{alg:attribute_selection_multi} with the true $Y_{n+1}$. Let $\hat{C}^{\text{o}}(X_{n+1}, \hat{A}^{\text{o}}(X_{n+1}, Y_{n+1}))$ represent the output prediction set by applying Algorithm~\ref{alg:prediction_standard} with the protected attribute(s) $\hat{A}^{\text{o}}(X_{n+1}, Y_{n+1})$. 
Consider also $\hat{C}^{\text{m}}(X_{n+1})$, the standard prediction set with marginal coverage~\eqref{eq:marginal_coverage}, and 
$\hat{C}^{\text{eq}}(X_{n+1}, \ell)$ the prediction set with valid coverage conditional on groups identified by a fixed attribute $\{\ell\}$.

The key idea of our proof is again to connect the practical prediction set $\hat{C}(X_{n+1})$ and selected protected attributes $\hat{A}(X_{n+1})$ of Algorithm~\ref{alg:AFCP_multi} to those of the imaginary oracle described above. Throughout this proof, we adopt the convention that $\phi(X_{n+1},\emptyset) = 0$.

To establish this connection, note that the attribute(s) $\hat{A}(X_{n+1})$ selected by applying Algorithm~\ref{alg:attribute_selection_multi} falls into one of the three possible cases almost surely: (a)~$\hat{A}(X_{n+1}) = \emptyset$, (b)~$\hat{A}(X_{n+1}) = \hat{A}^{\text{o}}(X_{n+1}, Y_{n+1})$, and (c)~$\dot{A}(X_{n+1}) = \hat{A}(X_{n+1})\subset \hat{A}^{\text{o}}(X_{n+1}, Y_{n+1})$. The last scenario happens when the oracle selects two attributes, and the $\hat{A}(X_{n+1})$ contains only one of them. For clarify of the notation, we denote the last case using $\dot{A}(X_{n+1})$. 

Therefore,
\begin{align}\label{eq:min_mc_marg_plusplus}
    \begin{split}
        & \mathbb{P}[Y_{n+1} \in \hat{C}(X_{n+1})\mid \phi(X_{n+1}, \hat{A}(X_{n+1}))]  \\
        & \qquad \geq \min \Biggr\{  \mathbb{P}[Y_{n+1} \in \hat{C}(X_{n+1})\mid \phi(X_{n+1}, \emptyset)],  \\
        & \qquad \qquad \qquad \mathbb{P}[Y_{n+1} \in \hat{C}(X_{n+1}) \mid  \phi(X_{n+1}, \hat{A}^{\text{o}}(X_{n+1},Y_{n+1}))],  \\
        & \qquad \qquad \qquad \mathbb{P}[Y_{n+1} \in \hat{C}(X_{n+1}) \mid \phi(X_{n+1}, \dot{A}(X_{n+1}))] \Biggr\}  \\
        & \qquad = \min \Biggr\{  \mathbb{P}[Y_{n+1} \in \hat{C}(X_{n+1})],  \\
        & \qquad \qquad \qquad \mathbb{P}[Y_{n+1} \in \hat{C}(X_{n+1}) \mid  \phi(X_{n+1}, \hat{A}^{\text{o}}(X_{n+1},Y_{n+1}))] ,  \\
        & \qquad \qquad \qquad \mathbb{P}[Y_{n+1} \in \hat{C}(X_{n+1}) \mid \phi(X_{n+1}, \dot{A}(X_{n+1}))] \Biggr\} \\
        & \qquad \geq \min \Biggr\{  \mathbb{P}[Y_{n+1} \in \hat{C}^{\text{m}}(X_{n+1})],  \\
        & \qquad \qquad \qquad \mathbb{P}[Y_{n+1} \in \hat{C}^{\text{o}}(X_{n+1},\hat{A}^{\text{o}}(X_{n+1},Y_{n+1})) \mid  \phi(X_{n+1}, \hat{A}^{\text{o}}(X_{n+1},Y_{n+1}))] ,  \\
        & \qquad \qquad \qquad \mathbb{P}[Y_{n+1} \in \cup_{\ell \in \hat{A}^{\text{o}}(X_{n+1},Y_{n+1})} \hat{C}^{\text{eq}}(X_{n+1}, \ell) \mid \phi(X_{n+1}, \dot{A}(X_{n+1}))] \Biggr\}, 
    \end{split}
\end{align}
where the last inequality follows from the facts that $\hat{C}^{\text{m}}(X_{n+1}) \subseteq \hat{C}(X_{n+1})$, $\hat{C}^{\text{o}}(X_{n+1},\hat{A}^{\text{o}}(X_{n+1},Y_{n+1})) \subseteq \hat{C}(X_{n+1})$, and $\cup_{\ell \in \hat{A}^{\text{o}}(X_{n+1},Y_{n+1})} \hat{C}^{\text{eq}}(X_{n+1}, \ell) \subseteq \hat{C}(X_{n+1})$ almost-surely.

Next, we show how to separately lower-bound by $1-\alpha$ the three terms on the right-hand-side of~\eqref{eq:min_mc_marg_plusplus}.

The lower bounds of the first and second terms have been proved in theorem~\ref{thm:AFCP}. We will focus on lower-bounding the coverage of the third term.
Because the oracle-selected attribute(s) $\hat{A}^{\text{o}}(X_{n+1},Y_{n+1})$ are invariant to any permutation of the exchangeable data indexed by $[n+1]$, we can treat each of the two elements in $\hat{A}^{\text{o}}(X_{n+1},Y_{n+1})$ as fixed. Without loss of generality, let $\ell_1$ and $\ell_2$ denote the first and the second element respectively, $\dot{A}(X_{n+1}) = \ell_1$ or $\dot{A}(X_{n+1}) = \ell_2$ with probability $1$. Then, 
\begin{align}\label{eq:min_mc_marg_plusplus_2}
    \begin{split}
        & \mathbb{P}[Y_{n+1} \in \cup_{\ell \in \hat{A}^{\text{o}}(X_{n+1},Y_{n+1})} \hat{C}^{\text{eq}}(X_{n+1}, \ell) \mid \phi(X_{n+1}, \dot{k}(X_{n+1}))]  \\
        & \qquad \geq \min \Biggr\{  \mathbb{P}[Y_{n+1} \in \cup_{\ell \in \hat{A}^{\text{o}}(X_{n+1},Y_{n+1})} \hat{C}^{\text{eq}}(X_{n+1}, \ell) \mid \phi(X_{n+1}, \ell_1)] ,  \\
        & \qquad \qquad \qquad  \mathbb{P}[Y_{n+1} \in \cup_{\ell \in \hat{A}^{\text{o}}(X_{n+1},Y_{n+1})} \hat{C}^{\text{eq}}(X_{n+1}, \ell) \mid \phi(X_{n+1}, \ell_2)] \Biggr\}  \\
        & \qquad \geq \min \Biggr\{  \mathbb{P}[Y_{n+1} \in \hat{C}^{\text{eq}}(X_{n+1}, \ell_1) \mid \phi(X_{n+1}, \ell_1)] ,  \\
        & \qquad \qquad \qquad  \mathbb{P}[Y_{n+1} \in \hat{C}^{\text{eq}}(X_{n+1}, \ell_2) \mid \phi(X_{n+1}, \ell_2)] \Biggr\} \\ 
        & \qquad \geq \min  \{ 1-\alpha, 1-\alpha \},
    \end{split}
\end{align}
where the inequality of the last line of~\eqref{eq:min_mc_marg_plusplus_2} is proved in \citep{romano2019malice} with fixed attribute $\ell_1$ and $\ell_2$ respectively. 
Lastly, realize that $\cup_{\ell \in \hat{A}^{\text{o}}(X_{n+1},Y_{n+1})} \hat{C}^{\text{eq}}(X_{n+1}, \ell) \subseteq \cup_{\ell \in \cup_{y=1}^L \hat{A}(X_{n+1}, y)} \hat{C}^{\text{eq}}(X_{n+1}, \ell) \subseteq \hat{C}(X_{n+1})$ almost-surely, the proof is completed.
\end{proof}

\begin{proof}[Proof of Theorem~\ref{thm:AFCP_lc}.]

Similar to the proof of Theorem~\ref{thm:AFCP}, consider an imaginary oracle that has access to the true value of $Y_{n+1}$. Denote $\hat{A}^{\text{o}}(X_{n+1}, Y_{n+1})$ as the sensitive attribute selected by this oracle by applying Algorithm~\ref{alg:attribute_selection} based on a subset of the calibration data indexed by $\mathcal{D}_{Y_{n+1}} = \{i\in[n]:Y_i = Y_{n+1}\}$. Let $\hat{C}^{\text{o}}(X_{n+1}, \hat{A}^{\text{o}}(X_{n+1}, Y_{n+1}))$ represent the output prediction set by applying Algorithm~\ref{alg:prediction_standard} with the protected attribute $\hat{A}^{\text{o}}(X_{n+1}, Y_{n+1})$. 
Consider also $\hat{C}^{\text{lc}}(X_{n+1}, Y_{n+1})$, the prediction set with label-conditional coverage obtained by running Algorithm~\ref{alg:prediction_standard} using $\mathcal{D}_{Y_{n+1}}$ without any protected attributes. 

To establish this connection between the output prediction set $\hat{C}(X_{n+1})$ and the selected attribute $\hat{A}(X_{n+1})$ of Algorithm~\ref{alg:AFCP_lc} and these of the imaginary oracle, note that the selected attribute $\hat{A}(X_{n+1})$ must be $\hat{A}(X_{n+1}) = \emptyset$ or $\hat{A}(X_{n+1}) = \hat{A}^{\text{o}}(X_{n+1}, Y_{n+1})$. 

Therefore, for any $y \in [L]$,
\begin{align}\label{eq:min_mc_lc}
    \begin{split}
        & \mathbb{P}[Y_{n+1} \in \hat{C}(X_{n+1})\mid Y_{n+1} = y, \phi(X_{n+1}, \hat{A}(X_{n+1}))]\\
        & \qquad \geq \min \Biggr\{  \mathbb{P}(Y_{n+1} \in \hat{C}(X_{n+1})\mid Y_{n+1} = y, \phi(X_{n+1}, \emptyset)),  \\
        & \qquad \qquad \qquad \mathbb{P}(Y_{n+1} \in \hat{C}(X_{n+1}) \mid Y_{n+1} = y, \phi(X_{n+1}, \hat{A}^{\text{o}}(X_{n+1},Y_{n+1}))) \Biggr\}  \\
        & \qquad = \min \Biggr\{  \mathbb{P}(Y_{n+1} \in \hat{C}(X_{n+1})\mid Y_{n+1} = y),  \\
        & \qquad \qquad \qquad \mathbb{P}(Y_{n+1} \in \hat{C}(X_{n+1}) \mid Y_{n+1} = y, \phi(X_{n+1}, \hat{A}^{\text{o}}(X_{n+1},Y_{n+1}))) \Biggr\} \\
        & \qquad \geq \min \Biggr\{  \mathbb{P}(Y_{n+1} \in \hat{C}^{\text{lc}}(X_{n+1}, Y_{n+1})\mid Y_{n+1} = y),  \\
        & \qquad \qquad \qquad \mathbb{P}(Y_{n+1} \in \hat{C}^{\text{o}}(X_{n+1},\hat{A}^{\text{o}}(X_{n+1}, Y_{n+1})) \mid Y_{n+1} = y, \phi(X_{n+1}, \hat{A}^{\text{o}}(X_{n+1},Y_{n+1}))) \Biggr\}, 
    \end{split}
\end{align}
where the last inequality follows from the facts that $\hat{C}^{\text{lc}}(X_{n+1}, Y_{n+1}) \subseteq \hat{C}(X_{n+1})$ and $\hat{C}^{\text{o}}(X_{n+1},\hat{A}^{\text{o}}(X_{n+1}, Y_{n+1})) \subseteq \hat{C}(X_{n+1})$ almost-surely.

Next, we only need to separately lower-bound by $1-\alpha$ the two terms on the right-hand-side of~\eqref{eq:min_mc_lc}.

The first part of the remaining task is trivial. It is already well-known that $\mathbb{P}(Y_{n+1} \in \hat{C}^{\text{lc}}(X_{n+1}, Y_{n+1})\mid Y_{n+1} = y) \geq 1-\alpha$; see~\cite{vovk2005algorithmic, tuve2015conditional, cherian2023statistical}. 

Next, we prove the lower bound for the second term. Fix any $y \in [L]$, and assume $Y_{n+1}=y$. First, note that the oracle-selected attribute $\hat{A}^{\text{o}}(X_{n+1},Y_{n+1})$ is invariant to any permutations of the exchangeable data indexed by $\mathcal{D}_y \cup \{X_{n+1}, y\}$.
Therefore, the data points are exchangeable conditional on the group identified by the oracle-selected attribute $\hat{A}^{\text{o}}(X_{n+1},Y_{n+1})$.
This means that we can imagine the protected attribute $\hat{A}^{\text{o}}(X_{n+1},Y_{n+1})$ is fixed, and the oracle prediction set $\hat{C}^{\text{o}}(X_{n+1},\hat{A}^{\text{o}}(X_{n+1},Y_{n+1}))$ is simply obtained by applying Algorithm~\ref{alg:prediction_standard} to exchangeable data using a fixed protected attribute. This procedure has guaranteed coverage above $1-\alpha$; see Lemma~\ref{lemma:lc}.
\end{proof}

\begin{proof}[Proof of Theorem~\ref{thm:AFCP_od}] The strategy of this proof is very standard in the conformal inference literature. We add this proof for completeness.

Recall that $\mathcal{I}(\hat{A}(Z_{n+1}))$ denotes a subset of the calibration data $\mathcal{D}$ that has the same value of the selected attribute as the test point. To prove~\eqref{eq:adaptive_equalized_coverage_od}, it suffices to show that the nonconformity scores $\{ \hat{S}_i: i \in \mathcal{I}(\hat{A}(Z_{n+1})) \cup \{Z_{n+1}\} \}$ are exchangeable. Indeed, if the scores are exchangeable and almost surely distinct (which can be easily achieved by adding continuous random noises), then the rank of $\hat{S}_{n+1}$ is uniformly distributed over the discrete values $\{1,2,\dots, \mathcal{I}(\hat{A}(Z_{n+1}))+1\}$. Consequently, the conformal p-value $\hat{u}(Z_{n+1}) $ constructed by Algorithm~\ref{alg:AFCP_od} follows a uniform distribution $\text{Unif}(\{ \frac{1}{|\mathcal{I}(\hat{A}(Z_{n+1}))|+1}, \frac{2}{|\mathcal{I}(\hat{A}(Z_{n+1}))|+1}, \dots, 1\})$. This implies that $\mathbb{P}(\hat{u}(Z_{n+1}) \leq \alpha \mid \phi(Z_{n+1}, \hat{A}(Z_{n+1}))) = \alpha$. Even if the nonconformity scores are not almost surely distinct, one can still verify that the distribution of $\hat{u}(Z_{n+1})$ is super-uniform. Combining both cases, we have $\mathbb{P}(\hat{u}(Z_{n+1}) \leq \alpha |  \phi(Z_{n+1}, \hat{A}(Z_{n+1})) ) \leq \alpha$ for any $\alpha \in (0,1)$. 

We complete the proof by showing that the nonconformity scores $\{ \hat{S}_i: i \in \mathcal{I}(\hat{A}(Z_{n+1})) \cup \{Z_{n+1}\} \}$ are exchangeable. 
Define $\sigma$ as an arbitrary permutation function applied on $\mathcal{D} \cup \{Z_{n+1}\}$ and denote the permuted dataset as $\sigma(\mathcal{D})$. We first run Algorithm~\ref{alg:AFCP_od} based on $\mathcal{D}$ to select the sensitive attribute $\hat{A}(Z_{n+1})$. Next, assume in a parallel world, we repeat Algorithm~\ref{alg:AFCP_od} with the same parameters and seed settings but based on the permuted data $\sigma(\mathcal{D})$. Denote the selected attribute in the parallel world as $\hat{A}^{'}(Z_{n+1})$. Essentially, $\hat{A}^{'}(Z_{n+1}) = \hat{A}(Z_{n+1})$. This is because the computation of conformal p-values and the procedure of selecting the attribute with the worst FPR in Algorithm~\ref{alg:attribute_selection_od} are not affected by the order of the calibration and test data. Therefore, the attribute selection process is invariant to the order of $\mathcal{D} \cup \{Z_{n+1}\}$. This implies that the attribute selected by Algorithm~\ref{alg:attribute_selection_od} $\hat{A}(Z_{n+1})$ can be treated as fixed, and the nonconformity scores computed on the subset $\mathcal{I}(\hat{A}(Z_{n+1})) \cup \{ Z_{n+1}\} $ are simply reordered in the parallel world, i.e., 
\begin{align*}
    \{ \hat{S}^{'}_{\sigma(i)} : i \in \mathcal{I}(\hat{A}(Z_{n+1})) \cup \{ Z_{n+1}\}  \} & = \bar{\sigma}(\{ \hat{S}_i : i \in \mathcal{I}(\hat{A}(Z_{n+1})) \cup \{ Z_{n+1}\}  \}),
\end{align*}
where $\bar{\sigma}$ is the permutation obtained by restricting $\sigma$ on $ \mathcal{I}(\hat{A}(Z_{n+1})) \cup \{ Z_{n+1}\}$. Hence, we have
\begin{align*}
    \{ \hat{S}_i : i \in \mathcal{I}(\hat{A}(Z_{n+1}))\cup \{Z_{n+1}\} \} & \overset{d}{=} \{ \hat{S}_{\sigma(i)}^{'} : i \in \mathcal{I}(\hat{A}(Z_{n+1}))\cup \{Z_{n+1}\} \} \\
    & = \bar{\sigma}( \{ \hat{S}_i : i \in \mathcal{I}(\hat{A}(Z_{n+1}))\cup \{Z_{n+1}\} \}  ),
\end{align*}
where the equality in distribution is implied by $\mathcal{D} \overset{d}{=} \sigma(\mathcal{D})$. 
\end{proof}

\begin{proof}[Proof of Theorem~\ref{thm:AFCP_od_multi}]
This proof is the same as the proof of Theorem~\ref{thm:AFCP_od} since the multiple selected attributes $\hat{A}(Z_{n+1})$ are invariant to any permutation of the calibration and test data.
\end{proof}

\begin{proof}[Proof of Lemma~\ref{lemma:lc}]

This proof is a minor extension of the proof of Theorem 1 in \citep{romano2019malice}, with the difference that we additionally condition on the true test label.

Fix any $\tilde{y} \in [L]$, and suppose $Y_{n+1} = \tilde{y}$. For a placeholder label $y$, consider a subset of the calibration data $\mathcal{D}$ indexed by $\mathcal{I}(X_{n+1}, y) = \{i \in \mathcal{D}: Y_i = \tilde{y}, \phi(X_i, \hat{A}(X_{n+1}, y)) = \phi(X_{n+1}, \hat{A}(X_{n+1}, y)) \}$. 
Since $\{(X_i, Y_i)\}_{i=1}^{n+1}$ are exchangeable and the protected attribute $\hat{A}(X_{n+1}, y)$ is fixed, the nonconformity scores $\hat{S}_i$ evaluated on the subset $\ \mathcal{I}(X_{n+1}, y) \cup \{(X_{n+1}, y)\}$ are also exchangeable. 

Further, denote $\hat{Q}(X_{n+1},y)$ as the $\lceil (1-\alpha) \cdot |1+\mathcal{I}(X_{n+1}, y)|\rceil$-th smallest value of $\{ \hat{S}_i \}_{i \in \mathcal{I}(X_{n+1}, y)}$. By the quantile lemma~\cite{vovk2005algorithmic, Vovk2008conformal, romano2019malice}, for any $\alpha \in (0,1)$, 
\begin{align}
    \begin{split}
        & \mathbb{P}[Y_{n+1} \in \hat{C}(X_{n+1}, \hat{A}(X_{n+1},y)) \mid Y_{n+1} = \tilde{y}, \phi(X_{n+1}, \hat{A}(X_{n+1}, y))]  \\
        & \qquad \; =  \mathbb{P}[\hat{S}_{n+1} \leq \hat{Q}(X_{n+1},y) \mid Y_{n+1} = \tilde{y}, \phi(X_{n+1}, \hat{A}(X_{n+1}, y))]  \\
        & \qquad \; \geq 1-\alpha.
    \end{split}
\end{align}
\end{proof}

\FloatBarrier

\section{Computational Shortcuts and Efficient Implementation}\label{app:computational_complexity}

\subsection{Outlier Detection}
Given a pre-trained one-class classifier, consider a calibration dataset $\mathcal{D}$ of size $n$. Let $K$ denote the number of sensitive attributes, and for each attribute $k\in[K]$, let $M_k \in \mathbb{N}$ denote the count of its possible values. Denote $M = \max_k M_k$ as the maximum count across all attributes. 

AFCP for outlier detection outlined in Algorithm~\ref{alg:AFCP_od} has the following computational cost. 

\textbf{Analysis for a single test point}
\begin{itemize}
    \item The cost of computing the false positive indicators for every sample in $\mathcal{D} \cup \{Z_{n+1}\}$ takes $\mathcal{O}(n \cdot \log n)$. Breaking into steps, for every data in $\mathcal{D} \cup \{Z_{n+1}\}$, compute their nonconformity  scores takes $\mathcal{O}(n)$. Their associated conformal p-value can be computed at once by sorting all scores and keeping track of their ranks, which takes $\mathcal{O}(n \cdot \log n)$. 
    \item Then, selecting the sensitive attributes requires $\mathcal{O}( n \cdot K \cdot M)$. 
    \item Once the attribute is selected, the cost of applying conformal prediction conditional on the group identified by the selected attribute is $\mathcal{O}(1)$ because the subset of the group has been found in the last step. 
\end{itemize}
Hence, the total cost of running Algorithm~\ref{alg:AFCP_od} for a single test point is $\mathcal{O}(n \log n + n K M)$.

\textbf{Analysis for $m$ test points}
\begin{itemize}
    \item The cost of computing the false positive indicators for every sample in $\mathcal{D} \cup \{Z_{n+t}\}_{t=1}^m$ takes $\mathcal{O}(n \cdot (m + \log n))$. This can be derived by rewriting the conformal p-values for all $j\in \mathcal{D} \cup \{Z_{n+t}\}$ and for all $t \in [m]$ as follows:  
    \begin{align*}
        \hat{u}_{j, t} &= \frac{1}{1+n} \Big(\sum_{i\in \mathcal{D} \cup \{Z_{n+t}\}}  \textbf{1}(\hat{S}_i \leq \hat{S}_j) \Big) \\ 
        & =  \frac{1}{1+n} \Big( \text{rank}(\hat{S}_j) \text{ among } \{ \hat{S}_i\}_{i \in \mathcal{D}}  + \textbf{1}(\hat{S}_{n+t} \leq \hat{S}_j )\Big).
    \end{align*}
    Computing the nonconformity scores for all samples in $\mathcal{D} \cup \{Z_{n+t}\}_{t=1}^m$ costs $\mathcal{O}(n+m)$, evaluating the ranks takes $\mathcal{O}(n \log n)$, and comparing $\hat{S}_{n+t}$ and $\hat{S}_j$ costs $\mathcal{O}(n \cdot m)$. 
    \item Selecting the sensitive attribute costs $\mathcal{O}(n \cdot m  + M \cdot K \cdot (n + m))$. Breaking in steps, for each test sample $Z_{n+t}$, the worst FPR for attribute $k$, as defined in~\eqref{eq:worst_FPR}, can be rewritten as follows: 
    \begin{align*}
    \delta_{k,t} &:= \max_{m \in [M_k]} \frac{ \sum_{i\in \mathcal{D} \cup \{Z_{n+t}\}} E_{i,t} \cdot \mathbf{1}\{ \phi(Z_i, \{k\}) = m\}  }{ \sum_{i\in \mathcal{D} \cup \{Z_{n+t}\}} \mathbf{1}\{\phi(Z_i, \{k\}) = m \}} \\
    & = \max_{m \in [M_k]} \frac{ \sum_{i\in \mathcal{D} } E_{i,t} \cdot \mathbf{1}\{ \phi(Z_i, \{k\}) = m\} + E_{t,t} \cdot \mathbf{1}\{ \phi(Z_i, \{k\}) = m\} }{ \sum_{i\in \mathcal{D}} \mathbf{1}\{\phi(Z_i, \{k\}) = m \} + \mathbf{1}\{\phi(Z_i, \{k\}) = m \} }  \\
    & = \max_{m \in [M_k]} \frac{ \sum_{i\in \mathcal{D}(k, m)} E_{i,t} + E_{t,t} \cdot \mathbf{1}\{ \phi(Z_i, \{k\}) = m\} }{|\mathcal{D}(k, m)| + \mathbf{1}\{ \phi(Z_i, \{k\}) = m\} },
    \end{align*}
    where $\mathcal{D}(k,m)$ represents a subset of the calibration data indexed by $\{i \in \mathcal{D}: \phi(Z_i, \{k\})=m \}$. The trick we use is that the identification of subset $\mathcal{D}(k, m)$ does not depend on the test sample $t$ and can be {\em reused} to calculate the FPR for every test sample $\{Z_{n+t}\}_{t=1}^m$. In specific, identifying $\mathcal{D}(k, m), \forall k\in [K], m \in [M_k]$ takes $\mathcal{O}(n \cdot M \cdot K)$. For each $\mathcal{D}(k, m)$, computing $\sum_{i\in \mathcal{D}(k, m)} E_{i,t}, \forall t\in[m]$ takes $\mathcal{O}(|\mathcal{D}(k,m)|\cdot m)$, and computing $E_{t,t}\mathbf{1}\{ \phi(Z_i, \{k\}) = m\}$ takes $\mathcal{O}(m)$. Therefore, repeating this process for all $\mathcal{D}(k,m), k\in[K], m \in [M_k]$ in total takes $\mathcal{O}(n\cdot m)$. Lastly, finding the maximum FPR across all attributes takes $\mathcal{O}(m \cdot M \cdot K)$.
\end{itemize}
Hence, the total cost of running Algorithm~\ref{alg:AFCP_od} for $m$ test samples is $\mathcal{O}(n\log n + n m + M  K (n + m))$.

\subsection{Multi-Class Classification}
Given a pre-trained multi-class classifier, consider a calibration dataset $\mathcal{D}$ of size $n$. Let $L$ denote the total number of possible labels to predict, and let $K$ denote the number of sensitive attributes. For each attribute $k\in[K]$, let $M_k \in \mathbb{N}$ denote the count of its possible values. Denote $M = \max_k M_k$ as the maximum count across all attributes. 

AFCP for multi-class classification outlined in Algorithm~\ref{alg:AFCP} has the following computational cost. 

\textbf{Analysis for a single test point}
\begin{itemize}
    \item The cost of constructing prediction sets and computing miscoverage indicators within the leave-one-out procedure is $\mathcal{O}(L \cdot n \cdot \log n)$. 
    \item Then, selecting the sensitive attributes requires $\mathcal{O}(n \cdot K \cdot M + L \cdot (n+M\cdot K))$. 
    \item Once the attribute is selected, the cost of applying conformal prediction conditional on the group identified by the selected attribute is $\mathcal{O}(1)$ because the subset of the group has been found in the last step. 
\end{itemize}
Hence, the total cost of running Algorithm~\ref{alg:AFCP} for a single test point is $\mathcal{O}(L(n \log n + K M) + n K M)$.

\textbf{Analysis for $m$ test points}
\begin{itemize}
    \item The cost of computing miscoverage indicators is $\mathcal{O}(L \cdot n \cdot (m + \log n))$. This can be derived by rewriting the conformal p-values for all $j\in \mathcal{D} \cup \{(X_{n+t}, y)\} \; \forall y \in [L]$, and $\forall t \in [m]$ as follows:  
    \begin{align*}
        \hat{u}_{j, t}^y &= \frac{1}{1+n} \Big(\sum_{i\in \mathcal{D} \cup \{Z_{n+t}\}}  \textbf{1}(\hat{S}_{i}^y \leq \hat{S}_{j}^y) \Big) \\ 
        & =  \frac{1}{1+n} \Big( \text{rank}(\hat{S}_{j}^y) \text{ among } \{ \hat{S}_{i}^y \}_{i \in \mathcal{D}}  + \textbf{1}(\hat{S}_{n+t}^y \leq \hat{S}_{j}^y )\Big).
    \end{align*}
    For each placeholder label $y\in[L]$, computing the nonconformity scores for all samples in $\mathcal{D} \cup \{(X_{n+t},y)\}_{t=1}^m$ costs $\mathcal{O}(n+m)$, evaluating the ranks takes $\mathcal{O}(n \log n)$, and comparing $\hat{S}_{n+t}^y$ with $\hat{S}_{j}^y$ costs $\mathcal{O}(n \cdot m)$. This process needs to be conducted for each $y\in[L]$, therefore the total cost of this step is $\mathcal{O}(L \cdot (n+m + n\log n + nm)) = \mathcal{O}(L\cdot n \cdot (\log n + m))$. 
    \item Selecting the sensitive attribute costs $\mathcal{O}(n\cdot M \cdot K + L\cdot m(n+M \cdot K)$. Breaking in steps, for each test sample $(X_{n+t},y), y \in [L]$, the worst miscoverage rate for attribute $k$, as defined in~\eqref{eq:worst_miscoverage_rate}, can be rewritten as follows: 
    \begin{align*}
    \delta_{y, k,t} &:= \max_{m \in [M_k]} \frac{ \sum_{i\in \mathcal{D} \cup \{(X_{n+t},y)\}} E_{y, i,t} \cdot \mathbf{1}\{ \phi(X_i, \{k\}) = m\}  }{ \sum_{i\in \mathcal{D} \cup \{(X_{n+t},y)\}} \mathbf{1}\{\phi(X_i, \{k\}) = m \}} \\
    & = \max_{m \in [M_k]} \frac{ \sum_{i\in \mathcal{D} } E_{y, i,t} \cdot \mathbf{1}\{ \phi(X_i, \{k\}) = m\} + E_{y, t,t} \cdot \mathbf{1}\{ \phi(X_i, \{k\}) = m\} }{ \sum_{i\in \mathcal{D}} \mathbf{1}\{\phi(X_i, \{k\}) = m \} + \mathbf{1}\{\phi(X_i, \{k\}) = m \} }  \\
    & = \max_{m \in [M_k]} \frac{ \sum_{i\in \mathcal{D}(k, m)} E_{y, i,t} + E_{y, t,t} \cdot \mathbf{1}\{ \phi(X_i, \{k\}) = m\} }{|\mathcal{D}(k, m)| + \mathbf{1}\{ \phi(X_i, \{k\}) = m\} },
    \end{align*}
    where $\mathcal{D}(k,m)$ represents a subset of the calibration data indexed by $\{i \in \mathcal{D}: \phi(Z_i, \{k\})=m \}$. 
    The trick we use is that the identification of subset $\mathcal{D}(k, m)$ does not depend on the test sample $t$ and the placeholder label $y$, therefore can be {\em reused} to calculate the miscoverage rate for every test sample $\{(X_{n+t},y)\}_{t=1}^m$. In specific, identifying $\mathcal{D}(k, m), \forall k\in [K], m \in [M_k]$ takes $\mathcal{O}(n \cdot M \cdot K)$. For each $\mathcal{D}(k, m)$, computing $\sum_{i\in \mathcal{D}(k, m)} E_{y, i,t}, \forall t\in[m], \forall y\in[L]$ takes $\mathcal{O}(L\cdot |\mathcal{D}(k,m)|\cdot m)$, and computing $E_{y, t,t}\mathbf{1}\{ \phi(Z_i, \{k\}) = m\}$ takes $\mathcal{O}(L\cdot m)$. Therefore, repeating this process for all $\mathcal{D}(k,m), k\in[K], m \in [M_k]$ and for all $y\in [L]$ in total takes $\mathcal{O}(L\cdot n\cdot m)$. Lastly, finding the maximum miscoverage across all attributes takes $\mathcal{O}(L \cdot m \cdot M \cdot K)$.
\end{itemize}
Hence, the total cost of running Algorithm~\ref{alg:AFCP} for $m$ test samples is $\mathcal{O}(n\log n + L n m + M K (n + L m))$.

\FloatBarrier

\section{Additional Results from Numerical Experiments}\label{app:more_experiments}

\subsection{AFCP for Multiclass Classification}\label{app:more_experiments_mc}

\subsubsection{Synthetic Data} \label{app:more_experiments_mc-synthetic}

Recall from Section~\ref{sec:experiments-synthetic} that Color is denoted as $X_0$ and the first one of the non-sensitive features as $X_1$. 
The distribution of $Y$ conditional on $X$ is modeled by a simple decision tree, where  $X_0$ and $X_1$ are the only useful predictors for $Y$, formulated as the following:
\begin{align} \label{eq:synthetic-data}
  \mathbb{P}[Y \mid X] =
  \begin{cases}
    \left(\frac{1}{3}, \frac{1}{3}, \frac{1}{3}, 0, 0, 0 \right), & \text{if } X_0 \text{=Blue and } X_1 < 0.5, \\
    \left(0, 0, 0, \frac{1}{3}, \frac{1}{3}, \frac{1}{3} \right), & \text{if } X_0 \text{=Blue and }, X_1 \geq 0.5, \\
    \left(1, 0, 0, 0, 0, 0 \right), & \text{if }X_0 \text{=Grey and }, 0 \leq X_1 \leq \frac{1}{6}, \\
    \left(0, 1, 0, 0, 0, 0 \right), & \text{if } X_0 \text{=Grey and }, \frac{1}{6} \leq X_1 \leq \frac{2}{6}, \\
     \vdots & \vdots\\
     \left(0, 0, 0, 0, 0, 1 \right), & \text{if } X_0 \text{=Grey and }, \frac{5}{6} \leq X_1 \leq 1.
  \end{cases}
\end{align}

The detailed numerical experiments results of AFCP with the adaptive equalized coverage (see Equation~\eqref {eq:adaptive_equalized_coverage}) are presented and compared with other benchmark methods.

\begin{table}[!htb]
\centering
    \caption{Average coverage and average size of prediction sets for all test samples constructed by different methods as a function of the training and calibration size. All methods obtain coverage beyond 0.9, while our AFCP and AFCP1 methods, along with the Marginal method, produce the smallest, thus, the most informative, prediction sets. Red numbers indicate the small size of prediction sets. See corresponding plots in Figure~\ref{fig:main_exp_mc_sim_ndata}.}
  \label{tab:main_exp_mc_sim_ndata}

\centering
\fontsize{6}{6}\selectfont
\begin{tabular}[t]{rllllllllll}
\toprule
\multicolumn{1}{c}{ } & \multicolumn{2}{c}{AFCP} & \multicolumn{2}{c}{AFCP1} & \multicolumn{2}{c}{Marginal} & \multicolumn{2}{c}{Partial} & \multicolumn{2}{c}{Exhaustive} \\
\cmidrule(l{3pt}r{3pt}){2-3} \cmidrule(l{3pt}r{3pt}){4-5} \cmidrule(l{3pt}r{3pt}){6-7} \cmidrule(l{3pt}r{3pt}){8-9} \cmidrule(l{3pt}r{3pt}){10-11}
\makecell{Sample\\ size} & Coverage & Size & Coverage & Size & Coverage & Size & Coverage & Size & Coverage & Size\\
\midrule
200 & \makecell{ 0.917 \\ (0.003)} & \textcolor{red}{\makecell{ 3.016 \\ (0.058)}} & \makecell{ 0.937 \\ (0.003)} & \textcolor{red}{\makecell{ 3.206 \\ (0.059)}} & \makecell{ 0.902 \\ (0.003)} & \textcolor{red}{\makecell{ 2.861 \\ (0.056)}} & \makecell{ 0.967 \\ (0.002)} & \makecell{ 3.624 \\ (0.059)} & \makecell{ 0.999 \\ (0.000)} & \makecell{ 5.965 \\ (0.007)}\\
500 & \makecell{ 0.939 \\ (0.002)} & \textcolor{red}{\makecell{ 1.860 \\ (0.012)}} & \makecell{ 0.939 \\ (0.002)} & \textcolor{red}{\makecell{ 1.861 \\ (0.012)}} & \makecell{ 0.900 \\ (0.002)} & \textcolor{red}{\makecell{ 1.685 \\ (0.011)}} & \makecell{ 0.963 \\ (0.001)} & \makecell{ 2.094 \\ (0.016)} & \makecell{ 0.950 \\ (0.002)} & \makecell{ 3.219 \\ (0.034)}\\
1000 & \makecell{ 0.945 \\ (0.001)} & \textcolor{red}{\makecell{ 1.670 \\ (0.009)}} & \makecell{ 0.945 \\ (0.001)} & \textcolor{red}{\makecell{ 1.670 \\ (0.009)}} & \makecell{ 0.900 \\ (0.002)} & \textcolor{red}{\makecell{ 1.495 \\ (0.009)}} & \makecell{ 0.959 \\ (0.001)} & \makecell{ 1.794 \\ (0.010)} & \makecell{ 0.933 \\ (0.002)} & \makecell{ 1.900 \\ (0.009)}\\
2000 & \makecell{ 0.946 \\ (0.001)} & \textcolor{red}{\makecell{ 1.548 \\ (0.007)}} & \makecell{ 0.946 \\ (0.001)} & \textcolor{red}{\makecell{ 1.548 \\ (0.007)}} & \makecell{ 0.901 \\ (0.002)} & \textcolor{red}{\makecell{ 1.391 \\ (0.007)}} & \makecell{ 0.956 \\ (0.001)} & \makecell{ 1.618 \\ (0.008)} & \makecell{ 0.923 \\ (0.001)} & \makecell{ 1.719 \\ (0.008)}\\
\bottomrule
\end{tabular}

\end{table}

\begin{figure*}[!htb]
    \centering
    \includegraphics[width=0.8\linewidth]{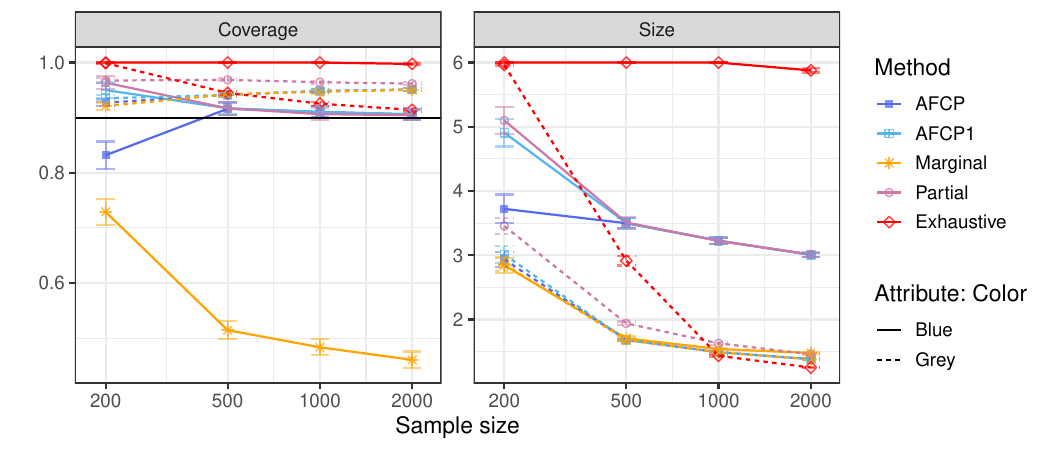}\vspace{-0.5cm}
    \caption{Coverage and size of prediction sets constructed with different methods for groups formed by Color. For the Blue group, the Marginal method (dashed orange lines) fails to detect and correct for its undercoverage, and the Exhaustive method produces prediction sets that are too conservative to be helpful. In contrast, our AFCP and AFCP1 methods correct the undercoverage and maintain small prediction sets. See Table~\ref{tab:main_exp_mc_sim_ndata_color} for numerical details and standard errors.} 
    \label{fig:main_exp_mc_sim_ndata_color}
\end{figure*}

\begin{figure*}[!htb]
    \centering
    \includegraphics[width=0.8\linewidth]{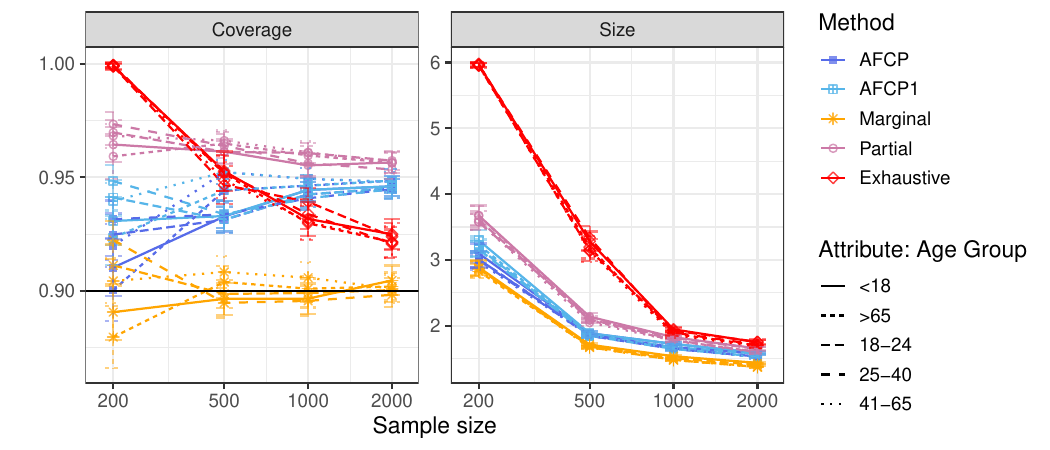}\vspace{-0.5cm}
    \caption{Coverage and size of prediction sets constructed with different methods for groups formed by Age group. By design, all groups have similar performance, and none of them are subject to unfairness/undercoverage. See Table~\ref{tab:main_exp_mc_sim_ndata_agegroup} for numerical details and standard errors.} 
    \label{fig:main_exp_mc_sim_ndata_agegroup}
\end{figure*}

\begin{figure*}[!htb]
    \centering
    \includegraphics[width=0.8\linewidth]{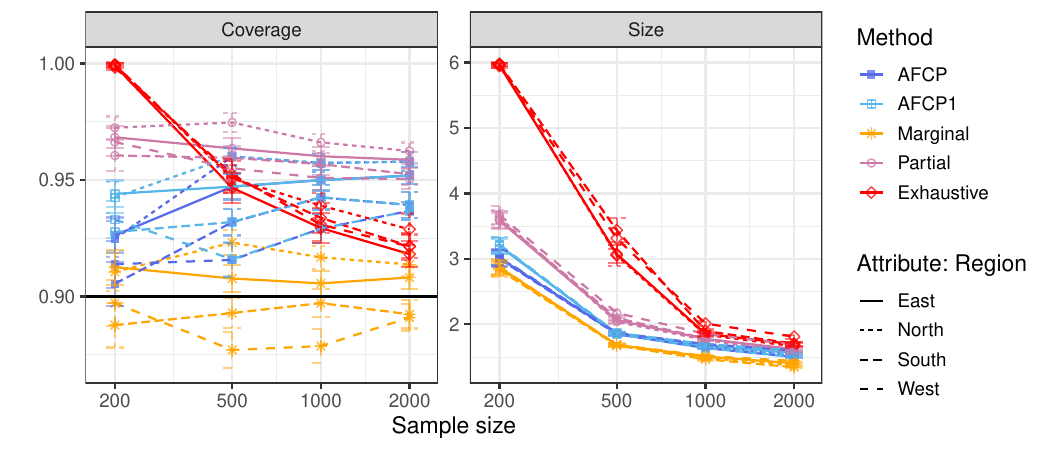}\vspace{-0.5cm}
    \caption{Coverage and size of prediction sets constructed with different methods for groups formed by Region. By design, all groups have similar performance, and none of them are subject to unfairness/undercoverage. See Table~\ref{tab:main_exp_mc_sim_ndata_region} for numerical details and standard errors.} 
    \label{fig:main_exp_mc_sim_ndata_region}
\end{figure*}

\begin{table}[!htb]
\centering
    \caption{Coverage and size of prediction sets constructed with different methods for groups formed by Color. Green numbers indicate low coverage and red numbers indicate the small size of prediction sets. See corresponding plots in Figure~\ref{fig:main_exp_mc_sim_ndata_color}.}
  \label{tab:main_exp_mc_sim_ndata_color}

\centering
\fontsize{6}{6}\selectfont
\begin{tabular}[t]{lrllllllllll}
\toprule
\multicolumn{2}{c}{ } & \multicolumn{2}{c}{AFCP} & \multicolumn{2}{c}{AFCP1} & \multicolumn{2}{c}{Marginal} & \multicolumn{2}{c}{Partial} & \multicolumn{2}{c}{Exhaustive} \\
\cmidrule(l{3pt}r{3pt}){3-4} \cmidrule(l{3pt}r{3pt}){5-6} \cmidrule(l{3pt}r{3pt}){7-8} \cmidrule(l{3pt}r{3pt}){9-10} \cmidrule(l{3pt}r{3pt}){11-12}
Group & \makecell{Sample\\ size} & Coverage & Size & Coverage & Size & Coverage & Size & Coverage & Size & Coverage & Size\\
\midrule
\addlinespace[0.3em]
\multicolumn{12}{l}{\textbf{Grey}}\\
\hspace{1em}Grey & 200 & \makecell{ 0.927 \\ (0.003)} & \textcolor{red}{\makecell{ 2.936 \\ (0.061)}} & \makecell{ 0.935 \\ (0.003)} & \textcolor{red}{\makecell{ 3.015 \\ (0.065)}} & \makecell{ 0.921 \\ (0.003)} & \textcolor{red}{\makecell{ 2.862 \\ (0.055)}} & \makecell{ 0.967 \\ (0.002)} & \makecell{ 3.457 \\ (0.062)} & \makecell{ 0.999 \\ (0.000)} & \makecell{ 5.961 \\ (0.007)}\\
\hspace{1em}Grey & 500 & \makecell{ 0.941 \\ (0.002)} & \textcolor{red}{\makecell{ 1.682 \\ (0.011)}} & \makecell{ 0.941 \\ (0.002)} & \textcolor{red}{\makecell{ 1.682 \\ (0.011)}} & \makecell{ 0.942 \\ (0.002)} & \textcolor{red}{\makecell{ 1.683 \\ (0.011)}} & \makecell{ 0.969 \\ (0.001)} & \textcolor{red}{\makecell{ 1.941 \\ (0.015)}} & \makecell{ 0.945 \\ (0.002)} & \makecell{ 2.917 \\ (0.037)}\\
\hspace{1em}Grey & 1000 & \makecell{ 0.949 \\ (0.001)} & \textcolor{red}{\makecell{ 1.494 \\ (0.009)}} & \makecell{ 0.949 \\ (0.001)} & \textcolor{red}{\makecell{ 1.494 \\ (0.009)}} & \makecell{ 0.947 \\ (0.002)} & \textcolor{red}{\makecell{ 1.490 \\ (0.009)}} & \makecell{ 0.964 \\ (0.001)} & \textcolor{red}{\makecell{ 1.632 \\ (0.010)}} & \makecell{ 0.925 \\ (0.002)} & \textcolor{red}{\makecell{ 1.436 \\ (0.008)}}\\
\hspace{1em}Grey & 2000 & \makecell{ 0.951 \\ (0.001)} & \textcolor{red}{\makecell{ 1.384 \\ (0.007)}} & \makecell{ 0.951 \\ (0.001)} & \textcolor{red}{\makecell{ 1.384 \\ (0.007)}} & \makecell{ 0.951 \\ (0.001)} & \textcolor{red}{\makecell{ 1.381 \\ (0.007)}} & \makecell{ 0.962 \\ (0.001)} & \textcolor{red}{\makecell{ 1.462 \\ (0.008)}} & \makecell{ 0.914 \\ (0.002)} & \textcolor{red}{\makecell{ 1.254 \\ (0.005)}}\\
\addlinespace[0.3em]
\multicolumn{12}{l}{\textbf{Blue}}\\
\hspace{1em}Blue & 200 & \textcolor{darkgreen}{\makecell{ 0.832 \\ (0.012)}} & \makecell{ 3.723 \\ (0.111)} & \makecell{ 0.950 \\ (0.007)} & \makecell{ 4.908 \\ (0.108)} & \textcolor{darkgreen}{\makecell{ 0.729 \\ (0.012)}} & \textcolor{red}{\makecell{ 2.847 \\ (0.059)}} & \makecell{ 0.963 \\ (0.006)} & \makecell{ 5.099 \\ (0.103)} & \makecell{ 1.000 \\ (0.000)} & \makecell{ 6.000 \\ (0.000)}\\
\hspace{1em}Blue & 500 & \makecell{ 0.916 \\ (0.006)} & \makecell{ 3.500 \\ (0.040)} & \makecell{ 0.917 \\ (0.006)} & \makecell{ 3.504 \\ (0.040)} & \textcolor{darkgreen}{\makecell{ 0.515 \\ (0.008)}} & \textcolor{red}{\makecell{ 1.707 \\ (0.014)}} & \makecell{ 0.917 \\ (0.006)} & \makecell{ 3.512 \\ (0.041)} & \makecell{ 1.000 \\ (0.000)} & \makecell{ 6.000 \\ (0.000)}\\
\hspace{1em}Blue & 1000 & \makecell{ 0.910 \\ (0.005)} & \makecell{ 3.228 \\ (0.024)} & \makecell{ 0.910 \\ (0.005)} & \makecell{ 3.228 \\ (0.024)} & \textcolor{darkgreen}{\makecell{ 0.484 \\ (0.007)}} & \textcolor{red}{\makecell{ 1.542 \\ (0.011)}} & \makecell{ 0.906 \\ (0.005)} & \makecell{ 3.225 \\ (0.024)} & \makecell{ 1.000 \\ (0.000)} & \makecell{ 6.000 \\ (0.000)}\\
\hspace{1em}Blue & 2000 & \makecell{ 0.906 \\ (0.005)} & \makecell{ 3.008 \\ (0.017)} & \makecell{ 0.906 \\ (0.005)} & \makecell{ 3.008 \\ (0.017)} & \textcolor{darkgreen}{\makecell{ 0.461 \\ (0.008)}} & \textcolor{red}{\makecell{ 1.480 \\ (0.010)}} & \makecell{ 0.905 \\ (0.005)} & \makecell{ 3.013 \\ (0.017)} & \makecell{ 0.998 \\ (0.001)} & \makecell{ 5.878 \\ (0.017)}\\
\bottomrule
\end{tabular}

\end{table}

\begin{table}[!htb]
\centering
    \caption{Coverage and size of prediction sets constructed with different methods for groups formed by Age Group. By design, all groups have similar performance, and none of them are subject to unfairness/undercoverage. See corresponding plots in Figure~\ref{fig:main_exp_mc_sim_ndata_agegroup}.}
  \label{tab:main_exp_mc_sim_ndata_agegroup}

\centering
\fontsize{6}{6}\selectfont
\begin{tabular}[t]{lrllllllllll}
\toprule
\multicolumn{2}{c}{ } & \multicolumn{2}{c}{AFCP} & \multicolumn{2}{c}{AFCP1} & \multicolumn{2}{c}{Marginal} & \multicolumn{2}{c}{Partial} & \multicolumn{2}{c}{Exhaustive} \\
\cmidrule(l{3pt}r{3pt}){3-4} \cmidrule(l{3pt}r{3pt}){5-6} \cmidrule(l{3pt}r{3pt}){7-8} \cmidrule(l{3pt}r{3pt}){9-10} \cmidrule(l{3pt}r{3pt}){11-12}
Group & \makecell{Sample\\ size} & Coverage & Size & Coverage & Size & Coverage & Size & Coverage & Size & Coverage & Size\\
\midrule
\addlinespace[0.3em]
\multicolumn{12}{l}{\textbf{<18}}\\
\hspace{1em}<18 & 200 & \makecell{ 0.910 \\ (0.006)} & \textcolor{red}{\makecell{ 3.098 \\ (0.075)}} & \makecell{ 0.931 \\ (0.005)} & \textcolor{red}{\makecell{ 3.298 \\ (0.079)}} & \makecell{ 0.891 \\ (0.006)} & \textcolor{red}{\makecell{ 2.887 \\ (0.056)}} & \makecell{ 0.964 \\ (0.004)} & \makecell{ 3.674 \\ (0.077)} & \makecell{ 0.999 \\ (0.001)} & \makecell{ 5.967 \\ (0.015)}\\
\hspace{1em}<18 & 500 & \makecell{ 0.933 \\ (0.003)} & \textcolor{red}{\makecell{ 1.884 \\ (0.015)}} & \makecell{ 0.933 \\ (0.003)} & \textcolor{red}{\makecell{ 1.885 \\ (0.015)}} & \makecell{ 0.897 \\ (0.003)} & \textcolor{red}{\makecell{ 1.710 \\ (0.013)}} & \makecell{ 0.961 \\ (0.003)} & \makecell{ 2.136 \\ (0.029)} & \makecell{ 0.953 \\ (0.004)} & \makecell{ 3.283 \\ (0.074)}\\
\hspace{1em}<18 & 1000 & \makecell{ 0.944 \\ (0.003)} & \textcolor{red}{\makecell{ 1.722 \\ (0.010)}} & \makecell{ 0.944 \\ (0.003)} & \textcolor{red}{\makecell{ 1.722 \\ (0.010)}} & \makecell{ 0.897 \\ (0.003)} & \textcolor{red}{\makecell{ 1.535 \\ (0.010)}} & \makecell{ 0.955 \\ (0.002)} & \makecell{ 1.824 \\ (0.014)} & \makecell{ 0.932 \\ (0.004)} & \makecell{ 1.943 \\ (0.017)}\\
\hspace{1em}<18 & 2000 & \makecell{ 0.946 \\ (0.002)} & \textcolor{red}{\makecell{ 1.586 \\ (0.011)}} & \makecell{ 0.946 \\ (0.002)} & \textcolor{red}{\makecell{ 1.586 \\ (0.011)}} & \makecell{ 0.905 \\ (0.003)} & \textcolor{red}{\makecell{ 1.434 \\ (0.009)}} & \makecell{ 0.957 \\ (0.002)} & \makecell{ 1.653 \\ (0.012)} & \makecell{ 0.925 \\ (0.003)} & \makecell{ 1.747 \\ (0.018)}\\
\addlinespace[0.3em]
\multicolumn{12}{l}{\textbf{18-24}}\\
\hspace{1em}18-24 & 200 & \makecell{ 0.925 \\ (0.005)} & \textcolor{red}{\makecell{ 3.010 \\ (0.063)}} & \makecell{ 0.941 \\ (0.004)} & \textcolor{red}{\makecell{ 3.185 \\ (0.063)}} & \makecell{ 0.911 \\ (0.005)} & \textcolor{red}{\makecell{ 2.870 \\ (0.058)}} & \makecell{ 0.970 \\ (0.003)} & \makecell{ 3.613 \\ (0.068)} & \makecell{ 0.999 \\ (0.001)} & \makecell{ 5.958 \\ (0.016)}\\
\hspace{1em}18-24 & 500 & \makecell{ 0.931 \\ (0.003)} & \textcolor{red}{\makecell{ 1.859 \\ (0.014)}} & \makecell{ 0.931 \\ (0.003)} & \textcolor{red}{\makecell{ 1.859 \\ (0.014)}} & \makecell{ 0.899 \\ (0.003)} & \textcolor{red}{\makecell{ 1.686 \\ (0.012)}} & \makecell{ 0.961 \\ (0.002)} & \makecell{ 2.091 \\ (0.024)} & \makecell{ 0.947 \\ (0.004)} & \makecell{ 3.147 \\ (0.084)}\\
\hspace{1em}18-24 & 1000 & \makecell{ 0.942 \\ (0.003)} & \textcolor{red}{\makecell{ 1.673 \\ (0.011)}} & \makecell{ 0.942 \\ (0.003)} & \textcolor{red}{\makecell{ 1.673 \\ (0.011)}} & \makecell{ 0.899 \\ (0.003)} & \textcolor{red}{\makecell{ 1.504 \\ (0.010)}} & \makecell{ 0.961 \\ (0.002)} & \makecell{ 1.815 \\ (0.014)} & \makecell{ 0.939 \\ (0.003)} & \makecell{ 1.925 \\ (0.017)}\\
\hspace{1em}18-24 & 2000 & \makecell{ 0.945 \\ (0.002)} & \textcolor{red}{\makecell{ 1.573 \\ (0.009)}} & \makecell{ 0.945 \\ (0.002)} & \textcolor{red}{\makecell{ 1.573 \\ (0.009)}} & \makecell{ 0.900 \\ (0.003)} & \textcolor{red}{\makecell{ 1.404 \\ (0.008)}} & \makecell{ 0.957 \\ (0.002)} & \makecell{ 1.645 \\ (0.013)} & \makecell{ 0.924 \\ (0.003)} & \makecell{ 1.754 \\ (0.018)}\\
\addlinespace[0.3em]
\multicolumn{12}{l}{\textbf{25-40}}\\
\hspace{1em}25-40 & 200 & \makecell{ 0.932 \\ (0.004)} & \textcolor{red}{\makecell{ 2.995 \\ (0.057)}} & \makecell{ 0.948 \\ (0.004)} & \textcolor{red}{\makecell{ 3.159 \\ (0.059)}} & \makecell{ 0.922 \\ (0.004)} & \textcolor{red}{\makecell{ 2.868 \\ (0.058)}} & \makecell{ 0.973 \\ (0.003)} & \makecell{ 3.577 \\ (0.064)} & \makecell{ 0.999 \\ (0.001)} & \makecell{ 5.965 \\ (0.018)}\\
\hspace{1em}25-40 & 500 & \makecell{ 0.934 \\ (0.003)} & \textcolor{red}{\makecell{ 1.845 \\ (0.014)}} & \makecell{ 0.934 \\ (0.003)} & \textcolor{red}{\makecell{ 1.845 \\ (0.014)}} & \makecell{ 0.895 \\ (0.003)} & \textcolor{red}{\makecell{ 1.672 \\ (0.012)}} & \makecell{ 0.964 \\ (0.003)} & \makecell{ 2.116 \\ (0.032)} & \makecell{ 0.952 \\ (0.004)} & \makecell{ 3.336 \\ (0.090)}\\
\hspace{1em}25-40 & 1000 & \makecell{ 0.941 \\ (0.003)} & \textcolor{red}{\makecell{ 1.646 \\ (0.011)}} & \makecell{ 0.941 \\ (0.003)} & \textcolor{red}{\makecell{ 1.646 \\ (0.011)}} & \makecell{ 0.896 \\ (0.003)} & \textcolor{red}{\makecell{ 1.479 \\ (0.010)}} & \makecell{ 0.956 \\ (0.003)} & \makecell{ 1.782 \\ (0.018)} & \makecell{ 0.934 \\ (0.003)} & \makecell{ 1.895 \\ (0.020)}\\
\hspace{1em}25-40 & 2000 & \makecell{ 0.945 \\ (0.002)} & \textcolor{red}{\makecell{ 1.530 \\ (0.009)}} & \makecell{ 0.945 \\ (0.002)} & \textcolor{red}{\makecell{ 1.530 \\ (0.009)}} & \makecell{ 0.898 \\ (0.003)} & \textcolor{red}{\makecell{ 1.372 \\ (0.008)}} & \makecell{ 0.953 \\ (0.002)} & \textcolor{red}{\makecell{ 1.600 \\ (0.012)}} & \makecell{ 0.921 \\ (0.003)} & \makecell{ 1.710 \\ (0.016)}\\
\addlinespace[0.3em]
\multicolumn{12}{l}{\textbf{41-65}}\\
\hspace{1em}41-65 & 200 & \makecell{ 0.920 \\ (0.005)} & \textcolor{red}{\makecell{ 2.969 \\ (0.057)}} & \makecell{ 0.940 \\ (0.004)} & \textcolor{red}{\makecell{ 3.162 \\ (0.060)}} & \makecell{ 0.904 \\ (0.005)} & \textcolor{red}{\makecell{ 2.851 \\ (0.056)}} & \makecell{ 0.968 \\ (0.003)} & \makecell{ 3.580 \\ (0.061)} & \makecell{ 0.999 \\ (0.000)} & \makecell{ 5.955 \\ (0.018)}\\
\hspace{1em}41-65 & 500 & \makecell{ 0.952 \\ (0.003)} & \textcolor{red}{\makecell{ 1.847 \\ (0.013)}} & \makecell{ 0.952 \\ (0.003)} & \textcolor{red}{\makecell{ 1.847 \\ (0.013)}} & \makecell{ 0.908 \\ (0.004)} & \textcolor{red}{\makecell{ 1.668 \\ (0.011)}} & \makecell{ 0.966 \\ (0.002)} & \makecell{ 2.047 \\ (0.026)} & \makecell{ 0.949 \\ (0.004)} & \makecell{ 3.138 \\ (0.086)}\\
\hspace{1em}41-65 & 1000 & \makecell{ 0.949 \\ (0.002)} & \textcolor{red}{\makecell{ 1.654 \\ (0.012)}} & \makecell{ 0.949 \\ (0.002)} & \textcolor{red}{\makecell{ 1.654 \\ (0.012)}} & \makecell{ 0.906 \\ (0.003)} & \textcolor{red}{\makecell{ 1.479 \\ (0.010)}} & \makecell{ 0.961 \\ (0.002)} & \textcolor{red}{\makecell{ 1.772 \\ (0.017)}} & \makecell{ 0.930 \\ (0.004)} & \makecell{ 1.858 \\ (0.023)}\\
\hspace{1em}41-65 & 2000 & \makecell{ 0.948 \\ (0.002)} & \textcolor{red}{\makecell{ 1.519 \\ (0.009)}} & \makecell{ 0.948 \\ (0.002)} & \textcolor{red}{\makecell{ 1.519 \\ (0.009)}} & \makecell{ 0.901 \\ (0.003)} & \textcolor{red}{\makecell{ 1.369 \\ (0.008)}} & \makecell{ 0.956 \\ (0.002)} & \textcolor{red}{\makecell{ 1.590 \\ (0.010)}} & \makecell{ 0.921 \\ (0.003)} & \makecell{ 1.684 \\ (0.015)}\\
\addlinespace[0.3em]
\multicolumn{12}{l}{\textbf{>65}}\\
\hspace{1em}>65 & 200 & \makecell{ 0.900 \\ (0.007)} & \textcolor{red}{\makecell{ 3.008 \\ (0.059)}} & \makecell{ 0.923 \\ (0.006)} & \textcolor{red}{\makecell{ 3.228 \\ (0.064)}} & \makecell{ 0.880 \\ (0.007)} & \textcolor{red}{\makecell{ 2.826 \\ (0.054)}} & \makecell{ 0.959 \\ (0.005)} & \makecell{ 3.675 \\ (0.072)} & \makecell{ 0.999 \\ (0.000)} & \makecell{ 5.979 \\ (0.010)}\\
\hspace{1em}>65 & 500 & \makecell{ 0.944 \\ (0.003)} & \textcolor{red}{\makecell{ 1.867 \\ (0.014)}} & \makecell{ 0.944 \\ (0.003)} & \textcolor{red}{\makecell{ 1.867 \\ (0.014)}} & \makecell{ 0.904 \\ (0.003)} & \textcolor{red}{\makecell{ 1.689 \\ (0.012)}} & \makecell{ 0.965 \\ (0.002)} & \makecell{ 2.082 \\ (0.020)} & \makecell{ 0.952 \\ (0.004)} & \makecell{ 3.191 \\ (0.084)}\\
\hspace{1em}>65 & 1000 & \makecell{ 0.946 \\ (0.002)} & \textcolor{red}{\makecell{ 1.653 \\ (0.011)}} & \makecell{ 0.946 \\ (0.002)} & \textcolor{red}{\makecell{ 1.653 \\ (0.011)}} & \makecell{ 0.901 \\ (0.003)} & \textcolor{red}{\makecell{ 1.479 \\ (0.010)}} & \makecell{ 0.960 \\ (0.002)} & \makecell{ 1.774 \\ (0.014)} & \makecell{ 0.930 \\ (0.004)} & \makecell{ 1.880 \\ (0.020)}\\
\hspace{1em}>65 & 2000 & \makecell{ 0.949 \\ (0.003)} & \textcolor{red}{\makecell{ 1.529 \\ (0.010)}} & \makecell{ 0.949 \\ (0.003)} & \textcolor{red}{\makecell{ 1.529 \\ (0.010)}} & \makecell{ 0.902 \\ (0.003)} & \textcolor{red}{\makecell{ 1.377 \\ (0.009)}} & \makecell{ 0.956 \\ (0.002)} & \textcolor{red}{\makecell{ 1.600 \\ (0.012)}} & \makecell{ 0.921 \\ (0.003)} & \makecell{ 1.699 \\ (0.016)}\\
\bottomrule
\end{tabular}

\end{table}

\begin{table}[!htb]
\centering
    \caption{Coverage and size of prediction sets constructed with different methods for groups formed by Region. By design, all groups have similar performance, and none of them are subject to unfairness/undercoverage. See corresponding plots in Figure~\ref{fig:main_exp_mc_sim_ndata_region}.}
  \label{tab:main_exp_mc_sim_ndata_region}

\centering
\fontsize{6}{6}\selectfont
\begin{tabular}[t]{lrllllllllll}
\toprule
\multicolumn{2}{c}{ } & \multicolumn{2}{c}{AFCP} & \multicolumn{2}{c}{AFCP1} & \multicolumn{2}{c}{Marginal} & \multicolumn{2}{c}{Partial} & \multicolumn{2}{c}{Exhaustive} \\
\cmidrule(l{3pt}r{3pt}){3-4} \cmidrule(l{3pt}r{3pt}){5-6} \cmidrule(l{3pt}r{3pt}){7-8} \cmidrule(l{3pt}r{3pt}){9-10} \cmidrule(l{3pt}r{3pt}){11-12}
Group & \makecell{Sample\\ size} & Coverage & Size & Coverage & Size & Coverage & Size & Coverage & Size & Coverage & Size\\
\midrule
\addlinespace[0.3em]
\multicolumn{12}{l}{\textbf{West}}\\
\hspace{1em}West & 200 & \makecell{ 0.914 \\ (0.005)} & \textcolor{red}{\makecell{ 3.030 \\ (0.059)}} & \makecell{ 0.933 \\ (0.004)} & \textcolor{red}{\makecell{ 3.218 \\ (0.060)}} & \makecell{ 0.897 \\ (0.005)} & \textcolor{red}{\makecell{ 2.878 \\ (0.057)}} & \makecell{ 0.966 \\ (0.003)} & \makecell{ 3.671 \\ (0.068)} & \makecell{ 0.998 \\ (0.001)} & \makecell{ 5.948 \\ (0.017)}\\
\hspace{1em}West & 500 & \makecell{ 0.916 \\ (0.004)} & \textcolor{red}{\makecell{ 1.869 \\ (0.015)}} & \makecell{ 0.916 \\ (0.004)} & \textcolor{red}{\makecell{ 1.869 \\ (0.015)}} & \textcolor{darkgreen}{\makecell{ 0.877 \\ (0.004)}} & \textcolor{red}{\makecell{ 1.691 \\ (0.012)}} & \makecell{ 0.955 \\ (0.003)} & \makecell{ 2.176 \\ (0.022)} & \makecell{ 0.952 \\ (0.003)} & \makecell{ 3.315 \\ (0.080)}\\
\hspace{1em}West & 1000 & \makecell{ 0.929 \\ (0.003)} & \textcolor{red}{\makecell{ 1.699 \\ (0.011)}} & \makecell{ 0.929 \\ (0.003)} & \textcolor{red}{\makecell{ 1.699 \\ (0.011)}} & \makecell{ 0.879 \\ (0.004)} & \textcolor{red}{\makecell{ 1.516 \\ (0.010)}} & \makecell{ 0.951 \\ (0.003)} & \makecell{ 1.854 \\ (0.015)} & \makecell{ 0.931 \\ (0.004)} & \makecell{ 2.012 \\ (0.021)}\\
\hspace{1em}West & 2000 & \makecell{ 0.937 \\ (0.002)} & \textcolor{red}{\makecell{ 1.602 \\ (0.009)}} & \makecell{ 0.937 \\ (0.002)} & \textcolor{red}{\makecell{ 1.602 \\ (0.009)}} & \makecell{ 0.891 \\ (0.003)} & \textcolor{red}{\makecell{ 1.442 \\ (0.009)}} & \makecell{ 0.950 \\ (0.002)} & \makecell{ 1.693 \\ (0.011)} & \makecell{ 0.921 \\ (0.003)} & \makecell{ 1.811 \\ (0.015)}\\
\addlinespace[0.3em]
\multicolumn{12}{l}{\textbf{East}}\\
\hspace{1em}East & 200 & \makecell{ 0.926 \\ (0.004)} & \textcolor{red}{\makecell{ 3.012 \\ (0.059)}} & \makecell{ 0.944 \\ (0.003)} & \textcolor{red}{\makecell{ 3.194 \\ (0.059)}} & \makecell{ 0.913 \\ (0.004)} & \textcolor{red}{\makecell{ 2.874 \\ (0.057)}} & \makecell{ 0.968 \\ (0.002)} & \makecell{ 3.586 \\ (0.062)} & \makecell{ 0.999 \\ (0.001)} & \makecell{ 5.958 \\ (0.016)}\\
\hspace{1em}East & 500 & \makecell{ 0.947 \\ (0.003)} & \textcolor{red}{\makecell{ 1.862 \\ (0.013)}} & \makecell{ 0.947 \\ (0.003)} & \textcolor{red}{\makecell{ 1.863 \\ (0.013)}} & \makecell{ 0.908 \\ (0.003)} & \textcolor{red}{\makecell{ 1.686 \\ (0.012)}} & \makecell{ 0.964 \\ (0.002)} & \makecell{ 2.065 \\ (0.020)} & \makecell{ 0.947 \\ (0.003)} & \makecell{ 3.075 \\ (0.071)}\\
\hspace{1em}East & 1000 & \makecell{ 0.950 \\ (0.002)} & \textcolor{red}{\makecell{ 1.677 \\ (0.010)}} & \makecell{ 0.950 \\ (0.002)} & \textcolor{red}{\makecell{ 1.677 \\ (0.010)}} & \makecell{ 0.906 \\ (0.003)} & \textcolor{red}{\makecell{ 1.508 \\ (0.010)}} & \makecell{ 0.960 \\ (0.002)} & \makecell{ 1.778 \\ (0.014)} & \makecell{ 0.929 \\ (0.003)} & \makecell{ 1.856 \\ (0.019)}\\
\hspace{1em}East & 2000 & \makecell{ 0.952 \\ (0.002)} & \textcolor{red}{\makecell{ 1.564 \\ (0.009)}} & \makecell{ 0.952 \\ (0.002)} & \textcolor{red}{\makecell{ 1.564 \\ (0.009)}} & \makecell{ 0.908 \\ (0.002)} & \textcolor{red}{\makecell{ 1.406 \\ (0.008)}} & \makecell{ 0.959 \\ (0.002)} & \makecell{ 1.621 \\ (0.010)} & \makecell{ 0.918 \\ (0.003)} & \makecell{ 1.694 \\ (0.015)}\\
\addlinespace[0.3em]
\multicolumn{12}{l}{\textbf{North}}\\
\hspace{1em}North & 200 & \makecell{ 0.925 \\ (0.004)} & \textcolor{red}{\makecell{ 3.022 \\ (0.059)}} & \makecell{ 0.943 \\ (0.003)} & \textcolor{red}{\makecell{ 3.204 \\ (0.060)}} & \makecell{ 0.911 \\ (0.004)} & \textcolor{red}{\makecell{ 2.855 \\ (0.055)}} & \makecell{ 0.972 \\ (0.002)} & \makecell{ 3.630 \\ (0.058)} & \makecell{ 0.999 \\ (0.001)} & \makecell{ 5.979 \\ (0.010)}\\
\hspace{1em}North & 500 & \makecell{ 0.960 \\ (0.002)} & \textcolor{red}{\makecell{ 1.859 \\ (0.012)}} & \makecell{ 0.960 \\ (0.002)} & \textcolor{red}{\makecell{ 1.859 \\ (0.012)}} & \makecell{ 0.923 \\ (0.003)} & \textcolor{red}{\makecell{ 1.687 \\ (0.010)}} & \makecell{ 0.975 \\ (0.002)} & \makecell{ 2.040 \\ (0.016)} & \makecell{ 0.951 \\ (0.003)} & \makecell{ 3.054 \\ (0.083)}\\
\hspace{1em}North & 1000 & \makecell{ 0.957 \\ (0.002)} & \textcolor{red}{\makecell{ 1.665 \\ (0.011)}} & \makecell{ 0.957 \\ (0.002)} & \textcolor{red}{\makecell{ 1.665 \\ (0.011)}} & \makecell{ 0.917 \\ (0.002)} & \textcolor{red}{\makecell{ 1.489 \\ (0.010)}} & \makecell{ 0.966 \\ (0.002)} & \textcolor{red}{\makecell{ 1.760 \\ (0.012)}} & \makecell{ 0.939 \\ (0.003)} & \makecell{ 1.841 \\ (0.018)}\\
\hspace{1em}North & 2000 & \makecell{ 0.958 \\ (0.002)} & \textcolor{red}{\makecell{ 1.523 \\ (0.009)}} & \makecell{ 0.958 \\ (0.002)} & \textcolor{red}{\makecell{ 1.523 \\ (0.009)}} & \makecell{ 0.914 \\ (0.003)} & \textcolor{red}{\makecell{ 1.373 \\ (0.008)}} & \makecell{ 0.962 \\ (0.002)} & \textcolor{red}{\makecell{ 1.572 \\ (0.010)}} & \makecell{ 0.929 \\ (0.003)} & \makecell{ 1.664 \\ (0.015)}\\
\addlinespace[0.3em]
\multicolumn{12}{l}{\textbf{South}}\\
\hspace{1em}South & 200 & \makecell{ 0.905 \\ (0.005)} & \textcolor{red}{\makecell{ 2.998 \\ (0.060)}} & \makecell{ 0.928 \\ (0.004)} & \textcolor{red}{\makecell{ 3.209 \\ (0.064)}} & \makecell{ 0.888 \\ (0.005)} & \textcolor{red}{\makecell{ 2.836 \\ (0.054)}} & \makecell{ 0.961 \\ (0.003)} & \makecell{ 3.604 \\ (0.065)} & \makecell{ 1.000 \\ (0.000)} & \makecell{ 5.976 \\ (0.011)}\\
\hspace{1em}South & 500 & \makecell{ 0.932 \\ (0.003)} & \textcolor{red}{\makecell{ 1.851 \\ (0.012)}} & \makecell{ 0.932 \\ (0.003)} & \textcolor{red}{\makecell{ 1.852 \\ (0.012)}} & \makecell{ 0.893 \\ (0.003)} & \textcolor{red}{\makecell{ 1.676 \\ (0.012)}} & \makecell{ 0.960 \\ (0.002)} & \makecell{ 2.097 \\ (0.022)} & \makecell{ 0.951 \\ (0.003)} & \makecell{ 3.440 \\ (0.090)}\\
\hspace{1em}South & 1000 & \makecell{ 0.943 \\ (0.003)} & \textcolor{red}{\makecell{ 1.639 \\ (0.010)}} & \makecell{ 0.943 \\ (0.003)} & \textcolor{red}{\makecell{ 1.639 \\ (0.010)}} & \makecell{ 0.897 \\ (0.003)} & \textcolor{red}{\makecell{ 1.468 \\ (0.010)}} & \makecell{ 0.957 \\ (0.002)} & \makecell{ 1.784 \\ (0.015)} & \makecell{ 0.934 \\ (0.003)} & \makecell{ 1.898 \\ (0.016)}\\
\hspace{1em}South & 2000 & \makecell{ 0.939 \\ (0.003)} & \textcolor{red}{\makecell{ 1.501 \\ (0.008)}} & \makecell{ 0.939 \\ (0.003)} & \textcolor{red}{\makecell{ 1.501 \\ (0.008)}} & \makecell{ 0.892 \\ (0.003)} & \textcolor{red}{\makecell{ 1.345 \\ (0.007)}} & \makecell{ 0.952 \\ (0.002)} & \textcolor{red}{\makecell{ 1.584 \\ (0.011)}} & \makecell{ 0.922 \\ (0.003)} & \makecell{ 1.704 \\ (0.015)}\\
\bottomrule
\end{tabular}

\end{table}

\FloatBarrier
Next, we provide numerical experimental results comparing AFCP with the \textbf{label-conditional} adaptive equalized coverage (see Equation~\eqref {eq:adaptive_equalized_coverage_lc}) with other benchmark methods.

\begin{figure*}[!htb]
    \centering
    \includegraphics[width=\linewidth]{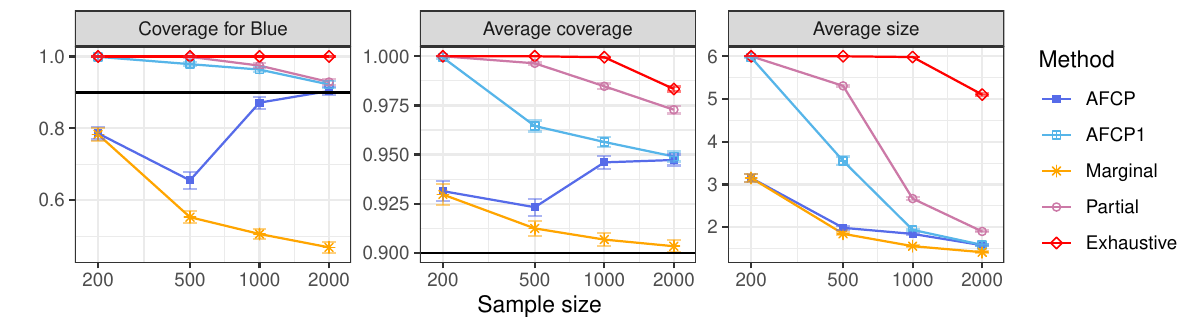}\vspace{-0.5cm}
    \caption{Performance on prediction sets constructed by different methods on synthetic medical diagnosis data as a function of the total number of training and calibration points. Our method (AFCP) leads to more informative sets with lower average width and higher conditional coverage on the Blue group. The error bars indicate 2 standard errors.} 
    \label{fig:main_exp_mc_sim_ndata_lc}
\end{figure*}

\begin{table}[!htb]
\centering
    \caption{Average coverage and average size of prediction sets for all test samples constructed by different methods as a function of the training and calibration size. All methods obtain coverage beyond 0.9, while our AFCP and AFCP1 methods, along with the Marginal method, produce the smallest, thus, the most informative, prediction sets. Red numbers indicate the small size of prediction sets. See corresponding plots in Figure~\ref{fig:main_exp_mc_sim_ndata_lc}.}
  \label{tab:main_exp_mc_sim_ndata_lc}

\centering
\fontsize{6}{6}\selectfont
\begin{tabular}[t]{rllllllllll}
\toprule
\multicolumn{1}{c}{ } & \multicolumn{2}{c}{AFCP} & \multicolumn{2}{c}{AFCP1} & \multicolumn{2}{c}{Marginal} & \multicolumn{2}{c}{Partial} & \multicolumn{2}{c}{Exhaustive} \\
\cmidrule(l{3pt}r{3pt}){2-3} \cmidrule(l{3pt}r{3pt}){4-5} \cmidrule(l{3pt}r{3pt}){6-7} \cmidrule(l{3pt}r{3pt}){8-9} \cmidrule(l{3pt}r{3pt}){10-11}
\makecell{Sample\\ size} & Coverage & Size & Coverage & Size & Coverage & Size & Coverage & Size & Coverage & Size\\
\midrule
200 & \makecell{ 0.931 \\ (0.003)} & \textcolor{red}{\makecell{ 3.153 \\ (0.046)}} & \makecell{ 0.999 \\ (0.000)} & \makecell{ 5.972 \\ (0.008)} & \makecell{ 0.930 \\ (0.003)} & \textcolor{red}{\makecell{ 3.151 \\ (0.047)}} & \makecell{ 1.000 \\ (0.000)} & \makecell{ 6.000 \\ (0.000)} & \makecell{ 1.000 \\ (0.000)} & \makecell{ 6.000 \\ (0.000)}\\
500 & \makecell{ 0.923 \\ (0.002)} & \textcolor{red}{\makecell{ 1.986 \\ (0.025)}} & \makecell{ 0.965 \\ (0.001)} & \makecell{ 3.552 \\ (0.051)} & \makecell{ 0.912 \\ (0.002)} & \textcolor{red}{\makecell{ 1.846 \\ (0.016)}} & \makecell{ 0.996 \\ (0.000)} & \makecell{ 5.307 \\ (0.015)} & \makecell{ 1.000 \\ (0.000)} & \makecell{ 6.000 \\ (0.000)}\\
1000 & \makecell{ 0.946 \\ (0.002)} & \textcolor{red}{\makecell{ 1.843 \\ (0.012)}} & \makecell{ 0.956 \\ (0.001)} & \makecell{ 1.938 \\ (0.012)} & \makecell{ 0.907 \\ (0.002)} & \textcolor{red}{\makecell{ 1.557 \\ (0.010)}} & \makecell{ 0.985 \\ (0.001)} & \makecell{ 2.669 \\ (0.017)} & \makecell{ 1.000 \\ (0.000)} & \makecell{ 5.982 \\ (0.002)}\\
2000 & \makecell{ 0.947 \\ (0.001)} & \textcolor{red}{\makecell{ 1.578 \\ (0.008)}} & \makecell{ 0.949 \\ (0.001)} & \textcolor{red}{\makecell{ 1.585 \\ (0.008)}} & \makecell{ 0.903 \\ (0.002)} & \textcolor{red}{\makecell{ 1.415 \\ (0.008)}} & \makecell{ 0.973 \\ (0.001)} & \makecell{ 1.900 \\ (0.011)} & \makecell{ 0.983 \\ (0.001)} & \makecell{ 5.104 \\ (0.011)}\\
\bottomrule
\end{tabular}

\end{table}

\begin{figure*}[!htb]
    \centering
    \includegraphics[width=0.8\linewidth]{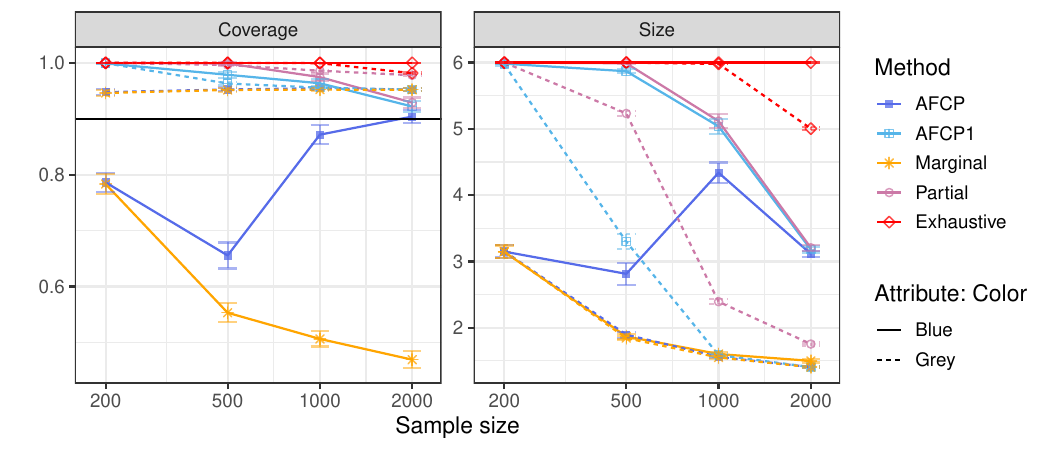}\vspace{-0.5cm}
    \caption{Coverage and size of prediction sets constructed with different methods for groups formed by Color. For the Blue group, the Marginal method (dashed orange lines) fails to detect and correct for its undercoverage, and the Exhaustive method produces prediction sets that are too conservative to be helpful. In contrast, our AFCP and AFCP1 methods correct the undercoverage and maintain small prediction sets. See Table~\ref{tab:main_exp_mc_sim_ndata_color_lc} for numerical details and standard errors.} 
    \label{fig:main_exp_mc_sim_ndata_color_lc}
\end{figure*}

\begin{figure*}[!htb]
    \centering
    \includegraphics[width=0.8\linewidth]{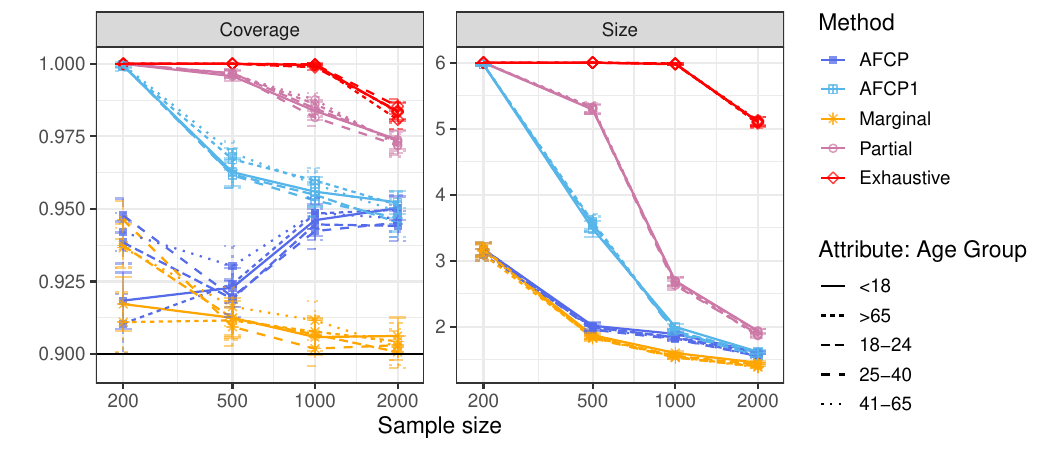}\vspace{-0.5cm}
    \caption{Coverage and size of prediction sets constructed with different methods for groups formed by Age group. By design, all groups have similar performance, and none of them are subject to unfairness/undercoverage. See Table~\ref{tab:main_exp_mc_sim_ndata_agegroup_lc} for numerical details and standard errors.} 
    \label{fig:main_exp_mc_sim_ndata_agegroup_lc}
\end{figure*}

\begin{figure*}[!htb]
    \centering
    \includegraphics[width=0.8\linewidth]{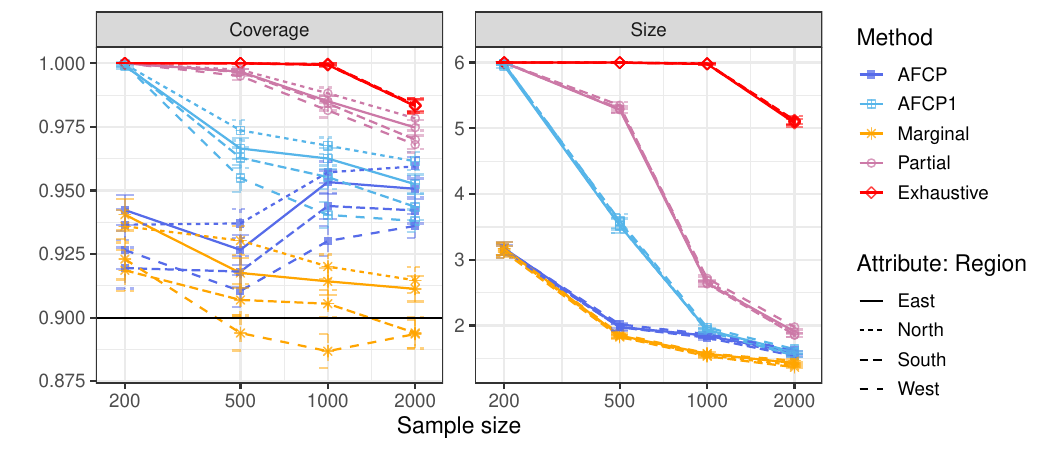}\vspace{-0.5cm}
    \caption{Coverage and size of prediction sets constructed with different methods for groups formed by Region. By design, all groups have similar performance, and none of them are subject to unfairness/undercoverage. See Table~\ref{tab:main_exp_mc_sim_ndata_region_lc} for numerical details and standard errors.} \label{fig:main_exp_mc_sim_ndata_region_lc}
\end{figure*}

\begin{table}[!htb]
\centering
    \caption{Coverage and size of prediction sets constructed with different methods for groups formed by Color. Green numbers indicate low coverage and red numbers indicate the small size of prediction sets. See corresponding plots in Figure~\ref{fig:main_exp_mc_sim_ndata_color_lc}.}
  \label{tab:main_exp_mc_sim_ndata_color_lc}

\centering
\fontsize{6}{6}\selectfont
\begin{tabular}[t]{lrllllllllll}
\toprule
\multicolumn{2}{c}{ } & \multicolumn{2}{c}{AFCP} & \multicolumn{2}{c}{AFCP1} & \multicolumn{2}{c}{Marginal} & \multicolumn{2}{c}{Partial} & \multicolumn{2}{c}{Exhaustive} \\
\cmidrule(l{3pt}r{3pt}){3-4} \cmidrule(l{3pt}r{3pt}){5-6} \cmidrule(l{3pt}r{3pt}){7-8} \cmidrule(l{3pt}r{3pt}){9-10} \cmidrule(l{3pt}r{3pt}){11-12}
Group & \makecell{Sample\\ size} & Coverage & Size & Coverage & Size & Coverage & Size & Coverage & Size & Coverage & Size\\
\midrule
\addlinespace[0.3em]
\multicolumn{12}{l}{\textbf{Grey}}\\
\hspace{1em}Grey & 200 & \makecell{ 0.948 \\ (0.003)} & \textcolor{red}{\makecell{ 3.153 \\ (0.046)}} & \makecell{ 0.999 \\ (0.000)} & \makecell{ 5.971 \\ (0.008)} & \makecell{ 0.946 \\ (0.003)} & \textcolor{red}{\makecell{ 3.151 \\ (0.047)}} & \makecell{ 1.000 \\ (0.000)} & \makecell{ 6.000 \\ (0.000)} & \makecell{ 1.000 \\ (0.000)} & \makecell{ 6.000 \\ (0.000)}\\
\hspace{1em}Grey & 500 & \makecell{ 0.952 \\ (0.002)} & \textcolor{red}{\makecell{ 1.896 \\ (0.022)}} & \makecell{ 0.963 \\ (0.002)} & \makecell{ 3.299 \\ (0.057)} & \makecell{ 0.952 \\ (0.002)} & \textcolor{red}{\makecell{ 1.843 \\ (0.016)}} & \makecell{ 0.996 \\ (0.000)} & \makecell{ 5.232 \\ (0.016)} & \makecell{ 1.000 \\ (0.000)} & \makecell{ 6.000 \\ (0.000)}\\
\hspace{1em}Grey & 1000 & \makecell{ 0.955 \\ (0.001)} & \textcolor{red}{\makecell{ 1.561 \\ (0.011)}} & \makecell{ 0.956 \\ (0.001)} & \textcolor{red}{\makecell{ 1.588 \\ (0.013)}} & \makecell{ 0.952 \\ (0.002)} & \textcolor{red}{\makecell{ 1.551 \\ (0.011)}} & \makecell{ 0.986 \\ (0.001)} & \makecell{ 2.393 \\ (0.019)} & \makecell{ 0.999 \\ (0.000)} & \makecell{ 5.980 \\ (0.002)}\\
\hspace{1em}Grey & 2000 & \makecell{ 0.952 \\ (0.001)} & \textcolor{red}{\makecell{ 1.406 \\ (0.008)}} & \makecell{ 0.952 \\ (0.001)} & \textcolor{red}{\makecell{ 1.407 \\ (0.007)}} & \makecell{ 0.952 \\ (0.001)} & \textcolor{red}{\makecell{ 1.405 \\ (0.008)}} & \makecell{ 0.978 \\ (0.001)} & \makecell{ 1.755 \\ (0.011)} & \makecell{ 0.981 \\ (0.001)} & \makecell{ 5.004 \\ (0.012)}\\
\addlinespace[0.3em]
\multicolumn{12}{l}{\textbf{Blue}}\\
\hspace{1em}Blue & 200 & \textcolor{darkgreen}{\makecell{ 0.786 \\ (0.009)}} & \textcolor{red}{\makecell{ 3.149 \\ (0.047)}} & \makecell{ 0.999 \\ (0.000)} & \makecell{ 5.983 \\ (0.005)} & \textcolor{darkgreen}{\makecell{ 0.783 \\ (0.009)}} & \textcolor{red}{\makecell{ 3.145 \\ (0.049)}} & \makecell{ 1.000 \\ (0.000)} & \makecell{ 6.000 \\ (0.000)} & \makecell{ 1.000 \\ (0.000)} & \makecell{ 6.000 \\ (0.000)}\\
\hspace{1em}Blue & 500 & \textcolor{darkgreen}{\makecell{ 0.655 \\ (0.012)}} & \makecell{ 2.813 \\ (0.083)} & \makecell{ 0.979 \\ (0.003)} & \makecell{ 5.869 \\ (0.016)} & \textcolor{darkgreen}{\makecell{ 0.553 \\ (0.008)}} & \textcolor{red}{\makecell{ 1.869 \\ (0.018)}} & \makecell{ 0.999 \\ (0.001)} & \makecell{ 5.989 \\ (0.005)} & \makecell{ 1.000 \\ (0.000)} & \makecell{ 6.000 \\ (0.000)}\\
\hspace{1em}Blue & 1000 & \makecell{ 0.872 \\ (0.008)} & \makecell{ 4.339 \\ (0.077)} & \makecell{ 0.964 \\ (0.004)} & \makecell{ 5.038 \\ (0.058)} & \textcolor{darkgreen}{\makecell{ 0.506 \\ (0.007)}} & \textcolor{red}{\makecell{ 1.603 \\ (0.013)}} & \makecell{ 0.974 \\ (0.003)} & \makecell{ 5.117 \\ (0.054)} & \makecell{ 1.000 \\ (0.000)} & \makecell{ 6.000 \\ (0.000)}\\
\hspace{1em}Blue & 2000 & \makecell{ 0.904 \\ (0.006)} & \makecell{ 3.114 \\ (0.024)} & \makecell{ 0.922 \\ (0.005)} & \makecell{ 3.180 \\ (0.023)} & \textcolor{darkgreen}{\makecell{ 0.469 \\ (0.008)}} & \textcolor{red}{\makecell{ 1.501 \\ (0.012)}} & \makecell{ 0.929 \\ (0.005)} & \makecell{ 3.206 \\ (0.022)} & \makecell{ 1.000 \\ (0.000)} & \makecell{ 6.000 \\ (0.000)}\\
\bottomrule
\end{tabular}

\end{table}

\begin{table}[!htb]
\centering
    \caption{Coverage and size of prediction sets constructed with different methods for groups formed by Age Group. By design, all groups have similar performance, and none of them are subject to unfairness/undercoverage. See corresponding plots in Figure~\ref{fig:main_exp_mc_sim_ndata_agegroup_lc}.}
  \label{tab:main_exp_mc_sim_ndata_agegroup_lc}

\centering
\fontsize{6}{6}\selectfont
\begin{tabular}[t]{lrllllllllll}
\toprule
\multicolumn{2}{c}{ } & \multicolumn{2}{c}{AFCP} & \multicolumn{2}{c}{AFCP1} & \multicolumn{2}{c}{Marginal} & \multicolumn{2}{c}{Partial} & \multicolumn{2}{c}{Exhaustive} \\
\cmidrule(l{3pt}r{3pt}){3-4} \cmidrule(l{3pt}r{3pt}){5-6} \cmidrule(l{3pt}r{3pt}){7-8} \cmidrule(l{3pt}r{3pt}){9-10} \cmidrule(l{3pt}r{3pt}){11-12}
Group & \makecell{Sample\\ size} & Coverage & Size & Coverage & Size & Coverage & Size & Coverage & Size & Coverage & Size\\
\midrule
\addlinespace[0.3em]
\multicolumn{12}{l}{\textbf{<18}}\\
\hspace{1em}<18 & 200 & \makecell{ 0.918 \\ (0.005)} & \textcolor{red}{\makecell{ 3.172 \\ (0.046)}} & \makecell{ 0.999 \\ (0.000)} & \makecell{ 5.972 \\ (0.008)} & \makecell{ 0.917 \\ (0.005)} & \textcolor{red}{\makecell{ 3.168 \\ (0.048)}} & \makecell{ 1.000 \\ (0.000)} & \makecell{ 6.000 \\ (0.000)} & \makecell{ 1.000 \\ (0.000)} & \makecell{ 6.000 \\ (0.000)}\\
\hspace{1em}<18 & 500 & \makecell{ 0.923 \\ (0.003)} & \textcolor{red}{\makecell{ 2.017 \\ (0.028)}} & \makecell{ 0.963 \\ (0.002)} & \makecell{ 3.483 \\ (0.059)} & \makecell{ 0.912 \\ (0.003)} & \textcolor{red}{\makecell{ 1.874 \\ (0.019)}} & \makecell{ 0.996 \\ (0.001)} & \makecell{ 5.289 \\ (0.034)} & \makecell{ 1.000 \\ (0.000)} & \makecell{ 5.999 \\ (0.001)}\\
\hspace{1em}<18 & 1000 & \makecell{ 0.946 \\ (0.003)} & \makecell{ 1.893 \\ (0.016)} & \makecell{ 0.956 \\ (0.003)} & \makecell{ 1.999 \\ (0.019)} & \makecell{ 0.906 \\ (0.003)} & \textcolor{red}{\makecell{ 1.604 \\ (0.012)}} & \makecell{ 0.984 \\ (0.002)} & \makecell{ 2.696 \\ (0.030)} & \makecell{ 1.000 \\ (0.000)} & \makecell{ 5.980 \\ (0.006)}\\
\hspace{1em}<18 & 2000 & \makecell{ 0.950 \\ (0.002)} & \textcolor{red}{\makecell{ 1.616 \\ (0.012)}} & \makecell{ 0.952 \\ (0.002)} & \textcolor{red}{\makecell{ 1.624 \\ (0.012)}} & \makecell{ 0.906 \\ (0.003)} & \textcolor{red}{\makecell{ 1.458 \\ (0.009)}} & \makecell{ 0.974 \\ (0.002)} & \makecell{ 1.940 \\ (0.016)} & \makecell{ 0.983 \\ (0.002)} & \makecell{ 5.110 \\ (0.031)}\\
\addlinespace[0.3em]
\multicolumn{12}{l}{\textbf{18-24}}\\
\hspace{1em}18-24 & 200 & \makecell{ 0.938 \\ (0.004)} & \textcolor{red}{\makecell{ 3.178 \\ (0.050)}} & \makecell{ 1.000 \\ (0.000)} & \makecell{ 5.971 \\ (0.008)} & \makecell{ 0.937 \\ (0.004)} & \textcolor{red}{\makecell{ 3.180 \\ (0.050)}} & \makecell{ 1.000 \\ (0.000)} & \makecell{ 6.000 \\ (0.000)} & \makecell{ 1.000 \\ (0.000)} & \makecell{ 6.000 \\ (0.000)}\\
\hspace{1em}18-24 & 500 & \makecell{ 0.919 \\ (0.003)} & \textcolor{red}{\makecell{ 1.994 \\ (0.029)}} & \makecell{ 0.962 \\ (0.002)} & \makecell{ 3.555 \\ (0.059)} & \makecell{ 0.912 \\ (0.003)} & \textcolor{red}{\makecell{ 1.846 \\ (0.017)}} & \makecell{ 0.997 \\ (0.001)} & \makecell{ 5.303 \\ (0.036)} & \makecell{ 1.000 \\ (0.000)} & \makecell{ 6.000 \\ (0.000)}\\
\hspace{1em}18-24 & 1000 & \makecell{ 0.945 \\ (0.003)} & \textcolor{red}{\makecell{ 1.843 \\ (0.015)}} & \makecell{ 0.955 \\ (0.002)} & \makecell{ 1.934 \\ (0.016)} & \makecell{ 0.907 \\ (0.003)} & \textcolor{red}{\makecell{ 1.561 \\ (0.012)}} & \makecell{ 0.985 \\ (0.001)} & \makecell{ 2.685 \\ (0.031)} & \makecell{ 0.999 \\ (0.000)} & \makecell{ 5.975 \\ (0.006)}\\
\hspace{1em}18-24 & 2000 & \makecell{ 0.944 \\ (0.003)} & \textcolor{red}{\makecell{ 1.605 \\ (0.010)}} & \makecell{ 0.945 \\ (0.002)} & \textcolor{red}{\makecell{ 1.611 \\ (0.010)}} & \makecell{ 0.901 \\ (0.003)} & \textcolor{red}{\makecell{ 1.429 \\ (0.009)}} & \makecell{ 0.973 \\ (0.002)} & \makecell{ 1.930 \\ (0.017)} & \makecell{ 0.984 \\ (0.001)} & \makecell{ 5.104 \\ (0.035)}\\
\addlinespace[0.3em]
\multicolumn{12}{l}{\textbf{25-40}}\\
\hspace{1em}25-40 & 200 & \makecell{ 0.948 \\ (0.003)} & \textcolor{red}{\makecell{ 3.162 \\ (0.050)}} & \makecell{ 0.999 \\ (0.000)} & \makecell{ 5.971 \\ (0.008)} & \makecell{ 0.946 \\ (0.003)} & \textcolor{red}{\makecell{ 3.164 \\ (0.051)}} & \makecell{ 1.000 \\ (0.000)} & \makecell{ 6.000 \\ (0.000)} & \makecell{ 1.000 \\ (0.000)} & \makecell{ 6.000 \\ (0.000)}\\
\hspace{1em}25-40 & 500 & \makecell{ 0.920 \\ (0.003)} & \textcolor{red}{\makecell{ 1.957 \\ (0.026)}} & \makecell{ 0.962 \\ (0.002)} & \makecell{ 3.554 \\ (0.059)} & \makecell{ 0.909 \\ (0.003)} & \textcolor{red}{\makecell{ 1.827 \\ (0.017)}} & \makecell{ 0.997 \\ (0.001)} & \makecell{ 5.299 \\ (0.037)} & \makecell{ 1.000 \\ (0.000)} & \makecell{ 6.000 \\ (0.000)}\\
\hspace{1em}25-40 & 1000 & \makecell{ 0.942 \\ (0.003)} & \textcolor{red}{\makecell{ 1.814 \\ (0.015)}} & \makecell{ 0.953 \\ (0.003)} & \makecell{ 1.908 \\ (0.017)} & \makecell{ 0.902 \\ (0.003)} & \textcolor{red}{\makecell{ 1.537 \\ (0.012)}} & \makecell{ 0.981 \\ (0.002)} & \makecell{ 2.628 \\ (0.030)} & \makecell{ 1.000 \\ (0.000)} & \makecell{ 5.985 \\ (0.005)}\\
\hspace{1em}25-40 & 2000 & \makecell{ 0.945 \\ (0.002)} & \textcolor{red}{\makecell{ 1.558 \\ (0.010)}} & \makecell{ 0.947 \\ (0.002)} & \textcolor{red}{\makecell{ 1.566 \\ (0.009)}} & \makecell{ 0.903 \\ (0.002)} & \textcolor{red}{\makecell{ 1.398 \\ (0.009)}} & \makecell{ 0.972 \\ (0.002)} & \makecell{ 1.891 \\ (0.017)} & \makecell{ 0.985 \\ (0.001)} & \makecell{ 5.099 \\ (0.027)}\\
\addlinespace[0.3em]
\multicolumn{12}{l}{\textbf{41-65}}\\
\hspace{1em}41-65 & 200 & \makecell{ 0.942 \\ (0.003)} & \textcolor{red}{\makecell{ 3.160 \\ (0.048)}} & \makecell{ 1.000 \\ (0.000)} & \makecell{ 5.974 \\ (0.007)} & \makecell{ 0.937 \\ (0.003)} & \textcolor{red}{\makecell{ 3.148 \\ (0.049)}} & \makecell{ 1.000 \\ (0.000)} & \makecell{ 6.000 \\ (0.000)} & \makecell{ 1.000 \\ (0.000)} & \makecell{ 6.000 \\ (0.000)}\\
\hspace{1em}41-65 & 500 & \makecell{ 0.930 \\ (0.003)} & \textcolor{red}{\makecell{ 1.969 \\ (0.026)}} & \makecell{ 0.969 \\ (0.002)} & \makecell{ 3.575 \\ (0.059)} & \makecell{ 0.916 \\ (0.003)} & \textcolor{red}{\makecell{ 1.836 \\ (0.016)}} & \makecell{ 0.996 \\ (0.001)} & \makecell{ 5.328 \\ (0.038)} & \makecell{ 1.000 \\ (0.000)} & \makecell{ 6.000 \\ (0.000)}\\
\hspace{1em}41-65 & 1000 & \makecell{ 0.949 \\ (0.003)} & \textcolor{red}{\makecell{ 1.826 \\ (0.016)}} & \makecell{ 0.959 \\ (0.002)} & \makecell{ 1.917 \\ (0.016)} & \makecell{ 0.912 \\ (0.003)} & \textcolor{red}{\makecell{ 1.537 \\ (0.011)}} & \makecell{ 0.988 \\ (0.001)} & \makecell{ 2.676 \\ (0.031)} & \makecell{ 1.000 \\ (0.000)} & \makecell{ 5.981 \\ (0.005)}\\
\hspace{1em}41-65 & 2000 & \makecell{ 0.947 \\ (0.002)} & \textcolor{red}{\makecell{ 1.552 \\ (0.009)}} & \makecell{ 0.949 \\ (0.002)} & \textcolor{red}{\makecell{ 1.558 \\ (0.009)}} & \makecell{ 0.902 \\ (0.003)} & \textcolor{red}{\makecell{ 1.391 \\ (0.009)}} & \makecell{ 0.972 \\ (0.002)} & \makecell{ 1.873 \\ (0.015)} & \makecell{ 0.983 \\ (0.002)} & \makecell{ 5.126 \\ (0.032)}\\
\addlinespace[0.3em]
\multicolumn{12}{l}{\textbf{>65}}\\
\hspace{1em}>65 & 200 & \makecell{ 0.910 \\ (0.005)} & \textcolor{red}{\makecell{ 3.092 \\ (0.043)}} & \makecell{ 0.999 \\ (0.001)} & \makecell{ 5.972 \\ (0.008)} & \makecell{ 0.911 \\ (0.005)} & \textcolor{red}{\makecell{ 3.093 \\ (0.043)}} & \makecell{ 1.000 \\ (0.000)} & \makecell{ 6.000 \\ (0.000)} & \makecell{ 1.000 \\ (0.000)} & \makecell{ 6.000 \\ (0.000)}\\
\hspace{1em}>65 & 500 & \makecell{ 0.924 \\ (0.003)} & \textcolor{red}{\makecell{ 1.992 \\ (0.028)}} & \makecell{ 0.967 \\ (0.002)} & \makecell{ 3.590 \\ (0.059)} & \makecell{ 0.912 \\ (0.003)} & \textcolor{red}{\makecell{ 1.846 \\ (0.017)}} & \makecell{ 0.997 \\ (0.001)} & \makecell{ 5.316 \\ (0.034)} & \makecell{ 1.000 \\ (0.000)} & \makecell{ 6.000 \\ (0.000)}\\
\hspace{1em}>65 & 1000 & \makecell{ 0.948 \\ (0.003)} & \textcolor{red}{\makecell{ 1.839 \\ (0.016)}} & \makecell{ 0.960 \\ (0.002)} & \makecell{ 1.930 \\ (0.016)} & \makecell{ 0.908 \\ (0.003)} & \textcolor{red}{\makecell{ 1.544 \\ (0.011)}} & \makecell{ 0.986 \\ (0.001)} & \makecell{ 2.662 \\ (0.028)} & \makecell{ 1.000 \\ (0.000)} & \makecell{ 5.988 \\ (0.004)}\\
\hspace{1em}>65 & 2000 & \makecell{ 0.950 \\ (0.002)} & \textcolor{red}{\makecell{ 1.562 \\ (0.011)}} & \makecell{ 0.951 \\ (0.002)} & \textcolor{red}{\makecell{ 1.568 \\ (0.011)}} & \makecell{ 0.904 \\ (0.003)} & \textcolor{red}{\makecell{ 1.400 \\ (0.010)}} & \makecell{ 0.973 \\ (0.002)} & \makecell{ 1.868 \\ (0.016)} & \makecell{ 0.981 \\ (0.002)} & \makecell{ 5.082 \\ (0.032)}\\
\bottomrule
\end{tabular}

\end{table}

\begin{table}[!htb]
\centering
    \caption{Coverage and size of prediction sets constructed with different methods for groups formed by Region. By design, all groups have similar performance, and none of them are subject to unfairness/undercoverage. See corresponding plots in Figure~\ref{fig:main_exp_mc_sim_ndata_region_lc}.}
  \label{tab:main_exp_mc_sim_ndata_region_lc}
\centering
\fontsize{6}{6}\selectfont
\begin{tabular}[t]{lrllllllllll}
\toprule
\multicolumn{2}{c}{ } & \multicolumn{2}{c}{AFCP} & \multicolumn{2}{c}{AFCP1} & \multicolumn{2}{c}{Marginal} & \multicolumn{2}{c}{Partial} & \multicolumn{2}{c}{Exhaustive} \\
\cmidrule(l{3pt}r{3pt}){3-4} \cmidrule(l{3pt}r{3pt}){5-6} \cmidrule(l{3pt}r{3pt}){7-8} \cmidrule(l{3pt}r{3pt}){9-10} \cmidrule(l{3pt}r{3pt}){11-12}
Group & \makecell{Sample\\ size} & Coverage & Size & Coverage & Size & Coverage & Size & Coverage & Size & Coverage & Size\\
\midrule
\addlinespace[0.3em]
\multicolumn{12}{l}{\textbf{West}}\\
\hspace{1em}West & 200 & \makecell{ 0.927 \\ (0.004)} & \textcolor{red}{\makecell{ 3.165 \\ (0.047)}} & \makecell{ 0.999 \\ (0.000)} & \makecell{ 5.979 \\ (0.009)} & \makecell{ 0.923 \\ (0.004)} & \textcolor{red}{\makecell{ 3.164 \\ (0.048)}} & \makecell{ 1.000 \\ (0.000)} & \makecell{ 6.000 \\ (0.000)} & \makecell{ 1.000 \\ (0.000)} & \makecell{ 6.000 \\ (0.000)}\\
\hspace{1em}West & 500 & \makecell{ 0.910 \\ (0.003)} & \textcolor{red}{\makecell{ 2.018 \\ (0.028)}} & \makecell{ 0.955 \\ (0.003)} & \makecell{ 3.594 \\ (0.055)} & \makecell{ 0.894 \\ (0.003)} & \textcolor{red}{\makecell{ 1.872 \\ (0.016)}} & \makecell{ 0.995 \\ (0.001)} & \makecell{ 5.303 \\ (0.023)} & \makecell{ 1.000 \\ (0.000)} & \makecell{ 5.999 \\ (0.001)}\\
\hspace{1em}West & 1000 & \makecell{ 0.930 \\ (0.003)} & \makecell{ 1.869 \\ (0.015)} & \makecell{ 0.940 \\ (0.003)} & \makecell{ 1.972 \\ (0.018)} & \makecell{ 0.887 \\ (0.003)} & \textcolor{red}{\makecell{ 1.575 \\ (0.012)}} & \makecell{ 0.981 \\ (0.001)} & \makecell{ 2.718 \\ (0.024)} & \makecell{ 0.999 \\ (0.000)} & \makecell{ 5.976 \\ (0.005)}\\
\hspace{1em}West & 2000 & \makecell{ 0.936 \\ (0.002)} & \textcolor{red}{\makecell{ 1.632 \\ (0.009)}} & \makecell{ 0.938 \\ (0.002)} & \makecell{ 1.640 \\ (0.009)} & \makecell{ 0.894 \\ (0.003)} & \textcolor{red}{\makecell{ 1.465 \\ (0.009)}} & \makecell{ 0.968 \\ (0.002)} & \makecell{ 1.978 \\ (0.013)} & \makecell{ 0.983 \\ (0.001)} & \makecell{ 5.102 \\ (0.025)}\\
\addlinespace[0.3em]
\multicolumn{12}{l}{\textbf{East}}\\
\hspace{1em}East & 200 & \makecell{ 0.942 \\ (0.003)} & \textcolor{red}{\makecell{ 3.174 \\ (0.048)}} & \makecell{ 0.999 \\ (0.000)} & \makecell{ 5.952 \\ (0.014)} & \makecell{ 0.940 \\ (0.003)} & \textcolor{red}{\makecell{ 3.172 \\ (0.049)}} & \makecell{ 1.000 \\ (0.000)} & \makecell{ 6.000 \\ (0.000)} & \makecell{ 1.000 \\ (0.000)} & \makecell{ 6.000 \\ (0.000)}\\
\hspace{1em}East & 500 & \makecell{ 0.927 \\ (0.003)} & \textcolor{red}{\makecell{ 1.980 \\ (0.025)}} & \makecell{ 0.966 \\ (0.002)} & \makecell{ 3.520 \\ (0.054)} & \makecell{ 0.918 \\ (0.003)} & \textcolor{red}{\makecell{ 1.848 \\ (0.016)}} & \makecell{ 0.997 \\ (0.000)} & \makecell{ 5.283 \\ (0.028)} & \makecell{ 1.000 \\ (0.000)} & \makecell{ 6.000 \\ (0.000)}\\
\hspace{1em}East & 1000 & \makecell{ 0.953 \\ (0.002)} & \makecell{ 1.846 \\ (0.016)} & \makecell{ 0.963 \\ (0.002)} & \makecell{ 1.934 \\ (0.015)} & \makecell{ 0.914 \\ (0.003)} & \textcolor{red}{\makecell{ 1.569 \\ (0.012)}} & \makecell{ 0.985 \\ (0.001)} & \makecell{ 2.643 \\ (0.022)} & \makecell{ 0.999 \\ (0.000)} & \makecell{ 5.980 \\ (0.005)}\\
\hspace{1em}East & 2000 & \makecell{ 0.951 \\ (0.002)} & \textcolor{red}{\makecell{ 1.588 \\ (0.010)}} & \makecell{ 0.952 \\ (0.002)} & \textcolor{red}{\makecell{ 1.595 \\ (0.010)}} & \makecell{ 0.911 \\ (0.002)} & \textcolor{red}{\makecell{ 1.430 \\ (0.009)}} & \makecell{ 0.975 \\ (0.002)} & \makecell{ 1.909 \\ (0.015)} & \makecell{ 0.983 \\ (0.001)} & \makecell{ 5.081 \\ (0.031)}\\
\addlinespace[0.3em]
\multicolumn{12}{l}{\textbf{North}}\\
\hspace{1em}North & 200 & \makecell{ 0.937 \\ (0.003)} & \textcolor{red}{\makecell{ 3.146 \\ (0.046)}} & \makecell{ 1.000 \\ (0.000)} & \makecell{ 5.979 \\ (0.010)} & \makecell{ 0.936 \\ (0.003)} & \textcolor{red}{\makecell{ 3.144 \\ (0.046)}} & \makecell{ 1.000 \\ (0.000)} & \makecell{ 6.000 \\ (0.000)} & \makecell{ 1.000 \\ (0.000)} & \makecell{ 6.000 \\ (0.000)}\\
\hspace{1em}North & 500 & \makecell{ 0.937 \\ (0.003)} & \textcolor{red}{\makecell{ 1.972 \\ (0.026)}} & \makecell{ 0.974 \\ (0.002)} & \makecell{ 3.519 \\ (0.057)} & \makecell{ 0.930 \\ (0.003)} & \textcolor{red}{\makecell{ 1.840 \\ (0.016)}} & \makecell{ 0.997 \\ (0.001)} & \makecell{ 5.292 \\ (0.027)} & \makecell{ 1.000 \\ (0.000)} & \makecell{ 6.000 \\ (0.000)}\\
\hspace{1em}North & 1000 & \makecell{ 0.957 \\ (0.002)} & \makecell{ 1.844 \\ (0.015)} & \makecell{ 0.968 \\ (0.002)} & \makecell{ 1.937 \\ (0.015)} & \makecell{ 0.920 \\ (0.002)} & \textcolor{red}{\makecell{ 1.548 \\ (0.011)}} & \makecell{ 0.988 \\ (0.001)} & \makecell{ 2.686 \\ (0.023)} & \makecell{ 1.000 \\ (0.000)} & \makecell{ 5.986 \\ (0.004)}\\
\hspace{1em}North & 2000 & \makecell{ 0.959 \\ (0.002)} & \textcolor{red}{\makecell{ 1.554 \\ (0.010)}} & \makecell{ 0.961 \\ (0.002)} & \textcolor{red}{\makecell{ 1.560 \\ (0.010)}} & \makecell{ 0.915 \\ (0.003)} & \textcolor{red}{\makecell{ 1.397 \\ (0.009)}} & \makecell{ 0.978 \\ (0.001)} & \makecell{ 1.854 \\ (0.014)} & \makecell{ 0.983 \\ (0.001)} & \makecell{ 5.107 \\ (0.031)}\\
\addlinespace[0.3em]
\multicolumn{12}{l}{\textbf{South}}\\
\hspace{1em}South & 200 & \makecell{ 0.919 \\ (0.004)} & \textcolor{red}{\makecell{ 3.125 \\ (0.047)}} & \makecell{ 1.000 \\ (0.000)} & \makecell{ 5.982 \\ (0.009)} & \makecell{ 0.919 \\ (0.004)} & \textcolor{red}{\makecell{ 3.120 \\ (0.047)}} & \makecell{ 1.000 \\ (0.000)} & \makecell{ 6.000 \\ (0.000)} & \makecell{ 1.000 \\ (0.000)} & \makecell{ 6.000 \\ (0.000)}\\
\hspace{1em}South & 500 & \makecell{ 0.918 \\ (0.003)} & \textcolor{red}{\makecell{ 1.974 \\ (0.030)}} & \makecell{ 0.963 \\ (0.002)} & \makecell{ 3.580 \\ (0.059)} & \makecell{ 0.907 \\ (0.003)} & \textcolor{red}{\makecell{ 1.826 \\ (0.018)}} & \makecell{ 0.996 \\ (0.001)} & \makecell{ 5.351 \\ (0.024)} & \makecell{ 1.000 \\ (0.000)} & \makecell{ 6.000 \\ (0.000)}\\
\hspace{1em}South & 1000 & \makecell{ 0.944 \\ (0.003)} & \textcolor{red}{\makecell{ 1.815 \\ (0.014)}} & \makecell{ 0.955 \\ (0.002)} & \makecell{ 1.911 \\ (0.016)} & \makecell{ 0.905 \\ (0.003)} & \textcolor{red}{\makecell{ 1.533 \\ (0.011)}} & \makecell{ 0.984 \\ (0.001)} & \makecell{ 2.634 \\ (0.022)} & \makecell{ 1.000 \\ (0.000)} & \makecell{ 5.985 \\ (0.004)}\\
\hspace{1em}South & 2000 & \makecell{ 0.942 \\ (0.003)} & \textcolor{red}{\makecell{ 1.537 \\ (0.009)}} & \makecell{ 0.944 \\ (0.003)} & \textcolor{red}{\makecell{ 1.544 \\ (0.009)}} & \makecell{ 0.894 \\ (0.003)} & \textcolor{red}{\makecell{ 1.366 \\ (0.008)}} & \makecell{ 0.970 \\ (0.002)} & \makecell{ 1.860 \\ (0.015)} & \makecell{ 0.984 \\ (0.001)} & \makecell{ 5.124 \\ (0.031)}\\
\bottomrule
\end{tabular}

\end{table}

\FloatBarrier

\subsubsection{Nursery Data} \label{app:more_experiments_mc-nursery}

\begin{table}[!htb]
\centering
    \caption{Average coverage and average size of prediction sets for all test samples constructed by different methods as a function of the training and calibration size. All methods obtain coverage beyond 0.9, while our AFCP and AFCP1 methods, along with the Marginal method, produce the smallest, thus, the most informative, prediction sets. Red numbers indicate the small size of prediction sets. See corresponding plots in Figure~\ref{fig:main_exp_mc_real_ndata}.}
  \label{tab:main_exp_mc_real_ndata}

\centering
\fontsize{6}{6}\selectfont
\begin{tabular}[t]{rllllllllll}
\toprule
\multicolumn{1}{c}{ } & \multicolumn{2}{c}{AFCP} & \multicolumn{2}{c}{AFCP1} & \multicolumn{2}{c}{Marginal} & \multicolumn{2}{c}{Partial} & \multicolumn{2}{c}{Exhaustive} \\
\cmidrule(l{3pt}r{3pt}){2-3} \cmidrule(l{3pt}r{3pt}){4-5} \cmidrule(l{3pt}r{3pt}){6-7} \cmidrule(l{3pt}r{3pt}){8-9} \cmidrule(l{3pt}r{3pt}){10-11}
\makecell{Sample\\ size} & Coverage & Size & Coverage & Size & Coverage & Size & Coverage & Size & Coverage & Size\\
\midrule
200 & \makecell{ 0.923 \\ (0.004)} & \textcolor{red}{\makecell{ 1.847 \\ (0.029)}} & \makecell{ 0.934 \\ (0.003)} & \textcolor{red}{\makecell{ 1.918 \\ (0.030)}} & \makecell{ 0.896 \\ (0.003)} & \textcolor{red}{\makecell{ 1.726 \\ (0.027)}} & \makecell{ 0.977 \\ (0.002)} & \makecell{ 2.559 \\ (0.039)} & \makecell{ 0.999 \\ (0.001)} & \makecell{ 3.980 \\ (0.014)}\\
500 & \makecell{ 0.930 \\ (0.002)} & \textcolor{red}{\makecell{ 1.623 \\ (0.014)}} & \makecell{ 0.931 \\ (0.002)} & \textcolor{red}{\makecell{ 1.625 \\ (0.014)}} & \makecell{ 0.900 \\ (0.002)} & \textcolor{red}{\makecell{ 1.504 \\ (0.015)}} & \makecell{ 0.961 \\ (0.002)} & \makecell{ 1.909 \\ (0.022)} & \makecell{ 1.000 \\ (0.000)} & \makecell{ 4.000 \\ (0.000)}\\
1000 & \makecell{ 0.928 \\ (0.002)} & \textcolor{red}{\makecell{ 1.486 \\ (0.009)}} & \makecell{ 0.928 \\ (0.002)} & \textcolor{red}{\makecell{ 1.486 \\ (0.009)}} & \makecell{ 0.900 \\ (0.002)} & \textcolor{red}{\makecell{ 1.375 \\ (0.009)}} & \makecell{ 0.952 \\ (0.002)} & \makecell{ 1.655 \\ (0.014)} & \makecell{ 1.000 \\ (0.000)} & \makecell{ 3.997 \\ (0.001)}\\
2000 & \makecell{ 0.928 \\ (0.001)} & \textcolor{red}{\makecell{ 1.359 \\ (0.006)}} & \makecell{ 0.928 \\ (0.001)} & \textcolor{red}{\makecell{ 1.359 \\ (0.006)}} & \makecell{ 0.902 \\ (0.002)} & \textcolor{red}{\makecell{ 1.259 \\ (0.005)}} & \makecell{ 0.945 \\ (0.001)} & \textcolor{red}{\makecell{ 1.450 \\ (0.007)}} & \makecell{ 0.992 \\ (0.000)} & \makecell{ 3.738 \\ (0.006)}\\
5000 & \makecell{ 0.916 \\ (0.001)} & \textcolor{red}{\makecell{ 1.156 \\ (0.005)}} & \makecell{ 0.916 \\ (0.001)} & \textcolor{red}{\makecell{ 1.156 \\ (0.005)}} & \makecell{ 0.901 \\ (0.002)} & \textcolor{red}{\makecell{ 1.085 \\ (0.003)}} & \makecell{ 0.928 \\ (0.002)} & \textcolor{red}{\makecell{ 1.180 \\ (0.005)}} & \makecell{ 0.939 \\ (0.001)} & \makecell{ 1.342 \\ (0.007)}\\
\bottomrule
\end{tabular}

\end{table}

\begin{figure*}[!htb]
    \centering
    \includegraphics[width=0.9\linewidth]{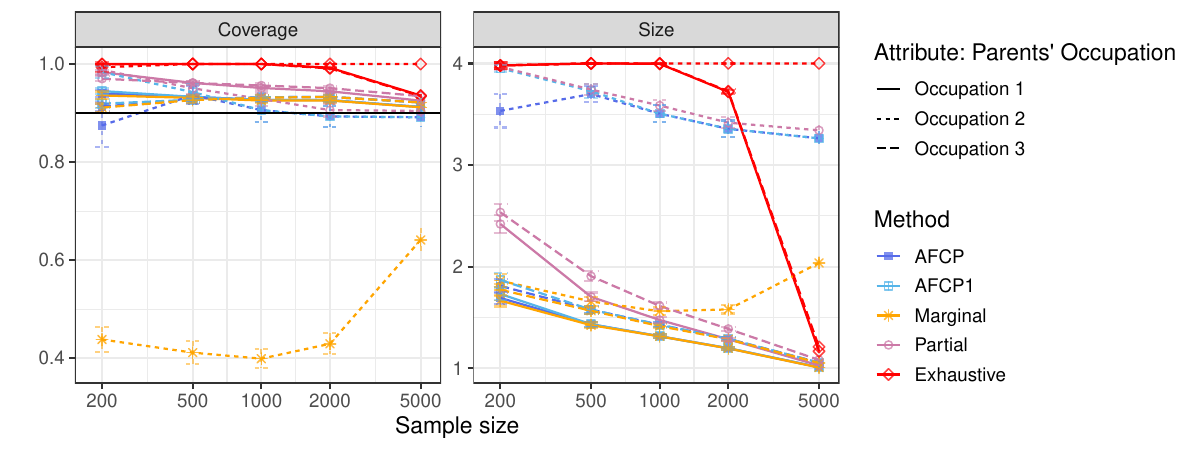}\vspace{-0.5cm}
    \caption{Coverage and size of prediction sets constructed with different methods for groups formed by Parents' Occupation. For the Occupation 1 group, the Marginal method (dashed orange lines) fails to detect and correct for its undercoverage, and the Exhaustive method produces prediction sets that are too conservative to be helpful. In contrast, our AFCP and AFCP1 methods correct the undercoverage and maintain small prediction sets. See Table~\ref{tab:main_exp_mc_real_ndata_parent} for numerical details and standard errors.} 
    \label{fig:main_exp_mc_real_ndata_parent}
\end{figure*}

\begin{figure*}[!htb]
    \centering
    \includegraphics[width=0.8\linewidth]{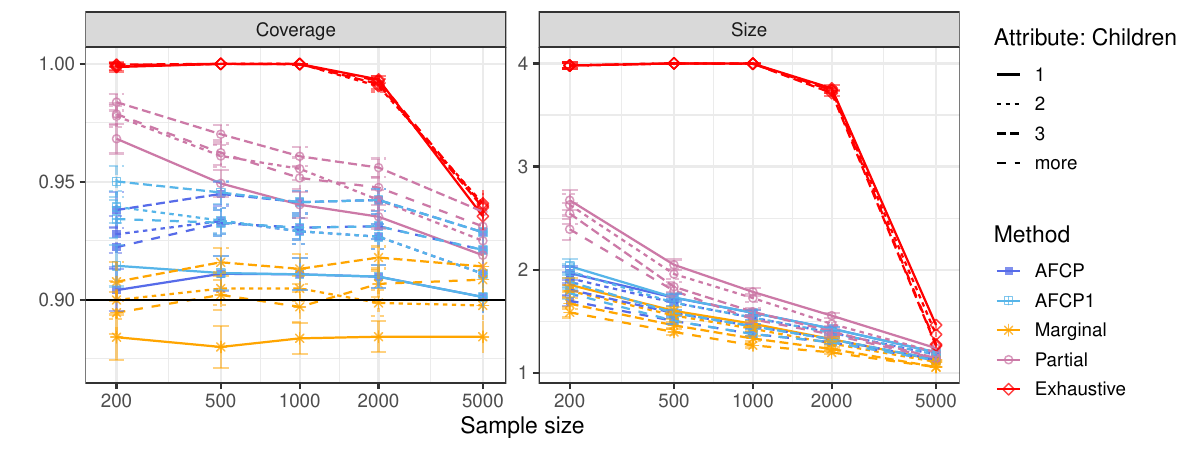}\vspace{-0.5cm}
    \caption{Coverage and size of prediction sets constructed with different methods for groups formed by Children. All groups have similar performance, and none of them are subject to unfairness/undercoverage. See Table~\ref{tab:main_exp_mc_real_ndata_children} for numerical details and standard errors.} 
    \label{fig:main_exp_mc_real_ndata_children}
\end{figure*}

\begin{figure*}[!htb]
    \centering
    \includegraphics[width=0.8\linewidth]{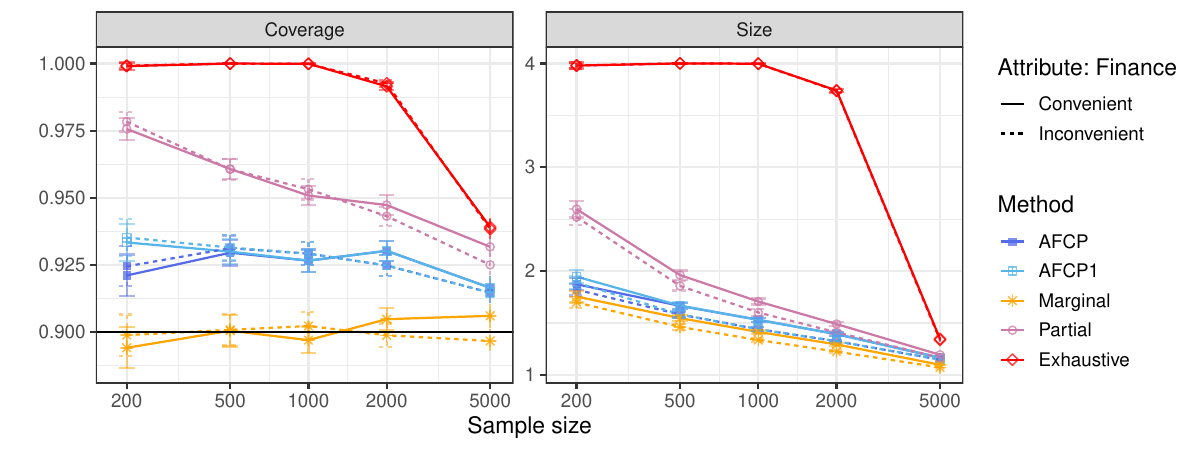}\vspace{-0.5cm}
    \caption{Coverage and size of prediction sets constructed with different methods for groups formed by Finance. All groups have similar performance, and none of them are subject to unfairness/undercoverage. See Table~\ref{tab:main_exp_mc_real_ndata_finance} for numerical details and standard errors.} 
    \label{fig:main_exp_mc_real_ndata_finance}
\end{figure*}

\begin{figure*}[!htb]
    \centering
    \includegraphics[width=0.8\linewidth]{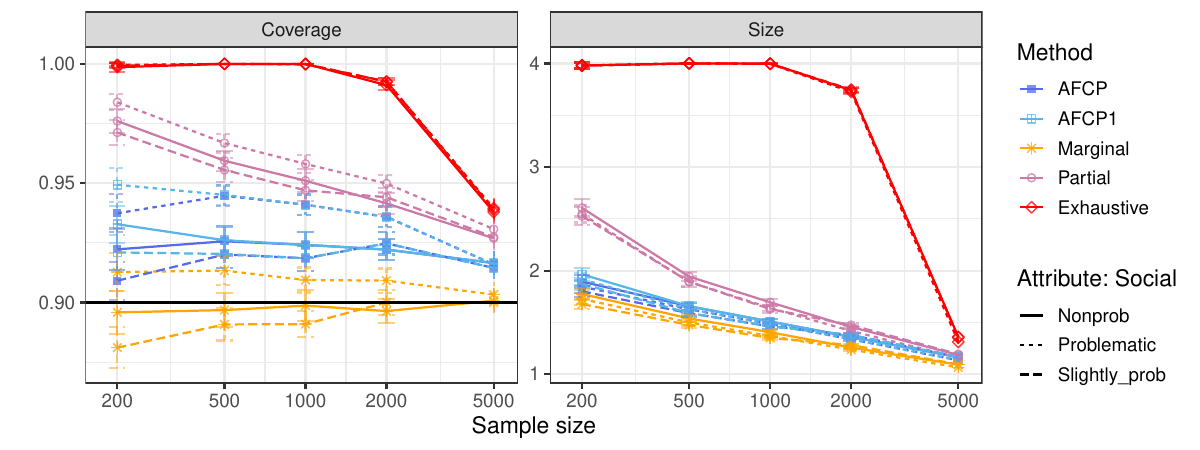}\vspace{-0.5cm}
    \caption{Coverage and size of prediction sets constructed with different methods for groups formed by Social. All groups have similar performance, and none of them are subject to unfairness/undercoverage. See Table~\ref{tab:main_exp_mc_real_ndata_social} for numerical details and standard errors.} 
    \label{fig:main_exp_mc_real_ndata_social}
\end{figure*}

\begin{figure*}[!htb]
    \centering
    \includegraphics[width=0.8\linewidth]{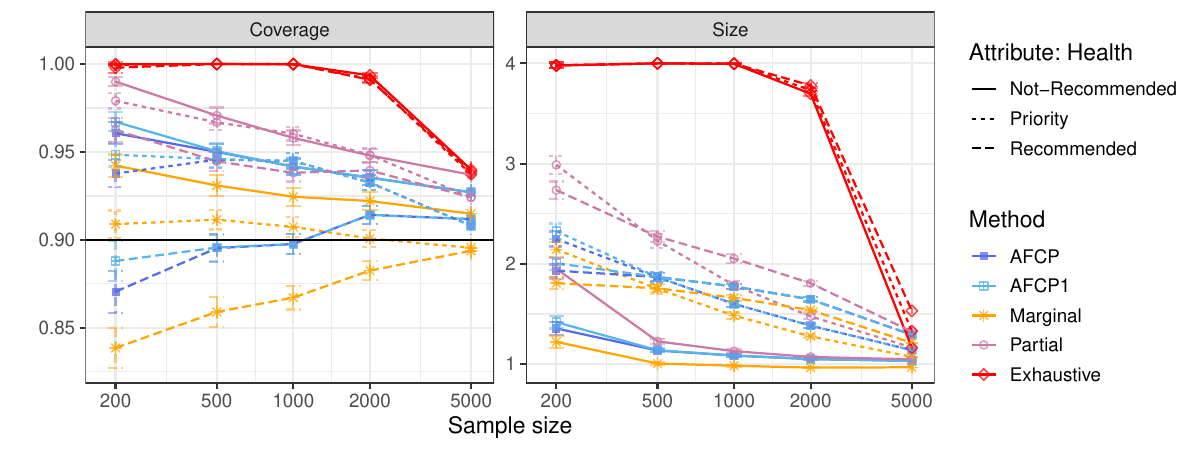}\vspace{-0.5cm}
    \caption{Coverage and size of prediction sets constructed with different methods for groups formed by Health. All groups have similar performance, and none of them are subject to unfairness/undercoverage. See Table~\ref{tab:main_exp_mc_real_ndata_health} for numerical details and standard errors.} 
    \label{fig:main_exp_mc_real_ndata_health}
\end{figure*}

\begin{table}[!htb]
\centering
    \caption{Coverage and size of prediction sets constructed with different methods for groups formed by Parents' Occupation. Green numbers indicate low coverage and red numbers indicate the small size of prediction sets. See corresponding plots in Figure~\ref{fig:main_exp_mc_real_ndata_parent}.}
  \label{tab:main_exp_mc_real_ndata_parent}

\centering
\fontsize{6}{6}\selectfont
\begin{tabular}[t]{lrllllllllll}
\toprule
\multicolumn{2}{c}{ } & \multicolumn{2}{c}{AFCP} & \multicolumn{2}{c}{AFCP1} & \multicolumn{2}{c}{Marginal} & \multicolumn{2}{c}{Partial} & \multicolumn{2}{c}{Exhaustive} \\
\cmidrule(l{3pt}r{3pt}){3-4} \cmidrule(l{3pt}r{3pt}){5-6} \cmidrule(l{3pt}r{3pt}){7-8} \cmidrule(l{3pt}r{3pt}){9-10} \cmidrule(l{3pt}r{3pt}){11-12}
Group & \makecell{Sample\\ size} & Coverage & Size & Coverage & Size & Coverage & Size & Coverage & Size & Coverage & Size\\
\midrule
\addlinespace[0.3em]
\multicolumn{12}{l}{\textbf{Occupation 0}}\\
\hspace{1em}Occupation 0 & 200 & \makecell{ 0.940 \\ (0.003)} & \textcolor{red}{\makecell{ 1.692 \\ (0.028)}} & \makecell{ 0.945 \\ (0.003)} & \textcolor{red}{\makecell{ 1.731 \\ (0.031)}} & \makecell{ 0.936 \\ (0.004)} & \textcolor{red}{\makecell{ 1.665 \\ (0.029)}} & \makecell{ 0.983 \\ (0.002)} & \makecell{ 2.420 \\ (0.044)} & \makecell{ 1.000 \\ (0.000)} & \makecell{ 3.980 \\ (0.014)}\\
\hspace{1em}Occupation 0 & 500 & \makecell{ 0.933 \\ (0.002)} & \textcolor{red}{\makecell{ 1.433 \\ (0.012)}} & \makecell{ 0.933 \\ (0.002)} & \textcolor{red}{\makecell{ 1.433 \\ (0.012)}} & \makecell{ 0.931 \\ (0.003)} & \textcolor{red}{\makecell{ 1.424 \\ (0.013)}} & \makecell{ 0.961 \\ (0.002)} & \textcolor{red}{\makecell{ 1.703 \\ (0.021)}} & \makecell{ 1.000 \\ (0.000)} & \makecell{ 4.000 \\ (0.000)}\\
\hspace{1em}Occupation 0 & 1000 & \makecell{ 0.927 \\ (0.002)} & \textcolor{red}{\makecell{ 1.315 \\ (0.009)}} & \makecell{ 0.927 \\ (0.002)} & \textcolor{red}{\makecell{ 1.315 \\ (0.009)}} & \makecell{ 0.925 \\ (0.002)} & \textcolor{red}{\makecell{ 1.314 \\ (0.009)}} & \makecell{ 0.951 \\ (0.002)} & \textcolor{red}{\makecell{ 1.475 \\ (0.015)}} & \makecell{ 1.000 \\ (0.000)} & \makecell{ 3.996 \\ (0.001)}\\
\hspace{1em}Occupation 0 & 2000 & \makecell{ 0.926 \\ (0.002)} & \textcolor{red}{\makecell{ 1.197 \\ (0.007)}} & \makecell{ 0.926 \\ (0.002)} & \textcolor{red}{\makecell{ 1.197 \\ (0.007)}} & \makecell{ 0.926 \\ (0.002)} & \textcolor{red}{\makecell{ 1.196 \\ (0.006)}} & \makecell{ 0.944 \\ (0.002)} & \textcolor{red}{\makecell{ 1.286 \\ (0.008)}} & \makecell{ 0.992 \\ (0.001)} & \makecell{ 3.720 \\ (0.010)}\\
\hspace{1em}Occupation 0 & 5000 & \makecell{ 0.912 \\ (0.002)} & \textcolor{red}{\makecell{ 1.010 \\ (0.004)}} & \makecell{ 0.912 \\ (0.002)} & \textcolor{red}{\makecell{ 1.010 \\ (0.004)}} & \makecell{ 0.912 \\ (0.002)} & \textcolor{red}{\makecell{ 1.007 \\ (0.003)}} & \makecell{ 0.925 \\ (0.002)} & \textcolor{red}{\makecell{ 1.028 \\ (0.004)}} & \makecell{ 0.936 \\ (0.002)} & \textcolor{red}{\makecell{ 1.167 \\ (0.009)}}\\
\addlinespace[0.3em]
\multicolumn{12}{l}{\textbf{Occupation 1}}\\
\hspace{1em}Occupation 1 & 200 & \makecell{ 0.875 \\ (0.022)} & \makecell{ 3.533 \\ (0.082)} & \makecell{ 0.984 \\ (0.006)} & \makecell{ 3.952 \\ (0.017)} & \textcolor{darkgreen}{\makecell{ 0.438 \\ (0.013)}} & \textcolor{red}{\makecell{ 1.862 \\ (0.031)}} & \makecell{ 0.988 \\ (0.006)} & \makecell{ 3.965 \\ (0.016)} & \makecell{ 0.993 \\ (0.005)} & \makecell{ 3.980 \\ (0.014)}\\
\hspace{1em}Occupation 1 & 500 & \makecell{ 0.937 \\ (0.009)} & \makecell{ 3.699 \\ (0.041)} & \makecell{ 0.943 \\ (0.008)} & \makecell{ 3.727 \\ (0.034)} & \textcolor{darkgreen}{\makecell{ 0.411 \\ (0.012)}} & \textcolor{red}{\makecell{ 1.657 \\ (0.024)}} & \makecell{ 0.950 \\ (0.006)} & \makecell{ 3.742 \\ (0.027)} & \makecell{ 1.000 \\ (0.000)} & \makecell{ 4.000 \\ (0.000)}\\
\hspace{1em}Occupation 1 & 1000 & \makecell{ 0.906 \\ (0.012)} & \makecell{ 3.505 \\ (0.042)} & \makecell{ 0.906 \\ (0.012)} & \makecell{ 3.505 \\ (0.042)} & \textcolor{darkgreen}{\makecell{ 0.399 \\ (0.010)}} & \textcolor{red}{\makecell{ 1.561 \\ (0.020)}} & \makecell{ 0.928 \\ (0.007)} & \makecell{ 3.585 \\ (0.026)} & \makecell{ 1.000 \\ (0.000)} & \makecell{ 4.000 \\ (0.000)}\\
\hspace{1em}Occupation 1 & 2000 & \makecell{ 0.893 \\ (0.010)} & \makecell{ 3.356 \\ (0.041)} & \makecell{ 0.893 \\ (0.010)} & \makecell{ 3.356 \\ (0.041)} & \textcolor{darkgreen}{\makecell{ 0.429 \\ (0.011)}} & \makecell{ 1.579 \\ (0.020)} & \makecell{ 0.906 \\ (0.007)} & \makecell{ 3.416 \\ (0.027)} & \makecell{ 1.000 \\ (0.000)} & \makecell{ 4.000 \\ (0.000)}\\
\hspace{1em}Occupation 1 & 5000 & \makecell{ 0.891 \\ (0.010)} & \makecell{ 3.263 \\ (0.038)} & \makecell{ 0.891 \\ (0.010)} & \makecell{ 3.263 \\ (0.038)} & \textcolor{darkgreen}{\makecell{ 0.641 \\ (0.012)}} & \makecell{ 2.037 \\ (0.021)} & \makecell{ 0.904 \\ (0.006)} & \makecell{ 3.341 \\ (0.025)} & \makecell{ 1.000 \\ (0.000)} & \makecell{ 4.000 \\ (0.000)}\\
\addlinespace[0.3em]
\multicolumn{12}{l}{\textbf{Occupation 2}}\\
\hspace{1em}Occupation 2 & 200 & \makecell{ 0.913 \\ (0.004)} & \textcolor{red}{\makecell{ 1.810 \\ (0.033)}} & \makecell{ 0.919 \\ (0.004)} & \textcolor{red}{\makecell{ 1.868 \\ (0.035)}} & \makecell{ 0.911 \\ (0.004)} & \textcolor{red}{\makecell{ 1.772 \\ (0.029)}} & \makecell{ 0.970 \\ (0.003)} & \makecell{ 2.534 \\ (0.041)} & \makecell{ 0.999 \\ (0.001)} & \makecell{ 3.980 \\ (0.014)}\\
\hspace{1em}Occupation 2 & 500 & \makecell{ 0.927 \\ (0.003)} & \textcolor{red}{\makecell{ 1.577 \\ (0.018)}} & \makecell{ 0.927 \\ (0.003)} & \textcolor{red}{\makecell{ 1.577 \\ (0.018)}} & \makecell{ 0.926 \\ (0.003)} & \textcolor{red}{\makecell{ 1.566 \\ (0.018)}} & \makecell{ 0.961 \\ (0.002)} & \makecell{ 1.906 \\ (0.026)} & \makecell{ 1.000 \\ (0.000)} & \makecell{ 4.000 \\ (0.000)}\\
\hspace{1em}Occupation 2 & 1000 & \makecell{ 0.932 \\ (0.002)} & \textcolor{red}{\makecell{ 1.428 \\ (0.011)}} & \makecell{ 0.932 \\ (0.002)} & \textcolor{red}{\makecell{ 1.428 \\ (0.011)}} & \makecell{ 0.931 \\ (0.002)} & \textcolor{red}{\makecell{ 1.415 \\ (0.011)}} & \makecell{ 0.956 \\ (0.002)} & \makecell{ 1.614 \\ (0.016)} & \makecell{ 1.000 \\ (0.000)} & \makecell{ 3.998 \\ (0.001)}\\
\hspace{1em}Occupation 2 & 2000 & \makecell{ 0.933 \\ (0.002)} & \textcolor{red}{\makecell{ 1.289 \\ (0.007)}} & \makecell{ 0.933 \\ (0.002)} & \textcolor{red}{\makecell{ 1.289 \\ (0.007)}} & \makecell{ 0.933 \\ (0.002)} & \textcolor{red}{\makecell{ 1.285 \\ (0.007)}} & \makecell{ 0.951 \\ (0.002)} & \textcolor{red}{\makecell{ 1.387 \\ (0.009)}} & \makecell{ 0.991 \\ (0.001)} & \makecell{ 3.726 \\ (0.010)}\\
\hspace{1em}Occupation 2 & 5000 & \makecell{ 0.922 \\ (0.002)} & \textcolor{red}{\makecell{ 1.056 \\ (0.005)}} & \makecell{ 0.922 \\ (0.002)} & \textcolor{red}{\makecell{ 1.056 \\ (0.005)}} & \makecell{ 0.921 \\ (0.002)} & \textcolor{red}{\makecell{ 1.053 \\ (0.004)}} & \makecell{ 0.934 \\ (0.002)} & \textcolor{red}{\makecell{ 1.082 \\ (0.005)}} & \makecell{ 0.935 \\ (0.002)} & \makecell{ 1.210 \\ (0.009)}\\
\bottomrule
\end{tabular}

\end{table}

\begin{table}[!htb]
\centering
    \caption{Coverage and size of prediction sets constructed with different methods for groups formed by Children. All groups have similar performance, and none of them are subject to unfairness/undercoverage. See corresponding plots in Figure~\ref{fig:main_exp_mc_real_ndata_children}.}
  \label{tab:main_exp_mc_real_ndata_children}
\centering
\fontsize{6}{6}\selectfont
\begin{tabular}[t]{lrllllllllll}
\toprule
\multicolumn{2}{c}{ } & \multicolumn{2}{c}{AFCP} & \multicolumn{2}{c}{AFCP1} & \multicolumn{2}{c}{Marginal} & \multicolumn{2}{c}{Partial} & \multicolumn{2}{c}{Exhaustive} \\
\cmidrule(l{3pt}r{3pt}){3-4} \cmidrule(l{3pt}r{3pt}){5-6} \cmidrule(l{3pt}r{3pt}){7-8} \cmidrule(l{3pt}r{3pt}){9-10} \cmidrule(l{3pt}r{3pt}){11-12}
Group & \makecell{Sample\\ size} & Coverage & Size & Coverage & Size & Coverage & Size & Coverage & Size & Coverage & Size\\
\midrule
\addlinespace[0.3em]
\multicolumn{12}{l}{\textbf{1}}\\
\hspace{1em}1 & 200 & \makecell{ 0.904 \\ (0.004)} & \makecell{ 1.966 \\ (0.033)} & \makecell{ 0.914 \\ (0.004)} & \makecell{ 2.034 \\ (0.034)} & \makecell{ 0.884 \\ (0.005)} & \textcolor{red}{\makecell{ 1.853 \\ (0.032)}} & \makecell{ 0.968 \\ (0.003)} & \makecell{ 2.673 \\ (0.048)} & \makecell{ 0.999 \\ (0.001)} & \makecell{ 3.980 \\ (0.014)}\\
\hspace{1em}1 & 500 & \makecell{ 0.911 \\ (0.004)} & \makecell{ 1.728 \\ (0.019)} & \makecell{ 0.911 \\ (0.004)} & \makecell{ 1.730 \\ (0.019)} & \makecell{ 0.880 \\ (0.005)} & \textcolor{red}{\makecell{ 1.600 \\ (0.019)}} & \makecell{ 0.949 \\ (0.003)} & \makecell{ 2.047 \\ (0.026)} & \makecell{ 1.000 \\ (0.000)} & \makecell{ 4.000 \\ (0.000)}\\
\hspace{1em}1 & 1000 & \makecell{ 0.911 \\ (0.003)} & \makecell{ 1.581 \\ (0.014)} & \makecell{ 0.911 \\ (0.003)} & \makecell{ 1.581 \\ (0.014)} & \makecell{ 0.884 \\ (0.003)} & \textcolor{red}{\makecell{ 1.475 \\ (0.013)}} & \makecell{ 0.940 \\ (0.003)} & \makecell{ 1.784 \\ (0.018)} & \makecell{ 1.000 \\ (0.000)} & \makecell{ 3.998 \\ (0.001)}\\
\hspace{1em}1 & 2000 & \makecell{ 0.910 \\ (0.002)} & \textcolor{red}{\makecell{ 1.428 \\ (0.009)}} & \makecell{ 0.910 \\ (0.002)} & \textcolor{red}{\makecell{ 1.428 \\ (0.009)}} & \makecell{ 0.884 \\ (0.003)} & \textcolor{red}{\makecell{ 1.326 \\ (0.009)}} & \makecell{ 0.935 \\ (0.003)} & \makecell{ 1.554 \\ (0.012)} & \makecell{ 0.993 \\ (0.001)} & \makecell{ 3.758 \\ (0.014)}\\
\hspace{1em}1 & 5000 & \makecell{ 0.901 \\ (0.003)} & \textcolor{red}{\makecell{ 1.198 \\ (0.009)}} & \makecell{ 0.901 \\ (0.003)} & \textcolor{red}{\makecell{ 1.198 \\ (0.009)}} & \makecell{ 0.884 \\ (0.003)} & \textcolor{red}{\makecell{ 1.122 \\ (0.006)}} & \makecell{ 0.919 \\ (0.003)} & \textcolor{red}{\makecell{ 1.238 \\ (0.009)}} & \makecell{ 0.935 \\ (0.003)} & \makecell{ 1.463 \\ (0.016)}\\
\addlinespace[0.3em]
\multicolumn{12}{l}{\textbf{2}}\\
\hspace{1em}2 & 200 & \makecell{ 0.928 \\ (0.004)} & \makecell{ 1.917 \\ (0.032)} & \makecell{ 0.939 \\ (0.003)} & \makecell{ 1.983 \\ (0.033)} & \makecell{ 0.900 \\ (0.004)} & \textcolor{red}{\makecell{ 1.794 \\ (0.030)}} & \makecell{ 0.978 \\ (0.002)} & \makecell{ 2.629 \\ (0.051)} & \makecell{ 0.999 \\ (0.001)} & \makecell{ 3.980 \\ (0.014)}\\
\hspace{1em}2 & 500 & \makecell{ 0.933 \\ (0.003)} & \makecell{ 1.679 \\ (0.017)} & \makecell{ 0.934 \\ (0.003)} & \makecell{ 1.680 \\ (0.017)} & \makecell{ 0.905 \\ (0.003)} & \textcolor{red}{\makecell{ 1.556 \\ (0.017)}} & \makecell{ 0.961 \\ (0.002)} & \makecell{ 1.957 \\ (0.024)} & \makecell{ 1.000 \\ (0.000)} & \makecell{ 4.000 \\ (0.000)}\\
\hspace{1em}2 & 1000 & \makecell{ 0.929 \\ (0.003)} & \makecell{ 1.535 \\ (0.012)} & \makecell{ 0.929 \\ (0.003)} & \makecell{ 1.535 \\ (0.012)} & \makecell{ 0.905 \\ (0.003)} & \textcolor{red}{\makecell{ 1.427 \\ (0.011)}} & \makecell{ 0.956 \\ (0.002)} & \makecell{ 1.725 \\ (0.018)} & \makecell{ 1.000 \\ (0.000)} & \makecell{ 3.997 \\ (0.002)}\\
\hspace{1em}2 & 2000 & \makecell{ 0.927 \\ (0.003)} & \textcolor{red}{\makecell{ 1.387 \\ (0.008)}} & \makecell{ 0.927 \\ (0.003)} & \textcolor{red}{\makecell{ 1.387 \\ (0.008)}} & \makecell{ 0.899 \\ (0.003)} & \textcolor{red}{\makecell{ 1.283 \\ (0.007)}} & \makecell{ 0.942 \\ (0.002)} & \makecell{ 1.473 \\ (0.010)} & \makecell{ 0.991 \\ (0.001)} & \makecell{ 3.739 \\ (0.013)}\\
\hspace{1em}2 & 5000 & \makecell{ 0.911 \\ (0.003)} & \textcolor{red}{\makecell{ 1.175 \\ (0.008)}} & \makecell{ 0.911 \\ (0.003)} & \textcolor{red}{\makecell{ 1.175 \\ (0.008)}} & \makecell{ 0.898 \\ (0.004)} & \textcolor{red}{\makecell{ 1.105 \\ (0.005)}} & \makecell{ 0.925 \\ (0.003)} & \textcolor{red}{\makecell{ 1.198 \\ (0.008)}} & \makecell{ 0.941 \\ (0.003)} & \makecell{ 1.370 \\ (0.012)}\\
\addlinespace[0.3em]
\multicolumn{12}{l}{\textbf{3}}\\
\hspace{1em}3 & 200 & \makecell{ 0.938 \\ (0.004)} & \textcolor{red}{\makecell{ 1.806 \\ (0.031)}} & \makecell{ 0.950 \\ (0.003)} & \textcolor{red}{\makecell{ 1.882 \\ (0.032)}} & \makecell{ 0.908 \\ (0.004)} & \textcolor{red}{\makecell{ 1.670 \\ (0.027)}} & \makecell{ 0.984 \\ (0.002)} & \makecell{ 2.544 \\ (0.051)} & \makecell{ 1.000 \\ (0.000)} & \makecell{ 3.980 \\ (0.014)}\\
\hspace{1em}3 & 500 & \makecell{ 0.945 \\ (0.002)} & \textcolor{red}{\makecell{ 1.579 \\ (0.015)}} & \makecell{ 0.945 \\ (0.002)} & \textcolor{red}{\makecell{ 1.581 \\ (0.015)}} & \makecell{ 0.916 \\ (0.003)} & \textcolor{red}{\makecell{ 1.460 \\ (0.016)}} & \makecell{ 0.970 \\ (0.002)} & \makecell{ 1.834 \\ (0.024)} & \makecell{ 1.000 \\ (0.000)} & \makecell{ 4.000 \\ (0.000)}\\
\hspace{1em}3 & 1000 & \makecell{ 0.941 \\ (0.002)} & \textcolor{red}{\makecell{ 1.451 \\ (0.011)}} & \makecell{ 0.941 \\ (0.002)} & \textcolor{red}{\makecell{ 1.451 \\ (0.011)}} & \makecell{ 0.913 \\ (0.003)} & \textcolor{red}{\makecell{ 1.332 \\ (0.011)}} & \makecell{ 0.961 \\ (0.002)} & \makecell{ 1.592 \\ (0.015)} & \makecell{ 1.000 \\ (0.000)} & \makecell{ 3.998 \\ (0.001)}\\
\hspace{1em}3 & 2000 & \makecell{ 0.942 \\ (0.002)} & \textcolor{red}{\makecell{ 1.324 \\ (0.008)}} & \makecell{ 0.942 \\ (0.002)} & \textcolor{red}{\makecell{ 1.324 \\ (0.008)}} & \makecell{ 0.918 \\ (0.002)} & \textcolor{red}{\makecell{ 1.230 \\ (0.007)}} & \makecell{ 0.956 \\ (0.002)} & \textcolor{red}{\makecell{ 1.401 \\ (0.010)}} & \makecell{ 0.991 \\ (0.001)} & \makecell{ 3.713 \\ (0.015)}\\
\hspace{1em}3 & 5000 & \makecell{ 0.929 \\ (0.003)} & \textcolor{red}{\makecell{ 1.128 \\ (0.008)}} & \makecell{ 0.929 \\ (0.003)} & \textcolor{red}{\makecell{ 1.128 \\ (0.008)}} & \makecell{ 0.914 \\ (0.003)} & \textcolor{red}{\makecell{ 1.056 \\ (0.005)}} & \makecell{ 0.937 \\ (0.002)} & \textcolor{red}{\makecell{ 1.142 \\ (0.008)}} & \makecell{ 0.939 \\ (0.002)} & \textcolor{red}{\makecell{ 1.262 \\ (0.012)}}\\
\addlinespace[0.3em]
\multicolumn{12}{l}{\textbf{more}}\\
\hspace{1em}more & 200 & \makecell{ 0.922 \\ (0.004)} & \textcolor{red}{\makecell{ 1.698 \\ (0.026)}} & \makecell{ 0.934 \\ (0.004)} & \textcolor{red}{\makecell{ 1.772 \\ (0.029)}} & \makecell{ 0.894 \\ (0.004)} & \textcolor{red}{\makecell{ 1.586 \\ (0.025)}} & \makecell{ 0.979 \\ (0.002)} & \makecell{ 2.392 \\ (0.051)} & \makecell{ 0.999 \\ (0.000)} & \makecell{ 3.980 \\ (0.014)}\\
\hspace{1em}more & 500 & \makecell{ 0.932 \\ (0.003)} & \textcolor{red}{\makecell{ 1.502 \\ (0.015)}} & \makecell{ 0.933 \\ (0.003)} & \textcolor{red}{\makecell{ 1.503 \\ (0.015)}} & \makecell{ 0.902 \\ (0.003)} & \textcolor{red}{\makecell{ 1.396 \\ (0.014)}} & \makecell{ 0.962 \\ (0.003)} & \makecell{ 1.795 \\ (0.028)} & \makecell{ 1.000 \\ (0.000)} & \makecell{ 4.000 \\ (0.000)}\\
\hspace{1em}more & 1000 & \makecell{ 0.931 \\ (0.002)} & \textcolor{red}{\makecell{ 1.376 \\ (0.010)}} & \makecell{ 0.931 \\ (0.002)} & \textcolor{red}{\makecell{ 1.376 \\ (0.010)}} & \makecell{ 0.897 \\ (0.003)} & \textcolor{red}{\makecell{ 1.266 \\ (0.010)}} & \makecell{ 0.952 \\ (0.002)} & \textcolor{red}{\makecell{ 1.519 \\ (0.017)}} & \makecell{ 1.000 \\ (0.000)} & \makecell{ 3.996 \\ (0.002)}\\
\hspace{1em}more & 2000 & \makecell{ 0.931 \\ (0.003)} & \textcolor{red}{\makecell{ 1.296 \\ (0.008)}} & \makecell{ 0.931 \\ (0.003)} & \textcolor{red}{\makecell{ 1.296 \\ (0.008)}} & \makecell{ 0.907 \\ (0.003)} & \textcolor{red}{\makecell{ 1.197 \\ (0.007)}} & \makecell{ 0.948 \\ (0.002)} & \textcolor{red}{\makecell{ 1.371 \\ (0.009)}} & \makecell{ 0.993 \\ (0.001)} & \makecell{ 3.739 \\ (0.015)}\\
\hspace{1em}more & 5000 & \makecell{ 0.921 \\ (0.003)} & \textcolor{red}{\makecell{ 1.121 \\ (0.008)}} & \makecell{ 0.921 \\ (0.003)} & \textcolor{red}{\makecell{ 1.121 \\ (0.008)}} & \makecell{ 0.909 \\ (0.003)} & \textcolor{red}{\makecell{ 1.055 \\ (0.005)}} & \makecell{ 0.931 \\ (0.003)} & \textcolor{red}{\makecell{ 1.139 \\ (0.008)}} & \makecell{ 0.940 \\ (0.003)} & \makecell{ 1.274 \\ (0.013)}\\
\bottomrule
\end{tabular}

\end{table}

\begin{table}[!htb]
\centering
    \caption{Coverage and size of prediction sets constructed with different methods for groups formed by Finance. All groups have similar performance, and none of them are subject to unfairness/undercoverage. See corresponding plots in Figure~\ref{fig:main_exp_mc_real_ndata_finance}.}
  \label{tab:main_exp_mc_real_ndata_finance}

\centering
\fontsize{6}{6}\selectfont
\begin{tabular}[t]{lrllllllllll}
\toprule
\multicolumn{2}{c}{ } & \multicolumn{2}{c}{AFCP} & \multicolumn{2}{c}{AFCP1} & \multicolumn{2}{c}{Marginal} & \multicolumn{2}{c}{Partial} & \multicolumn{2}{c}{Exhaustive} \\
\cmidrule(l{3pt}r{3pt}){3-4} \cmidrule(l{3pt}r{3pt}){5-6} \cmidrule(l{3pt}r{3pt}){7-8} \cmidrule(l{3pt}r{3pt}){9-10} \cmidrule(l{3pt}r{3pt}){11-12}
Group & \makecell{Sample\\ size} & Coverage & Size & Coverage & Size & Coverage & Size & Coverage & Size & Coverage & Size\\
\midrule
\addlinespace[0.3em]
\multicolumn{12}{l}{\textbf{Convenient}}\\
\hspace{1em}Convenient & 200 & \makecell{ 0.921 \\ (0.004)} & \textcolor{red}{\makecell{ 1.873 \\ (0.029)}} & \makecell{ 0.933 \\ (0.003)} & \textcolor{red}{\makecell{ 1.946 \\ (0.031)}} & \makecell{ 0.894 \\ (0.004)} & \textcolor{red}{\makecell{ 1.755 \\ (0.028)}} & \makecell{ 0.976 \\ (0.002)} & \makecell{ 2.596 \\ (0.040)} & \makecell{ 0.999 \\ (0.001)} & \makecell{ 3.980 \\ (0.014)}\\
\hspace{1em}Convenient & 500 & \makecell{ 0.930 \\ (0.002)} & \textcolor{red}{\makecell{ 1.665 \\ (0.015)}} & \makecell{ 0.930 \\ (0.002)} & \textcolor{red}{\makecell{ 1.667 \\ (0.015)}} & \makecell{ 0.901 \\ (0.003)} & \textcolor{red}{\makecell{ 1.547 \\ (0.015)}} & \makecell{ 0.961 \\ (0.002)} & \makecell{ 1.961 \\ (0.022)} & \makecell{ 1.000 \\ (0.000)} & \makecell{ 4.000 \\ (0.000)}\\
\hspace{1em}Convenient & 1000 & \makecell{ 0.927 \\ (0.002)} & \textcolor{red}{\makecell{ 1.531 \\ (0.010)}} & \makecell{ 0.927 \\ (0.002)} & \textcolor{red}{\makecell{ 1.531 \\ (0.010)}} & \makecell{ 0.897 \\ (0.002)} & \textcolor{red}{\makecell{ 1.414 \\ (0.009)}} & \makecell{ 0.951 \\ (0.002)} & \makecell{ 1.707 \\ (0.015)} & \makecell{ 1.000 \\ (0.000)} & \makecell{ 3.997 \\ (0.001)}\\
\hspace{1em}Convenient & 2000 & \makecell{ 0.930 \\ (0.002)} & \textcolor{red}{\makecell{ 1.393 \\ (0.008)}} & \makecell{ 0.930 \\ (0.002)} & \textcolor{red}{\makecell{ 1.393 \\ (0.008)}} & \makecell{ 0.905 \\ (0.002)} & \textcolor{red}{\makecell{ 1.293 \\ (0.007)}} & \makecell{ 0.947 \\ (0.002)} & \makecell{ 1.490 \\ (0.009)} & \makecell{ 0.992 \\ (0.001)} & \makecell{ 3.739 \\ (0.008)}\\
\hspace{1em}Convenient & 5000 & \makecell{ 0.917 \\ (0.002)} & \textcolor{red}{\makecell{ 1.165 \\ (0.006)}} & \makecell{ 0.917 \\ (0.002)} & \textcolor{red}{\makecell{ 1.165 \\ (0.006)}} & \makecell{ 0.906 \\ (0.002)} & \textcolor{red}{\makecell{ 1.097 \\ (0.004)}} & \makecell{ 0.932 \\ (0.002)} & \textcolor{red}{\makecell{ 1.194 \\ (0.006)}} & \makecell{ 0.939 \\ (0.002)} & \makecell{ 1.347 \\ (0.010)}\\
\addlinespace[0.3em]
\multicolumn{12}{l}{\textbf{Inconvenient}}\\
\hspace{1em}Inconvenient & 200 & \makecell{ 0.924 \\ (0.004)} & \textcolor{red}{\makecell{ 1.821 \\ (0.028)}} & \makecell{ 0.935 \\ (0.003)} & \textcolor{red}{\makecell{ 1.889 \\ (0.029)}} & \makecell{ 0.899 \\ (0.004)} & \textcolor{red}{\makecell{ 1.698 \\ (0.027)}} & \makecell{ 0.978 \\ (0.002)} & \makecell{ 2.521 \\ (0.040)} & \makecell{ 0.999 \\ (0.001)} & \makecell{ 3.980 \\ (0.014)}\\
\hspace{1em}Inconvenient & 500 & \makecell{ 0.931 \\ (0.002)} & \textcolor{red}{\makecell{ 1.582 \\ (0.015)}} & \makecell{ 0.931 \\ (0.002)} & \textcolor{red}{\makecell{ 1.583 \\ (0.015)}} & \makecell{ 0.901 \\ (0.003)} & \textcolor{red}{\makecell{ 1.462 \\ (0.015)}} & \makecell{ 0.961 \\ (0.002)} & \makecell{ 1.857 \\ (0.022)} & \makecell{ 1.000 \\ (0.000)} & \makecell{ 4.000 \\ (0.000)}\\
\hspace{1em}Inconvenient & 1000 & \makecell{ 0.929 \\ (0.002)} & \textcolor{red}{\makecell{ 1.440 \\ (0.010)}} & \makecell{ 0.929 \\ (0.002)} & \textcolor{red}{\makecell{ 1.440 \\ (0.010)}} & \makecell{ 0.902 \\ (0.003)} & \textcolor{red}{\makecell{ 1.336 \\ (0.010)}} & \makecell{ 0.953 \\ (0.002)} & \textcolor{red}{\makecell{ 1.601 \\ (0.015)}} & \makecell{ 1.000 \\ (0.000)} & \makecell{ 3.997 \\ (0.001)}\\
\hspace{1em}Inconvenient & 2000 & \makecell{ 0.925 \\ (0.002)} & \textcolor{red}{\makecell{ 1.325 \\ (0.006)}} & \makecell{ 0.925 \\ (0.002)} & \textcolor{red}{\makecell{ 1.325 \\ (0.006)}} & \makecell{ 0.899 \\ (0.002)} & \textcolor{red}{\makecell{ 1.224 \\ (0.006)}} & \makecell{ 0.943 \\ (0.002)} & \textcolor{red}{\makecell{ 1.410 \\ (0.008)}} & \makecell{ 0.993 \\ (0.001)} & \makecell{ 3.736 \\ (0.010)}\\
\hspace{1em}Inconvenient & 5000 & \makecell{ 0.915 \\ (0.002)} & \textcolor{red}{\makecell{ 1.146 \\ (0.006)}} & \makecell{ 0.915 \\ (0.002)} & \textcolor{red}{\makecell{ 1.146 \\ (0.006)}} & \makecell{ 0.897 \\ (0.002)} & \textcolor{red}{\makecell{ 1.072 \\ (0.004)}} & \makecell{ 0.925 \\ (0.002)} & \textcolor{red}{\makecell{ 1.166 \\ (0.006)}} & \makecell{ 0.938 \\ (0.002)} & \makecell{ 1.337 \\ (0.010)}\\
\bottomrule
\end{tabular}

\end{table}

\begin{table}[!htb]
\centering
    \caption{Coverage and size of prediction sets constructed with different methods for groups formed by Social. All groups have similar performance, and none of them are subject to unfairness/undercoverage. See corresponding plots in Figure~\ref{fig:main_exp_mc_real_ndata_social}.}
  \label{tab:main_exp_mc_real_ndata_social}

\centering
\fontsize{6}{6}\selectfont
\begin{tabular}[t]{lrllllllllll}
\toprule
\multicolumn{2}{c}{ } & \multicolumn{2}{c}{AFCP} & \multicolumn{2}{c}{AFCP1} & \multicolumn{2}{c}{Marginal} & \multicolumn{2}{c}{Partial} & \multicolumn{2}{c}{Exhaustive} \\
\cmidrule(l{3pt}r{3pt}){3-4} \cmidrule(l{3pt}r{3pt}){5-6} \cmidrule(l{3pt}r{3pt}){7-8} \cmidrule(l{3pt}r{3pt}){9-10} \cmidrule(l{3pt}r{3pt}){11-12}
Group & \makecell{Sample\\ size} & Coverage & Size & Coverage & Size & Coverage & Size & Coverage & Size & Coverage & Size\\
\midrule
\addlinespace[0.3em]
\multicolumn{12}{l}{\textbf{Nonprob}}\\
\hspace{1em}Nonprob & 200 & \makecell{ 0.922 \\ (0.004)} & \textcolor{red}{\makecell{ 1.894 \\ (0.031)}} & \makecell{ 0.933 \\ (0.004)} & \textcolor{red}{\makecell{ 1.964 \\ (0.032)}} & \makecell{ 0.896 \\ (0.004)} & \textcolor{red}{\makecell{ 1.775 \\ (0.030)}} & \makecell{ 0.976 \\ (0.002)} & \makecell{ 2.605 \\ (0.041)} & \makecell{ 0.999 \\ (0.001)} & \makecell{ 3.980 \\ (0.014)}\\
\hspace{1em}Nonprob & 500 & \makecell{ 0.926 \\ (0.003)} & \textcolor{red}{\makecell{ 1.655 \\ (0.018)}} & \makecell{ 0.926 \\ (0.003)} & \textcolor{red}{\makecell{ 1.657 \\ (0.017)}} & \makecell{ 0.897 \\ (0.004)} & \textcolor{red}{\makecell{ 1.538 \\ (0.018)}} & \makecell{ 0.959 \\ (0.002)} & \makecell{ 1.943 \\ (0.023)} & \makecell{ 1.000 \\ (0.000)} & \makecell{ 4.000 \\ (0.000)}\\
\hspace{1em}Nonprob & 1000 & \makecell{ 0.924 \\ (0.003)} & \textcolor{red}{\makecell{ 1.509 \\ (0.012)}} & \makecell{ 0.924 \\ (0.003)} & \textcolor{red}{\makecell{ 1.509 \\ (0.012)}} & \makecell{ 0.899 \\ (0.003)} & \textcolor{red}{\makecell{ 1.406 \\ (0.012)}} & \makecell{ 0.951 \\ (0.002)} & \makecell{ 1.694 \\ (0.017)} & \makecell{ 1.000 \\ (0.000)} & \makecell{ 3.998 \\ (0.001)}\\
\hspace{1em}Nonprob & 2000 & \makecell{ 0.922 \\ (0.002)} & \textcolor{red}{\makecell{ 1.360 \\ (0.008)}} & \makecell{ 0.922 \\ (0.002)} & \textcolor{red}{\makecell{ 1.360 \\ (0.008)}} & \makecell{ 0.896 \\ (0.003)} & \textcolor{red}{\makecell{ 1.258 \\ (0.008)}} & \makecell{ 0.942 \\ (0.002)} & \textcolor{red}{\makecell{ 1.456 \\ (0.011)}} & \makecell{ 0.991 \\ (0.001)} & \makecell{ 3.740 \\ (0.012)}\\
\hspace{1em}Nonprob & 5000 & \makecell{ 0.917 \\ (0.003)} & \textcolor{red}{\makecell{ 1.164 \\ (0.007)}} & \makecell{ 0.917 \\ (0.003)} & \textcolor{red}{\makecell{ 1.164 \\ (0.007)}} & \makecell{ 0.901 \\ (0.003)} & \textcolor{red}{\makecell{ 1.092 \\ (0.005)}} & \makecell{ 0.927 \\ (0.002)} & \textcolor{red}{\makecell{ 1.186 \\ (0.007)}} & \makecell{ 0.938 \\ (0.003)} & \makecell{ 1.360 \\ (0.013)}\\
\addlinespace[0.3em]
\multicolumn{12}{l}{\textbf{Problematic}}\\
\hspace{1em}Problematic & 200 & \makecell{ 0.937 \\ (0.004)} & \textcolor{red}{\makecell{ 1.846 \\ (0.030)}} & \makecell{ 0.949 \\ (0.004)} & \textcolor{red}{\makecell{ 1.916 \\ (0.030)}} & \makecell{ 0.913 \\ (0.004)} & \textcolor{red}{\makecell{ 1.726 \\ (0.028)}} & \makecell{ 0.984 \\ (0.002)} & \makecell{ 2.546 \\ (0.041)} & \makecell{ 1.000 \\ (0.000)} & \makecell{ 3.980 \\ (0.014)}\\
\hspace{1em}Problematic & 500 & \makecell{ 0.945 \\ (0.002)} & \textcolor{red}{\makecell{ 1.626 \\ (0.014)}} & \makecell{ 0.945 \\ (0.002)} & \textcolor{red}{\makecell{ 1.628 \\ (0.014)}} & \makecell{ 0.913 \\ (0.003)} & \textcolor{red}{\makecell{ 1.500 \\ (0.015)}} & \makecell{ 0.967 \\ (0.002)} & \makecell{ 1.892 \\ (0.023)} & \makecell{ 1.000 \\ (0.000)} & \makecell{ 4.000 \\ (0.000)}\\
\hspace{1em}Problematic & 1000 & \makecell{ 0.941 \\ (0.002)} & \textcolor{red}{\makecell{ 1.486 \\ (0.010)}} & \makecell{ 0.941 \\ (0.002)} & \textcolor{red}{\makecell{ 1.486 \\ (0.010)}} & \makecell{ 0.909 \\ (0.003)} & \textcolor{red}{\makecell{ 1.369 \\ (0.009)}} & \makecell{ 0.958 \\ (0.002)} & \makecell{ 1.639 \\ (0.014)} & \makecell{ 1.000 \\ (0.000)} & \makecell{ 3.996 \\ (0.002)}\\
\hspace{1em}Problematic & 2000 & \makecell{ 0.936 \\ (0.002)} & \textcolor{red}{\makecell{ 1.337 \\ (0.008)}} & \makecell{ 0.936 \\ (0.002)} & \textcolor{red}{\makecell{ 1.337 \\ (0.008)}} & \makecell{ 0.909 \\ (0.003)} & \textcolor{red}{\makecell{ 1.238 \\ (0.007)}} & \makecell{ 0.950 \\ (0.002)} & \textcolor{red}{\makecell{ 1.417 \\ (0.009)}} & \makecell{ 0.993 \\ (0.001)} & \makecell{ 3.726 \\ (0.012)}\\
\hspace{1em}Problematic & 5000 & \makecell{ 0.916 \\ (0.003)} & \textcolor{red}{\makecell{ 1.135 \\ (0.006)}} & \makecell{ 0.916 \\ (0.003)} & \textcolor{red}{\makecell{ 1.135 \\ (0.006)}} & \makecell{ 0.903 \\ (0.003)} & \textcolor{red}{\makecell{ 1.066 \\ (0.004)}} & \makecell{ 0.931 \\ (0.002)} & \textcolor{red}{\makecell{ 1.160 \\ (0.006)}} & \makecell{ 0.939 \\ (0.002)} & \makecell{ 1.312 \\ (0.010)}\\
\addlinespace[0.3em]
\multicolumn{12}{l}{\textbf{Slightly\_prob}}\\
\hspace{1em}Slightly\_prob & 200 & \makecell{ 0.909 \\ (0.004)} & \textcolor{red}{\makecell{ 1.800 \\ (0.027)}} & \makecell{ 0.921 \\ (0.004)} & \textcolor{red}{\makecell{ 1.871 \\ (0.029)}} & \makecell{ 0.881 \\ (0.004)} & \textcolor{red}{\makecell{ 1.677 \\ (0.026)}} & \makecell{ 0.971 \\ (0.003)} & \makecell{ 2.526 \\ (0.044)} & \makecell{ 0.999 \\ (0.001)} & \makecell{ 3.980 \\ (0.014)}\\
\hspace{1em}Slightly\_prob & 500 & \makecell{ 0.920 \\ (0.003)} & \textcolor{red}{\makecell{ 1.590 \\ (0.016)}} & \makecell{ 0.920 \\ (0.003)} & \textcolor{red}{\makecell{ 1.590 \\ (0.016)}} & \makecell{ 0.891 \\ (0.003)} & \textcolor{red}{\makecell{ 1.476 \\ (0.016)}} & \makecell{ 0.956 \\ (0.002)} & \makecell{ 1.893 \\ (0.026)} & \makecell{ 1.000 \\ (0.000)} & \makecell{ 4.000 \\ (0.000)}\\
\hspace{1em}Slightly\_prob & 1000 & \makecell{ 0.919 \\ (0.003)} & \textcolor{red}{\makecell{ 1.462 \\ (0.011)}} & \makecell{ 0.919 \\ (0.003)} & \textcolor{red}{\makecell{ 1.462 \\ (0.011)}} & \makecell{ 0.891 \\ (0.003)} & \textcolor{red}{\makecell{ 1.351 \\ (0.011)}} & \makecell{ 0.947 \\ (0.002)} & \makecell{ 1.629 \\ (0.018)} & \makecell{ 1.000 \\ (0.000)} & \makecell{ 3.997 \\ (0.001)}\\
\hspace{1em}Slightly\_prob & 2000 & \makecell{ 0.925 \\ (0.002)} & \textcolor{red}{\makecell{ 1.378 \\ (0.008)}} & \makecell{ 0.925 \\ (0.002)} & \textcolor{red}{\makecell{ 1.378 \\ (0.008)}} & \makecell{ 0.900 \\ (0.003)} & \textcolor{red}{\makecell{ 1.282 \\ (0.006)}} & \makecell{ 0.944 \\ (0.002)} & \textcolor{red}{\makecell{ 1.475 \\ (0.009)}} & \makecell{ 0.993 \\ (0.001)} & \makecell{ 3.746 \\ (0.011)}\\
\hspace{1em}Slightly\_prob & 5000 & \makecell{ 0.914 \\ (0.002)} & \textcolor{red}{\makecell{ 1.167 \\ (0.006)}} & \makecell{ 0.914 \\ (0.002)} & \textcolor{red}{\makecell{ 1.167 \\ (0.006)}} & \makecell{ 0.900 \\ (0.003)} & \textcolor{red}{\makecell{ 1.096 \\ (0.005)}} & \makecell{ 0.927 \\ (0.002)} & \textcolor{red}{\makecell{ 1.193 \\ (0.007)}} & \makecell{ 0.939 \\ (0.002)} & \makecell{ 1.354 \\ (0.011)}\\
\bottomrule
\end{tabular}

\end{table}

\begin{table}[!htb]
\centering
    \caption{Coverage and size of prediction sets constructed with different methods for groups formed by Health. All groups have similar performance, and none of them are subject to unfairness/undercoverage. See corresponding plots in Figure~\ref{fig:main_exp_mc_real_ndata_health}.}
  \label{tab:main_exp_mc_real_ndata_health}

\centering
\fontsize{6}{6}\selectfont
\begin{tabular}[t]{lrllllllllll}
\toprule
\multicolumn{2}{c}{ } & \multicolumn{2}{c}{AFCP} & \multicolumn{2}{c}{AFCP1} & \multicolumn{2}{c}{Marginal} & \multicolumn{2}{c}{Partial} & \multicolumn{2}{c}{Exhaustive} \\
\cmidrule(l{3pt}r{3pt}){3-4} \cmidrule(l{3pt}r{3pt}){5-6} \cmidrule(l{3pt}r{3pt}){7-8} \cmidrule(l{3pt}r{3pt}){9-10} \cmidrule(l{3pt}r{3pt}){11-12}
Group & \makecell{Sample\\ size} & Coverage & Size & Coverage & Size & Coverage & Size & Coverage & Size & Coverage & Size\\
\midrule
\addlinespace[0.3em]
\multicolumn{12}{l}{\textbf{Not-Recommended}}\\
\hspace{1em}Not-Recommended & 200 & \makecell{ 0.961 \\ (0.003)} & \textcolor{red}{\makecell{ 1.357 \\ (0.031)}} & \makecell{ 0.967 \\ (0.003)} & \textcolor{red}{\makecell{ 1.419 \\ (0.031)}} & \makecell{ 0.942 \\ (0.003)} & \textcolor{red}{\makecell{ 1.223 \\ (0.029)}} & \makecell{ 0.990 \\ (0.001)} & \makecell{ 1.951 \\ (0.052)} & \makecell{ 1.000 \\ (0.000)} & \makecell{ 3.980 \\ (0.014)}\\
\hspace{1em}Not-Recommended & 500 & \makecell{ 0.950 \\ (0.002)} & \textcolor{red}{\makecell{ 1.136 \\ (0.008)}} & \makecell{ 0.950 \\ (0.002)} & \textcolor{red}{\makecell{ 1.139 \\ (0.008)}} & \makecell{ 0.931 \\ (0.003)} & \textcolor{red}{\makecell{ 1.006 \\ (0.007)}} & \makecell{ 0.971 \\ (0.002)} & \makecell{ 1.228 \\ (0.013)} & \makecell{ 1.000 \\ (0.000)} & \makecell{ 4.000 \\ (0.000)}\\
\hspace{1em}Not-Recommended & 1000 & \makecell{ 0.942 \\ (0.002)} & \textcolor{red}{\makecell{ 1.086 \\ (0.007)}} & \makecell{ 0.942 \\ (0.002)} & \textcolor{red}{\makecell{ 1.086 \\ (0.007)}} & \makecell{ 0.925 \\ (0.003)} & \textcolor{red}{\makecell{ 0.985 \\ (0.005)}} & \makecell{ 0.958 \\ (0.002)} & \textcolor{red}{\makecell{ 1.130 \\ (0.008)}} & \makecell{ 1.000 \\ (0.000)} & \makecell{ 3.995 \\ (0.002)}\\
\hspace{1em}Not-Recommended & 2000 & \makecell{ 0.935 \\ (0.002)} & \textcolor{red}{\makecell{ 1.051 \\ (0.005)}} & \makecell{ 0.935 \\ (0.002)} & \textcolor{red}{\makecell{ 1.051 \\ (0.005)}} & \makecell{ 0.922 \\ (0.002)} & \textcolor{red}{\makecell{ 0.966 \\ (0.003)}} & \makecell{ 0.948 \\ (0.002)} & \textcolor{red}{\makecell{ 1.072 \\ (0.005)}} & \makecell{ 0.994 \\ (0.001)} & \makecell{ 3.697 \\ (0.013)}\\
\hspace{1em}Not-Recommended & 5000 & \makecell{ 0.927 \\ (0.002)} & \textcolor{red}{\makecell{ 1.033 \\ (0.005)}} & \makecell{ 0.927 \\ (0.002)} & \textcolor{red}{\makecell{ 1.033 \\ (0.005)}} & \makecell{ 0.915 \\ (0.002)} & \textcolor{red}{\makecell{ 0.969 \\ (0.003)}} & \makecell{ 0.937 \\ (0.002)} & \textcolor{red}{\makecell{ 1.048 \\ (0.005)}} & \makecell{ 0.940 \\ (0.002)} & \makecell{ 1.164 \\ (0.009)}\\
\addlinespace[0.3em]
\multicolumn{12}{l}{\textbf{Priority}}\\
\hspace{1em}Priority & 200 & \makecell{ 0.938 \\ (0.004)} & \makecell{ 2.249 \\ (0.038)} & \makecell{ 0.948 \\ (0.003)} & \makecell{ 2.326 \\ (0.039)} & \makecell{ 0.909 \\ (0.004)} & \makecell{ 2.145 \\ (0.038)} & \makecell{ 0.979 \\ (0.002)} & \makecell{ 2.987 \\ (0.044)} & \makecell{ 1.000 \\ (0.000)} & \makecell{ 3.980 \\ (0.014)}\\
\hspace{1em}Priority & 500 & \makecell{ 0.946 \\ (0.002)} & \makecell{ 1.866 \\ (0.026)} & \makecell{ 0.946 \\ (0.002)} & \makecell{ 1.867 \\ (0.026)} & \makecell{ 0.912 \\ (0.003)} & \makecell{ 1.751 \\ (0.026)} & \makecell{ 0.967 \\ (0.002)} & \makecell{ 2.229 \\ (0.036)} & \makecell{ 1.000 \\ (0.000)} & \makecell{ 4.000 \\ (0.000)}\\
\hspace{1em}Priority & 1000 & \makecell{ 0.945 \\ (0.002)} & \makecell{ 1.601 \\ (0.015)} & \makecell{ 0.945 \\ (0.002)} & \makecell{ 1.601 \\ (0.015)} & \makecell{ 0.907 \\ (0.003)} & \makecell{ 1.484 \\ (0.016)} & \makecell{ 0.960 \\ (0.002)} & \makecell{ 1.788 \\ (0.021)} & \makecell{ 1.000 \\ (0.000)} & \makecell{ 3.998 \\ (0.001)}\\
\hspace{1em}Priority & 2000 & \makecell{ 0.933 \\ (0.002)} & \makecell{ 1.384 \\ (0.010)} & \makecell{ 0.933 \\ (0.002)} & \makecell{ 1.384 \\ (0.010)} & \makecell{ 0.901 \\ (0.002)} & \makecell{ 1.278 \\ (0.009)} & \makecell{ 0.948 \\ (0.002)} & \makecell{ 1.476 \\ (0.013)} & \makecell{ 0.991 \\ (0.001)} & \makecell{ 3.734 \\ (0.011)}\\
\hspace{1em}Priority & 5000 & \makecell{ 0.908 \\ (0.003)} & \textcolor{red}{\makecell{ 1.140 \\ (0.008)}} & \makecell{ 0.908 \\ (0.003)} & \textcolor{red}{\makecell{ 1.140 \\ (0.008)}} & \makecell{ 0.896 \\ (0.003)} & \textcolor{red}{\makecell{ 1.073 \\ (0.005)}} & \makecell{ 0.924 \\ (0.003)} & \makecell{ 1.167 \\ (0.007)} & \makecell{ 0.937 \\ (0.002)} & \makecell{ 1.330 \\ (0.010)}\\
\addlinespace[0.3em]
\multicolumn{12}{l}{\textbf{Recommended}}\\
\hspace{1em}Recommended & 200 & \makecell{ 0.871 \\ (0.006)} & \makecell{ 1.933 \\ (0.031)} & \makecell{ 0.888 \\ (0.006)} & \makecell{ 2.006 \\ (0.033)} & \makecell{ 0.839 \\ (0.006)} & \makecell{ 1.809 \\ (0.028)} & \makecell{ 0.962 \\ (0.004)} & \makecell{ 2.736 \\ (0.043)} & \makecell{ 0.998 \\ (0.001)} & \makecell{ 3.980 \\ (0.014)}\\
\hspace{1em}Recommended & 500 & \makecell{ 0.895 \\ (0.004)} & \makecell{ 1.869 \\ (0.020)} & \makecell{ 0.896 \\ (0.004)} & \makecell{ 1.870 \\ (0.020)} & \makecell{ 0.859 \\ (0.004)} & \makecell{ 1.757 \\ (0.020)} & \makecell{ 0.945 \\ (0.003)} & \makecell{ 2.273 \\ (0.029)} & \makecell{ 1.000 \\ (0.000)} & \makecell{ 4.000 \\ (0.000)}\\
\hspace{1em}Recommended & 1000 & \makecell{ 0.898 \\ (0.003)} & \makecell{ 1.775 \\ (0.014)} & \makecell{ 0.898 \\ (0.003)} & \makecell{ 1.775 \\ (0.014)} & \makecell{ 0.867 \\ (0.003)} & \makecell{ 1.662 \\ (0.014)} & \makecell{ 0.938 \\ (0.002)} & \makecell{ 2.053 \\ (0.023)} & \makecell{ 1.000 \\ (0.000)} & \makecell{ 3.998 \\ (0.001)}\\
\hspace{1em}Recommended & 2000 & \makecell{ 0.914 \\ (0.003)} & \makecell{ 1.646 \\ (0.011)} & \makecell{ 0.914 \\ (0.003)} & \makecell{ 1.646 \\ (0.011)} & \makecell{ 0.883 \\ (0.003)} & \makecell{ 1.539 \\ (0.010)} & \makecell{ 0.940 \\ (0.002)} & \makecell{ 1.807 \\ (0.013)} & \makecell{ 0.991 \\ (0.001)} & \makecell{ 3.780 \\ (0.010)}\\
\hspace{1em}Recommended & 5000 & \makecell{ 0.912 \\ (0.003)} & \makecell{ 1.293 \\ (0.010)} & \makecell{ 0.912 \\ (0.003)} & \makecell{ 1.293 \\ (0.010)} & \makecell{ 0.894 \\ (0.002)} & \makecell{ 1.213 \\ (0.007)} & \makecell{ 0.924 \\ (0.002)} & \makecell{ 1.324 \\ (0.010)} & \makecell{ 0.939 \\ (0.002)} & \makecell{ 1.532 \\ (0.014)}\\
\bottomrule
\end{tabular}

\end{table}

\FloatBarrier
Next, we provide numerical experimental results comparing AFCP with the \textbf{label-conditional} adaptive equalized coverage (see Equation~\eqref {eq:adaptive_equalized_coverage_lc}) with other benchmark methods. 

\begin{figure*}[!htb]
    \centering
    \includegraphics[width=\linewidth]{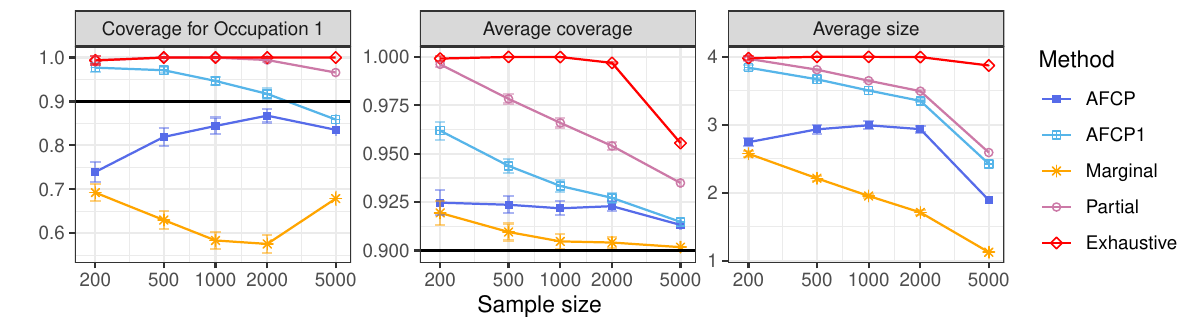}\vspace{-0.5cm}
    \caption{Performance on prediction sets constructed by different methods on the nursery data as a function of the total number of training and calibration points. Our method (AFCP) leads to more informative sets with a lower average width and higher conditional coverage on the Pretentious group. The error bars indicate 2 standard errors.} 
    \label{fig:main_exp_mc_real_ndata_lc}
\end{figure*}

\begin{table}[!htb]
\centering
    \caption{Average coverage and average size of prediction sets for all test samples constructed by different methods as a function of the training and calibration size. All methods obtain coverage beyond 0.9, while our AFCP and AFCP1 methods, along with the Marginal method, produce the smallest, thus, the most informative, prediction sets. Red numbers indicate the small size of prediction sets. See corresponding plots in Figure~\ref{fig:main_exp_mc_real_ndata_lc}.}
  \label{tab:main_exp_mc_real_ndata_lc}

\centering
\fontsize{6}{6}\selectfont
\begin{tabular}[t]{rllllllllll}
\toprule
\multicolumn{1}{c}{ } & \multicolumn{2}{c}{AFCP} & \multicolumn{2}{c}{AFCP1} & \multicolumn{2}{c}{Marginal} & \multicolumn{2}{c}{Partial} & \multicolumn{2}{c}{Exhaustive} \\
\cmidrule(l{3pt}r{3pt}){2-3} \cmidrule(l{3pt}r{3pt}){4-5} \cmidrule(l{3pt}r{3pt}){6-7} \cmidrule(l{3pt}r{3pt}){8-9} \cmidrule(l{3pt}r{3pt}){10-11}
\makecell{Sample\\ size} & Coverage & Size & Coverage & Size & Coverage & Size & Coverage & Size & Coverage & Size\\
\midrule
200 & \makecell{ 0.925 \\ (0.003)} & \textcolor{red}{\makecell{ 2.745 \\ (0.029)}} & \makecell{ 0.962 \\ (0.002)} & \makecell{ 3.841 \\ (0.017)} & \makecell{ 0.919 \\ (0.003)} & \textcolor{red}{\makecell{ 2.575 \\ (0.023)}} & \makecell{ 0.996 \\ (0.001)} & \makecell{ 3.970 \\ (0.014)} & \makecell{ 0.999 \\ (0.001)} & \makecell{ 3.980 \\ (0.014)}\\
500 & \makecell{ 0.924 \\ (0.002)} & \makecell{ 2.934 \\ (0.034)} & \makecell{ 0.944 \\ (0.002)} & \makecell{ 3.670 \\ (0.008)} & \makecell{ 0.910 \\ (0.002)} & \textcolor{red}{\makecell{ 2.214 \\ (0.019)}} & \makecell{ 0.978 \\ (0.001)} & \makecell{ 3.811 \\ (0.007)} & \makecell{ 1.000 \\ (0.000)} & \makecell{ 4.000 \\ (0.000)}\\
1000 & \makecell{ 0.922 \\ (0.002)} & \makecell{ 2.994 \\ (0.029)} & \makecell{ 0.933 \\ (0.002)} & \makecell{ 3.505 \\ (0.006)} & \makecell{ 0.905 \\ (0.002)} & \textcolor{red}{\makecell{ 1.954 \\ (0.015)}} & \makecell{ 0.966 \\ (0.001)} & \makecell{ 3.649 \\ (0.006)} & \makecell{ 1.000 \\ (0.000)} & \makecell{ 4.000 \\ (0.000)}\\
2000 & \makecell{ 0.923 \\ (0.001)} & \makecell{ 2.937 \\ (0.025)} & \makecell{ 0.927 \\ (0.001)} & \makecell{ 3.352 \\ (0.010)} & \makecell{ 0.904 \\ (0.001)} & \textcolor{red}{\makecell{ 1.712 \\ (0.019)}} & \makecell{ 0.954 \\ (0.001)} & \makecell{ 3.495 \\ (0.007)} & \makecell{ 0.997 \\ (0.000)} & \makecell{ 3.996 \\ (0.000)}\\
5000 & \makecell{ 0.913 \\ (0.001)} & \makecell{ 1.893 \\ (0.022)} & \makecell{ 0.915 \\ (0.001)} & \makecell{ 2.426 \\ (0.022)} & \makecell{ 0.902 \\ (0.001)} & \textcolor{red}{\makecell{ 1.127 \\ (0.005)}} & \makecell{ 0.935 \\ (0.001)} & \makecell{ 2.593 \\ (0.019)} & \makecell{ 0.956 \\ (0.001)} & \makecell{ 3.872 \\ (0.002)}\\
\bottomrule
\end{tabular}

\end{table}

\begin{figure*}[!htb]
    \centering
    \includegraphics[width=0.9\linewidth]{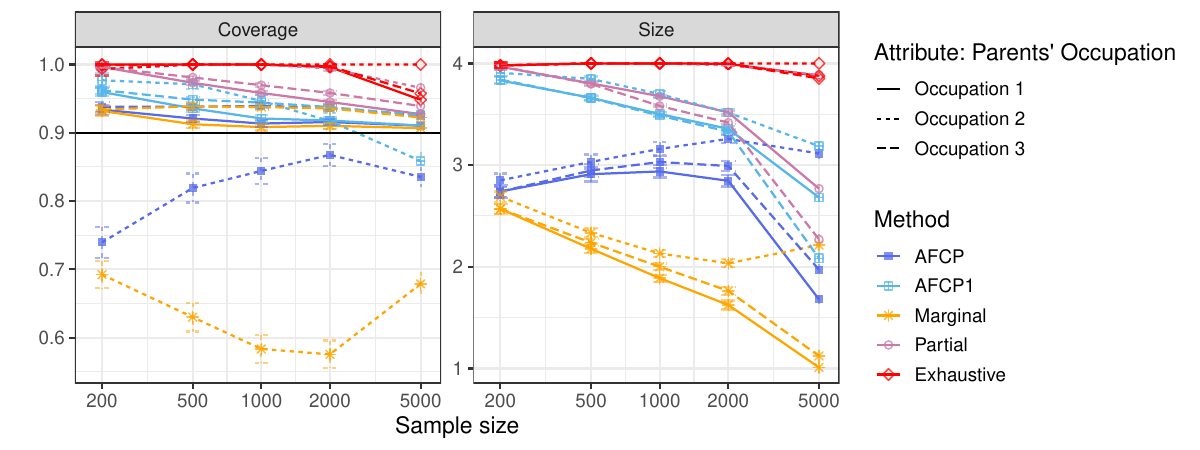}\vspace{-0.5cm}
    \caption{Coverage and size of prediction sets constructed with different methods for groups formed by Parents' Occupation. For the Occupation 1 group, the Marginal method (dashed orange lines) fails to detect and correct for its undercoverage, and the Exhaustive method produces prediction sets that are too conservative to be helpful. In contrast, our AFCP and AFCP1 methods correct the undercoverage and maintain small prediction sets. See Table~\ref{tab:main_exp_mc_real_ndata_parent_lc} for details and standard errors.} 
    \label{fig:main_exp_mc_real_ndata_parent_lc}
\end{figure*}

\begin{figure*}[!htb]
    \centering
    \includegraphics[width=0.8\linewidth]{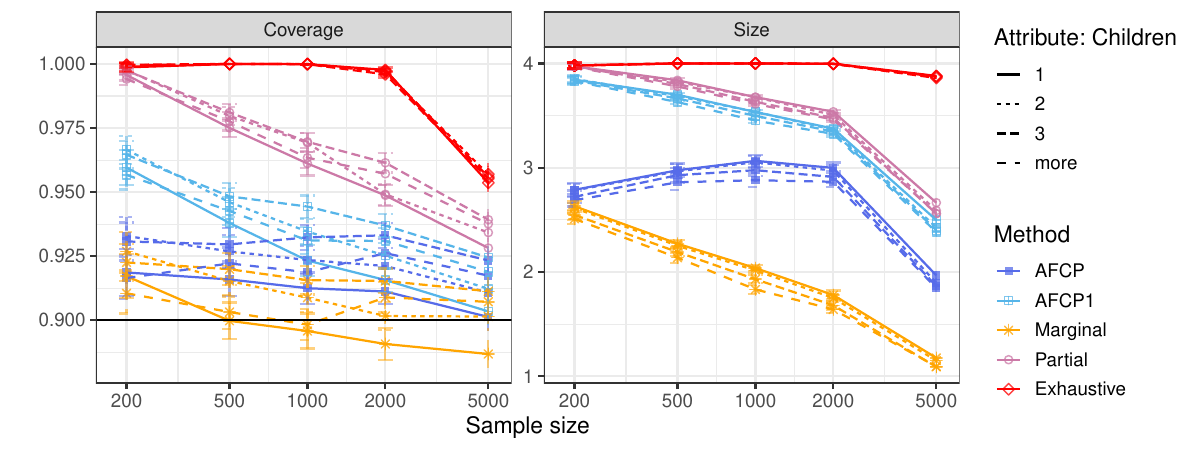}\vspace{-0.5cm}
    \caption{Coverage and size of prediction sets constructed with different methods for groups formed by Children. All groups have similar performance, and none of them are subject to unfairness/undercoverage. See Table~\ref{tab:main_exp_mc_real_ndata_children_lc} for numerical details and standard errors.} 
    \label{fig:main_exp_mc_real_ndata_children_lc}
\end{figure*}

\begin{figure*}[!htb]
    \centering
    \includegraphics[width=0.8\linewidth]{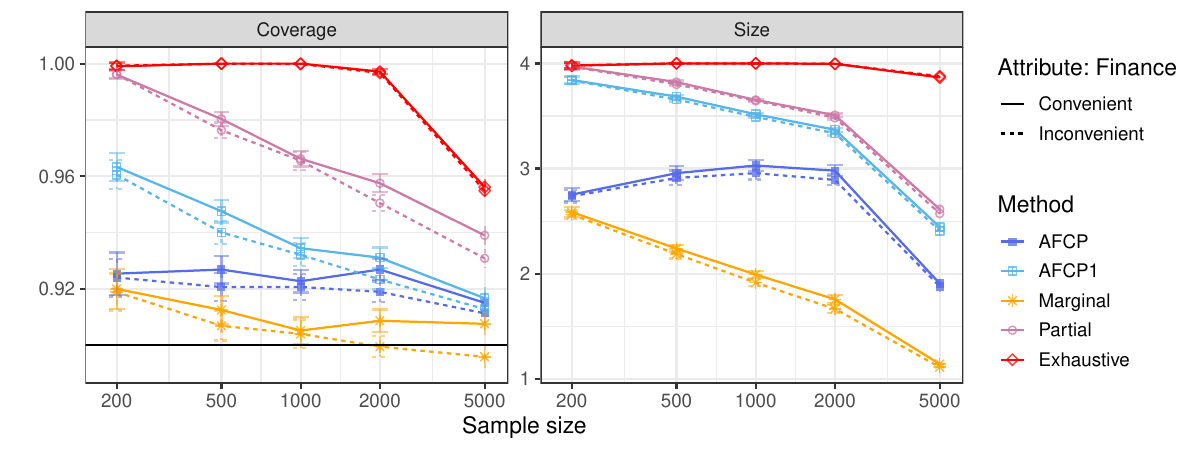}\vspace{-0.5cm}
    \caption{Coverage and size of prediction sets constructed with different methods for groups formed by Finance. All groups have similar performance, and none of them are subject to unfairness/undercoverage. See Table~\ref{tab:main_exp_mc_real_ndata_finance_lc} for numerical details and standard errors.} 
    \label{fig:main_exp_mc_real_ndata_finance_lc}
\end{figure*}

\begin{figure*}[!htb]
    \centering
    \includegraphics[width=0.8\linewidth]{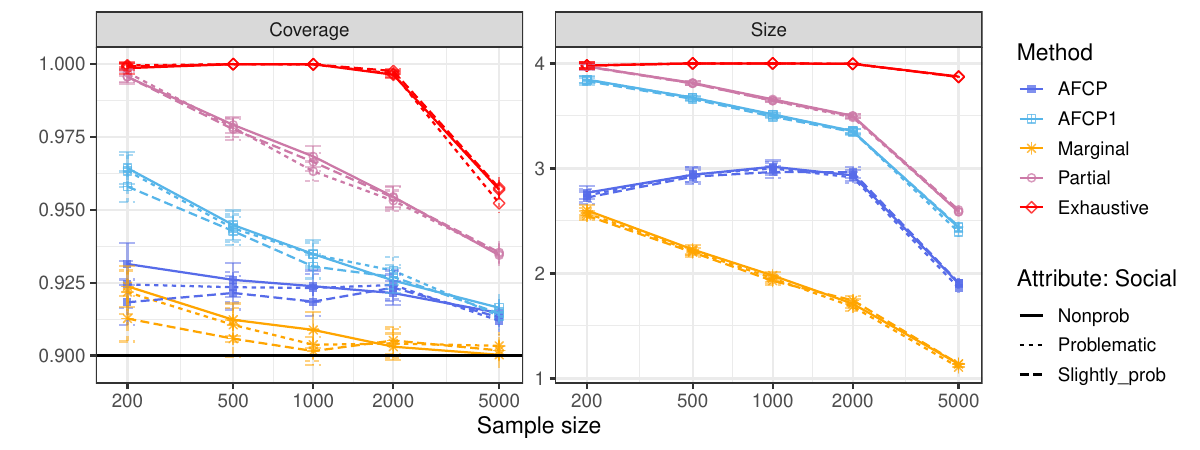}\vspace{-0.5cm}
    \caption{Coverage and size of prediction sets constructed with different methods for groups formed by Social. All groups have similar performance, and none of them are subject to unfairness/undercoverage. See Table~\ref{tab:main_exp_mc_real_ndata_social_lc} for numerical details and standard errors.} 
    \label{fig:main_exp_mc_real_ndata_social_lc}
\end{figure*}

\begin{figure*}[!htb]
    \centering
    \includegraphics[width=0.8\linewidth]{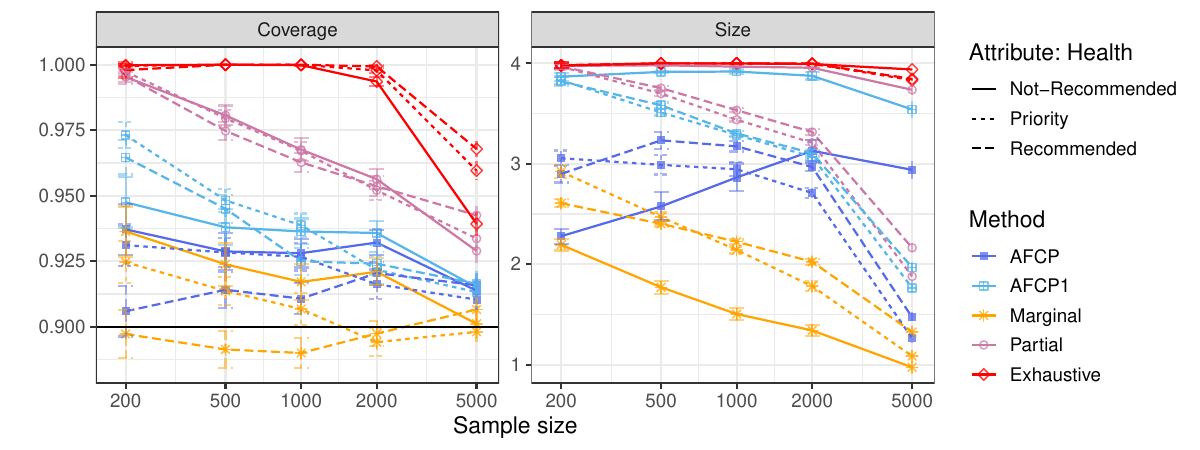}\vspace{-0.5cm}
    \caption{Coverage and size of prediction sets constructed with different methods for groups formed by Health. All groups have similar performances, and none of them are subject to unfairness/undercoverage. See Table~\ref{tab:main_exp_mc_real_ndata_health_lc} for numerical details and standard errors.} \label{fig:main_exp_mc_real_ndata_health_lc}
\end{figure*}

\begin{table}[!htb]
\centering
    \caption{Coverage and size of prediction sets constructed with different methods for groups formed by Parents' Occupation. Green numbers indicate low coverage and red numbers indicate the small size of prediction sets. See corresponding plots in Figure~\ref{fig:main_exp_mc_real_ndata_parent_lc}.}
  \label{tab:main_exp_mc_real_ndata_parent_lc}
\centering
\fontsize{6}{6}\selectfont
\begin{tabular}[t]{lrllllllllll}
\toprule
\multicolumn{2}{c}{ } & \multicolumn{2}{c}{AFCP} & \multicolumn{2}{c}{AFCP1} & \multicolumn{2}{c}{Marginal} & \multicolumn{2}{c}{Partial} & \multicolumn{2}{c}{Exhaustive} \\
\cmidrule(l{3pt}r{3pt}){3-4} \cmidrule(l{3pt}r{3pt}){5-6} \cmidrule(l{3pt}r{3pt}){7-8} \cmidrule(l{3pt}r{3pt}){9-10} \cmidrule(l{3pt}r{3pt}){11-12}
Group & \makecell{Sample\\ size} & Coverage & Size & Coverage & Size & Coverage & Size & Coverage & Size & Coverage & Size\\
\midrule
\addlinespace[0.3em]
\multicolumn{12}{l}{\textbf{Occupation 0}}\\
\hspace{1em}Occupation 0 & 200 & \makecell{ 0.934 \\ (0.004)} & \textcolor{red}{\makecell{ 2.740 \\ (0.029)}} & \makecell{ 0.960 \\ (0.003)} & \makecell{ 3.831 \\ (0.018)} & \makecell{ 0.932 \\ (0.004)} & \textcolor{red}{\makecell{ 2.569 \\ (0.024)}} & \makecell{ 0.996 \\ (0.001)} & \makecell{ 3.967 \\ (0.014)} & \makecell{ 1.000 \\ (0.000)} & \makecell{ 3.980 \\ (0.014)}\\
\hspace{1em}Occupation 0 & 500 & \makecell{ 0.921 \\ (0.003)} & \makecell{ 2.911 \\ (0.036)} & \makecell{ 0.936 \\ (0.002)} & \makecell{ 3.662 \\ (0.011)} & \makecell{ 0.913 \\ (0.003)} & \textcolor{red}{\makecell{ 2.177 \\ (0.022)}} & \makecell{ 0.973 \\ (0.002)} & \makecell{ 3.803 \\ (0.008)} & \makecell{ 1.000 \\ (0.000)} & \makecell{ 4.000 \\ (0.000)}\\
\hspace{1em}Occupation 0 & 1000 & \makecell{ 0.914 \\ (0.002)} & \makecell{ 2.937 \\ (0.030)} & \makecell{ 0.921 \\ (0.002)} & \makecell{ 3.502 \\ (0.009)} & \makecell{ 0.908 \\ (0.003)} & \textcolor{red}{\makecell{ 1.888 \\ (0.017)}} & \makecell{ 0.958 \\ (0.002)} & \makecell{ 3.677 \\ (0.007)} & \makecell{ 1.000 \\ (0.000)} & \makecell{ 4.000 \\ (0.000)}\\
\hspace{1em}Occupation 0 & 2000 & \makecell{ 0.916 \\ (0.002)} & \makecell{ 2.845 \\ (0.028)} & \makecell{ 0.918 \\ (0.002)} & \makecell{ 3.356 \\ (0.012)} & \makecell{ 0.910 \\ (0.002)} & \textcolor{red}{\makecell{ 1.624 \\ (0.022)}} & \makecell{ 0.945 \\ (0.002)} & \makecell{ 3.518 \\ (0.008)} & \makecell{ 0.996 \\ (0.000)} & \makecell{ 3.995 \\ (0.000)}\\
\hspace{1em}Occupation 0 & 5000 & \makecell{ 0.910 \\ (0.002)} & \makecell{ 1.679 \\ (0.021)} & \makecell{ 0.911 \\ (0.002)} & \makecell{ 2.681 \\ (0.019)} & \makecell{ 0.907 \\ (0.002)} & \textcolor{red}{\makecell{ 1.009 \\ (0.004)}} & \makecell{ 0.927 \\ (0.002)} & \makecell{ 2.768 \\ (0.016)} & \makecell{ 0.948 \\ (0.002)} & \makecell{ 3.877 \\ (0.003)}\\
\addlinespace[0.3em]
\multicolumn{12}{l}{\textbf{Occupation 1}}\\
\hspace{1em}Occupation 1 & 200 & \makecell{ 0.739 \\ (0.012)} & \textcolor{red}{\makecell{ 2.850 \\ (0.034)}} & \makecell{ 0.977 \\ (0.005)} & \makecell{ 3.907 \\ (0.016)} & \makecell{ 0.692 \\ (0.010)} & \textcolor{red}{\makecell{ 2.689 \\ (0.026)}} & \makecell{ 0.993 \\ (0.005)} & \makecell{ 3.980 \\ (0.014)} & \makecell{ 0.993 \\ (0.005)} & \makecell{ 3.980 \\ (0.014)}\\
\hspace{1em}Occupation 1 & 500 & \makecell{ 0.819 \\ (0.011)} & \makecell{ 3.031 \\ (0.036)} & \makecell{ 0.971 \\ (0.003)} & \makecell{ 3.848 \\ (0.011)} & \makecell{ 0.630 \\ (0.011)} & \textcolor{red}{\makecell{ 2.334 \\ (0.022)}} & \makecell{ 1.000 \\ (0.000)} & \makecell{ 4.000 \\ (0.000)} & \makecell{ 1.000 \\ (0.000)} & \makecell{ 4.000 \\ (0.000)}\\
\hspace{1em}Occupation 1 & 1000 & \makecell{ 0.844 \\ (0.010)} & \makecell{ 3.159 \\ (0.032)} & \makecell{ 0.947 \\ (0.005)} & \makecell{ 3.699 \\ (0.015)} & \makecell{ 0.583 \\ (0.010)} & \textcolor{red}{\makecell{ 2.130 \\ (0.019)}} & \makecell{ 1.000 \\ (0.000)} & \makecell{ 3.999 \\ (0.001)} & \makecell{ 1.000 \\ (0.000)} & \makecell{ 4.000 \\ (0.000)}\\
\hspace{1em}Occupation 1 & 2000 & \makecell{ 0.868 \\ (0.008)} & \makecell{ 3.257 \\ (0.019)} & \makecell{ 0.917 \\ (0.006)} & \makecell{ 3.516 \\ (0.016)} & \makecell{ 0.575 \\ (0.010)} & \makecell{ 2.034 \\ (0.019)} & \makecell{ 0.994 \\ (0.002)} & \makecell{ 3.982 \\ (0.004)} & \makecell{ 1.000 \\ (0.000)} & \makecell{ 4.000 \\ (0.000)}\\
\hspace{1em}Occupation 1 & 5000 & \makecell{ 0.835 \\ (0.008)} & \makecell{ 3.114 \\ (0.027)} & \makecell{ 0.859 \\ (0.007)} & \makecell{ 3.185 \\ (0.025)} & \makecell{ 0.679 \\ (0.009)} & \makecell{ 2.213 \\ (0.019)} & \makecell{ 0.966 \\ (0.004)} & \makecell{ 3.885 \\ (0.010)} & \makecell{ 1.000 \\ (0.000)} & \makecell{ 4.000 \\ (0.000)}\\
\addlinespace[0.3em]
\multicolumn{12}{l}{\textbf{Occupation 2}}\\
\hspace{1em}Occupation 2 & 200 & \makecell{ 0.938 \\ (0.004)} & \textcolor{red}{\makecell{ 2.739 \\ (0.031)}} & \makecell{ 0.962 \\ (0.003)} & \makecell{ 3.841 \\ (0.018)} & \makecell{ 0.934 \\ (0.003)} & \textcolor{red}{\makecell{ 2.569 \\ (0.024)}} & \makecell{ 0.996 \\ (0.001)} & \makecell{ 3.973 \\ (0.014)} & \makecell{ 0.999 \\ (0.001)} & \makecell{ 3.980 \\ (0.014)}\\
\hspace{1em}Occupation 2 & 500 & \makecell{ 0.938 \\ (0.002)} & \makecell{ 2.947 \\ (0.033)} & \makecell{ 0.949 \\ (0.002)} & \makecell{ 3.658 \\ (0.012)} & \makecell{ 0.939 \\ (0.002)} & \textcolor{red}{\makecell{ 2.237 \\ (0.017)}} & \makecell{ 0.981 \\ (0.001)} & \makecell{ 3.797 \\ (0.011)} & \makecell{ 1.000 \\ (0.000)} & \makecell{ 4.000 \\ (0.000)}\\
\hspace{1em}Occupation 2 & 1000 & \makecell{ 0.939 \\ (0.002)} & \makecell{ 3.030 \\ (0.028)} & \makecell{ 0.944 \\ (0.002)} & \makecell{ 3.486 \\ (0.007)} & \makecell{ 0.938 \\ (0.002)} & \textcolor{red}{\makecell{ 2.000 \\ (0.015)}} & \makecell{ 0.970 \\ (0.002)} & \makecell{ 3.581 \\ (0.007)} & \makecell{ 1.000 \\ (0.000)} & \makecell{ 4.000 \\ (0.000)}\\
\hspace{1em}Occupation 2 & 2000 & \makecell{ 0.936 \\ (0.002)} & \makecell{ 2.990 \\ (0.026)} & \makecell{ 0.938 \\ (0.002)} & \makecell{ 3.329 \\ (0.010)} & \makecell{ 0.936 \\ (0.002)} & \textcolor{red}{\makecell{ 1.763 \\ (0.019)}} & \makecell{ 0.958 \\ (0.002)} & \makecell{ 3.417 \\ (0.008)} & \makecell{ 0.997 \\ (0.000)} & \makecell{ 3.996 \\ (0.000)}\\
\hspace{1em}Occupation 2 & 5000 & \makecell{ 0.925 \\ (0.002)} & \makecell{ 1.971 \\ (0.029)} & \makecell{ 0.926 \\ (0.002)} & \makecell{ 2.083 \\ (0.030)} & \makecell{ 0.922 \\ (0.002)} & \textcolor{red}{\makecell{ 1.123 \\ (0.007)}} & \makecell{ 0.939 \\ (0.002)} & \makecell{ 2.269 \\ (0.028)} & \makecell{ 0.958 \\ (0.002)} & \makecell{ 3.852 \\ (0.003)}\\
\bottomrule
\end{tabular}

\end{table}

\begin{table}[!htb]
\centering
    \caption{Coverage and size of prediction sets constructed with different methods for groups formed by Children. All groups have similar performance, and none of them are subject to unfairness/undercoverage. See corresponding plots in Figure~\ref{fig:main_exp_mc_real_ndata_children_lc}.}
  \label{tab:main_exp_mc_real_ndata_children_lc}

\centering
\fontsize{6}{6}\selectfont
\begin{tabular}[t]{lrllllllllll}
\toprule
\multicolumn{2}{c}{ } & \multicolumn{2}{c}{AFCP} & \multicolumn{2}{c}{AFCP1} & \multicolumn{2}{c}{Marginal} & \multicolumn{2}{c}{Partial} & \multicolumn{2}{c}{Exhaustive} \\
\cmidrule(l{3pt}r{3pt}){3-4} \cmidrule(l{3pt}r{3pt}){5-6} \cmidrule(l{3pt}r{3pt}){7-8} \cmidrule(l{3pt}r{3pt}){9-10} \cmidrule(l{3pt}r{3pt}){11-12}
Group & \makecell{Sample\\ size} & Coverage & Size & Coverage & Size & Coverage & Size & Coverage & Size & Coverage & Size\\
\midrule
\addlinespace[0.3em]
\multicolumn{12}{l}{\textbf{1}}\\
\hspace{1em}1 & 200 & \makecell{ 0.919 \\ (0.005)} & \textcolor{red}{\makecell{ 2.788 \\ (0.031)}} & \makecell{ 0.960 \\ (0.003)} & \makecell{ 3.844 \\ (0.017)} & \makecell{ 0.917 \\ (0.004)} & \textcolor{red}{\makecell{ 2.631 \\ (0.026)}} & \makecell{ 0.996 \\ (0.001)} & \makecell{ 3.972 \\ (0.014)} & \makecell{ 0.999 \\ (0.001)} & \makecell{ 3.980 \\ (0.014)}\\
\hspace{1em}1 & 500 & \makecell{ 0.916 \\ (0.003)} & \makecell{ 2.975 \\ (0.033)} & \makecell{ 0.938 \\ (0.003)} & \makecell{ 3.699 \\ (0.010)} & \makecell{ 0.900 \\ (0.004)} & \textcolor{red}{\makecell{ 2.271 \\ (0.018)}} & \makecell{ 0.975 \\ (0.002)} & \makecell{ 3.838 \\ (0.010)} & \makecell{ 1.000 \\ (0.000)} & \makecell{ 4.000 \\ (0.000)}\\
\hspace{1em}1 & 1000 & \makecell{ 0.912 \\ (0.003)} & \makecell{ 3.066 \\ (0.028)} & \makecell{ 0.923 \\ (0.003)} & \makecell{ 3.534 \\ (0.008)} & \makecell{ 0.896 \\ (0.003)} & \textcolor{red}{\makecell{ 2.038 \\ (0.016)}} & \makecell{ 0.961 \\ (0.002)} & \makecell{ 3.677 \\ (0.008)} & \makecell{ 1.000 \\ (0.000)} & \makecell{ 4.000 \\ (0.000)}\\
\hspace{1em}1 & 2000 & \makecell{ 0.911 \\ (0.002)} & \makecell{ 2.998 \\ (0.026)} & \makecell{ 0.916 \\ (0.002)} & \makecell{ 3.375 \\ (0.012)} & \makecell{ 0.891 \\ (0.003)} & \textcolor{red}{\makecell{ 1.779 \\ (0.021)}} & \makecell{ 0.949 \\ (0.002)} & \makecell{ 3.536 \\ (0.009)} & \makecell{ 0.998 \\ (0.000)} & \makecell{ 3.997 \\ (0.001)}\\
\hspace{1em}1 & 5000 & \makecell{ 0.901 \\ (0.003)} & \makecell{ 1.967 \\ (0.023)} & \makecell{ 0.903 \\ (0.003)} & \makecell{ 2.498 \\ (0.024)} & \makecell{ 0.887 \\ (0.003)} & \textcolor{red}{\makecell{ 1.176 \\ (0.007)}} & \makecell{ 0.928 \\ (0.002)} & \makecell{ 2.664 \\ (0.020)} & \makecell{ 0.954 \\ (0.002)} & \makecell{ 3.880 \\ (0.003)}\\
\addlinespace[0.3em]
\multicolumn{12}{l}{\textbf{2}}\\
\hspace{1em}2 & 200 & \makecell{ 0.933 \\ (0.004)} & \textcolor{red}{\makecell{ 2.782 \\ (0.030)}} & \makecell{ 0.967 \\ (0.003)} & \makecell{ 3.851 \\ (0.017)} & \makecell{ 0.927 \\ (0.004)} & \textcolor{red}{\makecell{ 2.613 \\ (0.024)}} & \makecell{ 0.998 \\ (0.001)} & \makecell{ 3.972 \\ (0.014)} & \makecell{ 0.999 \\ (0.001)} & \makecell{ 3.980 \\ (0.014)}\\
\hspace{1em}2 & 500 & \makecell{ 0.927 \\ (0.003)} & \makecell{ 2.966 \\ (0.034)} & \makecell{ 0.946 \\ (0.003)} & \makecell{ 3.686 \\ (0.010)} & \makecell{ 0.915 \\ (0.003)} & \textcolor{red}{\makecell{ 2.251 \\ (0.020)}} & \makecell{ 0.980 \\ (0.002)} & \makecell{ 3.824 \\ (0.008)} & \makecell{ 1.000 \\ (0.000)} & \makecell{ 4.000 \\ (0.000)}\\
\hspace{1em}2 & 1000 & \makecell{ 0.923 \\ (0.003)} & \makecell{ 3.052 \\ (0.029)} & \makecell{ 0.935 \\ (0.002)} & \makecell{ 3.536 \\ (0.007)} & \makecell{ 0.909 \\ (0.003)} & \textcolor{red}{\makecell{ 2.015 \\ (0.014)}} & \makecell{ 0.969 \\ (0.002)} & \makecell{ 3.675 \\ (0.007)} & \makecell{ 1.000 \\ (0.000)} & \makecell{ 4.000 \\ (0.000)}\\
\hspace{1em}2 & 2000 & \makecell{ 0.921 \\ (0.003)} & \makecell{ 2.972 \\ (0.026)} & \makecell{ 0.925 \\ (0.003)} & \makecell{ 3.364 \\ (0.011)} & \makecell{ 0.902 \\ (0.003)} & \textcolor{red}{\makecell{ 1.750 \\ (0.020)}} & \makecell{ 0.949 \\ (0.002)} & \makecell{ 3.506 \\ (0.007)} & \makecell{ 0.997 \\ (0.001)} & \makecell{ 3.996 \\ (0.001)}\\
\hspace{1em}2 & 5000 & \makecell{ 0.911 \\ (0.003)} & \makecell{ 1.895 \\ (0.024)} & \makecell{ 0.912 \\ (0.003)} & \makecell{ 2.425 \\ (0.023)} & \makecell{ 0.901 \\ (0.003)} & \textcolor{red}{\makecell{ 1.150 \\ (0.007)}} & \makecell{ 0.934 \\ (0.002)} & \makecell{ 2.589 \\ (0.022)} & \makecell{ 0.957 \\ (0.002)} & \makecell{ 3.876 \\ (0.004)}\\
\addlinespace[0.3em]
\multicolumn{12}{l}{\textbf{3}}\\
\hspace{1em}3 & 200 & \makecell{ 0.931 \\ (0.004)} & \textcolor{red}{\makecell{ 2.721 \\ (0.030)}} & \makecell{ 0.965 \\ (0.003)} & \makecell{ 3.837 \\ (0.017)} & \makecell{ 0.923 \\ (0.004)} & \textcolor{red}{\makecell{ 2.547 \\ (0.024)}} & \makecell{ 0.997 \\ (0.001)} & \makecell{ 3.971 \\ (0.015)} & \makecell{ 1.000 \\ (0.000)} & \makecell{ 3.980 \\ (0.014)}\\
\hspace{1em}3 & 500 & \makecell{ 0.930 \\ (0.003)} & \makecell{ 2.928 \\ (0.035)} & \makecell{ 0.948 \\ (0.003)} & \makecell{ 3.664 \\ (0.010)} & \makecell{ 0.920 \\ (0.003)} & \textcolor{red}{\makecell{ 2.191 \\ (0.020)}} & \makecell{ 0.981 \\ (0.002)} & \makecell{ 3.803 \\ (0.009)} & \makecell{ 1.000 \\ (0.000)} & \makecell{ 4.000 \\ (0.000)}\\
\hspace{1em}3 & 1000 & \makecell{ 0.932 \\ (0.002)} & \makecell{ 2.976 \\ (0.029)} & \makecell{ 0.944 \\ (0.002)} & \makecell{ 3.497 \\ (0.008)} & \makecell{ 0.916 \\ (0.003)} & \textcolor{red}{\makecell{ 1.931 \\ (0.019)}} & \makecell{ 0.970 \\ (0.002)} & \makecell{ 3.632 \\ (0.007)} & \makecell{ 1.000 \\ (0.000)} & \makecell{ 4.000 \\ (0.000)}\\
\hspace{1em}3 & 2000 & \makecell{ 0.933 \\ (0.002)} & \makecell{ 2.911 \\ (0.027)} & \makecell{ 0.937 \\ (0.002)} & \makecell{ 3.347 \\ (0.011)} & \makecell{ 0.915 \\ (0.002)} & \textcolor{red}{\makecell{ 1.680 \\ (0.021)}} & \makecell{ 0.961 \\ (0.002)} & \makecell{ 3.477 \\ (0.009)} & \makecell{ 0.996 \\ (0.001)} & \makecell{ 3.995 \\ (0.001)}\\
\hspace{1em}3 & 5000 & \makecell{ 0.923 \\ (0.002)} & \makecell{ 1.865 \\ (0.024)} & \makecell{ 0.924 \\ (0.002)} & \makecell{ 2.400 \\ (0.023)} & \makecell{ 0.911 \\ (0.003)} & \textcolor{red}{\makecell{ 1.090 \\ (0.007)}} & \makecell{ 0.939 \\ (0.002)} & \makecell{ 2.559 \\ (0.021)} & \makecell{ 0.955 \\ (0.002)} & \makecell{ 3.860 \\ (0.004)}\\
\addlinespace[0.3em]
\multicolumn{12}{l}{\textbf{more}}\\
\hspace{1em}more & 200 & \makecell{ 0.916 \\ (0.004)} & \textcolor{red}{\makecell{ 2.688 \\ (0.031)}} & \makecell{ 0.957 \\ (0.003)} & \makecell{ 3.830 \\ (0.017)} & \makecell{ 0.910 \\ (0.004)} & \textcolor{red}{\makecell{ 2.509 \\ (0.023)}} & \makecell{ 0.994 \\ (0.001)} & \makecell{ 3.967 \\ (0.014)} & \makecell{ 0.999 \\ (0.000)} & \makecell{ 3.980 \\ (0.014)}\\
\hspace{1em}more & 500 & \makecell{ 0.922 \\ (0.003)} & \makecell{ 2.860 \\ (0.038)} & \makecell{ 0.943 \\ (0.003)} & \makecell{ 3.629 \\ (0.010)} & \makecell{ 0.903 \\ (0.003)} & \textcolor{red}{\makecell{ 2.137 \\ (0.024)}} & \makecell{ 0.978 \\ (0.002)} & \makecell{ 3.779 \\ (0.009)} & \makecell{ 1.000 \\ (0.000)} & \makecell{ 4.000 \\ (0.000)}\\
\hspace{1em}more & 1000 & \makecell{ 0.919 \\ (0.003)} & \makecell{ 2.880 \\ (0.033)} & \makecell{ 0.931 \\ (0.003)} & \makecell{ 3.451 \\ (0.009)} & \makecell{ 0.898 \\ (0.003)} & \textcolor{red}{\makecell{ 1.833 \\ (0.020)}} & \makecell{ 0.963 \\ (0.002)} & \makecell{ 3.611 \\ (0.008)} & \makecell{ 1.000 \\ (0.000)} & \makecell{ 4.000 \\ (0.000)}\\
\hspace{1em}more & 2000 & \makecell{ 0.926 \\ (0.003)} & \makecell{ 2.865 \\ (0.026)} & \makecell{ 0.931 \\ (0.003)} & \makecell{ 3.324 \\ (0.012)} & \makecell{ 0.909 \\ (0.003)} & \textcolor{red}{\makecell{ 1.641 \\ (0.019)}} & \makecell{ 0.957 \\ (0.002)} & \makecell{ 3.461 \\ (0.009)} & \makecell{ 0.997 \\ (0.001)} & \makecell{ 3.996 \\ (0.001)}\\
\hspace{1em}more & 5000 & \makecell{ 0.917 \\ (0.003)} & \makecell{ 1.846 \\ (0.024)} & \makecell{ 0.919 \\ (0.003)} & \makecell{ 2.382 \\ (0.023)} & \makecell{ 0.907 \\ (0.003)} & \textcolor{red}{\makecell{ 1.093 \\ (0.006)}} & \makecell{ 0.938 \\ (0.002)} & \makecell{ 2.559 \\ (0.021)} & \makecell{ 0.956 \\ (0.002)} & \makecell{ 3.872 \\ (0.003)}\\
\bottomrule
\end{tabular}

\end{table}

\begin{table}[!htb]
\centering
    \caption{Coverage and size of prediction sets constructed with different methods for groups formed by Finance. All groups have similar performance, and none of them are subject to unfairness/undercoverage. See corresponding plots in Figure~\ref{fig:main_exp_mc_real_ndata_finance_lc}.}
  \label{tab:main_exp_mc_real_ndata_finance_lc}

\centering
\fontsize{6}{6}\selectfont
\begin{tabular}[t]{lrllllllllll}
\toprule
\multicolumn{2}{c}{ } & \multicolumn{2}{c}{AFCP} & \multicolumn{2}{c}{AFCP1} & \multicolumn{2}{c}{Marginal} & \multicolumn{2}{c}{Partial} & \multicolumn{2}{c}{Exhaustive} \\
\cmidrule(l{3pt}r{3pt}){3-4} \cmidrule(l{3pt}r{3pt}){5-6} \cmidrule(l{3pt}r{3pt}){7-8} \cmidrule(l{3pt}r{3pt}){9-10} \cmidrule(l{3pt}r{3pt}){11-12}
Group & \makecell{Sample\\ size} & Coverage & Size & Coverage & Size & Coverage & Size & Coverage & Size & Coverage & Size\\
\midrule
\addlinespace[0.3em]
\multicolumn{12}{l}{\textbf{Convenient}}\\
\hspace{1em}Convenient & 200 & \makecell{ 0.925 \\ (0.004)} & \textcolor{red}{\makecell{ 2.753 \\ (0.030)}} & \makecell{ 0.963 \\ (0.002)} & \makecell{ 3.843 \\ (0.017)} & \makecell{ 0.920 \\ (0.004)} & \textcolor{red}{\makecell{ 2.587 \\ (0.024)}} & \makecell{ 0.996 \\ (0.001)} & \makecell{ 3.972 \\ (0.014)} & \makecell{ 0.999 \\ (0.001)} & \makecell{ 3.980 \\ (0.014)}\\
\hspace{1em}Convenient & 500 & \makecell{ 0.927 \\ (0.002)} & \makecell{ 2.957 \\ (0.033)} & \makecell{ 0.948 \\ (0.002)} & \makecell{ 3.684 \\ (0.009)} & \makecell{ 0.913 \\ (0.002)} & \textcolor{red}{\makecell{ 2.240 \\ (0.018)}} & \makecell{ 0.980 \\ (0.001)} & \makecell{ 3.822 \\ (0.008)} & \makecell{ 1.000 \\ (0.000)} & \makecell{ 4.000 \\ (0.000)}\\
\hspace{1em}Convenient & 1000 & \makecell{ 0.923 \\ (0.002)} & \makecell{ 3.029 \\ (0.027)} & \makecell{ 0.934 \\ (0.002)} & \makecell{ 3.517 \\ (0.007)} & \makecell{ 0.905 \\ (0.002)} & \textcolor{red}{\makecell{ 1.993 \\ (0.015)}} & \makecell{ 0.966 \\ (0.001)} & \makecell{ 3.653 \\ (0.006)} & \makecell{ 1.000 \\ (0.000)} & \makecell{ 4.000 \\ (0.000)}\\
\hspace{1em}Convenient & 2000 & \makecell{ 0.927 \\ (0.002)} & \makecell{ 2.980 \\ (0.026)} & \makecell{ 0.931 \\ (0.002)} & \makecell{ 3.369 \\ (0.010)} & \makecell{ 0.909 \\ (0.002)} & \textcolor{red}{\makecell{ 1.757 \\ (0.020)}} & \makecell{ 0.958 \\ (0.002)} & \makecell{ 3.508 \\ (0.008)} & \makecell{ 0.997 \\ (0.000)} & \makecell{ 3.996 \\ (0.000)}\\
\hspace{1em}Convenient & 5000 & \makecell{ 0.915 \\ (0.002)} & \makecell{ 1.908 \\ (0.022)} & \makecell{ 0.917 \\ (0.002)} & \makecell{ 2.445 \\ (0.022)} & \makecell{ 0.908 \\ (0.002)} & \textcolor{red}{\makecell{ 1.143 \\ (0.006)}} & \makecell{ 0.939 \\ (0.002)} & \makecell{ 2.614 \\ (0.019)} & \makecell{ 0.956 \\ (0.002)} & \makecell{ 3.866 \\ (0.003)}\\
\addlinespace[0.3em]
\multicolumn{12}{l}{\textbf{Inconvenient}}\\
\hspace{1em}Inconvenient & 200 & \makecell{ 0.924 \\ (0.003)} & \textcolor{red}{\makecell{ 2.737 \\ (0.029)}} & \makecell{ 0.961 \\ (0.003)} & \makecell{ 3.839 \\ (0.017)} & \makecell{ 0.919 \\ (0.003)} & \textcolor{red}{\makecell{ 2.564 \\ (0.023)}} & \makecell{ 0.996 \\ (0.001)} & \makecell{ 3.969 \\ (0.014)} & \makecell{ 0.999 \\ (0.001)} & \makecell{ 3.980 \\ (0.014)}\\
\hspace{1em}Inconvenient & 500 & \makecell{ 0.921 \\ (0.002)} & \makecell{ 2.911 \\ (0.035)} & \makecell{ 0.940 \\ (0.002)} & \makecell{ 3.656 \\ (0.009)} & \makecell{ 0.907 \\ (0.003)} & \textcolor{red}{\makecell{ 2.188 \\ (0.021)}} & \makecell{ 0.976 \\ (0.001)} & \makecell{ 3.800 \\ (0.008)} & \makecell{ 1.000 \\ (0.000)} & \makecell{ 4.000 \\ (0.000)}\\
\hspace{1em}Inconvenient & 1000 & \makecell{ 0.921 \\ (0.002)} & \makecell{ 2.957 \\ (0.030)} & \makecell{ 0.932 \\ (0.002)} & \makecell{ 3.492 \\ (0.007)} & \makecell{ 0.904 \\ (0.003)} & \textcolor{red}{\makecell{ 1.915 \\ (0.016)}} & \makecell{ 0.966 \\ (0.002)} & \makecell{ 3.644 \\ (0.007)} & \makecell{ 1.000 \\ (0.000)} & \makecell{ 4.000 \\ (0.000)}\\
\hspace{1em}Inconvenient & 2000 & \makecell{ 0.919 \\ (0.002)} & \makecell{ 2.892 \\ (0.026)} & \makecell{ 0.923 \\ (0.002)} & \makecell{ 3.335 \\ (0.011)} & \makecell{ 0.899 \\ (0.002)} & \textcolor{red}{\makecell{ 1.666 \\ (0.020)}} & \makecell{ 0.950 \\ (0.001)} & \makecell{ 3.483 \\ (0.008)} & \makecell{ 0.997 \\ (0.000)} & \makecell{ 3.996 \\ (0.000)}\\
\hspace{1em}Inconvenient & 5000 & \makecell{ 0.911 \\ (0.002)} & \makecell{ 1.879 \\ (0.023)} & \makecell{ 0.913 \\ (0.002)} & \makecell{ 2.409 \\ (0.023)} & \makecell{ 0.896 \\ (0.002)} & \textcolor{red}{\makecell{ 1.112 \\ (0.005)}} & \makecell{ 0.931 \\ (0.002)} & \makecell{ 2.573 \\ (0.020)} & \makecell{ 0.955 \\ (0.001)} & \makecell{ 3.878 \\ (0.002)}\\
\bottomrule
\end{tabular}

\end{table}

\begin{table}[!htb]
\centering
    \caption{Coverage and size of prediction sets constructed with different methods for groups formed by Social. All groups have similar performance, and none of them are subject to unfairness/undercoverage. See corresponding plots in Figure~\ref{fig:main_exp_mc_real_ndata_social_lc}.}
  \label{tab:main_exp_mc_real_ndata_social_lc}

\centering
\fontsize{6}{6}\selectfont
\begin{tabular}[t]{lrllllllllll}
\toprule
\multicolumn{2}{c}{ } & \multicolumn{2}{c}{AFCP} & \multicolumn{2}{c}{AFCP1} & \multicolumn{2}{c}{Marginal} & \multicolumn{2}{c}{Partial} & \multicolumn{2}{c}{Exhaustive} \\
\cmidrule(l{3pt}r{3pt}){3-4} \cmidrule(l{3pt}r{3pt}){5-6} \cmidrule(l{3pt}r{3pt}){7-8} \cmidrule(l{3pt}r{3pt}){9-10} \cmidrule(l{3pt}r{3pt}){11-12}
Group & \makecell{Sample\\ size} & Coverage & Size & Coverage & Size & Coverage & Size & Coverage & Size & Coverage & Size\\
\midrule
\addlinespace[0.3em]
\multicolumn{12}{l}{\textbf{Nonprob}}\\
\hspace{1em}Nonprob & 200 & \makecell{ 0.931 \\ (0.004)} & \textcolor{red}{\makecell{ 2.768 \\ (0.031)}} & \makecell{ 0.964 \\ (0.003)} & \makecell{ 3.846 \\ (0.017)} & \makecell{ 0.924 \\ (0.004)} & \textcolor{red}{\makecell{ 2.601 \\ (0.025)}} & \makecell{ 0.996 \\ (0.001)} & \makecell{ 3.971 \\ (0.014)} & \makecell{ 0.999 \\ (0.001)} & \makecell{ 3.980 \\ (0.014)}\\
\hspace{1em}Nonprob & 500 & \makecell{ 0.926 \\ (0.003)} & \makecell{ 2.939 \\ (0.035)} & \makecell{ 0.945 \\ (0.003)} & \makecell{ 3.674 \\ (0.009)} & \makecell{ 0.912 \\ (0.003)} & \textcolor{red}{\makecell{ 2.228 \\ (0.020)}} & \makecell{ 0.979 \\ (0.001)} & \makecell{ 3.815 \\ (0.008)} & \makecell{ 1.000 \\ (0.000)} & \makecell{ 4.000 \\ (0.000)}\\
\hspace{1em}Nonprob & 1000 & \makecell{ 0.924 \\ (0.003)} & \makecell{ 3.016 \\ (0.029)} & \makecell{ 0.935 \\ (0.002)} & \makecell{ 3.515 \\ (0.008)} & \makecell{ 0.909 \\ (0.003)} & \textcolor{red}{\makecell{ 1.980 \\ (0.016)}} & \makecell{ 0.968 \\ (0.002)} & \makecell{ 3.658 \\ (0.007)} & \makecell{ 1.000 \\ (0.000)} & \makecell{ 4.000 \\ (0.000)}\\
\hspace{1em}Nonprob & 2000 & \makecell{ 0.921 \\ (0.002)} & \makecell{ 2.934 \\ (0.026)} & \makecell{ 0.926 \\ (0.002)} & \makecell{ 3.355 \\ (0.011)} & \makecell{ 0.903 \\ (0.002)} & \textcolor{red}{\makecell{ 1.711 \\ (0.021)}} & \makecell{ 0.954 \\ (0.002)} & \makecell{ 3.501 \\ (0.008)} & \makecell{ 0.996 \\ (0.001)} & \makecell{ 3.996 \\ (0.001)}\\
\hspace{1em}Nonprob & 5000 & \makecell{ 0.915 \\ (0.002)} & \makecell{ 1.904 \\ (0.024)} & \makecell{ 0.916 \\ (0.002)} & \makecell{ 2.437 \\ (0.023)} & \makecell{ 0.900 \\ (0.002)} & \textcolor{red}{\makecell{ 1.137 \\ (0.007)}} & \makecell{ 0.934 \\ (0.002)} & \makecell{ 2.585 \\ (0.021)} & \makecell{ 0.957 \\ (0.002)} & \makecell{ 3.875 \\ (0.003)}\\
\addlinespace[0.3em]
\multicolumn{12}{l}{\textbf{Problematic}}\\
\hspace{1em}Problematic & 200 & \makecell{ 0.924 \\ (0.004)} & \textcolor{red}{\makecell{ 2.742 \\ (0.030)}} & \makecell{ 0.963 \\ (0.003)} & \makecell{ 3.843 \\ (0.017)} & \makecell{ 0.922 \\ (0.004)} & \textcolor{red}{\makecell{ 2.573 \\ (0.023)}} & \makecell{ 0.997 \\ (0.001)} & \makecell{ 3.972 \\ (0.014)} & \makecell{ 1.000 \\ (0.000)} & \makecell{ 3.980 \\ (0.014)}\\
\hspace{1em}Problematic & 500 & \makecell{ 0.923 \\ (0.003)} & \makecell{ 2.941 \\ (0.034)} & \makecell{ 0.944 \\ (0.002)} & \makecell{ 3.674 \\ (0.009)} & \makecell{ 0.911 \\ (0.003)} & \textcolor{red}{\makecell{ 2.213 \\ (0.019)}} & \makecell{ 0.978 \\ (0.002)} & \makecell{ 3.810 \\ (0.007)} & \makecell{ 1.000 \\ (0.000)} & \makecell{ 4.000 \\ (0.000)}\\
\hspace{1em}Problematic & 1000 & \makecell{ 0.923 \\ (0.002)} & \makecell{ 2.999 \\ (0.029)} & \makecell{ 0.935 \\ (0.002)} & \makecell{ 3.508 \\ (0.007)} & \makecell{ 0.904 \\ (0.003)} & \textcolor{red}{\makecell{ 1.954 \\ (0.016)}} & \makecell{ 0.963 \\ (0.002)} & \makecell{ 3.644 \\ (0.007)} & \makecell{ 1.000 \\ (0.000)} & \makecell{ 4.000 \\ (0.000)}\\
\hspace{1em}Problematic & 2000 & \makecell{ 0.924 \\ (0.002)} & \makecell{ 2.914 \\ (0.026)} & \makecell{ 0.929 \\ (0.002)} & \makecell{ 3.355 \\ (0.011)} & \makecell{ 0.904 \\ (0.003)} & \textcolor{red}{\makecell{ 1.683 \\ (0.021)}} & \makecell{ 0.953 \\ (0.002)} & \makecell{ 3.486 \\ (0.008)} & \makecell{ 0.997 \\ (0.001)} & \makecell{ 3.996 \\ (0.001)}\\
\hspace{1em}Problematic & 5000 & \makecell{ 0.912 \\ (0.002)} & \makecell{ 1.863 \\ (0.022)} & \makecell{ 0.914 \\ (0.002)} & \makecell{ 2.394 \\ (0.022)} & \makecell{ 0.903 \\ (0.002)} & \textcolor{red}{\makecell{ 1.106 \\ (0.005)}} & \makecell{ 0.935 \\ (0.002)} & \makecell{ 2.588 \\ (0.020)} & \makecell{ 0.952 \\ (0.002)} & \makecell{ 3.871 \\ (0.003)}\\
\addlinespace[0.3em]
\multicolumn{12}{l}{\textbf{Slightly\_prob}}\\
\hspace{1em}Slightly\_prob & 200 & \makecell{ 0.918 \\ (0.004)} & \textcolor{red}{\makecell{ 2.722 \\ (0.029)}} & \makecell{ 0.958 \\ (0.003)} & \makecell{ 3.831 \\ (0.017)} & \makecell{ 0.913 \\ (0.004)} & \textcolor{red}{\makecell{ 2.550 \\ (0.022)}} & \makecell{ 0.996 \\ (0.001)} & \makecell{ 3.968 \\ (0.014)} & \makecell{ 0.999 \\ (0.001)} & \makecell{ 3.980 \\ (0.014)}\\
\hspace{1em}Slightly\_prob & 500 & \makecell{ 0.921 \\ (0.003)} & \makecell{ 2.921 \\ (0.034)} & \makecell{ 0.943 \\ (0.002)} & \makecell{ 3.662 \\ (0.009)} & \makecell{ 0.906 \\ (0.003)} & \textcolor{red}{\makecell{ 2.201 \\ (0.020)}} & \makecell{ 0.978 \\ (0.002)} & \makecell{ 3.809 \\ (0.009)} & \makecell{ 1.000 \\ (0.000)} & \makecell{ 4.000 \\ (0.000)}\\
\hspace{1em}Slightly\_prob & 1000 & \makecell{ 0.919 \\ (0.002)} & \makecell{ 2.966 \\ (0.030)} & \makecell{ 0.931 \\ (0.002)} & \makecell{ 3.492 \\ (0.007)} & \makecell{ 0.902 \\ (0.002)} & \textcolor{red}{\makecell{ 1.929 \\ (0.017)}} & \makecell{ 0.966 \\ (0.001)} & \makecell{ 3.646 \\ (0.007)} & \makecell{ 1.000 \\ (0.000)} & \makecell{ 4.000 \\ (0.000)}\\
\hspace{1em}Slightly\_prob & 2000 & \makecell{ 0.923 \\ (0.002)} & \makecell{ 2.963 \\ (0.025)} & \makecell{ 0.927 \\ (0.002)} & \makecell{ 3.347 \\ (0.012)} & \makecell{ 0.905 \\ (0.002)} & \textcolor{red}{\makecell{ 1.743 \\ (0.019)}} & \makecell{ 0.954 \\ (0.002)} & \makecell{ 3.498 \\ (0.008)} & \makecell{ 0.998 \\ (0.000)} & \makecell{ 3.997 \\ (0.000)}\\
\hspace{1em}Slightly\_prob & 5000 & \makecell{ 0.913 \\ (0.002)} & \makecell{ 1.912 \\ (0.023)} & \makecell{ 0.914 \\ (0.002)} & \makecell{ 2.448 \\ (0.023)} & \makecell{ 0.902 \\ (0.002)} & \textcolor{red}{\makecell{ 1.139 \\ (0.006)}} & \makecell{ 0.935 \\ (0.002)} & \makecell{ 2.605 \\ (0.019)} & \makecell{ 0.958 \\ (0.002)} & \makecell{ 3.871 \\ (0.004)}\\
\bottomrule
\end{tabular}

\end{table}

\begin{table}[!htb]
\centering
    \caption{Coverage and size of prediction sets constructed with different methods for groups formed by Health. All groups have similar performance, and none of them are subject to unfairness/undercoverage. See corresponding plots in Figure~\ref{fig:main_exp_mc_real_ndata_health_lc}.}
  \label{tab:main_exp_mc_real_ndata_health_lc}

\centering
\fontsize{6}{6}\selectfont
\begin{tabular}[t]{lrllllllllll}
\toprule
\multicolumn{2}{c}{ } & \multicolumn{2}{c}{AFCP} & \multicolumn{2}{c}{AFCP1} & \multicolumn{2}{c}{Marginal} & \multicolumn{2}{c}{Partial} & \multicolumn{2}{c}{Exhaustive} \\
\cmidrule(l{3pt}r{3pt}){3-4} \cmidrule(l{3pt}r{3pt}){5-6} \cmidrule(l{3pt}r{3pt}){7-8} \cmidrule(l{3pt}r{3pt}){9-10} \cmidrule(l{3pt}r{3pt}){11-12}
Group & \makecell{Sample\\ size} & Coverage & Size & Coverage & Size & Coverage & Size & Coverage & Size & Coverage & Size\\
\midrule
\addlinespace[0.3em]
\multicolumn{12}{l}{\textbf{Not-Recommended}}\\
\hspace{1em}Not-Recommended & 200 & \makecell{ 0.937 \\ (0.005)} & \textcolor{red}{\makecell{ 2.279 \\ (0.038)}} & \makecell{ 0.947 \\ (0.005)} & \makecell{ 3.868 \\ (0.020)} & \makecell{ 0.936 \\ (0.005)} & \textcolor{red}{\makecell{ 2.193 \\ (0.028)}} & \makecell{ 0.995 \\ (0.001)} & \makecell{ 3.975 \\ (0.014)} & \makecell{ 1.000 \\ (0.000)} & \makecell{ 3.980 \\ (0.014)}\\
\hspace{1em}Not-Recommended & 500 & \makecell{ 0.929 \\ (0.004)} & \makecell{ 2.576 \\ (0.071)} & \makecell{ 0.938 \\ (0.003)} & \makecell{ 3.915 \\ (0.005)} & \makecell{ 0.924 \\ (0.004)} & \textcolor{red}{\makecell{ 1.771 \\ (0.033)}} & \makecell{ 0.981 \\ (0.002)} & \makecell{ 3.981 \\ (0.002)} & \makecell{ 1.000 \\ (0.000)} & \makecell{ 4.000 \\ (0.000)}\\
\hspace{1em}Not-Recommended & 1000 & \makecell{ 0.928 \\ (0.003)} & \makecell{ 2.864 \\ (0.069)} & \makecell{ 0.936 \\ (0.002)} & \makecell{ 3.919 \\ (0.010)} & \makecell{ 0.917 \\ (0.003)} & \textcolor{red}{\makecell{ 1.504 \\ (0.031)}} & \makecell{ 0.968 \\ (0.002)} & \makecell{ 3.968 \\ (0.002)} & \makecell{ 1.000 \\ (0.000)} & \makecell{ 4.000 \\ (0.000)}\\
\hspace{1em}Not-Recommended & 2000 & \makecell{ 0.932 \\ (0.002)} & \makecell{ 3.129 \\ (0.054)} & \makecell{ 0.936 \\ (0.002)} & \makecell{ 3.876 \\ (0.018)} & \makecell{ 0.921 \\ (0.003)} & \textcolor{red}{\makecell{ 1.341 \\ (0.027)}} & \makecell{ 0.956 \\ (0.002)} & \makecell{ 3.956 \\ (0.002)} & \makecell{ 0.994 \\ (0.001)} & \makecell{ 3.994 \\ (0.001)}\\
\hspace{1em}Not-Recommended & 5000 & \makecell{ 0.914 \\ (0.002)} & \makecell{ 2.940 \\ (0.046)} & \makecell{ 0.915 \\ (0.002)} & \makecell{ 3.541 \\ (0.043)} & \makecell{ 0.901 \\ (0.003)} & \textcolor{red}{\makecell{ 0.973 \\ (0.005)}} & \makecell{ 0.929 \\ (0.002)} & \makecell{ 3.737 \\ (0.027)} & \makecell{ 0.939 \\ (0.002)} & \makecell{ 3.939 \\ (0.002)}\\
\addlinespace[0.3em]
\multicolumn{12}{l}{\textbf{Priority}}\\
\hspace{1em}Priority & 200 & \makecell{ 0.931 \\ (0.004)} & \makecell{ 3.055 \\ (0.038)} & \makecell{ 0.973 \\ (0.002)} & \makecell{ 3.833 \\ (0.018)} & \makecell{ 0.925 \\ (0.004)} & \makecell{ 2.922 \\ (0.032)} & \makecell{ 0.997 \\ (0.001)} & \makecell{ 3.966 \\ (0.014)} & \makecell{ 1.000 \\ (0.000)} & \makecell{ 3.980 \\ (0.014)}\\
\hspace{1em}Priority & 500 & \makecell{ 0.928 \\ (0.003)} & \makecell{ 2.989 \\ (0.048)} & \makecell{ 0.948 \\ (0.002)} & \makecell{ 3.511 \\ (0.015)} & \makecell{ 0.914 \\ (0.003)} & \makecell{ 2.470 \\ (0.022)} & \makecell{ 0.980 \\ (0.002)} & \makecell{ 3.699 \\ (0.012)} & \makecell{ 1.000 \\ (0.000)} & \makecell{ 4.000 \\ (0.000)}\\
\hspace{1em}Priority & 1000 & \makecell{ 0.927 \\ (0.003)} & \makecell{ 2.946 \\ (0.035)} & \makecell{ 0.939 \\ (0.002)} & \makecell{ 3.287 \\ (0.009)} & \makecell{ 0.907 \\ (0.003)} & \makecell{ 2.142 \\ (0.018)} & \makecell{ 0.967 \\ (0.002)} & \makecell{ 3.439 \\ (0.009)} & \makecell{ 1.000 \\ (0.000)} & \makecell{ 4.000 \\ (0.000)}\\
\hspace{1em}Priority & 2000 & \makecell{ 0.916 \\ (0.003)} & \makecell{ 2.709 \\ (0.024)} & \makecell{ 0.922 \\ (0.003)} & \makecell{ 3.069 \\ (0.013)} & \makecell{ 0.894 \\ (0.003)} & \makecell{ 1.781 \\ (0.023)} & \makecell{ 0.952 \\ (0.002)} & \makecell{ 3.209 \\ (0.012)} & \makecell{ 0.998 \\ (0.000)} & \makecell{ 3.996 \\ (0.001)}\\
\hspace{1em}Priority & 5000 & \makecell{ 0.910 \\ (0.002)} & \makecell{ 1.263 \\ (0.026)} & \makecell{ 0.913 \\ (0.002)} & \makecell{ 1.765 \\ (0.027)} & \makecell{ 0.898 \\ (0.002)} & \textcolor{red}{\makecell{ 1.085 \\ (0.005)}} & \makecell{ 0.934 \\ (0.002)} & \makecell{ 1.879 \\ (0.028)} & \makecell{ 0.960 \\ (0.002)} & \makecell{ 3.833 \\ (0.004)}\\
\addlinespace[0.3em]
\multicolumn{12}{l}{\textbf{Recommended}}\\
\hspace{1em}Recommended & 200 & \makecell{ 0.906 \\ (0.005)} & \makecell{ 2.899 \\ (0.043)} & \makecell{ 0.965 \\ (0.003)} & \makecell{ 3.820 \\ (0.020)} & \makecell{ 0.897 \\ (0.005)} & \textcolor{red}{\makecell{ 2.607 \\ (0.021)}} & \makecell{ 0.996 \\ (0.002)} & \makecell{ 3.970 \\ (0.014)} & \makecell{ 0.998 \\ (0.001)} & \makecell{ 3.980 \\ (0.014)}\\
\hspace{1em}Recommended & 500 & \makecell{ 0.914 \\ (0.003)} & \makecell{ 3.234 \\ (0.042)} & \makecell{ 0.945 \\ (0.003)} & \makecell{ 3.582 \\ (0.014)} & \makecell{ 0.891 \\ (0.003)} & \makecell{ 2.403 \\ (0.013)} & \makecell{ 0.975 \\ (0.002)} & \makecell{ 3.754 \\ (0.011)} & \makecell{ 1.000 \\ (0.000)} & \makecell{ 4.000 \\ (0.000)}\\
\hspace{1em}Recommended & 1000 & \makecell{ 0.911 \\ (0.003)} & \makecell{ 3.174 \\ (0.025)} & \makecell{ 0.925 \\ (0.003)} & \makecell{ 3.302 \\ (0.011)} & \makecell{ 0.890 \\ (0.003)} & \makecell{ 2.224 \\ (0.011)} & \makecell{ 0.963 \\ (0.002)} & \makecell{ 3.534 \\ (0.010)} & \makecell{ 1.000 \\ (0.000)} & \makecell{ 4.000 \\ (0.000)}\\
\hspace{1em}Recommended & 2000 & \makecell{ 0.920 \\ (0.002)} & \makecell{ 2.969 \\ (0.020)} & \makecell{ 0.924 \\ (0.002)} & \makecell{ 3.106 \\ (0.016)} & \makecell{ 0.897 \\ (0.002)} & \makecell{ 2.024 \\ (0.015)} & \makecell{ 0.954 \\ (0.002)} & \makecell{ 3.316 \\ (0.015)} & \makecell{ 0.999 \\ (0.000)} & \makecell{ 3.998 \\ (0.000)}\\
\hspace{1em}Recommended & 5000 & \makecell{ 0.916 \\ (0.002)} & \makecell{ 1.472 \\ (0.025)} & \makecell{ 0.916 \\ (0.002)} & \makecell{ 1.969 \\ (0.025)} & \makecell{ 0.907 \\ (0.002)} & \makecell{ 1.325 \\ (0.010)} & \makecell{ 0.943 \\ (0.002)} & \makecell{ 2.164 \\ (0.025)} & \makecell{ 0.968 \\ (0.001)} & \makecell{ 3.845 \\ (0.003)}\\
\bottomrule
\end{tabular}

\end{table}

\FloatBarrier

\subsubsection{COMPAS data}
\label{app:more_experiments_mc-compas}

We extend our experiments to investigate the effectiveness of AFCP using the Correctional Offender Management Profiling for Alternative Sanctions (COMPAS) dataset \citep{compas}. The COMPAS dataset is widely studied in the fairness literature for multi-class classification tasks, predicting the risk of recidivism across three categories: 'High', 'Medium', and 'Low'. Following the data preprocessing steps outlined in \citep{compas-github}, we exclude rows with missing or low-quality data and compute the length of stay in jail. Additionally, we merge the race groups Asian and Native American, both of which have few occurrences, into the 'Others' category.
After preprocessing, the dataset comprises 6,172 instances with five categorical features: charge degree of defendants (2 levels), race (4 levels), age category (3 levels), sex (2 levels), and score category of defendants (3 levels). We regard the first four features as potentially sensitive attributes.

Similar to the Nursery dataset case, we utilize the LabelEncoder function to numerically encode categorical features and outcome labels. To increase prediction difficulty and emphasize algorithmic bias, we introduce independent, uniformly distributed noise to the labels of samples identified as African-American. Additionally, we undersample the African-American group to 200 samples, while the Caucasian group contains 2,103 samples, Hispanic 509, and Others 385.

Figure~\ref{fig:main_exp_real_compas_ndata} summarizes the performance of all methods as a function of the total number of training and calibration data points, which range from 200 to 1000. Figure~\ref{fig:main_exp_real_compas_nrescalib} narrows the focus by analyzing the performance relative to the number of restricted calibration data points, as defined in Section~\ref{sec:prediction_set_construction}. The results are averaged over 500 randomly selected test points and 100 repeated experiments. In each experiment, 50\% of the samples are randomly assigned for training and the remaining 50\% for calibration. Once again, we conclude that our AFCP methods outperform the other benchmarks considered, resulting in more informative prediction sets with higher conditional coverage for the African-American subgroup.

Figure~\ref{fig:main_exp_real_compas_selfreq} shows the selection frequencies of the protected attribute Race as a function of sample size within the same experiment described in Figure~\ref{fig:main_exp_real_compas_ndata}. This plot demonstrates that both AFCP and AFCP 1 consistently select Race as the most sensitive attribute as the number of samples increases. Also, we report the prediction accuracy in Table~\ref{tab:main_exp_real_compas_ndata}. The results confirm that the African-American group is disproportionately affected by algorithmic bias, not only in terms of uncertainty estimates but also in prediction accuracy, as they experience significantly lower-than-average test accuracy.

\begin{figure}[!htb]
    \centering
    \includegraphics[width=0.95\linewidth]{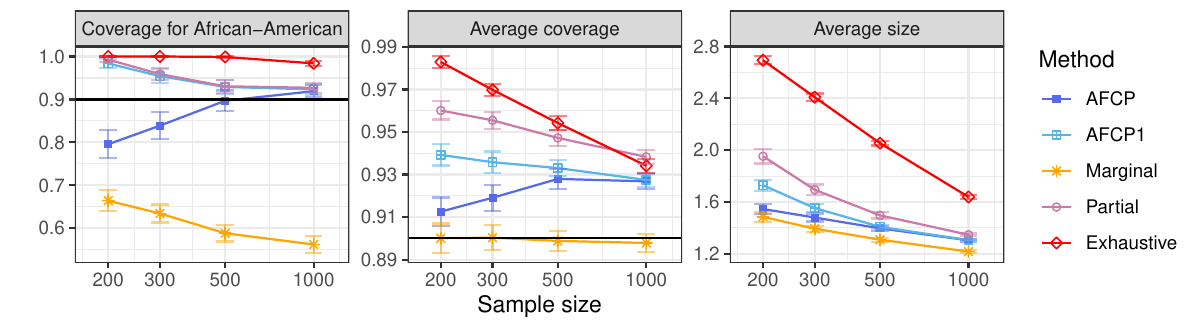}\vspace{-0.5cm}
    \caption{Performance of prediction sets constructed by different methods on the COMPAS data, as a function of the sample size. AFCP leads to more informative predictions with higher coverage conditional on the sensitive attribute, Race (shown for level African-American).} 
    \label{fig:main_exp_real_compas_ndata}
\end{figure}
\begin{figure}[!htb]
    \centering
    \includegraphics[width=0.45\linewidth]{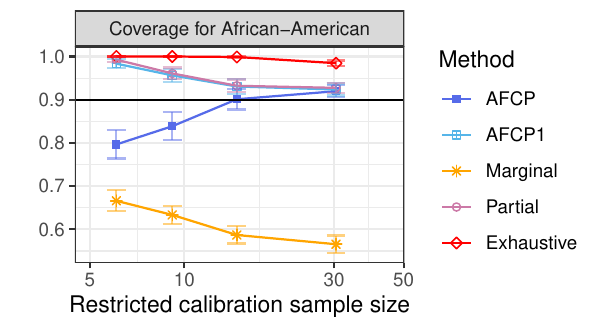}\vspace{-0.5cm}
    \caption{Conditional coverage for the African-American group using different methods on the COMPAS data, as a function of the sample sizes in the restricted calibration data.} 
    \label{fig:main_exp_real_compas_nrescalib}
\end{figure}
\begin{figure}[!htb]
    \centering
    \includegraphics[width=0.5\linewidth]{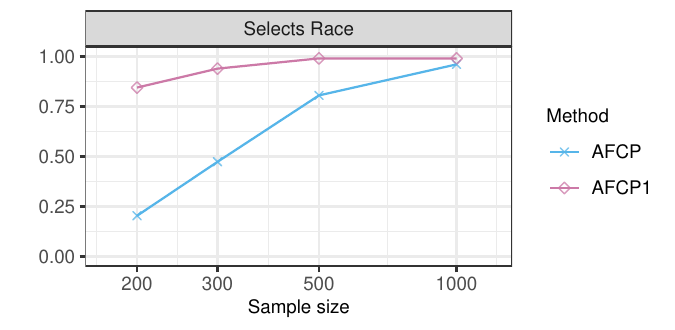}\vspace{-0.5cm}
    \caption{Selection frequency of the relevant attribute Race, using our AFCP method and its variation, AFCP1,
    in the COMPAS experiments of Figure~\ref{fig:main_exp_real_compas_ndata}. As the sample size increases, AFCP and AFCP1 become more consistent in selecting the most relevant attribute, Race.} 
    \label{fig:main_exp_real_compas_selfreq}
\end{figure}

\begin{table}[!htb]
\centering
    \caption{Average prediction accuracy and prediction accuracy for African-American group on the COMPAS data, as a function of the sample size.}
  \label{tab:main_exp_real_compas_ndata}
\centering
\fontsize{7}{7}\selectfont
\begin{tabular}[t]{lcc} 
\toprule
Sample Size & Accuracy for African-American & Average Accuracy \\
\midrule
200  & 0.355 (0.009) & 0.790 (0.003) \\
300  & 0.370 (0.009) & 0.813 (0.002) \\
500  & 0.376 (0.008) & 0.839 (0.002) \\
1000 & 0.375 (0.009) & 0.869 (0.002) \\
\bottomrule
\end{tabular}

\end{table}

\FloatBarrier

\subsection{AFCP for Outlier Detection}\label{app:more_experiments_od}

\subsubsection{Setup and Benchmarks}\label{app:more_experiments_od_benchmarks}
This section demonstrates the empirical performance of our AFCP extension for outlier detection. 
Firstly, we focus on the implementation described in Algorithm~\ref{alg:AFCP_od}, which selects at most one sensitive attribute. Similar to the multi-class classification cases, Our method is compared with three existing approaches, which utilize the same data, ML model, and conformity scores but compute conformal p-values with different guarantees. 
The first is the {\em marginal} benchmark, which constructs conformal p-value guaranteeing $\mathbb{P}(\hat{u}^{\text{marginal}}(Z_{n+1}) \leq \alpha) \leq \alpha$ by applying Algorithm~\ref{alg:conformal_pval_standard} without protected attributes. 
The second is the {\em exhaustive} equalized benchmark, which evaluates conformal p-values guaranteeing $\mathbb{P}(\hat{u}^{\text{exhaustive}}(Z_{n+1}) \leq \alpha \mid \phi(Z_{n+1}, [K])) \leq \alpha$ by applying Algorithm~\ref{alg:conformal_pval_standard} with all $K$ sensitive attributes simultaneously protected. 
The third is a {\em partial} equalized benchmark that separately applies Algorithm~\ref{alg:conformal_pval_standard} with each possible protected attribute $k \in [K]$, and then takes the maximum of all such p values.
This is an intuitive approach that can be easily verified to provide a coverage guarantee $\mathbb{P}(\hat{u}^{\text{partial}}(Z_{n+1}) \leq \alpha \mid \phi(Z_{n+1}, \{k\})) \leq \alpha \quad \forall k \in [K]$.

In addition, similar to the multiclass classification experiments, we consider AFCP1 - the AFCP implementation that always selects the most critical protected attribute without conducting the significance test. 

For all methods considered, the outlier detection model is based on a three-layer neural network, and the outputs from each layer are batch-normalized. The Adam optimizer and the BCEWithLogitsLoss loss function are used in the training process, with a learning rate set at 0.0001. The loss values demonstrate convergence after 100 epochs of training. For all methods, the miscoverage target level is set at $\alpha = 0.1$. Note that the training and testing data contain both inliers and outliers, while the calibration data only contains inliers.

We assess the performance of the methods based on the False Positive Rate (FPR) and the True Positive Rate (TPR). Ideally, the objective is to achieve a higher conditional FPR for groups experiencing unfairness, thereby maintaining the average FPR below the target threshold of 0.1 while simultaneously achieving a higher TPR. The results presented are averaged over 500 independent test points and 30 experiments. 

\subsubsection{Synthetic Data}
We employ the same data settings as in the multi-class classification example, designating Color as the sensitive attribute associated with the Blue group, which suffers from undercoverage. While Age Group and Region are also sensitive attributes, they are not subject to biases in this context. The outcome labels $Y$ have two possible values: $Y=0$ if the unit is properly treated and $Y=1$ otherwise. In our experiment, we treat $Y=1$ as an outlier and $Y=0$ as an inlier, with the data generation process described as:
\begin{align} \label{eq:synthetic-data-od}
  \mathbb{P}[Y \mid X] =
  \begin{cases} 
    \left(\frac{1}{2}, \frac{1}{2}\right), & \text{if Color = Blue,} \\
    \left( 1,0 \right) \text{ or } (0,1)\text{ w. equal prob},  & \text{if Color = Grey}.
  \end{cases}
\end{align}

Figure~\ref{fig:main_exp_od_sim_ndata} and Table~\ref{tab:main_exp_od_sim_ndata} illustrate the performance of conformal p-values generated by various methods on synthetic data, showcasing how performance varies with the number of samples in training and calibration datasets. Figures~\ref{fig:main_exp_od_sim_ndata_color}--\ref{fig:main_exp_od_sim_ndata_region} and Tables~\ref{tab:main_exp_od_sim_ndata_color}--\ref{tab:main_exp_od_sim_ndata_region} separately evaluate the False Positive Rate (FPR) and True Positive Rate (TPR) for each protected attribute. Additionally, Figure~\ref{fig:main_exp_od_sim_selfreq} explores how the severity of bias affecting groups impacts the selection frequency of our method at various levels. The severity of bias is controlled by the varying percentage of Blue samples, with higher levels indicating less samples and more significant biases in the Blue group.

\begin{figure*}[!htb]
    \centering
    \includegraphics[width=\linewidth]{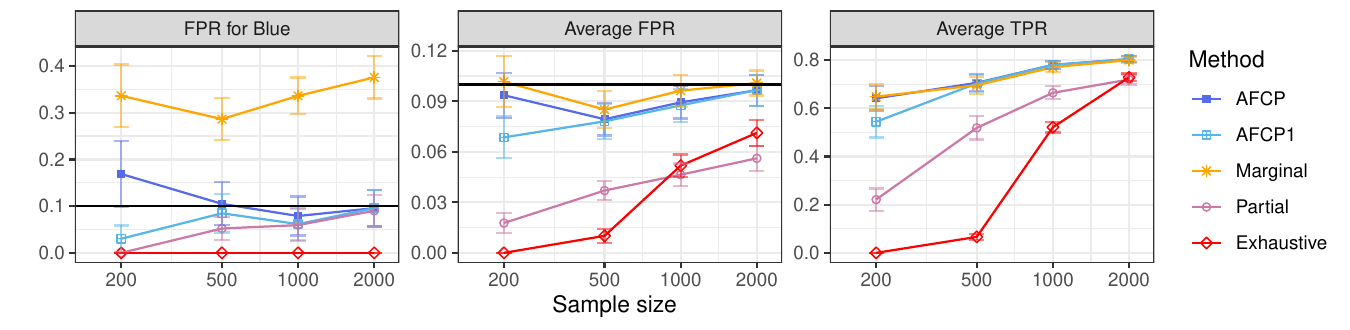}\vspace{-0.5cm}
    \caption{Performance on conformal p-values constructed by different methods on the synthetic data as a function of the total number of training and calibration points. Our method (AFCP) leads to higher TPR and lower FPR on the Blue group. The error bars indicate 2 standard errors.} 
    \label{fig:main_exp_od_sim_ndata}
\end{figure*}

\begin{table}[!htb]
\centering
    \caption{Average FPR and TPR of conformal p-values for all test samples constructed by different methods as a function of the training and calibration size. All methods control FPR under 0.1, while our AFCP and AFCP1 methods, along with the Marginal method, produce the highest TPR. Red numbers indicate high TPR. See corresponding plots in Figure~\ref{fig:main_exp_od_sim_ndata}.}
  \label{tab:main_exp_od_sim_ndata}

\centering
\fontsize{6}{6}\selectfont
\begin{tabular}[t]{rllllllllll}
\toprule
\multicolumn{1}{c}{ } & \multicolumn{2}{c}{AFCP} & \multicolumn{2}{c}{AFCP1} & \multicolumn{2}{c}{Marginal} & \multicolumn{2}{c}{Partial} & \multicolumn{2}{c}{Exhaustive} \\
\cmidrule(l{3pt}r{3pt}){2-3} \cmidrule(l{3pt}r{3pt}){4-5} \cmidrule(l{3pt}r{3pt}){6-7} \cmidrule(l{3pt}r{3pt}){8-9} \cmidrule(l{3pt}r{3pt}){10-11}
\makecell{Sample\\ Size} & FPR & TPR & FPR & TPR & FPR & TPR & FPR & TPR & FPR & TPR\\
\midrule
200 & \makecell{ 0.094 \\ (0.007)} & \textcolor{red}{\makecell{ 0.642 \\ (0.025)}} & \makecell{ 0.069 \\ (0.006)} & \textcolor{red}{\makecell{ 0.544 \\ (0.033)}} & \makecell{ 0.102 \\ (0.008)} & \textcolor{red}{\makecell{ 0.649 \\ (0.027)}} & \makecell{ 0.018 \\ (0.003)} & \makecell{ 0.221 \\ (0.023)} & \makecell{ 0.000 \\ (0.000)} & \makecell{ 0.000 \\ (0.000)}\\
500 & \makecell{ 0.079 \\ (0.005)} & \textcolor{red}{\makecell{ 0.706 \\ (0.018)}} & \makecell{ 0.078 \\ (0.005)} & \textcolor{red}{\makecell{ 0.703 \\ (0.018)}} & \makecell{ 0.085 \\ (0.005)} & \textcolor{red}{\makecell{ 0.696 \\ (0.017)}} & \makecell{ 0.037 \\ (0.003)} & \makecell{ 0.519 \\ (0.024)} & \makecell{ 0.010 \\ (0.002)} & \makecell{ 0.066 \\ (0.006)}\\
1000 & \makecell{ 0.089 \\ (0.005)} & \textcolor{red}{\makecell{ 0.780 \\ (0.008)}} & \makecell{ 0.088 \\ (0.005)} & \textcolor{red}{\makecell{ 0.780 \\ (0.008)}} & \makecell{ 0.096 \\ (0.005)} & \textcolor{red}{\makecell{ 0.770 \\ (0.010)}} & \makecell{ 0.046 \\ (0.003)} & \textcolor{red}{\makecell{ 0.664 \\ (0.014)}} & \makecell{ 0.052 \\ (0.003)} & \makecell{ 0.521 \\ (0.011)}\\
2000 & \makecell{ 0.097 \\ (0.005)} & \textcolor{red}{\makecell{ 0.804 \\ (0.006)}} & \makecell{ 0.097 \\ (0.005)} & \textcolor{red}{\makecell{ 0.804 \\ (0.006)}} & \makecell{ 0.101 \\ (0.004)} & \textcolor{red}{\makecell{ 0.800 \\ (0.006)}} & \makecell{ 0.056 \\ (0.004)} & \textcolor{red}{\makecell{ 0.720 \\ (0.010)}} & \makecell{ 0.071 \\ (0.004)} & \textcolor{red}{\makecell{ 0.728 \\ (0.008)}}\\
\bottomrule
\end{tabular}

\end{table}

\begin{figure*}[!htb]
    \centering
    \includegraphics[width=0.8\linewidth]{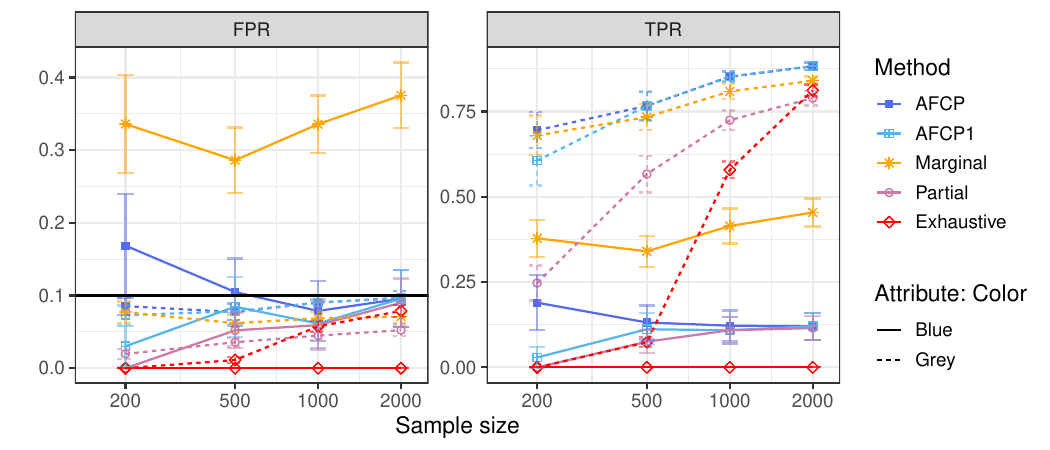}\vspace{-0.5cm}
    \caption{Performance on conformal p-values constructed by different methods for groups formed by Color. See Table~\ref{tab:main_exp_od_sim_ndata_color} for numerical details and standard errors.} 
    \label{fig:main_exp_od_sim_ndata_color}
\end{figure*}

\begin{figure*}[!htb]
    \centering
    \includegraphics[width=0.8\linewidth]{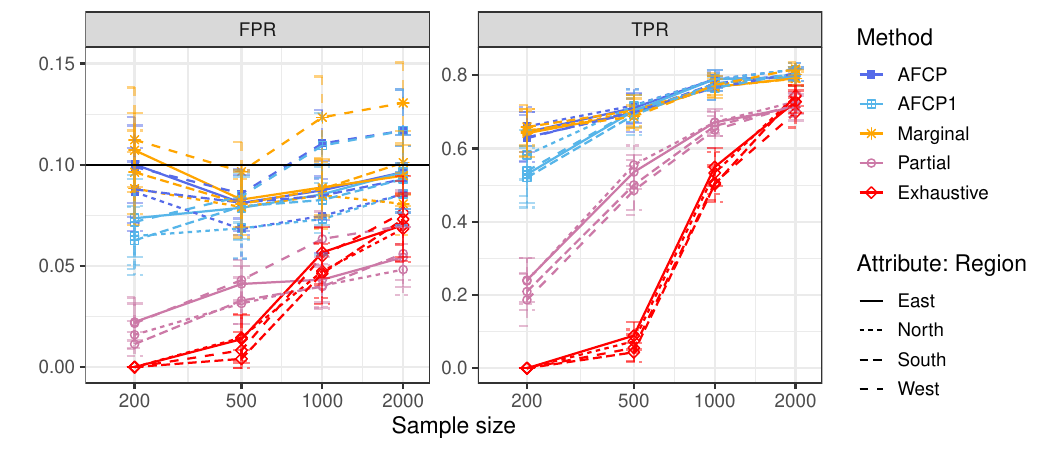}\vspace{-0.5cm}
    \caption{Performance on conformal p-values constructed by different methods for groups formed by Region. See Table~\ref{tab:main_exp_od_sim_ndata_region} for numerical details and standard errors.} 
    \label{fig:main_exp_od_sim_ndata_region}
\end{figure*}

\begin{figure*}[!htb]
    \centering
    \includegraphics[width=0.8\linewidth]{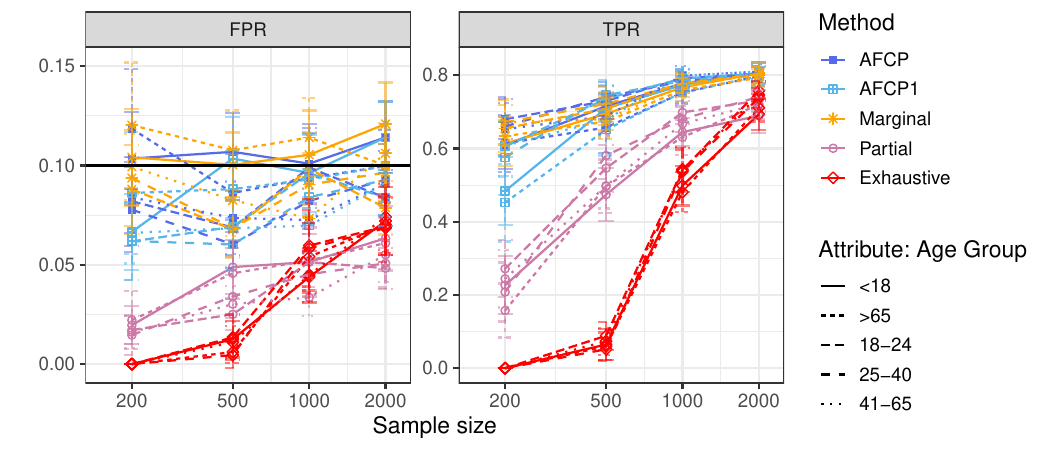}\vspace{-0.5cm}
    \caption{Performance on conformal p-values constructed by different methods for groups formed by Age Group. See Table~\ref{tab:main_exp_od_sim_ndata_agegroup} for numerical details and standard errors.} 
    \label{fig:main_exp_od_sim_ndata_agegroup}
\end{figure*}

\begin{table}[!htb]
\centering
    \caption{Coverage and size of prediction sets constructed with different methods for groups formed by Color. See corresponding plots in Figure~\ref{fig:main_exp_od_sim_ndata_color}.}
  \label{tab:main_exp_od_sim_ndata_color}

\centering
\fontsize{6}{6}\selectfont
\begin{tabular}[t]{lrllllllllll}
\toprule
\multicolumn{2}{c}{ } & \multicolumn{2}{c}{AFCP} & \multicolumn{2}{c}{AFCP1} & \multicolumn{2}{c}{Marginal} & \multicolumn{2}{c}{Partial} & \multicolumn{2}{c}{Exhaustive} \\
\cmidrule(l{3pt}r{3pt}){3-4} \cmidrule(l{3pt}r{3pt}){5-6} \cmidrule(l{3pt}r{3pt}){7-8} \cmidrule(l{3pt}r{3pt}){9-10} \cmidrule(l{3pt}r{3pt}){11-12}
Group & \makecell{Sample\\ Size} & FPR & TPR & FPR & TPR & FPR & TPR & FPR & TPR & FPR & TPR\\
\midrule
\addlinespace[0.3em]
\multicolumn{12}{l}{\textbf{Grey}}\\
\hspace{1em}Grey & 200 & \makecell{ 0.085 \\ (0.006)} & \textcolor{red}{\makecell{ 0.695 \\ (0.027)}} & \makecell{ 0.073 \\ (0.007)} & \textcolor{red}{\makecell{ 0.606 \\ (0.036)}} & \makecell{ 0.077 \\ (0.007)} & \textcolor{red}{\makecell{ 0.680 \\ (0.029)}} & \makecell{ 0.020 \\ (0.003)} & \makecell{ 0.247 \\ (0.026)} & \makecell{ 0.000 \\ (0.000)} & \makecell{ 0.000 \\ (0.000)}\\
\hspace{1em}Grey & 500 & \makecell{ 0.077 \\ (0.005)} & \textcolor{red}{\makecell{ 0.767 \\ (0.021)}} & \makecell{ 0.078 \\ (0.006)} & \textcolor{red}{\makecell{ 0.766 \\ (0.021)}} & \makecell{ 0.062 \\ (0.006)} & \textcolor{red}{\makecell{ 0.734 \\ (0.019)}} & \makecell{ 0.036 \\ (0.003)} & \makecell{ 0.567 \\ (0.027)} & \makecell{ 0.011 \\ (0.002)} & \makecell{ 0.074 \\ (0.007)}\\
\hspace{1em}Grey & 1000 & \makecell{ 0.090 \\ (0.005)} & \textcolor{red}{\makecell{ 0.852 \\ (0.008)}} & \makecell{ 0.090 \\ (0.005)} & \textcolor{red}{\makecell{ 0.853 \\ (0.008)}} & \makecell{ 0.069 \\ (0.004)} & \textcolor{red}{\makecell{ 0.809 \\ (0.011)}} & \makecell{ 0.045 \\ (0.003)} & \textcolor{red}{\makecell{ 0.725 \\ (0.014)}} & \makecell{ 0.058 \\ (0.004)} & \makecell{ 0.579 \\ (0.012)}\\
\hspace{1em}Grey & 2000 & \makecell{ 0.097 \\ (0.005)} & \textcolor{red}{\makecell{ 0.883 \\ (0.005)}} & \makecell{ 0.097 \\ (0.005)} & \textcolor{red}{\makecell{ 0.883 \\ (0.005)}} & \makecell{ 0.071 \\ (0.004)} & \textcolor{red}{\makecell{ 0.840 \\ (0.006)}} & \makecell{ 0.052 \\ (0.004)} & \textcolor{red}{\makecell{ 0.789 \\ (0.011)}} & \makecell{ 0.079 \\ (0.004)} & \textcolor{red}{\makecell{ 0.812 \\ (0.009)}}\\
\addlinespace[0.3em]
\multicolumn{12}{l}{\textbf{Blue}}\\
\hspace{1em}Blue & 200 & \textcolor{darkgreen}{\makecell{ 0.169 \\ (0.036)}} & \makecell{ 0.190 \\ (0.040)} & \makecell{ 0.030 \\ (0.014)} & \makecell{ 0.029 \\ (0.016)} & \textcolor{darkgreen}{\makecell{ 0.336 \\ (0.034)}} & \makecell{ 0.378 \\ (0.027)} & \makecell{ 0.000 \\ (0.000)} & \makecell{ 0.000 \\ (0.000)} & \makecell{ 0.000 \\ (0.000)} & \makecell{ 0.000 \\ (0.000)}\\
\hspace{1em}Blue & 500 & \makecell{ 0.105 \\ (0.023)} & \makecell{ 0.132 \\ (0.025)} & \makecell{ 0.085 \\ (0.021)} & \makecell{ 0.112 \\ (0.024)} & \textcolor{darkgreen}{\makecell{ 0.286 \\ (0.023)}} & \makecell{ 0.340 \\ (0.023)} & \makecell{ 0.052 \\ (0.012)} & \makecell{ 0.075 \\ (0.017)} & \makecell{ 0.000 \\ (0.000)} & \makecell{ 0.000 \\ (0.000)}\\
\hspace{1em}Blue & 1000 & \makecell{ 0.079 \\ (0.021)} & \makecell{ 0.122 \\ (0.022)} & \makecell{ 0.062 \\ (0.017)} & \makecell{ 0.109 \\ (0.019)} & \textcolor{darkgreen}{\makecell{ 0.336 \\ (0.020)}} & \makecell{ 0.414 \\ (0.026)} & \makecell{ 0.059 \\ (0.017)} & \makecell{ 0.109 \\ (0.019)} & \makecell{ 0.000 \\ (0.000)} & \makecell{ 0.000 \\ (0.000)}\\
\hspace{1em}Blue & 2000 & \makecell{ 0.096 \\ (0.020)} & \makecell{ 0.121 \\ (0.020)} & \makecell{ 0.096 \\ (0.020)} & \makecell{ 0.121 \\ (0.020)} & \textcolor{darkgreen}{\makecell{ 0.375 \\ (0.023)}} & \makecell{ 0.454 \\ (0.020)} & \makecell{ 0.090 \\ (0.017)} & \makecell{ 0.115 \\ (0.018)} & \makecell{ 0.000 \\ (0.000)} & \makecell{ 0.000 \\ (0.000)}\\
\bottomrule
\end{tabular}

\end{table}

\begin{table}[!htb]
\centering
    \caption{Coverage and size of prediction sets constructed with different methods for groups formed by Age Group. See corresponding plots in Figure~\ref{fig:main_exp_od_sim_ndata_agegroup}.}
  \label{tab:main_exp_od_sim_ndata_agegroup}

\centering
\fontsize{6}{6}\selectfont
\begin{tabular}[t]{lrllllllllll}
\toprule
\multicolumn{2}{c}{ } & \multicolumn{2}{c}{AFCP} & \multicolumn{2}{c}{AFCP1} & \multicolumn{2}{c}{Marginal} & \multicolumn{2}{c}{Partial} & \multicolumn{2}{c}{Exhaustive} \\
\cmidrule(l{3pt}r{3pt}){3-4} \cmidrule(l{3pt}r{3pt}){5-6} \cmidrule(l{3pt}r{3pt}){7-8} \cmidrule(l{3pt}r{3pt}){9-10} \cmidrule(l{3pt}r{3pt}){11-12}
Group & \makecell{Sample\\ Size} & FPR & TPR & FPR & TPR & FPR & TPR & FPR & TPR & FPR & TPR\\
\midrule
\addlinespace[0.3em]
\multicolumn{12}{l}{\textbf{West}}\\
\hspace{1em}West & 200 & \makecell{ 0.099 \\ (0.012)} & \textcolor{red}{\makecell{ 0.628 \\ (0.032)}} & \makecell{ 0.072 \\ (0.012)} & \textcolor{red}{\makecell{ 0.538 \\ (0.034)}} & \makecell{ 0.112 \\ (0.013)} & \textcolor{red}{\makecell{ 0.649 \\ (0.035)}} & \makecell{ 0.022 \\ (0.006)} & \makecell{ 0.187 \\ (0.036)} & \makecell{ 0.000 \\ (0.000)} & \makecell{ 0.000 \\ (0.000)}\\
\hspace{1em}West & 500 & \makecell{ 0.086 \\ (0.006)} & \textcolor{red}{\makecell{ 0.701 \\ (0.017)}} & \makecell{ 0.084 \\ (0.007)} & \textcolor{red}{\makecell{ 0.705 \\ (0.017)}} & \makecell{ 0.097 \\ (0.007)} & \textcolor{red}{\makecell{ 0.687 \\ (0.015)}} & \makecell{ 0.043 \\ (0.005)} & \makecell{ 0.485 \\ (0.034)} & \makecell{ 0.008 \\ (0.004)} & \makecell{ 0.055 \\ (0.018)}\\
\hspace{1em}West & 1000 & \textcolor{darkgreen}{\makecell{ 0.111 \\ (0.008)}} & \textcolor{red}{\makecell{ 0.766 \\ (0.017)}} & \textcolor{darkgreen}{\makecell{ 0.109 \\ (0.008)}} & \textcolor{red}{\makecell{ 0.765 \\ (0.016)}} & \textcolor{darkgreen}{\makecell{ 0.123 \\ (0.010)}} & \textcolor{red}{\makecell{ 0.773 \\ (0.016)}} & \makecell{ 0.063 \\ (0.007)} & \textcolor{red}{\makecell{ 0.650 \\ (0.025)}} & \makecell{ 0.055 \\ (0.007)} & \makecell{ 0.506 \\ (0.024)}\\
\hspace{1em}West & 2000 & \textcolor{darkgreen}{\makecell{ 0.117 \\ (0.010)}} & \textcolor{red}{\makecell{ 0.805 \\ (0.009)}} & \textcolor{darkgreen}{\makecell{ 0.117 \\ (0.010)}} & \textcolor{red}{\makecell{ 0.805 \\ (0.009)}} & \textcolor{darkgreen}{\makecell{ 0.131 \\ (0.010)}} & \textcolor{red}{\makecell{ 0.817 \\ (0.009)}} & \makecell{ 0.070 \\ (0.009)} & \textcolor{red}{\makecell{ 0.718 \\ (0.016)}} & \makecell{ 0.076 \\ (0.008)} & \textcolor{red}{\makecell{ 0.698 \\ (0.021)}}\\
\addlinespace[0.3em]
\multicolumn{12}{l}{\textbf{East}}\\
\hspace{1em}East & 200 & \makecell{ 0.100 \\ (0.010)} & \textcolor{red}{\makecell{ 0.644 \\ (0.028)}} & \makecell{ 0.074 \\ (0.010)} & \textcolor{red}{\makecell{ 0.529 \\ (0.039)}} & \makecell{ 0.107 \\ (0.009)} & \textcolor{red}{\makecell{ 0.642 \\ (0.033)}} & \makecell{ 0.022 \\ (0.005)} & \makecell{ 0.241 \\ (0.030)} & \makecell{ 0.000 \\ (0.000)} & \makecell{ 0.000 \\ (0.000)}\\
\hspace{1em}East & 500 & \makecell{ 0.081 \\ (0.007)} & \textcolor{red}{\makecell{ 0.711 \\ (0.020)}} & \makecell{ 0.079 \\ (0.008)} & \textcolor{red}{\makecell{ 0.703 \\ (0.022)}} & \makecell{ 0.083 \\ (0.007)} & \textcolor{red}{\makecell{ 0.696 \\ (0.020)}} & \makecell{ 0.041 \\ (0.004)} & \makecell{ 0.535 \\ (0.023)} & \makecell{ 0.014 \\ (0.006)} & \makecell{ 0.089 \\ (0.019)}\\
\hspace{1em}East & 1000 & \makecell{ 0.087 \\ (0.007)} & \textcolor{red}{\makecell{ 0.790 \\ (0.013)}} & \makecell{ 0.085 \\ (0.008)} & \textcolor{red}{\makecell{ 0.789 \\ (0.012)}} & \makecell{ 0.089 \\ (0.007)} & \textcolor{red}{\makecell{ 0.768 \\ (0.015)}} & \makecell{ 0.043 \\ (0.006)} & \textcolor{red}{\makecell{ 0.671 \\ (0.019)}} & \makecell{ 0.057 \\ (0.006)} & \makecell{ 0.549 \\ (0.026)}\\
\hspace{1em}East & 2000 & \makecell{ 0.096 \\ (0.009)} & \textcolor{red}{\makecell{ 0.794 \\ (0.012)}} & \makecell{ 0.096 \\ (0.009)} & \textcolor{red}{\makecell{ 0.794 \\ (0.012)}} & \makecell{ 0.095 \\ (0.008)} & \textcolor{red}{\makecell{ 0.791 \\ (0.011)}} & \makecell{ 0.054 \\ (0.007)} & \textcolor{red}{\makecell{ 0.713 \\ (0.018)}} & \makecell{ 0.070 \\ (0.008)} & \textcolor{red}{\makecell{ 0.727 \\ (0.016)}}\\
\addlinespace[0.3em]
\multicolumn{12}{l}{\textbf{North}}\\
\hspace{1em}North & 200 & \makecell{ 0.087 \\ (0.008)} & \textcolor{red}{\makecell{ 0.660 \\ (0.026)}} & \makecell{ 0.065 \\ (0.007)} & \textcolor{red}{\makecell{ 0.582 \\ (0.034)}} & \makecell{ 0.088 \\ (0.009)} & \textcolor{red}{\makecell{ 0.646 \\ (0.028)}} & \makecell{ 0.016 \\ (0.003)} & \makecell{ 0.238 \\ (0.031)} & \makecell{ 0.000 \\ (0.000)} & \makecell{ 0.000 \\ (0.000)}\\
\hspace{1em}North & 500 & \makecell{ 0.068 \\ (0.007)} & \textcolor{red}{\makecell{ 0.717 \\ (0.022)}} & \makecell{ 0.069 \\ (0.007)} & \textcolor{red}{\makecell{ 0.716 \\ (0.022)}} & \makecell{ 0.079 \\ (0.007)} & \textcolor{red}{\makecell{ 0.698 \\ (0.024)}} & \makecell{ 0.031 \\ (0.005)} & \makecell{ 0.555 \\ (0.027)} & \makecell{ 0.015 \\ (0.006)} & \makecell{ 0.074 \\ (0.016)}\\
\hspace{1em}North & 1000 & \makecell{ 0.075 \\ (0.009)} & \textcolor{red}{\makecell{ 0.789 \\ (0.009)}} & \makecell{ 0.073 \\ (0.009)} & \textcolor{red}{\makecell{ 0.789 \\ (0.009)}} & \makecell{ 0.085 \\ (0.008)} & \textcolor{red}{\makecell{ 0.773 \\ (0.013)}} & \makecell{ 0.040 \\ (0.005)} & \textcolor{red}{\makecell{ 0.672 \\ (0.015)}} & \makecell{ 0.047 \\ (0.007)} & \makecell{ 0.533 \\ (0.028)}\\
\hspace{1em}North & 2000 & \makecell{ 0.085 \\ (0.008)} & \textcolor{red}{\makecell{ 0.816 \\ (0.009)}} & \makecell{ 0.085 \\ (0.008)} & \textcolor{red}{\makecell{ 0.816 \\ (0.009)}} & \makecell{ 0.081 \\ (0.008)} & \textcolor{red}{\makecell{ 0.802 \\ (0.010)}} & \makecell{ 0.048 \\ (0.006)} & \textcolor{red}{\makecell{ 0.728 \\ (0.013)}} & \makecell{ 0.068 \\ (0.008)} & \textcolor{red}{\makecell{ 0.743 \\ (0.015)}}\\
\addlinespace[0.3em]
\multicolumn{12}{l}{\textbf{South}}\\
\hspace{1em}South & 200 & \makecell{ 0.088 \\ (0.010)} & \textcolor{red}{\makecell{ 0.630 \\ (0.034)}} & \makecell{ 0.063 \\ (0.009)} & \makecell{ 0.520 \\ (0.041)} & \makecell{ 0.096 \\ (0.010)} & \textcolor{red}{\makecell{ 0.658 \\ (0.025)}} & \makecell{ 0.011 \\ (0.003)} & \makecell{ 0.209 \\ (0.025)} & \makecell{ 0.000 \\ (0.000)} & \makecell{ 0.000 \\ (0.000)}\\
\hspace{1em}South & 500 & \makecell{ 0.081 \\ (0.008)} & \textcolor{red}{\makecell{ 0.697 \\ (0.026)}} & \makecell{ 0.079 \\ (0.008)} & \textcolor{red}{\makecell{ 0.691 \\ (0.027)}} & \makecell{ 0.081 \\ (0.008)} & \textcolor{red}{\makecell{ 0.704 \\ (0.022)}} & \makecell{ 0.033 \\ (0.006)} & \makecell{ 0.500 \\ (0.034)} & \makecell{ 0.004 \\ (0.002)} & \makecell{ 0.043 \\ (0.013)}\\
\hspace{1em}South & 1000 & \makecell{ 0.085 \\ (0.008)} & \textcolor{red}{\makecell{ 0.777 \\ (0.010)}} & \makecell{ 0.083 \\ (0.008)} & \textcolor{red}{\makecell{ 0.777 \\ (0.009)}} & \makecell{ 0.088 \\ (0.007)} & \textcolor{red}{\makecell{ 0.768 \\ (0.011)}} & \makecell{ 0.040 \\ (0.005)} & \textcolor{red}{\makecell{ 0.661 \\ (0.014)}} & \makecell{ 0.046 \\ (0.008)} & \makecell{ 0.502 \\ (0.024)}\\
\hspace{1em}South & 2000 & \makecell{ 0.093 \\ (0.008)} & \textcolor{red}{\makecell{ 0.803 \\ (0.009)}} & \makecell{ 0.093 \\ (0.008)} & \textcolor{red}{\makecell{ 0.803 \\ (0.009)}} & \makecell{ 0.101 \\ (0.008)} & \textcolor{red}{\makecell{ 0.790 \\ (0.009)}} & \makecell{ 0.056 \\ (0.007)} & \textcolor{red}{\makecell{ 0.718 \\ (0.012)}} & \makecell{ 0.073 \\ (0.011)} & \textcolor{red}{\makecell{ 0.744 \\ (0.014)}}\\
\bottomrule
\end{tabular}

\end{table}

\begin{table}[!htb]
\centering
    \caption{Coverage and size of prediction sets constructed with different methods for groups formed by Region. See corresponding plots in Figure~\ref{fig:main_exp_od_sim_ndata_region}.}
  \label{tab:main_exp_od_sim_ndata_region}

\centering
\fontsize{6}{6}\selectfont
\begin{tabular}[t]{lrllllllllll}
\toprule
\multicolumn{2}{c}{ } & \multicolumn{2}{c}{AFCP} & \multicolumn{2}{c}{AFCP1} & \multicolumn{2}{c}{Marginal} & \multicolumn{2}{c}{Partial} & \multicolumn{2}{c}{Exhaustive} \\
\cmidrule(l{3pt}r{3pt}){3-4} \cmidrule(l{3pt}r{3pt}){5-6} \cmidrule(l{3pt}r{3pt}){7-8} \cmidrule(l{3pt}r{3pt}){9-10} \cmidrule(l{3pt}r{3pt}){11-12}
Group & \makecell{Sample\\ Size} & FPR & TPR & FPR & TPR & FPR & TPR & FPR & TPR & FPR & TPR\\
\midrule
\addlinespace[0.3em]
\multicolumn{12}{l}{\textbf{<18}}\\
\hspace{1em}<18 & 200 & \makecell{ 0.104 \\ (0.012)} & \textcolor{red}{\makecell{ 0.608 \\ (0.032)}} & \makecell{ 0.067 \\ (0.010)} & \makecell{ 0.484 \\ (0.047)} & \makecell{ 0.104 \\ (0.012)} & \textcolor{red}{\makecell{ 0.613 \\ (0.031)}} & \makecell{ 0.020 \\ (0.005)} & \makecell{ 0.227 \\ (0.039)} & \makecell{ 0.000 \\ (0.000)} & \makecell{ 0.000 \\ (0.000)}\\
\hspace{1em}<18 & 500 & \makecell{ 0.107 \\ (0.010)} & \textcolor{red}{\makecell{ 0.715 \\ (0.017)}} & \makecell{ 0.104 \\ (0.010)} & \textcolor{red}{\makecell{ 0.710 \\ (0.018)}} & \makecell{ 0.100 \\ (0.008)} & \textcolor{red}{\makecell{ 0.695 \\ (0.022)}} & \makecell{ 0.049 \\ (0.007)} & \makecell{ 0.474 \\ (0.036)} & \makecell{ 0.013 \\ (0.005)} & \makecell{ 0.065 \\ (0.022)}\\
\hspace{1em}<18 & 1000 & \makecell{ 0.101 \\ (0.010)} & \textcolor{red}{\makecell{ 0.777 \\ (0.010)}} & \makecell{ 0.096 \\ (0.010)} & \textcolor{red}{\makecell{ 0.774 \\ (0.010)}} & \makecell{ 0.105 \\ (0.011)} & \textcolor{red}{\makecell{ 0.767 \\ (0.013)}} & \makecell{ 0.051 \\ (0.009)} & \textcolor{red}{\makecell{ 0.645 \\ (0.019)}} & \makecell{ 0.044 \\ (0.006)} & \makecell{ 0.499 \\ (0.027)}\\
\hspace{1em}<18 & 2000 & \textcolor{darkgreen}{\makecell{ 0.114 \\ (0.009)}} & \textcolor{red}{\makecell{ 0.806 \\ (0.014)}} & \textcolor{darkgreen}{\makecell{ 0.114 \\ (0.009)}} & \textcolor{red}{\makecell{ 0.806 \\ (0.014)}} & \textcolor{darkgreen}{\makecell{ 0.121 \\ (0.010)}} & \textcolor{red}{\makecell{ 0.806 \\ (0.014)}} & \makecell{ 0.063 \\ (0.008)} & \textcolor{red}{\makecell{ 0.689 \\ (0.024)}} & \makecell{ 0.072 \\ (0.009)} & \textcolor{red}{\makecell{ 0.693 \\ (0.021)}}\\
\addlinespace[0.3em]
\multicolumn{12}{l}{\textbf{18-24}}\\
\hspace{1em}18-24 & 200 & \makecell{ 0.082 \\ (0.011)} & \textcolor{red}{\makecell{ 0.680 \\ (0.027)}} & \makecell{ 0.062 \\ (0.010)} & \textcolor{red}{\makecell{ 0.605 \\ (0.041)}} & \makecell{ 0.088 \\ (0.011)} & \textcolor{red}{\makecell{ 0.655 \\ (0.032)}} & \makecell{ 0.017 \\ (0.004)} & \makecell{ 0.272 \\ (0.039)} & \makecell{ 0.000 \\ (0.000)} & \makecell{ 0.000 \\ (0.000)}\\
\hspace{1em}18-24 & 500 & \makecell{ 0.069 \\ (0.007)} & \textcolor{red}{\makecell{ 0.730 \\ (0.021)}} & \makecell{ 0.069 \\ (0.007)} & \textcolor{red}{\makecell{ 0.736 \\ (0.018)}} & \makecell{ 0.068 \\ (0.007)} & \textcolor{red}{\makecell{ 0.708 \\ (0.022)}} & \makecell{ 0.025 \\ (0.004)} & \makecell{ 0.546 \\ (0.032)} & \makecell{ 0.013 \\ (0.006)} & \makecell{ 0.089 \\ (0.019)}\\
\hspace{1em}18-24 & 1000 & \makecell{ 0.099 \\ (0.010)} & \textcolor{red}{\makecell{ 0.791 \\ (0.012)}} & \makecell{ 0.096 \\ (0.010)} & \textcolor{red}{\makecell{ 0.791 \\ (0.012)}} & \makecell{ 0.098 \\ (0.009)} & \textcolor{red}{\makecell{ 0.779 \\ (0.013)}} & \makecell{ 0.051 \\ (0.007)} & \textcolor{red}{\makecell{ 0.698 \\ (0.015)}} & \makecell{ 0.060 \\ (0.012)} & \makecell{ 0.541 \\ (0.030)}\\
\hspace{1em}18-24 & 2000 & \makecell{ 0.084 \\ (0.007)} & \textcolor{red}{\makecell{ 0.804 \\ (0.011)}} & \makecell{ 0.084 \\ (0.007)} & \textcolor{red}{\makecell{ 0.804 \\ (0.011)}} & \makecell{ 0.079 \\ (0.007)} & \textcolor{red}{\makecell{ 0.799 \\ (0.012)}} & \makecell{ 0.048 \\ (0.005)} & \textcolor{red}{\makecell{ 0.731 \\ (0.015)}} & \makecell{ 0.069 \\ (0.007)} & \textcolor{red}{\makecell{ 0.757 \\ (0.013)}}\\
\addlinespace[0.3em]
\multicolumn{12}{l}{\textbf{25-40}}\\
\hspace{1em}25-40 & 200 & \makecell{ 0.078 \\ (0.011)} & \textcolor{red}{\makecell{ 0.660 \\ (0.040)}} & \makecell{ 0.062 \\ (0.010)} & \textcolor{red}{\makecell{ 0.575 \\ (0.051)}} & \makecell{ 0.094 \\ (0.012)} & \textcolor{red}{\makecell{ 0.672 \\ (0.031)}} & \makecell{ 0.015 \\ (0.005)} & \makecell{ 0.208 \\ (0.040)} & \makecell{ 0.000 \\ (0.000)} & \makecell{ 0.000 \\ (0.000)}\\
\hspace{1em}25-40 & 500 & \makecell{ 0.060 \\ (0.006)} & \textcolor{red}{\makecell{ 0.741 \\ (0.020)}} & \makecell{ 0.060 \\ (0.007)} & \textcolor{red}{\makecell{ 0.745 \\ (0.021)}} & \makecell{ 0.068 \\ (0.007)} & \textcolor{red}{\makecell{ 0.722 \\ (0.023)}} & \makecell{ 0.034 \\ (0.005)} & \makecell{ 0.581 \\ (0.032)} & \makecell{ 0.005 \\ (0.003)} & \makecell{ 0.052 \\ (0.016)}\\
\hspace{1em}25-40 & 1000 & \makecell{ 0.082 \\ (0.007)} & \textcolor{red}{\makecell{ 0.784 \\ (0.013)}} & \makecell{ 0.084 \\ (0.007)} & \textcolor{red}{\makecell{ 0.784 \\ (0.013)}} & \makecell{ 0.090 \\ (0.008)} & \textcolor{red}{\makecell{ 0.777 \\ (0.013)}} & \makecell{ 0.045 \\ (0.006)} & \textcolor{red}{\makecell{ 0.680 \\ (0.017)}} & \makecell{ 0.058 \\ (0.010)} & \makecell{ 0.537 \\ (0.030)}\\
\hspace{1em}25-40 & 2000 & \makecell{ 0.093 \\ (0.009)} & \textcolor{red}{\makecell{ 0.804 \\ (0.011)}} & \makecell{ 0.093 \\ (0.009)} & \textcolor{red}{\makecell{ 0.804 \\ (0.011)}} & \makecell{ 0.096 \\ (0.007)} & \textcolor{red}{\makecell{ 0.803 \\ (0.011)}} & \makecell{ 0.051 \\ (0.006)} & \textcolor{red}{\makecell{ 0.741 \\ (0.012)}} & \makecell{ 0.069 \\ (0.009)} & \textcolor{red}{\makecell{ 0.738 \\ (0.013)}}\\
\addlinespace[0.3em]
\multicolumn{12}{l}{\textbf{41-65}}\\
\hspace{1em}41-65 & 200 & \makecell{ 0.084 \\ (0.009)} & \textcolor{red}{\makecell{ 0.653 \\ (0.036)}} & \makecell{ 0.066 \\ (0.008)} & \textcolor{red}{\makecell{ 0.607 \\ (0.041)}} & \makecell{ 0.099 \\ (0.008)} & \textcolor{red}{\makecell{ 0.672 \\ (0.031)}} & \makecell{ 0.016 \\ (0.004)} & \makecell{ 0.244 \\ (0.039)} & \makecell{ 0.000 \\ (0.000)} & \makecell{ 0.000 \\ (0.000)}\\
\hspace{1em}41-65 & 500 & \makecell{ 0.074 \\ (0.009)} & \textcolor{red}{\makecell{ 0.688 \\ (0.025)}} & \makecell{ 0.068 \\ (0.010)} & \textcolor{red}{\makecell{ 0.679 \\ (0.025)}} & \makecell{ 0.083 \\ (0.009)} & \textcolor{red}{\makecell{ 0.680 \\ (0.023)}} & \makecell{ 0.030 \\ (0.004)} & \makecell{ 0.498 \\ (0.031)} & \makecell{ 0.011 \\ (0.004)} & \makecell{ 0.074 \\ (0.018)}\\
\hspace{1em}41-65 & 1000 & \makecell{ 0.073 \\ (0.008)} & \textcolor{red}{\makecell{ 0.798 \\ (0.013)}} & \makecell{ 0.070 \\ (0.007)} & \textcolor{red}{\makecell{ 0.798 \\ (0.013)}} & \makecell{ 0.073 \\ (0.008)} & \textcolor{red}{\makecell{ 0.771 \\ (0.016)}} & \makecell{ 0.034 \\ (0.005)} & \textcolor{red}{\makecell{ 0.667 \\ (0.024)}} & \makecell{ 0.045 \\ (0.007)} & \makecell{ 0.542 \\ (0.031)}\\
\hspace{1em}41-65 & 2000 & \makecell{ 0.091 \\ (0.008)} & \textcolor{red}{\makecell{ 0.810 \\ (0.010)}} & \makecell{ 0.091 \\ (0.008)} & \textcolor{red}{\makecell{ 0.810 \\ (0.010)}} & \makecell{ 0.106 \\ (0.007)} & \textcolor{red}{\makecell{ 0.801 \\ (0.013)}} & \makecell{ 0.055 \\ (0.007)} & \textcolor{red}{\makecell{ 0.721 \\ (0.016)}} & \makecell{ 0.074 \\ (0.008)} & \textcolor{red}{\makecell{ 0.745 \\ (0.018)}}\\
\addlinespace[0.3em]
\multicolumn{12}{l}{\textbf{>65}}\\
\hspace{1em}>65 & 200 & \textcolor{darkgreen}{\makecell{ 0.118 \\ (0.015)}} & \textcolor{red}{\makecell{ 0.614 \\ (0.039)}} & \makecell{ 0.086 \\ (0.015)} & \makecell{ 0.452 \\ (0.053)} & \textcolor{darkgreen}{\makecell{ 0.120 \\ (0.016)}} & \textcolor{red}{\makecell{ 0.633 \\ (0.026)}} & \makecell{ 0.022 \\ (0.007)} & \makecell{ 0.158 \\ (0.037)} & \makecell{ 0.000 \\ (0.000)} & \makecell{ 0.000 \\ (0.000)}\\
\hspace{1em}>65 & 500 & \makecell{ 0.086 \\ (0.010)} & \textcolor{red}{\makecell{ 0.657 \\ (0.034)}} & \makecell{ 0.088 \\ (0.010)} & \textcolor{red}{\makecell{ 0.651 \\ (0.035)}} & \makecell{ 0.108 \\ (0.010)} & \textcolor{red}{\makecell{ 0.673 \\ (0.024)}} & \makecell{ 0.046 \\ (0.006)} & \makecell{ 0.497 \\ (0.036)} & \makecell{ 0.006 \\ (0.003)} & \makecell{ 0.060 \\ (0.019)}\\
\hspace{1em}>65 & 1000 & \makecell{ 0.094 \\ (0.011)} & \textcolor{red}{\makecell{ 0.752 \\ (0.015)}} & \makecell{ 0.093 \\ (0.011)} & \textcolor{red}{\makecell{ 0.754 \\ (0.015)}} & \textcolor{darkgreen}{\makecell{ 0.114 \\ (0.010)}} & \textcolor{red}{\makecell{ 0.759 \\ (0.016)}} & \makecell{ 0.052 \\ (0.007)} & \makecell{ 0.631 \\ (0.024)} & \makecell{ 0.054 \\ (0.008)} & \makecell{ 0.481 \\ (0.027)}\\
\hspace{1em}>65 & 2000 & \makecell{ 0.100 \\ (0.009)} & \textcolor{red}{\makecell{ 0.797 \\ (0.013)}} & \makecell{ 0.100 \\ (0.009)} & \textcolor{red}{\makecell{ 0.797 \\ (0.013)}} & \makecell{ 0.100 \\ (0.009)} & \textcolor{red}{\makecell{ 0.793 \\ (0.012)}} & \makecell{ 0.061 \\ (0.006)} & \textcolor{red}{\makecell{ 0.720 \\ (0.020)}} & \makecell{ 0.070 \\ (0.007)} & \textcolor{red}{\makecell{ 0.711 \\ (0.019)}}\\
\bottomrule
\end{tabular}

\end{table}

\begin{figure*}[!htb]
    \centering
    \includegraphics[width=\linewidth]{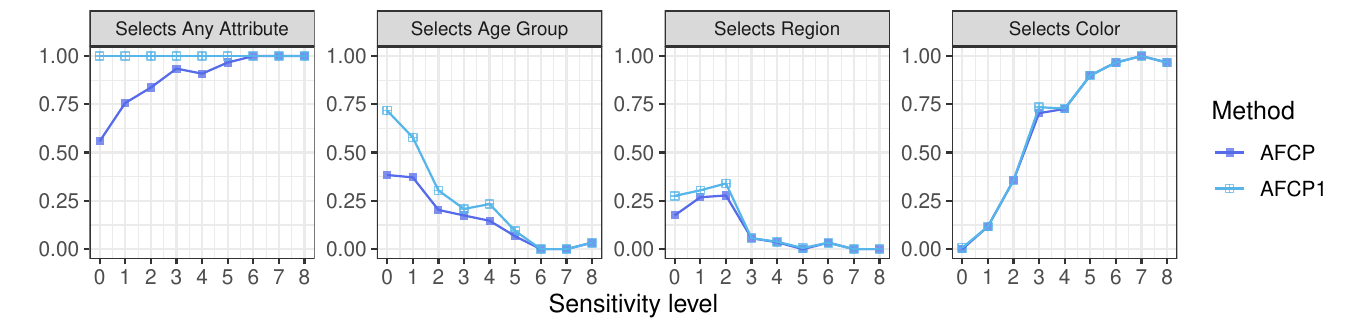}\vspace{-0.5cm}
    \caption{Selection frequencies of each protected attribute as a function of severity level of bias. } 
    \label{fig:main_exp_od_sim_selfreq}
\end{figure*}

\FloatBarrier
\subsubsection{Adult Income Data}
We apply our method to the open-domain Adult Income dataset \citep{misc_adult_2}, a widely utilized resource in fairness studies. In this dataset, individuals with an income exceeding \$50,000 are treated as outliers, while those with an income of \$50,000 or less are considered inliers. All categorical variables in this dataset are treated as sensitive attributes, with levels that have small sample sizes grouped together during the pre-processing stages. After pre-processing, the sensitive attributes and their associated levels are as follows: Native-country (United-States, Others); Education (Bachelors, Some-college, HS-grad, Others); Work Class (Private, Non-private); Marital-status (Divorced, Married-civ-spouse, Never-married, Others); Occupation (Adm-clerical, Craft-repair, Other-service, Sales, Exec-managerial, Prof-specialty, Others); Relationship (Own-child, Husband, Not-in-family, Unmarried, Others); Race (White, Others); and Sex (female, male). 

This section evaluates and compares the performance of AFCP, AFCP1, and AFCP+, which is the AFCP implementation capable of selecting multiple protected attributes, along with other benchmark methods described in Appendix~\ref{app:more_experiments_od_benchmarks}. All results presented in this section average over 500 test points and 30 independent experiments.

For this dataset, the groups suffering from algorithmic biases are unknown. Figure~\ref{fig:main_exp_od_main_real_ndata} evaluates the performance of different methods on several groups that are observed to exhibit higher FPR using the Marginal method. In all such cases, our AFCP, AFPC1, and AFCP+ methods effectively identify and correct for the protected attributes corresponding to at least one group suffering from significantly higher FPR.

Figure~\ref{fig:main_exp_od_real_ndata} and Table~\ref{tab:main_exp_od_real_ndata} present the average FPR and TPR of conformal p-values computed using different methods across all test samples. On average, all methods can control the FPR below the target level of 0.1, while our AFCP methods generally achieve higher TPR.

In addition, Figures~\ref{fig:main_exp_od_real_ndata_workclass}--\ref{fig:main_exp_od_real_ndata_country} and Table~\ref{tab:main_exp_od_real_ndata_workclass}--\ref{tab:main_exp_od_real_ndata_country} separately assess the FPR and TPR of conformal p-values for each protected attribute.

\begin{figure*}[!htb]
    \centering
    \includegraphics[width=\linewidth]{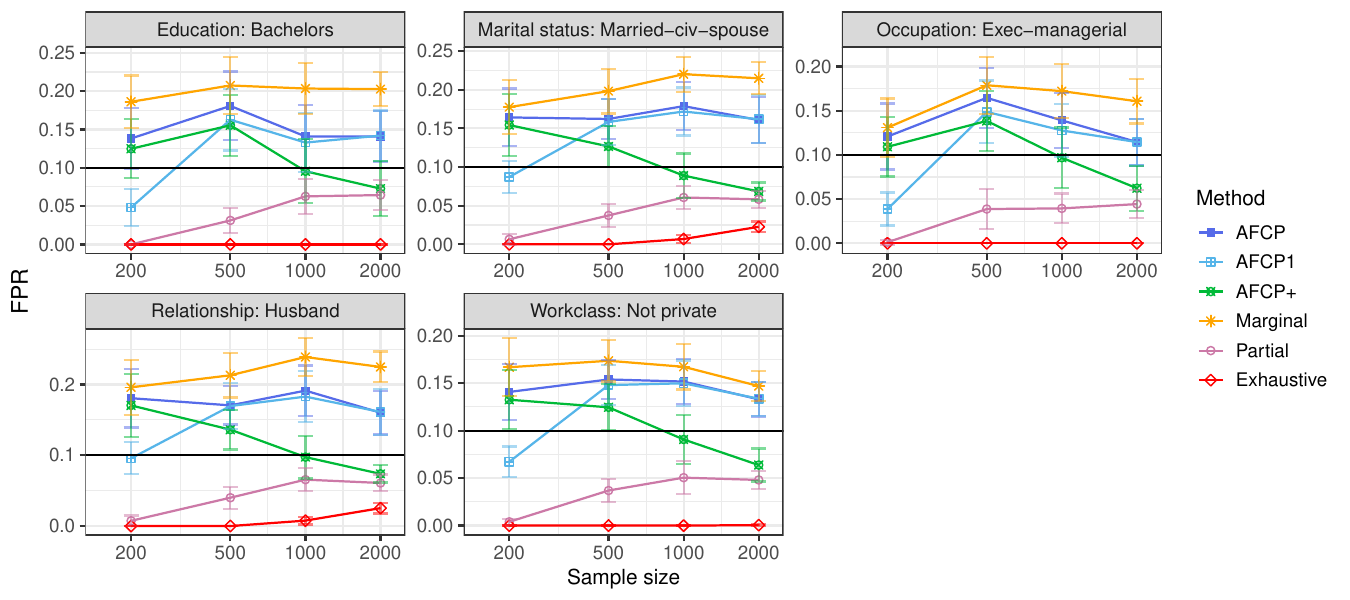}\vspace{-0.5cm}
    \caption{FPR of conformal p-values constructed with different methods for groups that are observed to have significantly higher FPR when using the Marginal method. On average, our methods identify those groups and perform corrections on the FPR.} 
    \label{fig:main_exp_od_main_real_ndata}
\end{figure*}

\begin{figure*}[!htb]
    \centering
    \includegraphics[width=0.8\linewidth]{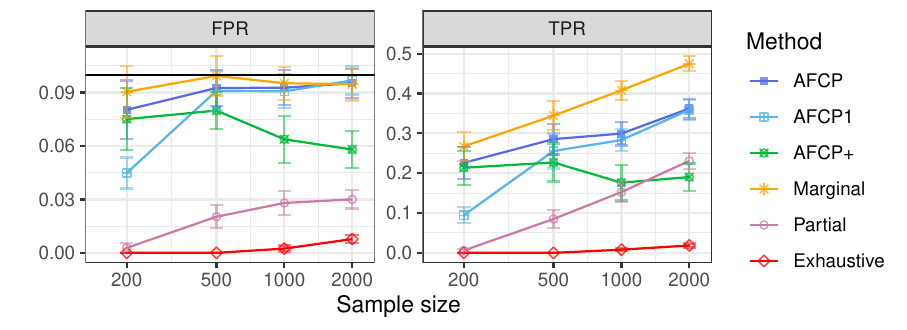}\vspace{-0.5cm}
    \caption{Average FPR and TPR of conformal p-values constructed with different methods on all test samples. All methods control FPR under the target level $0.1$. Our AFCP methods can produce generally higher TPR than other methods except the Marginal one. However, as depicted in Figure~\ref{fig:main_exp_od_main_real_ndata}, the Marginal method suffers from unfairness.} 
    \label{fig:main_exp_od_real_ndata}
\end{figure*}

\begin{table}[!htb]
\centering
    \caption{Average FPR and TPR of conformal p-values constructed with different methods on all test samples. All methods control FPR under the target level $0.1$. Red numbers indicate high TPR. See corresponding plots in Figure~\ref{fig:main_exp_od_real_ndata}.}
  \label{tab:main_exp_od_real_ndata}

\centering
\fontsize{6}{6}\selectfont

\begin{tabular}[t]{rllllllllllll}
\toprule
\multicolumn{1}{c}{ } & \multicolumn{2}{c}{AFCP} & \multicolumn{2}{c}{AFCP1} & \multicolumn{2}{c}{AFCP+} & \multicolumn{2}{c}{Marginal} & \multicolumn{2}{c}{Partial} & \multicolumn{2}{c}{Exhaustive} \\
\cmidrule(l{3pt}r{3pt}){2-3} \cmidrule(l{3pt}r{3pt}){4-5} \cmidrule(l{3pt}r{3pt}){6-7} \cmidrule(l{3pt}r{3pt}){8-9} \cmidrule(l{3pt}r{3pt}){10-11} \cmidrule(l{3pt}r{3pt}){12-13}
\makecell{Sample\\ size} & FPR & TPR & FPR & TPR & FPR & TPR & FPR & TPR & FPR & TPR & FPR & TPR\\
\midrule
\addlinespace[0.3em]
\multicolumn{13}{l}{\textbf{1}}\\
\hspace{1em}200 & \makecell{ 0.080 \\ (0.008)} & \textcolor{red}{\makecell{ 0.226 \\ (0.020)}} & \makecell{ 0.045 \\ (0.004)} & \makecell{ 0.094 \\ (0.010)} & \makecell{ 0.075 \\ (0.009)} & \makecell{ 0.214 \\ (0.021)} & \makecell{ 0.090 \\ (0.007)} & \textcolor{red}{\makecell{ 0.267 \\ (0.018)}} & \makecell{ 0.003 \\ (0.001)} & \makecell{ 0.006 \\ (0.002)} & \makecell{ 0.000 \\ (0.000)} & \makecell{ 0.000 \\ (0.000)}\\
\addlinespace[0.3em]
\multicolumn{13}{l}{\textbf{1}}\\
\hspace{1em}500 & \makecell{ 0.093 \\ (0.005)} & \textcolor{red}{\makecell{ 0.285 \\ (0.019)}} & \makecell{ 0.091 \\ (0.006)} & \makecell{ 0.256 \\ (0.022)} & \makecell{ 0.080 \\ (0.005)} & \makecell{ 0.227 \\ (0.024)} & \makecell{ 0.099 \\ (0.006)} & \textcolor{red}{\makecell{ 0.344 \\ (0.018)}} & \makecell{ 0.020 \\ (0.003)} & \makecell{ 0.085 \\ (0.011)} & \makecell{ 0.000 \\ (0.000)} & \makecell{ 0.000 \\ (0.000)}\\
\addlinespace[0.3em]
\multicolumn{13}{l}{\textbf{1}}\\
\hspace{1em}1000 & \makecell{ 0.093 \\ (0.005)} & \makecell{ 0.299 \\ (0.015)} & \makecell{ 0.091 \\ (0.005)} & \makecell{ 0.283 \\ (0.014)} & \makecell{ 0.064 \\ (0.007)} & \makecell{ 0.176 \\ (0.022)} & \makecell{ 0.095 \\ (0.005)} & \textcolor{red}{\makecell{ 0.408 \\ (0.012)}} & \makecell{ 0.028 \\ (0.003)} & \makecell{ 0.153 \\ (0.013)} & \makecell{ 0.002 \\ (0.001)} & \makecell{ 0.008 \\ (0.002)}\\
\addlinespace[0.3em]
\multicolumn{13}{l}{\textbf{1}}\\
\hspace{1em}2000 & \makecell{ 0.095 \\ (0.004)} & \makecell{ 0.362 \\ (0.012)} & \makecell{ 0.097 \\ (0.004)} & \makecell{ 0.359 \\ (0.013)} & \makecell{ 0.058 \\ (0.005)} & \makecell{ 0.190 \\ (0.017)} & \makecell{ 0.095 \\ (0.005)} & \textcolor{red}{\makecell{ 0.475 \\ (0.009)}} & \makecell{ 0.030 \\ (0.003)} & \makecell{ 0.231 \\ (0.010)} & \makecell{ 0.008 \\ (0.001)} & \makecell{ 0.019 \\ (0.002)}\\
\bottomrule
\end{tabular}

\end{table}

\begin{figure*}[!htb]
    \centering
    \includegraphics[width=0.8\linewidth]{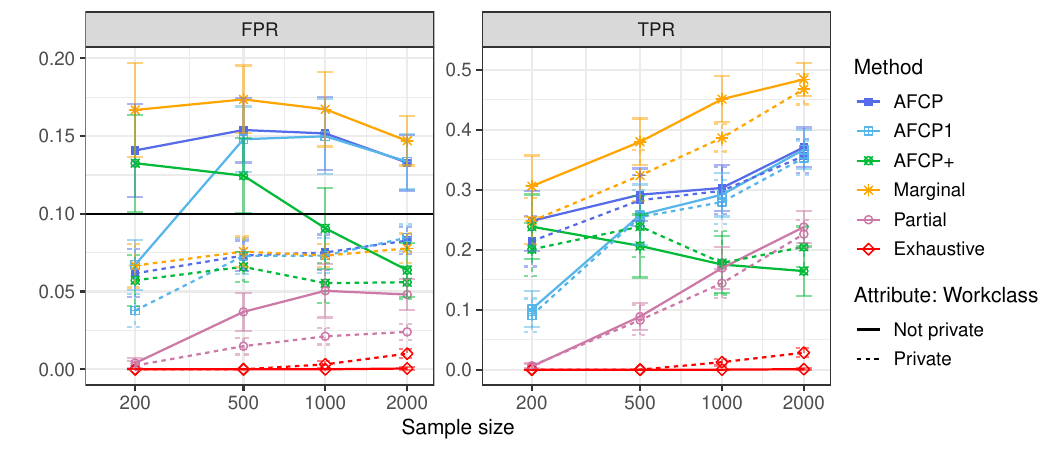}\vspace{-0.5cm}
    \caption{Performance on conformal p-values constructed by different methods for groups formed by Work Class. See Table~\ref{tab:main_exp_od_real_ndata_workclass} for numerical details and standard errors.} \label{fig:main_exp_od_real_ndata_workclass}
\end{figure*}

\begin{table}[!htb]
\centering
    \caption{Coverage and size of prediction sets constructed with different methods for groups formed by Work Class. See corresponding plots in Figure~\ref{fig:main_exp_od_real_ndata_workclass}.}
  \label{tab:main_exp_od_real_ndata_workclass}

\centering
\fontsize{6}{6}\selectfont

\begin{tabular}[t]{lrllllllllllll}
\toprule
\multicolumn{2}{c}{ } & \multicolumn{2}{c}{AFCP} & \multicolumn{2}{c}{AFCP1} & \multicolumn{2}{c}{AFCP+} & \multicolumn{2}{c}{Marginal} & \multicolumn{2}{c}{Partial} & \multicolumn{2}{c}{Exhaustive} \\
\cmidrule(l{3pt}r{3pt}){3-4} \cmidrule(l{3pt}r{3pt}){5-6} \cmidrule(l{3pt}r{3pt}){7-8} \cmidrule(l{3pt}r{3pt}){9-10} \cmidrule(l{3pt}r{3pt}){11-12} \cmidrule(l{3pt}r{3pt}){13-14}
Attribute: Workclass & \makecell{Sample\\ size} & FPR & TPR & FPR & TPR & FPR & TPR & FPR & TPR & FPR & TPR & FPR & TPR\\
\midrule
\addlinespace[0.3em]
\multicolumn{14}{l}{\textbf{Not private}}\\
\hspace{1em}Not private & 200 & \textcolor{darkgreen}{\makecell{ 0.141 \\ (0.015)}} & \textcolor{red}{\makecell{ 0.248 \\ (0.025)}} & \makecell{ 0.067 \\ (0.008)} & \makecell{ 0.101 \\ (0.015)} & \textcolor{darkgreen}{\makecell{ 0.133 \\ (0.016)}} & \textcolor{red}{\makecell{ 0.238 \\ (0.027)}} & \textcolor{darkgreen}{\makecell{ 0.167 \\ (0.015)}} & \textcolor{red}{\makecell{ 0.306 \\ (0.026)}} & \makecell{ 0.004 \\ (0.002)} & \makecell{ 0.005 \\ (0.002)} & \makecell{ 0.000 \\ (0.000)} & \makecell{ 0.000 \\ (0.000)}\\
\hspace{1em}Not private & 500 & \textcolor{darkgreen}{\makecell{ 0.154 \\ (0.010)}} & \textcolor{red}{\makecell{ 0.292 \\ (0.022)}} & \textcolor{darkgreen}{\makecell{ 0.148 \\ (0.010)}} & \makecell{ 0.258 \\ (0.025)} & \textcolor{darkgreen}{\makecell{ 0.125 \\ (0.012)}} & \makecell{ 0.206 \\ (0.026)} & \textcolor{darkgreen}{\makecell{ 0.174 \\ (0.011)}} & \textcolor{red}{\makecell{ 0.380 \\ (0.019)}} & \makecell{ 0.037 \\ (0.006)} & \makecell{ 0.089 \\ (0.012)} & \makecell{ 0.000 \\ (0.000)} & \makecell{ 0.000 \\ (0.000)}\\
\hspace{1em}Not private & 1000 & \textcolor{darkgreen}{\makecell{ 0.152 \\ (0.012)}} & \makecell{ 0.303 \\ (0.019)} & \textcolor{darkgreen}{\makecell{ 0.150 \\ (0.012)}} & \makecell{ 0.292 \\ (0.018)} & \makecell{ 0.091 \\ (0.013)} & \makecell{ 0.175 \\ (0.024)} & \textcolor{darkgreen}{\makecell{ 0.167 \\ (0.012)}} & \textcolor{red}{\makecell{ 0.451 \\ (0.019)}} & \makecell{ 0.051 \\ (0.009)} & \makecell{ 0.169 \\ (0.018)} & \makecell{ 0.000 \\ (0.000)} & \makecell{ 0.000 \\ (0.000)}\\
\hspace{1em}Not private & 2000 & \textcolor{darkgreen}{\makecell{ 0.133 \\ (0.009)}} & \textcolor{red}{\makecell{ 0.371 \\ (0.017)}} & \textcolor{darkgreen}{\makecell{ 0.134 \\ (0.009)}} & \textcolor{red}{\makecell{ 0.367 \\ (0.016)}} & \makecell{ 0.064 \\ (0.009)} & \makecell{ 0.164 \\ (0.021)} & \textcolor{darkgreen}{\makecell{ 0.147 \\ (0.008)}} & \textcolor{red}{\makecell{ 0.484 \\ (0.014)}} & \makecell{ 0.048 \\ (0.005)} & \makecell{ 0.238 \\ (0.013)} & \makecell{ 0.001 \\ (0.001)} & \makecell{ 0.001 \\ (0.001)}\\
\addlinespace[0.3em]
\multicolumn{14}{l}{\textbf{Private}}\\
\hspace{1em}Private & 200 & \makecell{ 0.062 \\ (0.008)} & \textcolor{red}{\makecell{ 0.214 \\ (0.021)}} & \makecell{ 0.038 \\ (0.005)} & \makecell{ 0.091 \\ (0.014)} & \makecell{ 0.057 \\ (0.008)} & \makecell{ 0.201 \\ (0.022)} & \makecell{ 0.067 \\ (0.007)} & \textcolor{red}{\makecell{ 0.248 \\ (0.019)}} & \makecell{ 0.002 \\ (0.001)} & \makecell{ 0.006 \\ (0.002)} & \makecell{ 0.000 \\ (0.000)} & \makecell{ 0.000 \\ (0.000)}\\
\hspace{1em}Private & 500 & \makecell{ 0.073 \\ (0.005)} & \textcolor{red}{\makecell{ 0.283 \\ (0.021)}} & \makecell{ 0.073 \\ (0.006)} & \makecell{ 0.254 \\ (0.022)} & \makecell{ 0.066 \\ (0.005)} & \makecell{ 0.239 \\ (0.025)} & \makecell{ 0.076 \\ (0.005)} & \textcolor{red}{\makecell{ 0.324 \\ (0.021)}} & \makecell{ 0.015 \\ (0.003)} & \makecell{ 0.083 \\ (0.012)} & \makecell{ 0.000 \\ (0.000)} & \makecell{ 0.000 \\ (0.000)}\\
\hspace{1em}Private & 1000 & \makecell{ 0.075 \\ (0.006)} & \makecell{ 0.298 \\ (0.020)} & \makecell{ 0.073 \\ (0.006)} & \makecell{ 0.279 \\ (0.018)} & \makecell{ 0.055 \\ (0.006)} & \makecell{ 0.178 \\ (0.026)} & \makecell{ 0.073 \\ (0.004)} & \textcolor{red}{\makecell{ 0.386 \\ (0.011)}} & \makecell{ 0.021 \\ (0.003)} & \makecell{ 0.144 \\ (0.012)} & \makecell{ 0.003 \\ (0.001)} & \makecell{ 0.012 \\ (0.003)}\\
\hspace{1em}Private & 2000 & \makecell{ 0.083 \\ (0.004)} & \textcolor{red}{\makecell{ 0.356 \\ (0.014)}} & \makecell{ 0.085 \\ (0.004)} & \textcolor{red}{\makecell{ 0.353 \\ (0.015)}} & \makecell{ 0.056 \\ (0.005)} & \makecell{ 0.205 \\ (0.017)} & \makecell{ 0.078 \\ (0.005)} & \textcolor{red}{\makecell{ 0.468 \\ (0.013)}} & \makecell{ 0.024 \\ (0.003)} & \makecell{ 0.226 \\ (0.012)} & \makecell{ 0.010 \\ (0.002)} & \makecell{ 0.028 \\ (0.004)}\\
\bottomrule
\end{tabular}

\end{table}

\begin{figure*}[!htb]
    \centering
    \includegraphics[width=0.8\linewidth]{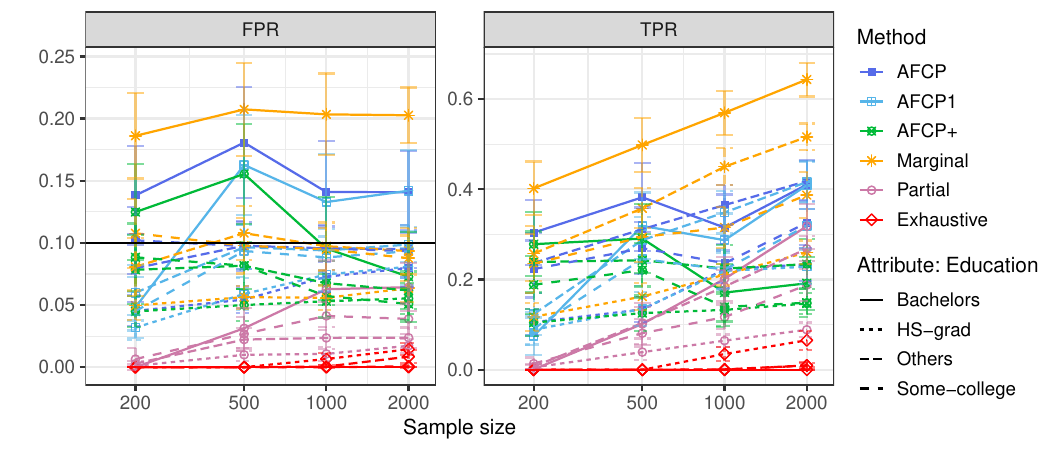}\vspace{-0.5cm}
    \caption{Performance on conformal p-values constructed by different methods for groups formed by Education. See Table~\ref{tab:main_exp_od_real_ndata_education} for numerical details and standard errors.} 
    \label{fig:main_exp_od_real_ndata_education}
\end{figure*}

\begin{table}[!htb]
\centering
    \caption{Coverage and size of prediction sets constructed with different methods for groups formed by Education. See corresponding plots in Figure~\ref{fig:main_exp_od_real_ndata_education}.}
  \label{tab:main_exp_od_real_ndata_education}

\centering
\fontsize{6}{6}\selectfont

\begin{tabular}[t]{lrllllllllllll}
\toprule
\multicolumn{2}{c}{ } & \multicolumn{2}{c}{AFCP} & \multicolumn{2}{c}{AFCP1} & \multicolumn{2}{c}{AFCP+} & \multicolumn{2}{c}{Marginal} & \multicolumn{2}{c}{Partial} & \multicolumn{2}{c}{Exhaustive} \\
\cmidrule(l{3pt}r{3pt}){3-4} \cmidrule(l{3pt}r{3pt}){5-6} \cmidrule(l{3pt}r{3pt}){7-8} \cmidrule(l{3pt}r{3pt}){9-10} \cmidrule(l{3pt}r{3pt}){11-12} \cmidrule(l{3pt}r{3pt}){13-14}
Attribute: Education & \makecell{Sample\\ size} & FPR & TPR & FPR & TPR & FPR & TPR & FPR & TPR & FPR & TPR & FPR & TPR\\
\midrule
\addlinespace[0.3em]
\multicolumn{14}{l}{\textbf{Bachelors}}\\
\hspace{1em}Bachelors & 200 & \makecell{ 0.138 \\ (0.020)} & \textcolor{red}{\makecell{ 0.304 \\ (0.036)}} & \makecell{ 0.048 \\ (0.012)} & \makecell{ 0.075 \\ (0.021)} & \makecell{ 0.125 \\ (0.019)} & \makecell{ 0.278 \\ (0.035)} & \textcolor{darkgreen}{\makecell{ 0.186 \\ (0.017)}} & \textcolor{red}{\makecell{ 0.402 \\ (0.030)}} & \makecell{ 0.000 \\ (0.000)} & \makecell{ 0.000 \\ (0.000)} & \makecell{ 0.000 \\ (0.000)} & \makecell{ 0.000 \\ (0.000)}\\
\hspace{1em}Bachelors & 500 & \textcolor{darkgreen}{\makecell{ 0.181 \\ (0.022)}} & \textcolor{red}{\makecell{ 0.382 \\ (0.038)}} & \textcolor{darkgreen}{\makecell{ 0.163 \\ (0.020)}} & \makecell{ 0.319 \\ (0.037)} & \textcolor{darkgreen}{\makecell{ 0.155 \\ (0.020)}} & \makecell{ 0.291 \\ (0.038)} & \textcolor{darkgreen}{\makecell{ 0.207 \\ (0.019)}} & \textcolor{red}{\makecell{ 0.498 \\ (0.030)}} & \makecell{ 0.031 \\ (0.008)} & \makecell{ 0.102 \\ (0.023)} & \makecell{ 0.000 \\ (0.000)} & \makecell{ 0.000 \\ (0.000)}\\
\hspace{1em}Bachelors & 1000 & \makecell{ 0.141 \\ (0.021)} & \makecell{ 0.315 \\ (0.037)} & \makecell{ 0.133 \\ (0.019)} & \makecell{ 0.287 \\ (0.029)} & \makecell{ 0.095 \\ (0.021)} & \makecell{ 0.171 \\ (0.032)} & \textcolor{darkgreen}{\makecell{ 0.203 \\ (0.016)}} & \textcolor{red}{\makecell{ 0.570 \\ (0.024)}} & \makecell{ 0.063 \\ (0.012)} & \makecell{ 0.201 \\ (0.024)} & \makecell{ 0.000 \\ (0.000)} & \makecell{ 0.000 \\ (0.000)}\\
\hspace{1em}Bachelors & 2000 & \textcolor{darkgreen}{\makecell{ 0.141 \\ (0.017)}} & \makecell{ 0.410 \\ (0.027)} & \textcolor{darkgreen}{\makecell{ 0.142 \\ (0.016)}} & \makecell{ 0.409 \\ (0.027)} & \makecell{ 0.073 \\ (0.018)} & \makecell{ 0.192 \\ (0.029)} & \textcolor{darkgreen}{\makecell{ 0.203 \\ (0.011)}} & \textcolor{red}{\makecell{ 0.643 \\ (0.019)}} & \makecell{ 0.064 \\ (0.010)} & \makecell{ 0.318 \\ (0.025)} & \makecell{ 0.000 \\ (0.000)} & \makecell{ 0.000 \\ (0.000)}\\
\addlinespace[0.3em]
\multicolumn{14}{l}{\textbf{HS-grad}}\\
\hspace{1em}HS-grad & 200 & \makecell{ 0.045 \\ (0.006)} & \makecell{ 0.105 \\ (0.015)} & \makecell{ 0.032 \\ (0.005)} & \makecell{ 0.089 \\ (0.016)} & \makecell{ 0.045 \\ (0.006)} & \makecell{ 0.105 \\ (0.015)} & \makecell{ 0.050 \\ (0.006)} & \makecell{ 0.117 \\ (0.015)} & \makecell{ 0.001 \\ (0.001)} & \makecell{ 0.005 \\ (0.003)} & \makecell{ 0.000 \\ (0.000)} & \makecell{ 0.000 \\ (0.000)}\\
\hspace{1em}HS-grad & 500 & \makecell{ 0.054 \\ (0.005)} & \makecell{ 0.134 \\ (0.013)} & \makecell{ 0.059 \\ (0.006)} & \makecell{ 0.135 \\ (0.019)} & \makecell{ 0.050 \\ (0.006)} & \makecell{ 0.126 \\ (0.020)} & \makecell{ 0.056 \\ (0.005)} & \makecell{ 0.162 \\ (0.015)} & \makecell{ 0.010 \\ (0.003)} & \makecell{ 0.039 \\ (0.009)} & \makecell{ 0.000 \\ (0.000)} & \makecell{ 0.000 \\ (0.000)}\\
\hspace{1em}HS-grad & 1000 & \makecell{ 0.073 \\ (0.006)} & \makecell{ 0.223 \\ (0.021)} & \makecell{ 0.075 \\ (0.007)} & \makecell{ 0.219 \\ (0.020)} & \makecell{ 0.053 \\ (0.007)} & \makecell{ 0.133 \\ (0.018)} & \makecell{ 0.056 \\ (0.005)} & \makecell{ 0.211 \\ (0.012)} & \makecell{ 0.011 \\ (0.003)} & \makecell{ 0.064 \\ (0.007)} & \makecell{ 0.007 \\ (0.003)} & \makecell{ 0.035 \\ (0.007)}\\
\hspace{1em}HS-grad & 2000 & \makecell{ 0.080 \\ (0.008)} & \makecell{ 0.234 \\ (0.018)} & \makecell{ 0.081 \\ (0.008)} & \makecell{ 0.228 \\ (0.019)} & \makecell{ 0.055 \\ (0.006)} & \makecell{ 0.147 \\ (0.012)} & \makecell{ 0.064 \\ (0.006)} & \makecell{ 0.260 \\ (0.013)} & \makecell{ 0.017 \\ (0.003)} & \makecell{ 0.089 \\ (0.008)} & \makecell{ 0.014 \\ (0.002)} & \makecell{ 0.065 \\ (0.011)}\\
\addlinespace[0.3em]
\multicolumn{14}{l}{\textbf{Others}}\\
\hspace{1em}Others & 200 & \makecell{ 0.080 \\ (0.014)} & \makecell{ 0.237 \\ (0.033)} & \makecell{ 0.060 \\ (0.011)} & \makecell{ 0.123 \\ (0.023)} & \makecell{ 0.078 \\ (0.014)} & \makecell{ 0.239 \\ (0.034)} & \makecell{ 0.081 \\ (0.012)} & \makecell{ 0.257 \\ (0.030)} & \makecell{ 0.003 \\ (0.001)} & \makecell{ 0.007 \\ (0.003)} & \makecell{ 0.000 \\ (0.000)} & \makecell{ 0.000 \\ (0.000)}\\
\hspace{1em}Others & 500 & \makecell{ 0.098 \\ (0.009)} & \makecell{ 0.312 \\ (0.024)} & \makecell{ 0.097 \\ (0.009)} & \makecell{ 0.287 \\ (0.026)} & \makecell{ 0.082 \\ (0.009)} & \makecell{ 0.243 \\ (0.029)} & \makecell{ 0.108 \\ (0.011)} & \textcolor{red}{\makecell{ 0.358 \\ (0.022)}} & \makecell{ 0.022 \\ (0.005)} & \makecell{ 0.104 \\ (0.017)} & \makecell{ 0.000 \\ (0.000)} & \makecell{ 0.000 \\ (0.000)}\\
\hspace{1em}Others & 1000 & \makecell{ 0.096 \\ (0.009)} & \makecell{ 0.365 \\ (0.022)} & \makecell{ 0.094 \\ (0.009)} & \makecell{ 0.349 \\ (0.023)} & \makecell{ 0.068 \\ (0.009)} & \makecell{ 0.224 \\ (0.026)} & \makecell{ 0.097 \\ (0.010)} & \textcolor{red}{\makecell{ 0.450 \\ (0.021)}} & \makecell{ 0.024 \\ (0.004)} & \makecell{ 0.184 \\ (0.018)} & \makecell{ 0.001 \\ (0.001)} & \makecell{ 0.001 \\ (0.001)}\\
\hspace{1em}Others & 2000 & \makecell{ 0.095 \\ (0.008)} & \makecell{ 0.418 \\ (0.022)} & \makecell{ 0.099 \\ (0.007)} & \makecell{ 0.416 \\ (0.022)} & \makecell{ 0.062 \\ (0.008)} & \makecell{ 0.234 \\ (0.028)} & \makecell{ 0.093 \\ (0.009)} & \textcolor{red}{\makecell{ 0.516 \\ (0.015)}} & \makecell{ 0.024 \\ (0.004)} & \makecell{ 0.268 \\ (0.014)} & \makecell{ 0.008 \\ (0.003)} & \makecell{ 0.009 \\ (0.002)}\\
\addlinespace[0.3em]
\multicolumn{14}{l}{\textbf{Some-college}}\\
\hspace{1em}Some-college & 200 & \makecell{ 0.102 \\ (0.013)} & \makecell{ 0.223 \\ (0.032)} & \makecell{ 0.045 \\ (0.008)} & \makecell{ 0.091 \\ (0.018)} & \makecell{ 0.088 \\ (0.014)} & \makecell{ 0.188 \\ (0.032)} & \makecell{ 0.108 \\ (0.014)} & \makecell{ 0.235 \\ (0.033)} & \makecell{ 0.007 \\ (0.004)} & \makecell{ 0.014 \\ (0.007)} & \makecell{ 0.000 \\ (0.000)} & \makecell{ 0.000 \\ (0.000)}\\
\hspace{1em}Some-college & 500 & \makecell{ 0.097 \\ (0.010)} & \makecell{ 0.269 \\ (0.029)} & \makecell{ 0.093 \\ (0.011)} & \makecell{ 0.245 \\ (0.030)} & \makecell{ 0.082 \\ (0.009)} & \makecell{ 0.221 \\ (0.030)} & \makecell{ 0.098 \\ (0.011)} & \makecell{ 0.294 \\ (0.027)} & \makecell{ 0.027 \\ (0.007)} & \makecell{ 0.081 \\ (0.015)} & \makecell{ 0.000 \\ (0.000)} & \makecell{ 0.000 \\ (0.000)}\\
\hspace{1em}Some-college & 1000 & \makecell{ 0.095 \\ (0.009)} & \makecell{ 0.236 \\ (0.022)} & \makecell{ 0.088 \\ (0.008)} & \makecell{ 0.222 \\ (0.022)} & \makecell{ 0.057 \\ (0.009)} & \makecell{ 0.139 \\ (0.022)} & \makecell{ 0.095 \\ (0.009)} & \makecell{ 0.315 \\ (0.023)} & \makecell{ 0.041 \\ (0.006)} & \makecell{ 0.117 \\ (0.013)} & \makecell{ 0.000 \\ (0.000)} & \makecell{ 0.000 \\ (0.000)}\\
\hspace{1em}Some-college & 2000 & \makecell{ 0.093 \\ (0.011)} & \makecell{ 0.326 \\ (0.025)} & \makecell{ 0.093 \\ (0.011)} & \makecell{ 0.319 \\ (0.025)} & \makecell{ 0.051 \\ (0.006)} & \makecell{ 0.149 \\ (0.016)} & \makecell{ 0.088 \\ (0.009)} & \makecell{ 0.387 \\ (0.026)} & \makecell{ 0.039 \\ (0.006)} & \makecell{ 0.188 \\ (0.018)} & \makecell{ 0.001 \\ (0.001)} & \makecell{ 0.009 \\ (0.004)}\\
\bottomrule
\end{tabular}

\end{table}

\begin{figure*}[!htb]
    \centering
    \includegraphics[width=0.8\linewidth]{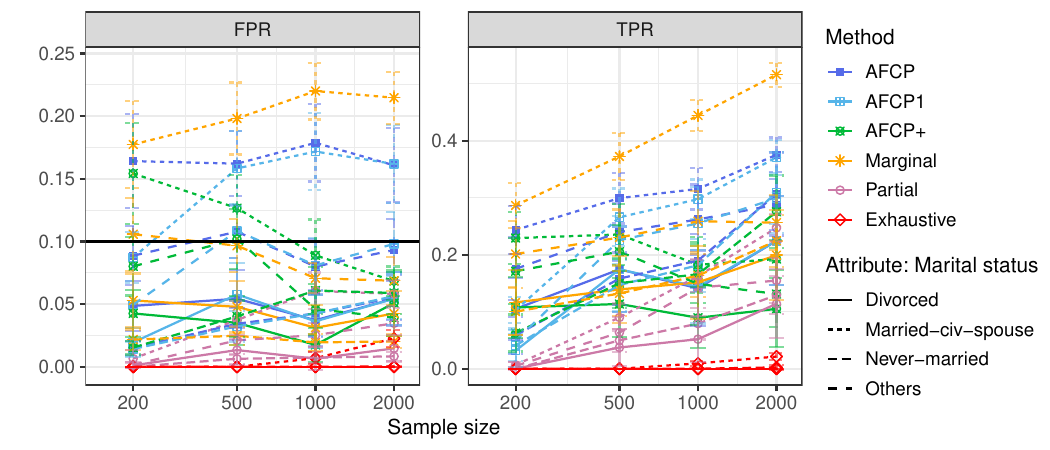}\vspace{-0.5cm}
    \caption{Performance on conformal p-values constructed by different methods for groups formed by Marital Status. See Table~\ref{tab:main_exp_od_real_ndata_maritalstatus} for numerical details and standard errors.} 
    \label{fig:main_exp_od_real_ndata_maritalstatus}
\end{figure*}

\begin{table}[!htb]
\centering
    \caption{Coverage and size of prediction sets constructed with different methods for groups formed by Marital Status. See corresponding plots in Figure~\ref{fig:main_exp_od_real_ndata_maritalstatus}.}
  \label{tab:main_exp_od_real_ndata_maritalstatus}
\centering
\fontsize{6}{6}\selectfont

\begin{tabular}[t]{lrllllllllllll}
\toprule
\multicolumn{2}{c}{ } & \multicolumn{2}{c}{AFCP} & \multicolumn{2}{c}{AFCP1} & \multicolumn{2}{c}{AFCP+} & \multicolumn{2}{c}{Marginal} & \multicolumn{2}{c}{Partial} & \multicolumn{2}{c}{Exhaustive} \\
\cmidrule(l{3pt}r{3pt}){3-4} \cmidrule(l{3pt}r{3pt}){5-6} \cmidrule(l{3pt}r{3pt}){7-8} \cmidrule(l{3pt}r{3pt}){9-10} \cmidrule(l{3pt}r{3pt}){11-12} \cmidrule(l{3pt}r{3pt}){13-14}
Attribute: Marital status & \makecell{Sample\\ size} & FPR & TPR & FPR & TPR & FPR & TPR & FPR & TPR & FPR & TPR & FPR & TPR\\
\midrule
\addlinespace[0.3em]
\multicolumn{14}{l}{\textbf{Divorced}}\\
\hspace{1em}Divorced & 200 & \makecell{ 0.049 \\ (0.010)} & \makecell{ 0.105 \\ (0.027)} & \makecell{ 0.020 \\ (0.006)} & \makecell{ 0.032 \\ (0.014)} & \makecell{ 0.043 \\ (0.010)} & \makecell{ 0.107 \\ (0.027)} & \makecell{ 0.053 \\ (0.011)} & \makecell{ 0.116 \\ (0.028)} & \makecell{ 0.001 \\ (0.001)} & \makecell{ 0.000 \\ (0.000)} & \makecell{ 0.000 \\ (0.000)} & \makecell{ 0.000 \\ (0.000)}\\
\hspace{1em}Divorced & 500 & \makecell{ 0.054 \\ (0.012)} & \makecell{ 0.174 \\ (0.038)} & \makecell{ 0.058 \\ (0.013)} & \makecell{ 0.174 \\ (0.038)} & \makecell{ 0.035 \\ (0.009)} & \makecell{ 0.114 \\ (0.029)} & \makecell{ 0.048 \\ (0.010)} & \makecell{ 0.139 \\ (0.030)} & \makecell{ 0.013 \\ (0.006)} & \makecell{ 0.037 \\ (0.016)} & \makecell{ 0.000 \\ (0.000)} & \makecell{ 0.000 \\ (0.000)}\\
\hspace{1em}Divorced & 1000 & \makecell{ 0.037 \\ (0.010)} & \makecell{ 0.141 \\ (0.033)} & \makecell{ 0.036 \\ (0.011)} & \makecell{ 0.144 \\ (0.032)} & \makecell{ 0.017 \\ (0.007)} & \makecell{ 0.089 \\ (0.027)} & \makecell{ 0.031 \\ (0.009)} & \makecell{ 0.151 \\ (0.032)} & \makecell{ 0.006 \\ (0.005)} & \makecell{ 0.052 \\ (0.020)} & \makecell{ 0.000 \\ (0.000)} & \makecell{ 0.000 \\ (0.000)}\\
\hspace{1em}Divorced & 2000 & \makecell{ 0.054 \\ (0.011)} & \makecell{ 0.224 \\ (0.039)} & \makecell{ 0.054 \\ (0.010)} & \makecell{ 0.224 \\ (0.039)} & \makecell{ 0.051 \\ (0.013)} & \makecell{ 0.105 \\ (0.034)} & \makecell{ 0.043 \\ (0.009)} & \makecell{ 0.199 \\ (0.040)} & \makecell{ 0.015 \\ (0.005)} & \makecell{ 0.114 \\ (0.030)} & \makecell{ 0.000 \\ (0.000)} & \makecell{ 0.000 \\ (0.000)}\\
\addlinespace[0.3em]
\multicolumn{14}{l}{\textbf{Married-civ-spouse}}\\
\hspace{1em}Married-civ-spouse & 200 & \textcolor{darkgreen}{\makecell{ 0.164 \\ (0.019)}} & \textcolor{red}{\makecell{ 0.243 \\ (0.021)}} & \makecell{ 0.087 \\ (0.010)} & \makecell{ 0.103 \\ (0.012)} & \textcolor{darkgreen}{\makecell{ 0.154 \\ (0.020)}} & \textcolor{red}{\makecell{ 0.229 \\ (0.023)}} & \textcolor{darkgreen}{\makecell{ 0.178 \\ (0.017)}} & \textcolor{red}{\makecell{ 0.286 \\ (0.020)}} & \makecell{ 0.007 \\ (0.003)} & \makecell{ 0.007 \\ (0.002)} & \makecell{ 0.000 \\ (0.000)} & \makecell{ 0.000 \\ (0.000)}\\
\hspace{1em}Married-civ-spouse & 500 & \textcolor{darkgreen}{\makecell{ 0.162 \\ (0.013)}} & \textcolor{red}{\makecell{ 0.299 \\ (0.022)}} & \textcolor{darkgreen}{\makecell{ 0.158 \\ (0.015)}} & \textcolor{red}{\makecell{ 0.266 \\ (0.025)}} & \textcolor{darkgreen}{\makecell{ 0.127 \\ (0.013)}} & \makecell{ 0.236 \\ (0.027)} & \textcolor{darkgreen}{\makecell{ 0.198 \\ (0.014)}} & \textcolor{red}{\makecell{ 0.372 \\ (0.021)}} & \makecell{ 0.037 \\ (0.008)} & \makecell{ 0.090 \\ (0.012)} & \makecell{ 0.000 \\ (0.000)} & \makecell{ 0.000 \\ (0.000)}\\
\hspace{1em}Married-civ-spouse & 1000 & \textcolor{darkgreen}{\makecell{ 0.179 \\ (0.015)}} & \textcolor{red}{\makecell{ 0.315 \\ (0.018)}} & \textcolor{darkgreen}{\makecell{ 0.172 \\ (0.015)}} & \makecell{ 0.297 \\ (0.017)} & \makecell{ 0.089 \\ (0.014)} & \makecell{ 0.182 \\ (0.024)} & \textcolor{darkgreen}{\makecell{ 0.220 \\ (0.011)}} & \textcolor{red}{\makecell{ 0.444 \\ (0.013)}} & \makecell{ 0.061 \\ (0.007)} & \makecell{ 0.162 \\ (0.014)} & \makecell{ 0.007 \\ (0.002)} & \makecell{ 0.009 \\ (0.002)}\\
\hspace{1em}Married-civ-spouse & 2000 & \textcolor{darkgreen}{\makecell{ 0.161 \\ (0.015)}} & \textcolor{red}{\makecell{ 0.375 \\ (0.015)}} & \textcolor{darkgreen}{\makecell{ 0.162 \\ (0.016)}} & \textcolor{red}{\makecell{ 0.371 \\ (0.015)}} & \makecell{ 0.068 \\ (0.006)} & \makecell{ 0.192 \\ (0.017)} & \textcolor{darkgreen}{\makecell{ 0.215 \\ (0.010)}} & \textcolor{red}{\makecell{ 0.516 \\ (0.011)}} & \makecell{ 0.058 \\ (0.006)} & \makecell{ 0.247 \\ (0.012)} & \makecell{ 0.023 \\ (0.003)} & \makecell{ 0.021 \\ (0.003)}\\
\addlinespace[0.3em]
\multicolumn{14}{l}{\textbf{Never-married}}\\
\hspace{1em}Never-married & 200 & \makecell{ 0.015 \\ (0.003)} & \makecell{ 0.059 \\ (0.016)} & \makecell{ 0.014 \\ (0.003)} & \makecell{ 0.041 \\ (0.014)} & \makecell{ 0.015 \\ (0.003)} & \makecell{ 0.063 \\ (0.016)} & \makecell{ 0.022 \\ (0.004)} & \makecell{ 0.100 \\ (0.021)} & \makecell{ 0.000 \\ (0.000)} & \makecell{ 0.000 \\ (0.000)} & \makecell{ 0.000 \\ (0.000)} & \makecell{ 0.000 \\ (0.000)}\\
\hspace{1em}Never-married & 500 & \makecell{ 0.034 \\ (0.006)} & \makecell{ 0.158 \\ (0.023)} & \makecell{ 0.032 \\ (0.006)} & \makecell{ 0.147 \\ (0.022)} & \makecell{ 0.040 \\ (0.008)} & \makecell{ 0.150 \\ (0.022)} & \makecell{ 0.025 \\ (0.004)} & \makecell{ 0.132 \\ (0.020)} & \makecell{ 0.006 \\ (0.002)} & \makecell{ 0.050 \\ (0.010)} & \makecell{ 0.000 \\ (0.000)} & \makecell{ 0.000 \\ (0.000)}\\
\hspace{1em}Never-married & 1000 & \makecell{ 0.043 \\ (0.009)} & \makecell{ 0.190 \\ (0.024)} & \makecell{ 0.043 \\ (0.009)} & \makecell{ 0.182 \\ (0.023)} & \makecell{ 0.061 \\ (0.009)} & \makecell{ 0.166 \\ (0.025)} & \makecell{ 0.019 \\ (0.003)} & \makecell{ 0.163 \\ (0.019)} & \makecell{ 0.007 \\ (0.002)} & \makecell{ 0.079 \\ (0.014)} & \makecell{ 0.000 \\ (0.000)} & \makecell{ 0.000 \\ (0.000)}\\
\hspace{1em}Never-married & 2000 & \makecell{ 0.055 \\ (0.009)} & \makecell{ 0.307 \\ (0.036)} & \makecell{ 0.056 \\ (0.009)} & \makecell{ 0.309 \\ (0.036)} & \makecell{ 0.059 \\ (0.008)} & \makecell{ 0.275 \\ (0.032)} & \makecell{ 0.020 \\ (0.003)} & \makecell{ 0.224 \\ (0.023)} & \makecell{ 0.008 \\ (0.002)} & \makecell{ 0.129 \\ (0.017)} & \makecell{ 0.000 \\ (0.000)} & \makecell{ 0.002 \\ (0.002)}\\
\addlinespace[0.3em]
\multicolumn{14}{l}{\textbf{Others}}\\
\hspace{1em}Others & 200 & \makecell{ 0.089 \\ (0.012)} & \makecell{ 0.175 \\ (0.028)} & \makecell{ 0.050 \\ (0.009)} & \makecell{ 0.049 \\ (0.011)} & \makecell{ 0.080 \\ (0.012)} & \makecell{ 0.171 \\ (0.029)} & \makecell{ 0.106 \\ (0.015)} & \textcolor{red}{\makecell{ 0.201 \\ (0.029)}} & \makecell{ 0.002 \\ (0.001)} & \makecell{ 0.005 \\ (0.003)} & \makecell{ 0.000 \\ (0.000)} & \makecell{ 0.000 \\ (0.000)}\\
\hspace{1em}Others & 500 & \makecell{ 0.108 \\ (0.011)} & \makecell{ 0.240 \\ (0.025)} & \makecell{ 0.108 \\ (0.011)} & \makecell{ 0.225 \\ (0.021)} & \makecell{ 0.102 \\ (0.012)} & \makecell{ 0.205 \\ (0.023)} & \makecell{ 0.097 \\ (0.011)} & \makecell{ 0.230 \\ (0.025)} & \makecell{ 0.021 \\ (0.006)} & \makecell{ 0.063 \\ (0.014)} & \makecell{ 0.000 \\ (0.000)} & \makecell{ 0.000 \\ (0.000)}\\
\hspace{1em}Others & 1000 & \makecell{ 0.080 \\ (0.011)} & \makecell{ 0.262 \\ (0.031)} & \makecell{ 0.081 \\ (0.011)} & \makecell{ 0.255 \\ (0.032)} & \makecell{ 0.046 \\ (0.010)} & \makecell{ 0.149 \\ (0.036)} & \makecell{ 0.071 \\ (0.006)} & \makecell{ 0.259 \\ (0.027)} & \makecell{ 0.025 \\ (0.005)} & \makecell{ 0.142 \\ (0.025)} & \makecell{ 0.000 \\ (0.000)} & \makecell{ 0.000 \\ (0.000)}\\
\hspace{1em}Others & 2000 & \makecell{ 0.093 \\ (0.012)} & \makecell{ 0.288 \\ (0.022)} & \makecell{ 0.098 \\ (0.013)} & \makecell{ 0.299 \\ (0.022)} & \makecell{ 0.039 \\ (0.007)} & \makecell{ 0.131 \\ (0.029)} & \makecell{ 0.069 \\ (0.010)} & \makecell{ 0.256 \\ (0.024)} & \makecell{ 0.035 \\ (0.007)} & \makecell{ 0.155 \\ (0.020)} & \makecell{ 0.000 \\ (0.000)} & \makecell{ 0.000 \\ (0.000)}\\
\bottomrule
\end{tabular}

\end{table}

\begin{figure*}[!htb]
    \centering
    \includegraphics[width=0.8\linewidth]{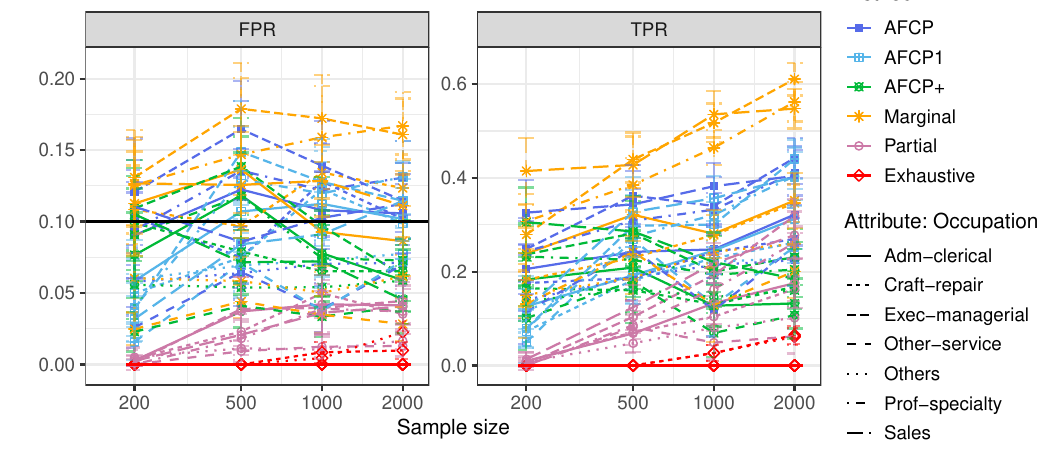}\vspace{-0.5cm}
    \caption{Performance on conformal p-values constructed by different methods for groups formed by Occupation. See Table~\ref{tab:main_exp_od_real_ndata_occupation} for numerical details and standard errors.} 
    \label{fig:main_exp_od_real_ndata_occupation}
\end{figure*}

\begin{table}[!htb]
\centering
    \caption{Coverage and size of prediction sets constructed with different methods for groups formed by Occupation. See corresponding plots in Figure~\ref{fig:main_exp_od_real_ndata_occupation}.}
  \label{tab:main_exp_od_real_ndata_occupation}

\centering
\fontsize{6}{6}\selectfont

\begin{tabular}[t]{lrllllllllllll}
\toprule
\multicolumn{2}{c}{ } & \multicolumn{2}{c}{AFCP} & \multicolumn{2}{c}{AFCP1} & \multicolumn{2}{c}{AFCP+} & \multicolumn{2}{c}{Marginal} & \multicolumn{2}{c}{Partial} & \multicolumn{2}{c}{Exhaustive} \\
\cmidrule(l{3pt}r{3pt}){3-4} \cmidrule(l{3pt}r{3pt}){5-6} \cmidrule(l{3pt}r{3pt}){7-8} \cmidrule(l{3pt}r{3pt}){9-10} \cmidrule(l{3pt}r{3pt}){11-12} \cmidrule(l{3pt}r{3pt}){13-14}
Attribute: Occupation & \makecell{Sample\\ size} & FPR & TPR & FPR & TPR & FPR & TPR & FPR & TPR & FPR & TPR & FPR & TPR\\
\midrule
\addlinespace[0.3em]
\multicolumn{14}{l}{\textbf{Adm-clerical}}\\
\hspace{1em}Adm-clerical & 200 & \makecell{ 0.090 \\ (0.015)} & \makecell{ 0.206 \\ (0.027)} & \makecell{ 0.058 \\ (0.014)} & \makecell{ 0.126 \\ (0.032)} & \makecell{ 0.077 \\ (0.015)} & \makecell{ 0.183 \\ (0.028)} & \makecell{ 0.113 \\ (0.018)} & \makecell{ 0.239 \\ (0.028)} & \makecell{ 0.003 \\ (0.002)} & \makecell{ 0.010 \\ (0.010)} & \makecell{ 0.000 \\ (0.000)} & \makecell{ 0.000 \\ (0.000)}\\
\hspace{1em}Adm-clerical & 500 & \makecell{ 0.122 \\ (0.014)} & \makecell{ 0.242 \\ (0.030)} & \makecell{ 0.107 \\ (0.014)} & \makecell{ 0.192 \\ (0.030)} & \makecell{ 0.118 \\ (0.015)} & \makecell{ 0.209 \\ (0.031)} & \textcolor{darkgreen}{\makecell{ 0.136 \\ (0.015)}} & \textcolor{red}{\makecell{ 0.321 \\ (0.028)}} & \makecell{ 0.037 \\ (0.012)} & \makecell{ 0.069 \\ (0.020)} & \makecell{ 0.000 \\ (0.000)} & \makecell{ 0.000 \\ (0.000)}\\
\hspace{1em}Adm-clerical & 1000 & \makecell{ 0.108 \\ (0.014)} & \makecell{ 0.248 \\ (0.027)} & \makecell{ 0.112 \\ (0.015)} & \makecell{ 0.245 \\ (0.025)} & \makecell{ 0.078 \\ (0.015)} & \makecell{ 0.128 \\ (0.022)} & \makecell{ 0.093 \\ (0.012)} & \makecell{ 0.281 \\ (0.025)} & \makecell{ 0.042 \\ (0.010)} & \makecell{ 0.133 \\ (0.022)} & \makecell{ 0.000 \\ (0.000)} & \makecell{ 0.000 \\ (0.000)}\\
\hspace{1em}Adm-clerical & 2000 & \makecell{ 0.106 \\ (0.014)} & \makecell{ 0.322 \\ (0.033)} & \makecell{ 0.102 \\ (0.014)} & \makecell{ 0.312 \\ (0.032)} & \makecell{ 0.059 \\ (0.011)} & \makecell{ 0.132 \\ (0.027)} & \makecell{ 0.086 \\ (0.012)} & \makecell{ 0.352 \\ (0.029)} & \makecell{ 0.041 \\ (0.007)} & \makecell{ 0.176 \\ (0.027)} & \makecell{ 0.000 \\ (0.000)} & \makecell{ 0.000 \\ (0.000)}\\
\addlinespace[0.3em]
\multicolumn{14}{l}{\textbf{Craft-repair}}\\
\hspace{1em}Craft-repair & 200 & \makecell{ 0.103 \\ (0.017)} & \makecell{ 0.176 \\ (0.030)} & \makecell{ 0.050 \\ (0.012)} & \makecell{ 0.073 \\ (0.016)} & \makecell{ 0.099 \\ (0.016)} & \makecell{ 0.165 \\ (0.029)} & \makecell{ 0.108 \\ (0.015)} & \makecell{ 0.183 \\ (0.029)} & \makecell{ 0.004 \\ (0.004)} & \makecell{ 0.008 \\ (0.005)} & \makecell{ 0.000 \\ (0.000)} & \makecell{ 0.000 \\ (0.000)}\\
\hspace{1em}Craft-repair & 500 & \makecell{ 0.075 \\ (0.011)} & \makecell{ 0.187 \\ (0.023)} & \makecell{ 0.075 \\ (0.013)} & \makecell{ 0.182 \\ (0.024)} & \makecell{ 0.079 \\ (0.011)} & \makecell{ 0.160 \\ (0.023)} & \makecell{ 0.097 \\ (0.011)} & \makecell{ 0.230 \\ (0.023)} & \makecell{ 0.023 \\ (0.006)} & \makecell{ 0.071 \\ (0.014)} & \makecell{ 0.000 \\ (0.000)} & \makecell{ 0.000 \\ (0.000)}\\
\hspace{1em}Craft-repair & 1000 & \textcolor{darkgreen}{\makecell{ 0.128 \\ (0.014)}} & \makecell{ 0.242 \\ (0.027)} & \makecell{ 0.123 \\ (0.013)} & \makecell{ 0.229 \\ (0.028)} & \makecell{ 0.065 \\ (0.013)} & \makecell{ 0.131 \\ (0.023)} & \textcolor{darkgreen}{\makecell{ 0.134 \\ (0.012)}} & \makecell{ 0.278 \\ (0.021)} & \makecell{ 0.050 \\ (0.009)} & \makecell{ 0.104 \\ (0.015)} & \makecell{ 0.008 \\ (0.004)} & \makecell{ 0.027 \\ (0.009)}\\
\hspace{1em}Craft-repair & 2000 & \makecell{ 0.102 \\ (0.012)} & \makecell{ 0.270 \\ (0.022)} & \makecell{ 0.099 \\ (0.012)} & \makecell{ 0.262 \\ (0.022)} & \makecell{ 0.065 \\ (0.009)} & \makecell{ 0.168 \\ (0.017)} & \makecell{ 0.123 \\ (0.012)} & \makecell{ 0.346 \\ (0.020)} & \makecell{ 0.035 \\ (0.005)} & \makecell{ 0.163 \\ (0.013)} & \makecell{ 0.010 \\ (0.003)} & \makecell{ 0.065 \\ (0.010)}\\
\addlinespace[0.3em]
\multicolumn{14}{l}{\textbf{Exec-managerial}}\\
\hspace{1em}Exec-managerial & 200 & \makecell{ 0.121 \\ (0.019)} & \makecell{ 0.248 \\ (0.033)} & \makecell{ 0.039 \\ (0.009)} & \makecell{ 0.079 \\ (0.021)} & \makecell{ 0.109 \\ (0.017)} & \makecell{ 0.237 \\ (0.033)} & \makecell{ 0.131 \\ (0.017)} & \makecell{ 0.279 \\ (0.031)} & \makecell{ 0.001 \\ (0.001)} & \makecell{ 0.003 \\ (0.003)} & \makecell{ 0.000 \\ (0.000)} & \makecell{ 0.000 \\ (0.000)}\\
\hspace{1em}Exec-managerial & 500 & \textcolor{darkgreen}{\makecell{ 0.165 \\ (0.017)}} & \textcolor{red}{\makecell{ 0.362 \\ (0.033)}} & \textcolor{darkgreen}{\makecell{ 0.149 \\ (0.018)}} & \makecell{ 0.298 \\ (0.033)} & \textcolor{darkgreen}{\makecell{ 0.139 \\ (0.017)}} & \makecell{ 0.282 \\ (0.034)} & \textcolor{darkgreen}{\makecell{ 0.179 \\ (0.016)}} & \textcolor{red}{\makecell{ 0.439 \\ (0.025)}} & \makecell{ 0.039 \\ (0.011)} & \makecell{ 0.103 \\ (0.017)} & \makecell{ 0.000 \\ (0.000)} & \makecell{ 0.000 \\ (0.000)}\\
\hspace{1em}Exec-managerial & 1000 & \textcolor{darkgreen}{\makecell{ 0.139 \\ (0.016)}} & \makecell{ 0.340 \\ (0.031)} & \makecell{ 0.128 \\ (0.015)} & \makecell{ 0.300 \\ (0.024)} & \makecell{ 0.097 \\ (0.017)} & \makecell{ 0.201 \\ (0.038)} & \textcolor{darkgreen}{\makecell{ 0.172 \\ (0.015)}} & \textcolor{red}{\makecell{ 0.517 \\ (0.021)}} & \makecell{ 0.040 \\ (0.008)} & \makecell{ 0.193 \\ (0.023)} & \makecell{ 0.000 \\ (0.000)} & \makecell{ 0.000 \\ (0.000)}\\
\hspace{1em}Exec-managerial & 2000 & \makecell{ 0.115 \\ (0.013)} & \textcolor{red}{\makecell{ 0.442 \\ (0.021)}} & \makecell{ 0.115 \\ (0.013)} & \textcolor{red}{\makecell{ 0.440 \\ (0.021)}} & \makecell{ 0.062 \\ (0.013)} & \makecell{ 0.231 \\ (0.030)} & \textcolor{darkgreen}{\makecell{ 0.161 \\ (0.013)}} & \textcolor{red}{\makecell{ 0.611 \\ (0.018)}} & \makecell{ 0.044 \\ (0.008)} & \makecell{ 0.319 \\ (0.018)} & \makecell{ 0.000 \\ (0.000)} & \makecell{ 0.000 \\ (0.000)}\\
\addlinespace[0.3em]
\multicolumn{14}{l}{\textbf{Other-service}}\\
\hspace{1em}Other-service & 200 & \makecell{ 0.026 \\ (0.007)} & \makecell{ 0.114 \\ (0.034)} & \makecell{ 0.016 \\ (0.006)} & \makecell{ 0.050 \\ (0.022)} & \makecell{ 0.023 \\ (0.007)} & \makecell{ 0.101 \\ (0.034)} & \makecell{ 0.025 \\ (0.006)} & \makecell{ 0.131 \\ (0.036)} & \makecell{ 0.000 \\ (0.000)} & \makecell{ 0.000 \\ (0.000)} & \makecell{ 0.000 \\ (0.000)} & \makecell{ 0.000 \\ (0.000)}\\
\hspace{1em}Other-service & 500 & \makecell{ 0.065 \\ (0.012)} & \makecell{ 0.249 \\ (0.055)} & \makecell{ 0.071 \\ (0.012)} & \makecell{ 0.269 \\ (0.053)} & \makecell{ 0.041 \\ (0.008)} & \makecell{ 0.180 \\ (0.044)} & \makecell{ 0.044 \\ (0.006)} & \makecell{ 0.242 \\ (0.045)} & \makecell{ 0.009 \\ (0.004)} & \makecell{ 0.082 \\ (0.028)} & \makecell{ 0.000 \\ (0.000)} & \makecell{ 0.000 \\ (0.000)}\\
\hspace{1em}Other-service & 1000 & \makecell{ 0.039 \\ (0.008)} & \makecell{ 0.118 \\ (0.028)} & \makecell{ 0.038 \\ (0.008)} & \makecell{ 0.129 \\ (0.033)} & \makecell{ 0.034 \\ (0.007)} & \makecell{ 0.069 \\ (0.026)} & \makecell{ 0.035 \\ (0.006)} & \makecell{ 0.129 \\ (0.040)} & \makecell{ 0.012 \\ (0.004)} & \makecell{ 0.049 \\ (0.018)} & \makecell{ 0.000 \\ (0.000)} & \makecell{ 0.000 \\ (0.000)}\\
\hspace{1em}Other-service & 2000 & \makecell{ 0.070 \\ (0.011)} & \makecell{ 0.253 \\ (0.042)} & \makecell{ 0.073 \\ (0.011)} & \makecell{ 0.253 \\ (0.042)} & \makecell{ 0.040 \\ (0.009)} & \makecell{ 0.107 \\ (0.025)} & \makecell{ 0.028 \\ (0.006)} & \makecell{ 0.201 \\ (0.029)} & \makecell{ 0.013 \\ (0.004)} & \makecell{ 0.062 \\ (0.018)} & \makecell{ 0.000 \\ (0.000)} & \makecell{ 0.000 \\ (0.000)}\\
\addlinespace[0.3em]
\multicolumn{14}{l}{\textbf{Others}}\\
\hspace{1em}Others & 200 & \makecell{ 0.058 \\ (0.012)} & \makecell{ 0.141 \\ (0.021)} & \makecell{ 0.059 \\ (0.012)} & \makecell{ 0.118 \\ (0.024)} & \makecell{ 0.056 \\ (0.012)} & \makecell{ 0.137 \\ (0.021)} & \makecell{ 0.060 \\ (0.012)} & \makecell{ 0.151 \\ (0.024)} & \makecell{ 0.005 \\ (0.002)} & \makecell{ 0.007 \\ (0.004)} & \makecell{ 0.000 \\ (0.000)} & \makecell{ 0.000 \\ (0.000)}\\
\hspace{1em}Others & 500 & \makecell{ 0.064 \\ (0.009)} & \makecell{ 0.187 \\ (0.024)} & \makecell{ 0.069 \\ (0.009)} & \makecell{ 0.195 \\ (0.024)} & \makecell{ 0.054 \\ (0.010)} & \makecell{ 0.163 \\ (0.026)} & \makecell{ 0.058 \\ (0.008)} & \makecell{ 0.187 \\ (0.023)} & \makecell{ 0.011 \\ (0.004)} & \makecell{ 0.047 \\ (0.011)} & \makecell{ 0.000 \\ (0.000)} & \makecell{ 0.000 \\ (0.000)}\\
\hspace{1em}Others & 1000 & \makecell{ 0.070 \\ (0.007)} & \makecell{ 0.214 \\ (0.019)} & \makecell{ 0.073 \\ (0.007)} & \makecell{ 0.221 \\ (0.022)} & \makecell{ 0.054 \\ (0.008)} & \makecell{ 0.146 \\ (0.017)} & \makecell{ 0.050 \\ (0.007)} & \makecell{ 0.243 \\ (0.013)} & \makecell{ 0.010 \\ (0.003)} & \makecell{ 0.081 \\ (0.010)} & \makecell{ 0.004 \\ (0.002)} & \makecell{ 0.027 \\ (0.009)}\\
\hspace{1em}Others & 2000 & \makecell{ 0.071 \\ (0.008)} & \makecell{ 0.225 \\ (0.018)} & \makecell{ 0.077 \\ (0.010)} & \makecell{ 0.229 \\ (0.018)} & \makecell{ 0.059 \\ (0.005)} & \makecell{ 0.161 \\ (0.017)} & \makecell{ 0.062 \\ (0.008)} & \makecell{ 0.259 \\ (0.016)} & \makecell{ 0.017 \\ (0.003)} & \makecell{ 0.102 \\ (0.012)} & \makecell{ 0.023 \\ (0.004)} & \makecell{ 0.061 \\ (0.009)}\\
\addlinespace[0.3em]
\multicolumn{14}{l}{\textbf{Prof-specialty}}\\
\hspace{1em}Prof-specialty & 200 & \makecell{ 0.094 \\ (0.017)} & \makecell{ 0.243 \\ (0.032)} & \makecell{ 0.040 \\ (0.012)} & \makecell{ 0.079 \\ (0.020)} & \makecell{ 0.091 \\ (0.016)} & \makecell{ 0.232 \\ (0.033)} & \makecell{ 0.125 \\ (0.017)} & \textcolor{red}{\makecell{ 0.308 \\ (0.029)}} & \makecell{ 0.000 \\ (0.000)} & \makecell{ 0.000 \\ (0.000)} & \makecell{ 0.000 \\ (0.000)} & \makecell{ 0.000 \\ (0.000)}\\
\hspace{1em}Prof-specialty & 500 & \makecell{ 0.136 \\ (0.024)} & \textcolor{red}{\makecell{ 0.309 \\ (0.033)}} & \makecell{ 0.129 \\ (0.019)} & \makecell{ 0.275 \\ (0.034)} & \makecell{ 0.118 \\ (0.021)} & \makecell{ 0.225 \\ (0.031)} & \makecell{ 0.147 \\ (0.027)} & \textcolor{red}{\makecell{ 0.385 \\ (0.029)}} & \makecell{ 0.018 \\ (0.006)} & \makecell{ 0.086 \\ (0.015)} & \makecell{ 0.000 \\ (0.000)} & \makecell{ 0.000 \\ (0.000)}\\
\hspace{1em}Prof-specialty & 1000 & \makecell{ 0.122 \\ (0.016)} & \makecell{ 0.332 \\ (0.018)} & \makecell{ 0.120 \\ (0.016)} & \makecell{ 0.321 \\ (0.019)} & \makecell{ 0.074 \\ (0.015)} & \makecell{ 0.193 \\ (0.026)} & \textcolor{darkgreen}{\makecell{ 0.159 \\ (0.018)}} & \textcolor{red}{\makecell{ 0.464 \\ (0.019)}} & \makecell{ 0.039 \\ (0.010)} & \makecell{ 0.163 \\ (0.018)} & \makecell{ 0.000 \\ (0.000)} & \makecell{ 0.000 \\ (0.000)}\\
\hspace{1em}Prof-specialty & 2000 & \textcolor{darkgreen}{\makecell{ 0.131 \\ (0.013)}} & \makecell{ 0.411 \\ (0.024)} & \textcolor{darkgreen}{\makecell{ 0.131 \\ (0.013)}} & \makecell{ 0.404 \\ (0.026)} & \makecell{ 0.073 \\ (0.014)} & \makecell{ 0.205 \\ (0.029)} & \textcolor{darkgreen}{\makecell{ 0.167 \\ (0.012)}} & \textcolor{red}{\makecell{ 0.562 \\ (0.021)}} & \makecell{ 0.037 \\ (0.008)} & \makecell{ 0.262 \\ (0.017)} & \makecell{ 0.000 \\ (0.000)} & \makecell{ 0.000 \\ (0.000)}\\
\addlinespace[0.3em]
\multicolumn{14}{l}{\textbf{Sales}}\\
\hspace{1em}Sales & 200 & \makecell{ 0.110 \\ (0.016)} & \textcolor{red}{\makecell{ 0.326 \\ (0.035)}} & \makecell{ 0.032 \\ (0.008)} & \makecell{ 0.120 \\ (0.022)} & \makecell{ 0.105 \\ (0.016)} & \textcolor{red}{\makecell{ 0.306 \\ (0.037)}} & \makecell{ 0.127 \\ (0.015)} & \textcolor{red}{\makecell{ 0.415 \\ (0.036)}} & \makecell{ 0.002 \\ (0.002)} & \makecell{ 0.013 \\ (0.007)} & \makecell{ 0.000 \\ (0.000)} & \makecell{ 0.000 \\ (0.000)}\\
\hspace{1em}Sales & 500 & \makecell{ 0.086 \\ (0.015)} & \textcolor{red}{\makecell{ 0.344 \\ (0.035)}} & \makecell{ 0.084 \\ (0.014)} & \textcolor{red}{\makecell{ 0.327 \\ (0.037)}} & \makecell{ 0.071 \\ (0.011)} & \makecell{ 0.286 \\ (0.038)} & \makecell{ 0.126 \\ (0.019)} & \textcolor{red}{\makecell{ 0.428 \\ (0.034)}} & \makecell{ 0.021 \\ (0.005)} & \makecell{ 0.124 \\ (0.024)} & \makecell{ 0.000 \\ (0.000)} & \makecell{ 0.000 \\ (0.000)}\\
\hspace{1em}Sales & 1000 & \makecell{ 0.103 \\ (0.016)} & \textcolor{red}{\makecell{ 0.382 \\ (0.025)}} & \makecell{ 0.091 \\ (0.015)} & \makecell{ 0.357 \\ (0.024)} & \makecell{ 0.072 \\ (0.013)} & \makecell{ 0.221 \\ (0.032)} & \textcolor{darkgreen}{\makecell{ 0.128 \\ (0.012)}} & \textcolor{red}{\makecell{ 0.535 \\ (0.025)}} & \makecell{ 0.036 \\ (0.008)} & \makecell{ 0.218 \\ (0.021)} & \makecell{ 0.000 \\ (0.000)} & \makecell{ 0.000 \\ (0.000)}\\
\hspace{1em}Sales & 2000 & \makecell{ 0.112 \\ (0.014)} & \makecell{ 0.404 \\ (0.031)} & \makecell{ 0.114 \\ (0.014)} & \makecell{ 0.402 \\ (0.031)} & \makecell{ 0.045 \\ (0.009)} & \makecell{ 0.185 \\ (0.032)} & \makecell{ 0.111 \\ (0.011)} & \textcolor{red}{\makecell{ 0.548 \\ (0.021)}} & \makecell{ 0.041 \\ (0.009)} & \makecell{ 0.278 \\ (0.025)} & \makecell{ 0.000 \\ (0.000)} & \makecell{ 0.000 \\ (0.000)}\\
\bottomrule
\end{tabular}

\end{table}

\begin{figure*}[!htb]
    \centering
    \includegraphics[width=0.8\linewidth]{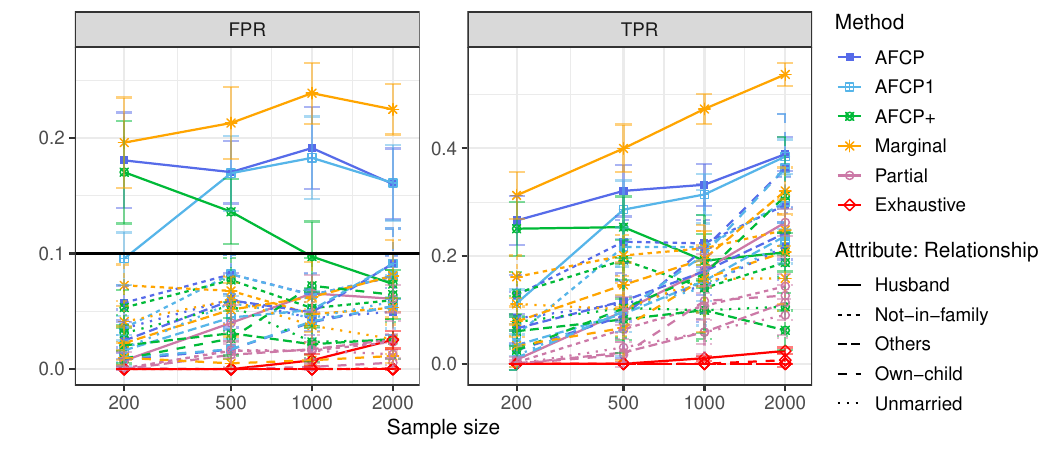}\vspace{-0.5cm}
    \caption{Performance on conformal p-values constructed by different methods for groups formed by Relationship. See Table~\ref{tab:main_exp_od_real_ndata_relationship} for numerical details and standard errors.} 
    \label{fig:main_exp_od_real_ndata_relationship}
\end{figure*}

\begin{table}[!htb]
\centering
    \caption{Coverage and size of prediction sets constructed with different methods for groups formed by Relationship. See corresponding plots in Figure~\ref{fig:main_exp_od_real_ndata_relationship}.}
  \label{tab:main_exp_od_real_ndata_relationship}

\centering
\fontsize{6}{6}\selectfont

\begin{tabular}[t]{lrllllllllllll}
\toprule
\multicolumn{2}{c}{ } & \multicolumn{2}{c}{AFCP} & \multicolumn{2}{c}{AFCP1} & \multicolumn{2}{c}{AFCP+} & \multicolumn{2}{c}{Marginal} & \multicolumn{2}{c}{Partial} & \multicolumn{2}{c}{Exhaustive} \\
\cmidrule(l{3pt}r{3pt}){3-4} \cmidrule(l{3pt}r{3pt}){5-6} \cmidrule(l{3pt}r{3pt}){7-8} \cmidrule(l{3pt}r{3pt}){9-10} \cmidrule(l{3pt}r{3pt}){11-12} \cmidrule(l{3pt}r{3pt}){13-14}
Attribute: Relationship & \makecell{Sample\\ size} & FPR & TPR & FPR & TPR & FPR & TPR & FPR & TPR & FPR & TPR & FPR & TPR\\
\midrule
\addlinespace[0.3em]
\multicolumn{14}{l}{\textbf{Husband}}\\
\hspace{1em}Husband & 200 & \textcolor{darkgreen}{\makecell{ 0.181 \\ (0.021)}} & \textcolor{red}{\makecell{ 0.266 \\ (0.023)}} & \makecell{ 0.096 \\ (0.011)} & \makecell{ 0.112 \\ (0.013)} & \textcolor{darkgreen}{\makecell{ 0.170 \\ (0.022)}} & \textcolor{red}{\makecell{ 0.251 \\ (0.025)}} & \textcolor{darkgreen}{\makecell{ 0.196 \\ (0.019)}} & \textcolor{red}{\makecell{ 0.313 \\ (0.022)}} & \makecell{ 0.007 \\ (0.004)} & \makecell{ 0.007 \\ (0.003)} & \makecell{ 0.000 \\ (0.000)} & \makecell{ 0.000 \\ (0.000)}\\
\hspace{1em}Husband & 500 & \textcolor{darkgreen}{\makecell{ 0.170 \\ (0.014)}} & \textcolor{red}{\makecell{ 0.321 \\ (0.024)}} & \textcolor{darkgreen}{\makecell{ 0.169 \\ (0.016)}} & \textcolor{red}{\makecell{ 0.286 \\ (0.027)}} & \textcolor{darkgreen}{\makecell{ 0.136 \\ (0.014)}} & \makecell{ 0.254 \\ (0.028)} & \textcolor{darkgreen}{\makecell{ 0.213 \\ (0.016)}} & \textcolor{red}{\makecell{ 0.400 \\ (0.022)}} & \makecell{ 0.040 \\ (0.008)} & \makecell{ 0.098 \\ (0.013)} & \makecell{ 0.000 \\ (0.000)} & \makecell{ 0.000 \\ (0.000)}\\
\hspace{1em}Husband & 1000 & \textcolor{darkgreen}{\makecell{ 0.191 \\ (0.018)}} & \textcolor{red}{\makecell{ 0.332 \\ (0.019)}} & \textcolor{darkgreen}{\makecell{ 0.183 \\ (0.018)}} & \makecell{ 0.314 \\ (0.019)} & \makecell{ 0.097 \\ (0.015)} & \makecell{ 0.191 \\ (0.025)} & \textcolor{darkgreen}{\makecell{ 0.239 \\ (0.013)}} & \textcolor{red}{\makecell{ 0.473 \\ (0.014)}} & \makecell{ 0.066 \\ (0.008)} & \makecell{ 0.174 \\ (0.015)} & \makecell{ 0.008 \\ (0.003)} & \makecell{ 0.010 \\ (0.002)}\\
\hspace{1em}Husband & 2000 & \textcolor{darkgreen}{\makecell{ 0.160 \\ (0.015)}} & \textcolor{red}{\makecell{ 0.389 \\ (0.016)}} & \textcolor{darkgreen}{\makecell{ 0.161 \\ (0.016)}} & \textcolor{red}{\makecell{ 0.385 \\ (0.016)}} & \makecell{ 0.074 \\ (0.006)} & \makecell{ 0.207 \\ (0.018)} & \textcolor{darkgreen}{\makecell{ 0.225 \\ (0.011)}} & \textcolor{red}{\makecell{ 0.537 \\ (0.011)}} & \makecell{ 0.061 \\ (0.006)} & \makecell{ 0.262 \\ (0.013)} & \makecell{ 0.025 \\ (0.004)} & \makecell{ 0.024 \\ (0.003)}\\
\addlinespace[0.3em]
\multicolumn{14}{l}{\textbf{Not-in-family}}\\
\hspace{1em}Not-in-family & 200 & \makecell{ 0.057 \\ (0.007)} & \makecell{ 0.127 \\ (0.018)} & \makecell{ 0.035 \\ (0.005)} & \makecell{ 0.054 \\ (0.010)} & \makecell{ 0.053 \\ (0.007)} & \makecell{ 0.131 \\ (0.019)} & \makecell{ 0.072 \\ (0.009)} & \makecell{ 0.161 \\ (0.020)} & \makecell{ 0.001 \\ (0.001)} & \makecell{ 0.003 \\ (0.002)} & \makecell{ 0.000 \\ (0.000)} & \makecell{ 0.000 \\ (0.000)}\\
\hspace{1em}Not-in-family & 500 & \makecell{ 0.082 \\ (0.009)} & \makecell{ 0.227 \\ (0.020)} & \makecell{ 0.082 \\ (0.008)} & \makecell{ 0.217 \\ (0.018)} & \makecell{ 0.077 \\ (0.009)} & \makecell{ 0.193 \\ (0.019)} & \makecell{ 0.068 \\ (0.006)} & \makecell{ 0.201 \\ (0.019)} & \makecell{ 0.016 \\ (0.003)} & \makecell{ 0.064 \\ (0.009)} & \makecell{ 0.000 \\ (0.000)} & \makecell{ 0.000 \\ (0.000)}\\
\hspace{1em}Not-in-family & 1000 & \makecell{ 0.064 \\ (0.009)} & \makecell{ 0.224 \\ (0.022)} & \makecell{ 0.064 \\ (0.009)} & \makecell{ 0.217 \\ (0.022)} & \makecell{ 0.053 \\ (0.009)} & \makecell{ 0.141 \\ (0.022)} & \makecell{ 0.048 \\ (0.005)} & \makecell{ 0.215 \\ (0.016)} & \makecell{ 0.016 \\ (0.004)} & \makecell{ 0.108 \\ (0.012)} & \makecell{ 0.000 \\ (0.000)} & \makecell{ 0.000 \\ (0.000)}\\
\hspace{1em}Not-in-family & 2000 & \makecell{ 0.080 \\ (0.009)} & \makecell{ 0.295 \\ (0.026)} & \makecell{ 0.081 \\ (0.009)} & \makecell{ 0.300 \\ (0.025)} & \makecell{ 0.060 \\ (0.008)} & \makecell{ 0.188 \\ (0.028)} & \makecell{ 0.053 \\ (0.004)} & \makecell{ 0.248 \\ (0.015)} & \makecell{ 0.023 \\ (0.004)} & \makecell{ 0.144 \\ (0.012)} & \makecell{ 0.000 \\ (0.000)} & \makecell{ 0.000 \\ (0.000)}\\
\addlinespace[0.3em]
\multicolumn{14}{l}{\textbf{Others}}\\
\hspace{1em}Others & 200 & \makecell{ 0.025 \\ (0.008)} & \makecell{ 0.066 \\ (0.013)} & \makecell{ 0.016 \\ (0.007)} & \makecell{ 0.035 \\ (0.010)} & \makecell{ 0.021 \\ (0.008)} & \makecell{ 0.061 \\ (0.014)} & \makecell{ 0.022 \\ (0.007)} & \makecell{ 0.078 \\ (0.014)} & \makecell{ 0.000 \\ (0.000)} & \makecell{ 0.000 \\ (0.000)} & \makecell{ 0.000 \\ (0.000)} & \makecell{ 0.000 \\ (0.000)}\\
\hspace{1em}Others & 500 & \makecell{ 0.059 \\ (0.012)} & \makecell{ 0.117 \\ (0.019)} & \makecell{ 0.045 \\ (0.009)} & \makecell{ 0.093 \\ (0.017)} & \makecell{ 0.031 \\ (0.010)} & \makecell{ 0.082 \\ (0.019)} & \makecell{ 0.051 \\ (0.012)} & \makecell{ 0.146 \\ (0.021)} & \makecell{ 0.012 \\ (0.006)} & \makecell{ 0.021 \\ (0.009)} & \makecell{ 0.000 \\ (0.000)} & \makecell{ 0.000 \\ (0.000)}\\
\hspace{1em}Others & 1000 & \makecell{ 0.049 \\ (0.008)} & \makecell{ 0.172 \\ (0.022)} & \makecell{ 0.051 \\ (0.008)} & \makecell{ 0.154 \\ (0.019)} & \makecell{ 0.021 \\ (0.008)} & \makecell{ 0.100 \\ (0.027)} & \makecell{ 0.062 \\ (0.015)} & \makecell{ 0.195 \\ (0.026)} & \makecell{ 0.017 \\ (0.006)} & \makecell{ 0.059 \\ (0.011)} & \makecell{ 0.000 \\ (0.000)} & \makecell{ 0.000 \\ (0.000)}\\
\hspace{1em}Others & 2000 & \makecell{ 0.091 \\ (0.015)} & \makecell{ 0.243 \\ (0.025)} & \makecell{ 0.091 \\ (0.015)} & \makecell{ 0.237 \\ (0.025)} & \makecell{ 0.026 \\ (0.009)} & \makecell{ 0.062 \\ (0.018)} & \makecell{ 0.081 \\ (0.015)} & \makecell{ 0.321 \\ (0.022)} & \makecell{ 0.026 \\ (0.008)} & \makecell{ 0.114 \\ (0.016)} & \makecell{ 0.000 \\ (0.000)} & \makecell{ 0.000 \\ (0.000)}\\
\addlinespace[0.3em]
\multicolumn{14}{l}{\textbf{Own-child}}\\
\hspace{1em}Own-child & 200 & \makecell{ 0.008 \\ (0.003)} & \makecell{ 0.025 \\ (0.018)} & \makecell{ 0.010 \\ (0.003)} & \makecell{ 0.011 \\ (0.011)} & \makecell{ 0.009 \\ (0.003)} & \makecell{ 0.025 \\ (0.018)} & \makecell{ 0.010 \\ (0.003)} & \makecell{ 0.033 \\ (0.020)} & \makecell{ 0.001 \\ (0.001)} & \makecell{ 0.000 \\ (0.000)} & \makecell{ 0.000 \\ (0.000)} & \makecell{ 0.000 \\ (0.000)}\\
\hspace{1em}Own-child & 500 & \makecell{ 0.017 \\ (0.009)} & \makecell{ 0.089 \\ (0.040)} & \makecell{ 0.017 \\ (0.009)} & \makecell{ 0.097 \\ (0.042)} & \makecell{ 0.026 \\ (0.009)} & \makecell{ 0.103 \\ (0.041)} & \makecell{ 0.005 \\ (0.002)} & \makecell{ 0.067 \\ (0.030)} & \makecell{ 0.000 \\ (0.000)} & \makecell{ 0.017 \\ (0.012)} & \makecell{ 0.000 \\ (0.000)} & \makecell{ 0.000 \\ (0.000)}\\
\hspace{1em}Own-child & 1000 & \makecell{ 0.041 \\ (0.014)} & \makecell{ 0.214 \\ (0.060)} & \makecell{ 0.041 \\ (0.014)} & \makecell{ 0.203 \\ (0.060)} & \makecell{ 0.072 \\ (0.012)} & \makecell{ 0.167 \\ (0.055)} & \makecell{ 0.008 \\ (0.002)} & \makecell{ 0.158 \\ (0.050)} & \makecell{ 0.002 \\ (0.001)} & \makecell{ 0.117 \\ (0.049)} & \makecell{ 0.000 \\ (0.000)} & \makecell{ 0.000 \\ (0.000)}\\
\hspace{1em}Own-child & 2000 & \makecell{ 0.054 \\ (0.012)} & \makecell{ 0.363 \\ (0.050)} & \makecell{ 0.054 \\ (0.012)} & \makecell{ 0.363 \\ (0.050)} & \makecell{ 0.064 \\ (0.011)} & \makecell{ 0.312 \\ (0.055)} & \makecell{ 0.011 \\ (0.003)} & \makecell{ 0.208 \\ (0.049)} & \makecell{ 0.006 \\ (0.002)} & \makecell{ 0.127 \\ (0.036)} & \makecell{ 0.000 \\ (0.000)} & \makecell{ 0.007 \\ (0.007)}\\
\addlinespace[0.3em]
\multicolumn{14}{l}{\textbf{Unmarried}}\\
\hspace{1em}Unmarried & 200 & \makecell{ 0.035 \\ (0.008)} & \makecell{ 0.084 \\ (0.025)} & \makecell{ 0.018 \\ (0.006)} & \makecell{ 0.015 \\ (0.011)} & \makecell{ 0.031 \\ (0.008)} & \makecell{ 0.078 \\ (0.024)} & \makecell{ 0.041 \\ (0.009)} & \makecell{ 0.111 \\ (0.029)} & \makecell{ 0.000 \\ (0.000)} & \makecell{ 0.000 \\ (0.000)} & \makecell{ 0.000 \\ (0.000)} & \makecell{ 0.000 \\ (0.000)}\\
\hspace{1em}Unmarried & 500 & \makecell{ 0.060 \\ (0.010)} & \makecell{ 0.112 \\ (0.027)} & \makecell{ 0.057 \\ (0.010)} & \makecell{ 0.102 \\ (0.027)} & \makecell{ 0.055 \\ (0.011)} & \makecell{ 0.103 \\ (0.027)} & \makecell{ 0.060 \\ (0.010)} & \makecell{ 0.101 \\ (0.025)} & \makecell{ 0.017 \\ (0.007)} & \makecell{ 0.030 \\ (0.013)} & \makecell{ 0.000 \\ (0.000)} & \makecell{ 0.000 \\ (0.000)}\\
\hspace{1em}Unmarried & 1000 & \makecell{ 0.044 \\ (0.007)} & \makecell{ 0.144 \\ (0.034)} & \makecell{ 0.044 \\ (0.009)} & \makecell{ 0.142 \\ (0.035)} & \makecell{ 0.024 \\ (0.007)} & \makecell{ 0.099 \\ (0.029)} & \makecell{ 0.038 \\ (0.007)} & \makecell{ 0.157 \\ (0.031)} & \makecell{ 0.016 \\ (0.004)} & \makecell{ 0.059 \\ (0.023)} & \makecell{ 0.000 \\ (0.000)} & \makecell{ 0.000 \\ (0.000)}\\
\hspace{1em}Unmarried & 2000 & \makecell{ 0.050 \\ (0.010)} & \makecell{ 0.218 \\ (0.031)} & \makecell{ 0.058 \\ (0.012)} & \makecell{ 0.226 \\ (0.031)} & \makecell{ 0.025 \\ (0.008)} & \makecell{ 0.106 \\ (0.027)} & \makecell{ 0.026 \\ (0.006)} & \makecell{ 0.159 \\ (0.030)} & \makecell{ 0.013 \\ (0.005)} & \makecell{ 0.090 \\ (0.021)} & \makecell{ 0.000 \\ (0.000)} & \makecell{ 0.000 \\ (0.000)}\\
\bottomrule
\end{tabular}

\end{table}

\begin{figure*}[!htb]
    \centering
    \includegraphics[width=0.8\linewidth]{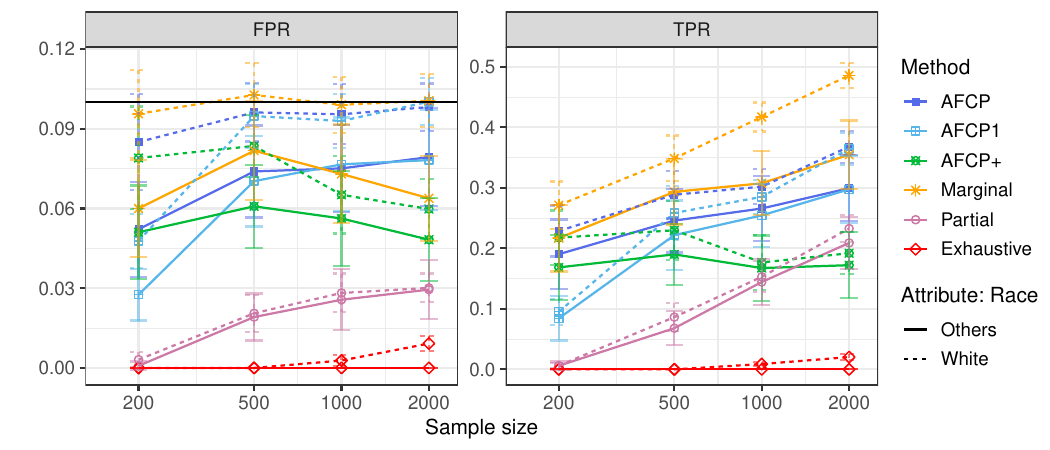}\vspace{-0.5cm}
    \caption{Performance on conformal p-values constructed by different methods for groups formed by Race. See Table~\ref{tab:main_exp_od_real_ndata_race} for numerical details and standard errors.} 
    \label{fig:main_exp_od_real_ndata_race}
\end{figure*}

\begin{table}[!htb]
\centering
    \caption{Coverage and size of prediction sets constructed with different methods for groups formed by Race. See corresponding plots in Figure~\ref{fig:main_exp_od_real_ndata_race}.}
  \label{tab:main_exp_od_real_ndata_race}

\centering
\fontsize{6}{6}\selectfont

\begin{tabular}[t]{lrllllllllllll}
\toprule
\multicolumn{2}{c}{ } & \multicolumn{2}{c}{AFCP} & \multicolumn{2}{c}{AFCP1} & \multicolumn{2}{c}{AFCP+} & \multicolumn{2}{c}{Marginal} & \multicolumn{2}{c}{Partial} & \multicolumn{2}{c}{Exhaustive} \\
\cmidrule(l{3pt}r{3pt}){3-4} \cmidrule(l{3pt}r{3pt}){5-6} \cmidrule(l{3pt}r{3pt}){7-8} \cmidrule(l{3pt}r{3pt}){9-10} \cmidrule(l{3pt}r{3pt}){11-12} \cmidrule(l{3pt}r{3pt}){13-14}
Attribute: Race & \makecell{Sample\\ size} & FPR & TPR & FPR & TPR & FPR & TPR & FPR & TPR & FPR & TPR & FPR & TPR\\
\midrule
\addlinespace[0.3em]
\multicolumn{14}{l}{\textbf{Others}}\\
\hspace{1em}Others & 200 & \makecell{ 0.052 \\ (0.009)} & \textcolor{red}{\makecell{ 0.190 \\ (0.029)}} & \makecell{ 0.028 \\ (0.005)} & \makecell{ 0.084 \\ (0.018)} & \makecell{ 0.051 \\ (0.009)} & \makecell{ 0.169 \\ (0.027)} & \makecell{ 0.060 \\ (0.009)} & \textcolor{red}{\makecell{ 0.217 \\ (0.028)}} & \makecell{ 0.001 \\ (0.001)} & \makecell{ 0.006 \\ (0.004)} & \makecell{ 0.000 \\ (0.000)} & \makecell{ 0.000 \\ (0.000)}\\
\hspace{1em}Others & 500 & \makecell{ 0.074 \\ (0.009)} & \textcolor{red}{\makecell{ 0.246 \\ (0.026)}} & \makecell{ 0.070 \\ (0.009)} & \makecell{ 0.222 \\ (0.029)} & \makecell{ 0.061 \\ (0.008)} & \makecell{ 0.190 \\ (0.025)} & \makecell{ 0.082 \\ (0.009)} & \textcolor{red}{\makecell{ 0.293 \\ (0.027)}} & \makecell{ 0.019 \\ (0.004)} & \makecell{ 0.068 \\ (0.014)} & \makecell{ 0.000 \\ (0.000)} & \makecell{ 0.000 \\ (0.000)}\\
\hspace{1em}Others & 1000 & \makecell{ 0.075 \\ (0.008)} & \makecell{ 0.266 \\ (0.027)} & \makecell{ 0.077 \\ (0.009)} & \makecell{ 0.254 \\ (0.026)} & \makecell{ 0.056 \\ (0.009)} & \makecell{ 0.167 \\ (0.027)} & \makecell{ 0.073 \\ (0.009)} & \textcolor{red}{\makecell{ 0.308 \\ (0.026)}} & \makecell{ 0.026 \\ (0.006)} & \makecell{ 0.144 \\ (0.019)} & \makecell{ 0.000 \\ (0.000)} & \makecell{ 0.000 \\ (0.000)}\\
\hspace{1em}Others & 2000 & \makecell{ 0.079 \\ (0.009)} & \makecell{ 0.299 \\ (0.027)} & \makecell{ 0.078 \\ (0.009)} & \makecell{ 0.298 \\ (0.028)} & \makecell{ 0.048 \\ (0.008)} & \makecell{ 0.172 \\ (0.027)} & \makecell{ 0.064 \\ (0.008)} & \textcolor{red}{\makecell{ 0.355 \\ (0.028)}} & \makecell{ 0.029 \\ (0.006)} & \makecell{ 0.209 \\ (0.022)} & \makecell{ 0.000 \\ (0.000)} & \makecell{ 0.000 \\ (0.000)}\\
\addlinespace[0.3em]
\multicolumn{14}{l}{\textbf{White}}\\
\hspace{1em}White & 200 & \makecell{ 0.085 \\ (0.009)} & \textcolor{red}{\makecell{ 0.229 \\ (0.021)}} & \makecell{ 0.048 \\ (0.005)} & \makecell{ 0.095 \\ (0.011)} & \makecell{ 0.079 \\ (0.010)} & \textcolor{red}{\makecell{ 0.218 \\ (0.022)}} & \makecell{ 0.096 \\ (0.008)} & \textcolor{red}{\makecell{ 0.271 \\ (0.020)}} & \makecell{ 0.003 \\ (0.001)} & \makecell{ 0.006 \\ (0.002)} & \makecell{ 0.000 \\ (0.000)} & \makecell{ 0.000 \\ (0.000)}\\
\hspace{1em}White & 500 & \makecell{ 0.096 \\ (0.005)} & \textcolor{red}{\makecell{ 0.289 \\ (0.020)}} & \makecell{ 0.095 \\ (0.006)} & \textcolor{red}{\makecell{ 0.258 \\ (0.022)}} & \makecell{ 0.084 \\ (0.006)} & \makecell{ 0.230 \\ (0.024)} & \makecell{ 0.103 \\ (0.006)} & \textcolor{red}{\makecell{ 0.349 \\ (0.019)}} & \makecell{ 0.021 \\ (0.003)} & \makecell{ 0.086 \\ (0.011)} & \makecell{ 0.000 \\ (0.000)} & \makecell{ 0.000 \\ (0.000)}\\
\hspace{1em}White & 1000 & \makecell{ 0.095 \\ (0.006)} & \textcolor{red}{\makecell{ 0.302 \\ (0.015)}} & \makecell{ 0.093 \\ (0.005)} & \makecell{ 0.285 \\ (0.014)} & \makecell{ 0.065 \\ (0.007)} & \makecell{ 0.177 \\ (0.023)} & \makecell{ 0.099 \\ (0.005)} & \textcolor{red}{\makecell{ 0.417 \\ (0.012)}} & \makecell{ 0.028 \\ (0.004)} & \makecell{ 0.153 \\ (0.013)} & \makecell{ 0.003 \\ (0.001)} & \makecell{ 0.009 \\ (0.002)}\\
\hspace{1em}White & 2000 & \makecell{ 0.098 \\ (0.004)} & \textcolor{red}{\makecell{ 0.368 \\ (0.013)}} & \makecell{ 0.100 \\ (0.004)} & \textcolor{red}{\makecell{ 0.365 \\ (0.013)}} & \makecell{ 0.060 \\ (0.006)} & \makecell{ 0.192 \\ (0.017)} & \makecell{ 0.101 \\ (0.005)} & \textcolor{red}{\makecell{ 0.486 \\ (0.011)}} & \makecell{ 0.030 \\ (0.003)} & \makecell{ 0.233 \\ (0.011)} & \makecell{ 0.009 \\ (0.001)} & \makecell{ 0.020 \\ (0.003)}\\
\bottomrule
\end{tabular}

\end{table}

\begin{figure*}[!htb]
    \centering
    \includegraphics[width=0.8\linewidth]{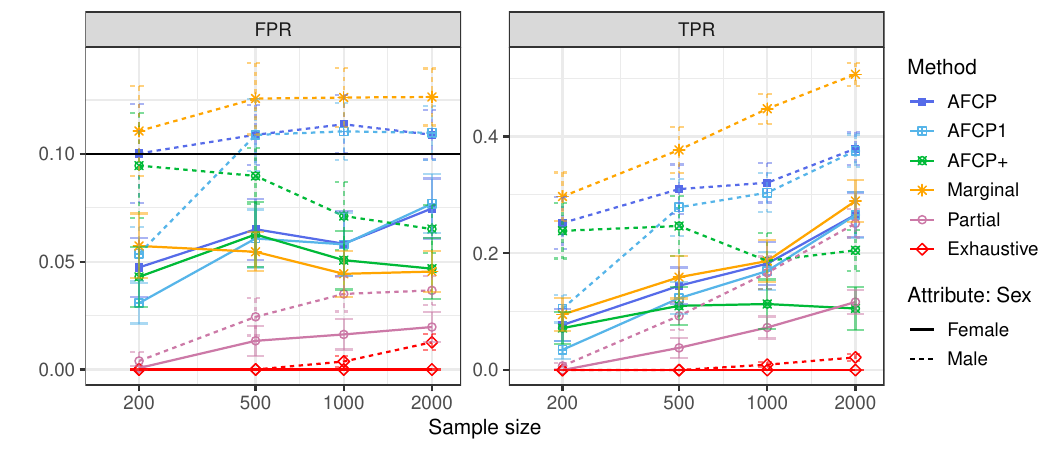}\vspace{-0.5cm}
    \caption{Performance on conformal p-values constructed by different methods for groups formed by Sex. See Table~\ref{tab:main_exp_od_real_ndata_sex} for numerical details and standard errors.} 
    \label{fig:main_exp_od_real_ndata_sex}
\end{figure*}

\begin{table}[!htb]
\centering
    \caption{Coverage and size of prediction sets constructed with different methods for groups formed by Sex. See corresponding plots in Figure~\ref{fig:main_exp_od_real_ndata_sex}.}
  \label{tab:main_exp_od_real_ndata_sex}

\centering
\fontsize{6}{6}\selectfont

\begin{tabular}[t]{lrllllllllllll}
\toprule
\multicolumn{2}{c}{ } & \multicolumn{2}{c}{AFCP} & \multicolumn{2}{c}{AFCP1} & \multicolumn{2}{c}{AFCP+} & \multicolumn{2}{c}{Marginal} & \multicolumn{2}{c}{Partial} & \multicolumn{2}{c}{Exhaustive} \\
\cmidrule(l{3pt}r{3pt}){3-4} \cmidrule(l{3pt}r{3pt}){5-6} \cmidrule(l{3pt}r{3pt}){7-8} \cmidrule(l{3pt}r{3pt}){9-10} \cmidrule(l{3pt}r{3pt}){11-12} \cmidrule(l{3pt}r{3pt}){13-14}
Attribute: Sex & \makecell{Sample\\ size} & FPR & TPR & FPR & TPR & FPR & TPR & FPR & TPR & FPR & TPR & FPR & TPR\\
\midrule
\addlinespace[0.3em]
\multicolumn{14}{l}{\textbf{Female}}\\
\hspace{1em}Female & 200 & \makecell{ 0.047 \\ (0.007)} & \makecell{ 0.077 \\ (0.013)} & \makecell{ 0.031 \\ (0.005)} & \makecell{ 0.035 \\ (0.008)} & \makecell{ 0.043 \\ (0.007)} & \makecell{ 0.072 \\ (0.014)} & \makecell{ 0.057 \\ (0.008)} & \makecell{ 0.095 \\ (0.014)} & \makecell{ 0.001 \\ (0.000)} & \makecell{ 0.000 \\ (0.000)} & \makecell{ 0.000 \\ (0.000)} & \makecell{ 0.000 \\ (0.000)}\\
\hspace{1em}Female & 500 & \makecell{ 0.065 \\ (0.007)} & \makecell{ 0.144 \\ (0.016)} & \makecell{ 0.061 \\ (0.007)} & \makecell{ 0.123 \\ (0.014)} & \makecell{ 0.063 \\ (0.008)} & \makecell{ 0.110 \\ (0.016)} & \makecell{ 0.055 \\ (0.005)} & \makecell{ 0.159 \\ (0.018)} & \makecell{ 0.013 \\ (0.003)} & \makecell{ 0.038 \\ (0.009)} & \makecell{ 0.000 \\ (0.000)} & \makecell{ 0.000 \\ (0.000)}\\
\hspace{1em}Female & 1000 & \makecell{ 0.058 \\ (0.008)} & \makecell{ 0.182 \\ (0.018)} & \makecell{ 0.058 \\ (0.007)} & \makecell{ 0.170 \\ (0.017)} & \makecell{ 0.051 \\ (0.007)} & \makecell{ 0.113 \\ (0.021)} & \makecell{ 0.044 \\ (0.005)} & \makecell{ 0.187 \\ (0.018)} & \makecell{ 0.016 \\ (0.003)} & \makecell{ 0.073 \\ (0.009)} & \makecell{ 0.000 \\ (0.000)} & \makecell{ 0.000 \\ (0.000)}\\
\hspace{1em}Female & 2000 & \makecell{ 0.075 \\ (0.007)} & \makecell{ 0.266 \\ (0.019)} & \makecell{ 0.077 \\ (0.007)} & \makecell{ 0.265 \\ (0.020)} & \makecell{ 0.047 \\ (0.007)} & \makecell{ 0.106 \\ (0.018)} & \makecell{ 0.045 \\ (0.005)} & \makecell{ 0.290 \\ (0.018)} & \makecell{ 0.020 \\ (0.003)} & \makecell{ 0.117 \\ (0.010)} & \makecell{ 0.000 \\ (0.000)} & \makecell{ 0.000 \\ (0.000)}\\
\addlinespace[0.3em]
\multicolumn{14}{l}{\textbf{Male}}\\
\hspace{1em}Male & 200 & \makecell{ 0.100 \\ (0.012)} & \textcolor{red}{\makecell{ 0.252 \\ (0.022)}} & \makecell{ 0.054 \\ (0.006)} & \makecell{ 0.105 \\ (0.012)} & \makecell{ 0.095 \\ (0.012)} & \textcolor{red}{\makecell{ 0.239 \\ (0.024)}} & \makecell{ 0.111 \\ (0.010)} & \textcolor{red}{\makecell{ 0.297 \\ (0.021)}} & \makecell{ 0.004 \\ (0.002)} & \makecell{ 0.007 \\ (0.002)} & \makecell{ 0.000 \\ (0.000)} & \makecell{ 0.000 \\ (0.000)}\\
\hspace{1em}Male & 500 & \makecell{ 0.109 \\ (0.007)} & \textcolor{red}{\makecell{ 0.310 \\ (0.021)}} & \makecell{ 0.109 \\ (0.008)} & \textcolor{red}{\makecell{ 0.279 \\ (0.024)}} & \makecell{ 0.090 \\ (0.006)} & \makecell{ 0.247 \\ (0.026)} & \textcolor{darkgreen}{\makecell{ 0.126 \\ (0.008)}} & \textcolor{red}{\makecell{ 0.377 \\ (0.020)}} & \makecell{ 0.024 \\ (0.004)} & \makecell{ 0.093 \\ (0.012)} & \makecell{ 0.000 \\ (0.000)} & \makecell{ 0.000 \\ (0.000)}\\
\hspace{1em}Male & 1000 & \textcolor{darkgreen}{\makecell{ 0.114 \\ (0.007)}} & \textcolor{red}{\makecell{ 0.321 \\ (0.017)}} & \makecell{ 0.110 \\ (0.007)} & \makecell{ 0.304 \\ (0.017)} & \makecell{ 0.071 \\ (0.008)} & \makecell{ 0.187 \\ (0.024)} & \textcolor{darkgreen}{\makecell{ 0.126 \\ (0.007)}} & \textcolor{red}{\makecell{ 0.448 \\ (0.013)}} & \makecell{ 0.035 \\ (0.004)} & \makecell{ 0.167 \\ (0.014)} & \makecell{ 0.004 \\ (0.001)} & \makecell{ 0.009 \\ (0.002)}\\
\hspace{1em}Male & 2000 & \makecell{ 0.109 \\ (0.006)} & \textcolor{red}{\makecell{ 0.379 \\ (0.014)}} & \makecell{ 0.110 \\ (0.006)} & \textcolor{red}{\makecell{ 0.376 \\ (0.014)}} & \makecell{ 0.065 \\ (0.006)} & \makecell{ 0.205 \\ (0.018)} & \textcolor{darkgreen}{\makecell{ 0.126 \\ (0.007)}} & \textcolor{red}{\makecell{ 0.507 \\ (0.010)}} & \makecell{ 0.037 \\ (0.003)} & \makecell{ 0.251 \\ (0.012)} & \makecell{ 0.013 \\ (0.002)} & \makecell{ 0.022 \\ (0.003)}\\
\bottomrule
\end{tabular}

\end{table}

\begin{figure*}[!htb]
    \centering
    \includegraphics[width=0.8\linewidth]{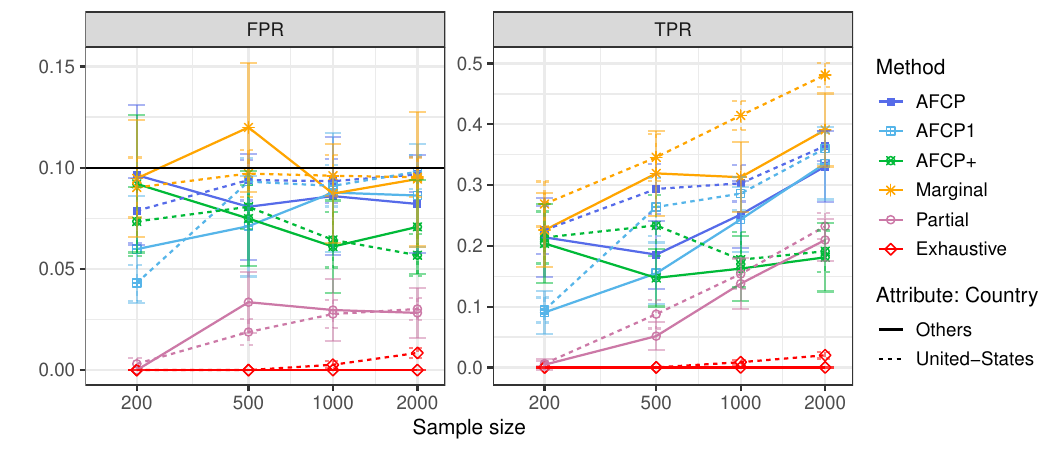}\vspace{-0.5cm}
    \caption{Performance on conformal p-values constructed by different methods for groups formed by Country. See Table~\ref{tab:main_exp_od_real_ndata_country} for numerical details and standard errors.} 
    \label{fig:main_exp_od_real_ndata_country}
\end{figure*}

\begin{table}[!htb]
\centering
    \caption{Coverage and size of prediction sets constructed with different methods for groups formed by Country. See corresponding plots in Figure~\ref{fig:main_exp_od_real_ndata_country}.}
  \label{tab:main_exp_od_real_ndata_country}

\centering
\fontsize{6}{6}\selectfont

\begin{tabular}[t]{lrllllllllllll}
\toprule
\multicolumn{2}{c}{ } & \multicolumn{2}{c}{AFCP} & \multicolumn{2}{c}{AFCP1} & \multicolumn{2}{c}{AFCP+} & \multicolumn{2}{c}{Marginal} & \multicolumn{2}{c}{Partial} & \multicolumn{2}{c}{Exhaustive} \\
\cmidrule(l{3pt}r{3pt}){3-4} \cmidrule(l{3pt}r{3pt}){5-6} \cmidrule(l{3pt}r{3pt}){7-8} \cmidrule(l{3pt}r{3pt}){9-10} \cmidrule(l{3pt}r{3pt}){11-12} \cmidrule(l{3pt}r{3pt}){13-14}
Attribute: Country & \makecell{Sample\\ size} & FPR & TPR & FPR & TPR & FPR & TPR & FPR & TPR & FPR & TPR & FPR & TPR\\
\midrule
\addlinespace[0.3em]
\multicolumn{14}{l}{\textbf{Others}}\\
\hspace{1em}Others & 200 & \makecell{ 0.096 \\ (0.017)} & \textcolor{red}{\makecell{ 0.214 \\ (0.032)}} & \makecell{ 0.060 \\ (0.013)} & \makecell{ 0.090 \\ (0.018)} & \makecell{ 0.092 \\ (0.017)} & \textcolor{red}{\makecell{ 0.204 \\ (0.032)}} & \makecell{ 0.095 \\ (0.014)} & \textcolor{red}{\makecell{ 0.226 \\ (0.030)}} & \makecell{ 0.000 \\ (0.000)} & \makecell{ 0.004 \\ (0.004)} & \makecell{ 0.000 \\ (0.000)} & \makecell{ 0.000 \\ (0.000)}\\
\hspace{1em}Others & 500 & \makecell{ 0.081 \\ (0.013)} & \makecell{ 0.186 \\ (0.028)} & \makecell{ 0.071 \\ (0.012)} & \makecell{ 0.155 \\ (0.025)} & \makecell{ 0.075 \\ (0.012)} & \makecell{ 0.147 \\ (0.024)} & \makecell{ 0.120 \\ (0.016)} & \textcolor{red}{\makecell{ 0.319 \\ (0.035)}} & \makecell{ 0.034 \\ (0.008)} & \makecell{ 0.051 \\ (0.012)} & \makecell{ 0.000 \\ (0.000)} & \makecell{ 0.000 \\ (0.000)}\\
\hspace{1em}Others & 1000 & \makecell{ 0.086 \\ (0.015)} & \makecell{ 0.252 \\ (0.028)} & \makecell{ 0.088 \\ (0.015)} & \makecell{ 0.243 \\ (0.027)} & \makecell{ 0.061 \\ (0.011)} & \makecell{ 0.163 \\ (0.027)} & \makecell{ 0.087 \\ (0.012)} & \textcolor{red}{\makecell{ 0.313 \\ (0.029)}} & \makecell{ 0.030 \\ (0.008)} & \makecell{ 0.138 \\ (0.021)} & \makecell{ 0.000 \\ (0.000)} & \makecell{ 0.000 \\ (0.000)}\\
\hspace{1em}Others & 2000 & \makecell{ 0.082 \\ (0.012)} & \makecell{ 0.331 \\ (0.029)} & \makecell{ 0.086 \\ (0.013)} & \makecell{ 0.336 \\ (0.030)} & \makecell{ 0.071 \\ (0.012)} & \makecell{ 0.181 \\ (0.028)} & \makecell{ 0.094 \\ (0.017)} & \textcolor{red}{\makecell{ 0.391 \\ (0.030)}} & \makecell{ 0.028 \\ (0.006)} & \makecell{ 0.210 \\ (0.017)} & \makecell{ 0.000 \\ (0.000)} & \makecell{ 0.000 \\ (0.000)}\\
\addlinespace[0.3em]
\multicolumn{14}{l}{\textbf{United-States}}\\
\hspace{1em}United-States & 200 & \makecell{ 0.079 \\ (0.008)} & \textcolor{red}{\makecell{ 0.226 \\ (0.020)}} & \makecell{ 0.043 \\ (0.004)} & \makecell{ 0.094 \\ (0.010)} & \makecell{ 0.073 \\ (0.009)} & \textcolor{red}{\makecell{ 0.214 \\ (0.021)}} & \makecell{ 0.090 \\ (0.007)} & \textcolor{red}{\makecell{ 0.269 \\ (0.018)}} & \makecell{ 0.003 \\ (0.001)} & \makecell{ 0.006 \\ (0.002)} & \makecell{ 0.000 \\ (0.000)} & \makecell{ 0.000 \\ (0.000)}\\
\hspace{1em}United-States & 500 & \makecell{ 0.094 \\ (0.005)} & \textcolor{red}{\makecell{ 0.294 \\ (0.021)}} & \makecell{ 0.093 \\ (0.006)} & \textcolor{red}{\makecell{ 0.264 \\ (0.024)}} & \makecell{ 0.081 \\ (0.005)} & \makecell{ 0.234 \\ (0.025)} & \makecell{ 0.097 \\ (0.006)} & \textcolor{red}{\makecell{ 0.346 \\ (0.019)}} & \makecell{ 0.019 \\ (0.003)} & \makecell{ 0.088 \\ (0.012)} & \makecell{ 0.000 \\ (0.000)} & \makecell{ 0.000 \\ (0.000)}\\
\hspace{1em}United-States & 1000 & \makecell{ 0.093 \\ (0.005)} & \textcolor{red}{\makecell{ 0.303 \\ (0.015)}} & \makecell{ 0.091 \\ (0.005)} & \makecell{ 0.286 \\ (0.014)} & \makecell{ 0.064 \\ (0.007)} & \makecell{ 0.177 \\ (0.023)} & \makecell{ 0.096 \\ (0.005)} & \textcolor{red}{\makecell{ 0.415 \\ (0.012)}} & \makecell{ 0.028 \\ (0.003)} & \makecell{ 0.154 \\ (0.012)} & \makecell{ 0.003 \\ (0.001)} & \makecell{ 0.009 \\ (0.002)}\\
\hspace{1em}United-States & 2000 & \makecell{ 0.097 \\ (0.004)} & \textcolor{red}{\makecell{ 0.365 \\ (0.012)}} & \makecell{ 0.098 \\ (0.004)} & \textcolor{red}{\makecell{ 0.360 \\ (0.013)}} & \makecell{ 0.057 \\ (0.005)} & \makecell{ 0.191 \\ (0.017)} & \makecell{ 0.095 \\ (0.005)} & \textcolor{red}{\makecell{ 0.481 \\ (0.010)}} & \makecell{ 0.030 \\ (0.003)} & \makecell{ 0.232 \\ (0.011)} & \makecell{ 0.008 \\ (0.001)} & \makecell{ 0.020 \\ (0.003)}\\
\bottomrule
\end{tabular}

\end{table}


\end{document}